\documentclass[12pt]{article}

\usepackage{arxiv}
\usepackage{stmaryrd}
\usepackage{newpxtext}
\usetikzlibrary{calc}
\usepackage{microtype}
\usepackage{graphicx}
\usepackage{subfigure}
\usepackage{xurl}
\usepackage{booktabs} 
\usepackage{fullpage}
\usepackage{natbib}
\usepackage{hyperref}
\usepackage{tikz}
\usepackage{acronym}
\usepackage{enumitem}
\usepackage{thm-restate}
\usepackage{longtable,booktabs}
\usepackage{wrapfig}

\acrodef{llm}[LLM]{Large Language Model}
\acrodef{mle}[MLE]{Maximum Likelihood Estimate}
\acrodef{sft}[SFT]{Supervised Fine-Tuning}
\acrodef{icl}[ICL]{In-Context Learning}
\acrodef{svd}[SVD]{Singular Value Decomposition}
\acrodef{mfe}[MFE]{Mean-Field Equilibrium}
\acrodef{mfg}[MFG]{Mean-Field Game}
\acrodef{mfcg}[MFCG]{Mean-Field Congestion Game}
\acrodef{lscg}[LSCG]{LLM-Serving Congestion Game}
\acrodef{mf}[MF]{Mean-Field}
\acrodef{ls}[LS]{Least Squares}
\acrodef{snr}[SNR]{Signal-to-Noise Ratio}
\acrodef{smfcg}[SMFCG]{Stackelberg Mean-Field Congestion Game}
\acrodef{slscg}[SLSCG]{Stackelberg LLM-Serving Congestion Game}
\acrodef{smfe}[SMFE]{Stackelberg Mean-Field Equilibrium}

\newcommand{\pre}{\text{pre}}
\newcommand{\sft}{\text{SFT}}
\newcommand{\full}{\text{full}}
\newcommand{\icl}{\text{ICL}}
\newcommand{\crit}{\text{crit}}

\newcommand{\col}{\text{col}}
\newcommand{\inv}{\dagger}
\newcommand{\BR}{\mathrm{BR}}
\newcommand{\sep}{\mathrm{sep}}
\newcommand{\tilsigma}{\tilde{\sigma}}
\newcommand{\myfigwidth}{0.49\textwidth}
\newcommand{\sign}{\mathop{\mathrm{sign}}}

\usepackage{ulem}
\newcounter{finding}

\begin{document}

\title{Supervised Fine-Tuning vs. In-Context Learning:  An Equilibrium Analysis of LLM Personalization under Congestion}
\author{Fengzhuo Zhang\thanks{Yale University, 30 Hillhouse Ave., New Haven, CT 06520, USA, \texttt{%
fengzhuo.zhang@yale.edu}} \and Zhuoran Yang\thanks{Yale University, 30 Hillhouse Ave., New Haven, CT 06520, USA, \texttt{%
zhuoran.yang@yale.edu}}\and Dirk Bergemann\thanks{%
Yale University, 30 Hillhouse Ave., New Haven, CT 06520, USA, \texttt{%
dirk.bergemann@yale.edu}.} }
\date{Feb 6, 2026}
\maketitle
\renewcommand\thefootnote{}\footnotetext{The authors acknowledge financial support from the UK AI Security Institute. Dirk Bergemann acknowledges financial support from NSF SES 2519400 and ONR MURI N00014-24-1-2742. Zhuoran Yang acknowledges financial support from NSF DMS 2413243.}
\begin{abstract}
Large Language Models (LLMs) have revolutionized AI services, but a critical tension emerges: while personalization improves model performance, it consumes scarce computational resources that users must share. When should a user invest in expensive Supervised Fine-Tuning (SFT) versus lightweight In-Context Learning (ICL)? How does congestion from other users' personalization choices reshape these incentives? And what strategies should platforms adopt when offering multiple personalization algorithms?

We develop a tractable framework for LLM serving that captures the statistical-economic trade-offs users face. Our analysis yields several surprising insights. First, we show that ICL and SFT dominate in different regimes, determined by an interplay between pretraining coverage and data signal-to-noise ratios—but congestion can flip these rankings. Second, equilibrium resource consumption exhibits pronounced non-monotonicity: improving pretraining precision reduces the congestion, while broader pretraining coverage and harder tasks sometimes increase it. Third, we prove that offering both personalization methods never hurts the platform's maximal profits, despite potentially increasing computational load.

Experiments with GPT-2 on linear regression tasks validate our theoretical predictions about algorithm performance. Complementing these results, our review of documentation from 21 major AI platforms shows that the share offering both SFT and ICL increased from $9.5\%$ in 2021 to $71.4\%$ in 2025, consistent with our platform-design implications.
\end{abstract}

\textbf{Keywords:} LLM Personalization, Congestion Games, Algorithm Choice, Pricing.

\textbf{JEL Codes:} D47, D82, D83.

\section{Introduction}
Every day, millions of users customize \acp{llm} for specialized tasks—financial analysis, clinical decision support, and legal research~\citep{chatterji2025people}. Such personalization is essential because a pretrained model can underperform on domain-specific tasks. However, personalization also creates a fundamental economic tension. \ac{sft} on private data can yield superior performance but requires intensive GPU computation, while \ac{icl} offers a lightweight alternative—providing examples in the prompt—at the cost of limited adaptability. When thousands of users simultaneously personalize models on shared infrastructure, congestion emerges: each user’s choice of method affects not only their own costs but also others’ waiting times.

This paper asks the following questions:
\begin{itemize}
    \item[Q1.] In the absence of congestion (i.e., with unconstrained compute), how should users choose between \ac{sft} and \ac{icl}?
    \item[Q2.] With shared, constrained resources, how do users trade off personalization quality against congestion costs in equilibrium?
    \item[Q3.] When computational resources are scarce, how should platforms design their service menu and pricing?
\end{itemize}
Answering these questions faces three obstacles.
First, existing economics work on LLMs relies on scaling laws or stylized behavioral assumptions, missing the crucial differences between personalization algorithms~\citep{agarwal2025designing,bergemann2025economics}.
Second, \ac{llm} platforms serve millions of users; modeling strategic interactions at this scale requires moving beyond finite-player games. Third, personalization quality depends on statistical primitives—pretraining coverage, task difficulty, and data signal-to-noise ratios—that must be integrated with economic incentives.

In this paper, we develop a continuum-user model of LLM serving built on linear approximations of personalization algorithms (Section~\ref{sec:model}). This framework lets us (i) derive precise error formulas for \ac{icl} and \ac{sft}, (ii) aggregate individual behavior into tractable equilibrium conditions, and (iii) study how platform pricing and algorithm offerings shape system-level outcomes. The linear approximation is not merely convenient: recent empirical work suggests that LLM behavior on linear tasks is broadly representative, and we validate our theoretical predictions experimentally using GPT-2~\citep{akyureklearning,garg2022can}.

\vspace{10pt}
\noindent

\textbf{Main Results.} First, we analyze linear models of personalization (Section~\ref{sec:stat_analysis}). We derive error decomposition results showing that \ac{icl} can only reduce error within the subspace covered by pretraining, while \ac{sft} updates the entire parameter space (Theorem~\ref{thm:err_analysis}). This yields a precise threshold: \ac{sft} dominates when the pretraining coverage exceeds a critical ratio that depends on the \ac{snr} of the personalization data. Below this threshold, \ac{icl}'s conservatism in uncovered directions actually helps (Proposition~\ref{prop:comp}). This answers Q1.

Second, we characterize users’ equilibrium given the platform’s strategy (Section~\ref{sec:mfcg}). We prove existence and uniqueness of the congestion level via a monotonicity argument analogous to classical mean-field monotonicity conditions~\citep{lasry2007mean,graber2023monotonicity} (Theorem~\ref{thm:exist_unique}). Comparative statics are often non-monotone: higher prior precision always reduces congestion, but expanding pretraining coverage can increase congestion when coverage is intermediate and compute is cheap; harder tasks can reduce congestion when noise is high enough to deter personalization; and making \ac{sft} more expensive can initially increase congestion before shifting users to lighter alternatives (Propositions~\ref{prop:pretrain_influence}–\ref{prop:r_sft}). These patterns arise from the interaction between the extensive margin (algorithm choice) and the intensive margin (sample intensity), governed by the coverage–\ac{snr} mechanism in Proposition~\ref{prop:comp}. We also identify when equilibrium congestion is pinned by a single user type (Proposition~\ref{prop:only_one_type}). This answers Q2.

Third, we study the platform’s Stackelberg strategy and algorithm menu (Section~\ref{sec:smfcg}). We show that equilibrium congestion (aggregate resource demand) decreases in the unit resource price (Theorem~\ref{thm:r_decreasing_p}), and that the profit-maximizing price is finite under regularity conditions (Proposition~\ref{prop:s_exist}). We also prove that offering both \ac{sft} and \ac{icl}—rather than only \ac{icl}—never reduces the platform's maximal profit (Theorem~\ref{thm:sft_better}), formalizing why platforms increasingly provide multiple personalization options despite added complexity. This answers Q3.

Finally, we connect the model to practice (Section~\ref{sec:exp}). Training GPT-2 on linear regression tasks—a representative testbed for \acp{llm} behavior~\citep{garg2022can,von2023transformers}—validates our theory: \ac{icl} error plateaus at a bias driven by uncovered dimensions (with linear growth in that subspace), and the \ac{sft}/\ac{icl} ranking flips with sample size as predicted by our threshold. We also summarize the serving strategies of 21 major AI platforms, documenting rapid adoption of \ac{sft} APIs, consistent with our prediction that offering both \ac{sft} and \ac{icl} does not reduce platform profit.

This work contributes to the emerging AI-economics literature by developing a behaviorally grounded model of personalization that goes beyond per-token pricing to capture how algorithm characteristics shape strategic interaction. Our congestion framework connects to classic work on peak-load pricing and infrastructure management~\citep{small2024economics,redding2015transportation}, where network topology, capacity constraints, and pricing jointly determine congestion outcomes. In contrast, our setting highlights a new source of congestion: effective capacity is shaped not only by fixed physical constraints, but also by the algorithmic properties of personalization methods.

\vspace{10pt}
\noindent
\textbf{Related Literature.} Our work contributes to the rapidly growing literature on the economics of \acp{llm}. Prior work studies \acp{llm} from diverse perspectives, including auction and mechanism design for \ac{llm}-generated content~\citep{duetting2024mechanism,soumalias2024truthful,dubey2024auctions,hajiaghayi2024ad,bergemann2025data}, as well as market competition and token pricing~\citep{fish2024algorithmic,bergemann2025economics}. While we also study the market for \acp{llm}, our focus differs in emphasizing how intrinsic \ac{llm} characteristics, captured through a representative behavioral model, shape market outcomes, a dimension largely absent from the existing literature. For a broader survey of the economics of AI, see~\citep{hadfield2025economy}.

Our analysis of resource congestion also relates to the literature on congestion and pricing of congestible infrastructure. A large body of work studies the interaction between peak-load demand, pricing, and service quality in transportation systems~\citep{henderson1974road,kreindler2024peak,patriksson2015traffic,duranton2011fundamental}, where network topology, capacity constraints, spatiotemporal demand patterns, and pricing jointly determine congestion outcomes~\citep{small2024economics,redding2015transportation}. In contrast, our paper studies congestion in \ac{llm} serving, where congestion is driven not by physical network constraints but by the characteristics of the pretrained \ac{llm} and the personalization mechanisms applied to it.

Our continuum user model is closely related to the broad literature on \acp{mfg} and their applications~\citep{carmona2013probabilistic,carmona2018probabilistic}. Work on \acp{mfg} has extensively studied the existence and uniqueness of equilibrium solutions~\citep{lasry2007mean,carmona2016mean,lacker2017limit}. In particular, \cite{guo2019learning} and \cite{lasry2007mean} establish equilibrium uniqueness under contraction and monotonicity conditions, respectively. Our contribution is conceptually related but distinct: we develop an analogue of the monotonicity condition for the congestion mapping, which ensures the uniqueness of the equilibrium congestion level rather than the full equilibrium strategy profile.  A complementary line of work studies learning dynamics and algorithmic convergence to equilibria in \acp{mfg}~\citep{perrin2020fictitious,yardim2023policy,anahtarci2023q,zhang2023learning,zhang2024learning}; see~\cite{carmona2018probabilistic,lauriere2022learning} for a comprehensive survey.

\vspace{10pt}
\noindent
\textbf{Roadmap.} Section~\ref{sec:model} introduces the statistical models of personalization methods and formulates users' personalization choices as a congestion game. Section~\ref{sec:stat_analysis} derives statistical error bounds for the personalization methods and compares them across different pretraining regimes, characterizing users' choices in the absence of congestion. Section~\ref{sec:mfcg} studies the equilibrium behavior induced by resource congestion, establishing equilibrium existence and the uniqueness of the equilibrium congestion level. It further shows that this congestion level can vary non-monotonically with pretraining properties, highlighting the interaction between statistical accuracy and serving congestion. Section~\ref{sec:smfcg} analyzes the platform's Stackelberg strategy and algorithm menu in \ac{llm} serving, showing that offering both \ac{sft} and \ac{icl}, rather than only \ac{icl}, never reduces maximal profit. Finally, Section~\ref{sec:exp} empirically validates the relevance of the statistical model in Sections~\ref{sec:model} and~\ref{sec:stat_analysis}, as well as the platform-strategy conclusions in Section~\ref{sec:smfcg}.

\section{Models of \ac{llm}s Serving with Personalization}\label{sec:model}
We model personalized \ac{llm} services from two complementary perspectives. Statistically, we introduce a stylized linear models of LLM personalization  that capture the core performance–resource trade-offs of various  methods of personalization. From a game-theoretic perspective, we develop a continuum user framework that captures strategic interactions between a large user population and the platform under congestion and pricing, linking algorithmic personalization choices to system-level equilibrium outcomes.

\subsection{Statistical Models of \ac{llm}s Personalization}\label{sec:model_personal}

Before presenting the formal models, we briefly review pretraining, the basis of personalization, and two personalization algorithms: \ac{sft} and \ac{icl}. Pretraining equips an \ac{llm} with broad general knowledge but limited domain specialization. \ac{icl} adapts the model at inference time via prompt-based demonstrations without updating parameters, while \ac{sft} tunes model parameters on domain-specific data, incurring higher computational cost but yielding more persistent personalization.

\vspace{10pt}
\noindent\textbf{Pretraining.} During pretraining, \ac{llm}s are trained to predict the next token given a prefix over a large corpus~\citep{brown2020language,touvron2023llama}, thereby acquiring broad knowledge and high-dimensional language representations~\citep{wang2025muon,meng2022locating}. 
From a modeling perspective, pretraining can be interpreted as learning a prior distribution over linguistic patterns and world knowledge. For example, given the prefix ``The migratory salmon in the Pacific Ocean is'', the likely continuation depends on the latent task of the current context. If the task is taxonomy, the model may assign high probability to continuations such as ``a member of the family Salmonidae''; if the task is ecological outlook, the likely continuation may instead describe population trends, climate risks, or conservation status. Thus, by learning next-token distributions conditional on prefixes, the pretrained \ac{llm} implicitly captures the uncertainty and diversity of tasks in the pretraining corpus. After pretraining, the \ac{llm} generates the next token according to this learned conditional distribution, so different sampled continuations may reflect different latent tasks.

For statistical analysis, we abstract pretraining as a linear regression problem. For each training sample, we represent the prefix by a covariate vector $x_i \in \mathbb{R}^d$, the next-token response by a scalar $y_i\in\mathbb{R}$, and the implicit task parameter by $\theta_i\in \mathbb{R}^d$. The pretraining dataset of size $N$ is generated as follows. First, the task parameters $\theta_i$ are drawn independently from the task prior $\calN(\mu_\theta, \Sigma_\theta)$, where $\mu_\theta\in\mathbb{R}^{d}$ and $\Sigma_\theta\in\mathbb{R}^{d\times d}$. Second, the covariate vectors $x_i$ may be either fixed or randomly sampled from an underlying Gaussian distribution. For clarity of exposition, we focus on the fixed-design setting, in which the covariates are treated as deterministic, for example by being selected in a round-robin manner from an orthonormal basis. When the covariates are sampled i.i.d.\ from a Gaussian distribution, all results hold with high probability; see Appendix~\ref{app:random_cov}. Finally, the response $y_i$, e.g., a scalar representation of ``Salmonidae'', is generated according to
\begin{align}
y_i = x_i^\top \theta_i + \epsilon_i,
\text{ and } \epsilon_i \sim \calN(0, \sigma^2), \text{ for }i\in[N],\label{eq:pretrain_data}
\end{align}
where the noise terms $\epsilon_i$ are i.i.d.\ Gaussian and $[N]={1,\ldots,N}$. The resulting pretraining dataset is $\{(x_i,y_i)\}_{i=1}^{N}$, with
$X = [x_1,\ldots,x_N]^\top \in \mathbb{R}^{N\times d}$ and
$Y = [y_1,\ldots,y_N]^\top \in \mathbb{R}^N$. We emphasize that different data points $(x_i,y_i)$ may correspond to different latent task parameters $\theta_i$, allowing the \ac{llm} to learn shared structure across a distribution of tasks. 

\acp{llm} are pretrained to learn the next-token distribution conditioned on the prefix, thereby implicitly acquiring task-relevant knowledge. Specifically, given a prefix $x\in\bbR^d$, we model the scalar next-token distribution as a Gaussian distribution $\calN(x^{\top}\theta_{\pre}, \|x\|_{\Sigma_{\pre}}^2)$, where $\theta_{\pre}\in\bbR^d$ represents the learned prior mean induced by pretraining, $\Sigma_{\pre}\in\bbR^{d\times d}$ reflects the uncertainty of this learned prior, and $\|x\|_{\Sigma} = \sqrt{x^\top \Sigma x}$ denotes the $\Sigma$-norm induced by any positive semidefinite matrix $\Sigma$. Equivalently, this Gaussian prediction can be interpreted as follows: the \ac{llm} first samples a task-parameter estimate $\htheta_{\pre}\sim\calN(\theta_{\pre},\Sigma_{\pre})$ from the learned task prior and then generates the scalar response $\htheta_{\pre}^\top x$. The parameter $\theta_{\pre}$ is estimated from the pretraining dataset $\{(x_i,y_i)\}_{i=1}^{N}$ by minimizing the prediction error:
\begin{align*}
\theta_{\pre}
= \arg\min_{\theta \in \mathbb{R}^d} \|Y - X\theta\|^2
= (X^\top X)^{\inv} X^\top Y,
\end{align*}
where $(\cdot)^{\inv}$ denotes the Moore--Penrose pseudoinverse. Since $\Sigma_{\pre}$ captures the estimation uncertainty of the learned prior, it is given by the covariance of the estimator $\theta_{\pre}$:
\begin{align}
\Sigma_{\pre}
= (X^\top X)^{\inv} X^\top \Omega X (X^\top X)^{\inv},
\quad
\Omega = \sigma^2 I_N
+ \diag\big(\|x_1\|_{\Sigma_\theta}^2,\ldots,\|x_N\|_{\Sigma_\theta}^2\big),\label{eq:pre_cov}
\end{align}
where $I_{N} \in \mathbb{R}^{N \times N}$ denotes the identity matrix.
The derivation of \eqref{eq:pre_cov} is provided in Appendix~\ref{app:cov_pre}. We next introduce personalization methods that adapt the learned distribution $\calN(\theta_\pre,\Sigma_\pre)$ using personalization data.

\vspace{10pt}

\noindent \textbf{In-Context Learning.} \ac{icl} is a mechanism for personalizing the behavior of \ac{llm}s \emph{without} modifying their parameters~\citep{zzzz23,xie2021explanation}. 
It personalizes model predictions by conditioning the learned prior on task demonstrations provided in the prompt. 
For example, to instruct the model to return the taxonomic family of an animal, one may supply the sequence
\begin{align*}
  [(\text{``salmon''}, \text{``salmonidae''}), (\text{``otter''}, \text{``mustelidae''}), (\text{``panda''}, \, ?)].
\end{align*}
The first three covariate--response pairs constitute the \emph{context} that specifies the target task, while the final input ``panda'' serves as the query. Through these demonstrations, the \ac{llm} infers the underlying task—here, mapping animals to their taxonomic families—and generates a response accordingly. 
Prior work shows that \ac{icl} can be interpreted as approximate Bayesian inference under the learned prior: the model updates its belief about the task given the context and then predicts based on the resulting posterior~\citep{zzzz23,xie2021explanation,hu2024unveiling}.

To formalize the Bayesian inference underlying personalization, we first specify the data model for personalization data.
We model the target task $\theta^* \in \mathbb{R}^d$ as a random variable drawn from $\mathcal{N}(0,\Sigma^*)$. Personalization data consist of covariate--response pairs generated by this fixed task,
\begin{align}
    \tily_i = \tilx_i^\top \theta^{*} + \tilde{\epsilon}_i, \text{ and } \tilde{\epsilon}_i \sim \calN(0, \tilsigma^2),\text{ for } i \in [\tilN],\label{eq:task_data}
\end{align}
where $\tilde{\epsilon}_i$ is i.i.d. Gaussian noise in the personalization data. As in the pretraining model, we assume that the covariates $\tilx_i$ are fixed.
The key difference from the pretraining data is that all personalization samples share the same task parameter $\theta^*$, whereas in pretraining each sample is associated with its own latent task parameter $\theta_i$.
For independently generated samples $\{(\tilx_i,\tily_i)\}_{i=1}^{\tilN}$, we collect
$\tilX = [\tilx_1,\ldots,\tilx_{\tilN}]^\top \in \mathbb{R}^{\tilN \times d}$ and
$\tilY = [\tily_1,\ldots,\tily_{\tilN}]^\top \in \mathbb{R}^{\tilN}$.

According to \cite{xie2021explanation,zzzz23}, when prompted with personalization data $\{(\tilx_i,\tily_i)\}_{i=1}^{\tilN}$, an \ac{llm} performs inference under the learned prior $\calN(\theta_{\pre},\Sigma_{\pre})$, assuming the data are generated by a latent task $\theta \sim \calN(\theta_{\pre},\Sigma_{\pre})$.
That is, the personalized distribution induced by \ac{icl},
$\calN(\theta_{\icl},\Sigma_{\icl})$ (with the dependence on $\tilN$ suppressed for notational simplicity), is the posterior of the learned prior conditioned on the observed personalization data.
Under the Gaussian linear model \eqref{eq:pretrain_data} and \eqref{eq:task_data}, this posterior is given by
\begin{align}
    \theta_{\icl} &= \theta_{\pre} + \Sigma_{\pre}\tilX^{\top} (\tilX \Sigma_{\pre} \tilX^{\top}+\tilsigma^2 I_{\tilN})^{-1}(\tilY-\tilX\theta_{\pre}),\nonumber\\
    \Sigma_{\icl} & = \Sigma_{\pre} - \Sigma_{\pre}\tilX^{\top} (\tilX \Sigma_{\pre} \tilX^{\top}+\tilsigma^2 I_{\tilN})^{-1}\tilX\Sigma_{\pre}.\nonumber
\end{align}
The derivation follows from standard Bayesian updating (Appendix~\ref{app:icl_bayes}). Geometrically, \ac{icl} updates only directions \emph{supported} by the learned prior. If $\theta^*-\theta_{\pre}\in\text{col}(\Sigma_{\pre})$, the target task is fully represented; otherwise, \ac{icl} effectively projects $\theta^*$ onto $\theta_{\pre}+\text{col}(\Sigma_{\pre})$, updating beliefs within this subspace based on observed data. We quantify this effect in Section~\ref{sec:stat_analysis}. As in pretraining, \ac{icl} estimate $\hat{\theta}_{\icl}\sim\calN(\theta_{\icl},\Sigma_{\icl})$ induces predictions for a new query $x$ distributed as $\calN(\theta_{\icl}^\top x,\|x\|_{\Sigma_{\icl}}^2)$.

\vspace{10pt}

\noindent\textbf{Supervised Fine-Tuning.} \ac{sft} adapts a pretrained \ac{llm} to a specific target task by updating model parameters using task-specific data. For example, a model’s performance on animal taxonomy can be improved by fine-tuning it on biological data.
Formally, \ac{sft} is implemented by optimizing a task-level loss starting from the pretrained model as the initialization. Compared with \ac{icl}, this parameter optimization process makes \ac{sft} consume significantly more GPU resources, since parameter updates require additional FLOPs and memory storage.

To model this process, given personalization data
$\{(\tilx_i,\tily_i)\}_{i=1}^{\tilN}$ of the target task $\theta^*\sim\mathcal{N}(0,\Sigma^*)$ generated according to~\eqref{eq:task_data},
\ac{sft} updates the task estimation via a regularized estimator centered at the pretrained \ac{llm} as
\begin{align}
    \!\!\theta_{\sft}
    \!=\! \arg\min_{\theta\in\mathbb{R}^{d}}
    \frac{1}{\tilsigma^2}\|\tilY \!-\! \tilX \theta\|^2
    \!+\! \lambda \|\theta\!-\!\theta_{\pre}\|_{\Sigma_{\pre}^{\inv}}^2
    \!\!\!=\!\! \bigg(\frac{1}{\tilsigma^2}\tilX^{\top}\tilX\!+\!\lambda \Sigma_{\pre}^{\inv}\! \bigg)^{\inv}
      \!\bigg(\frac{1}{\tilsigma^2}\tilX^{\top}\tilY\!+\!\lambda \Sigma_{\pre}^{\inv}\theta_{\pre}\!\bigg).
    \label{eqn:theta_sft}
\end{align}
The regularization term encourages the fine-tuned parameters to remain close to the pretrained model, and
$\lambda \ge 0$ controls the strength of this anchoring. The regularization term acts as a Lagrangian penalty that softly constrains $\theta_{\sft}$ to remain close to the pretrained parameter $\theta_{\pre}$, thereby limiting how far \ac{sft} can adapt the model away from the pretrained solution. Since $\Sigma_\sft$ captures the uncertainty in $\theta_\sft$, it is given by the covariance of $\theta_\sft$ as
\begin{align*}
    \Sigma_{\sft}
    = \bigg(\frac{1}{\tilsigma^2}\tilX^{\top}\tilX+\lambda \Sigma_{\pre}^{\inv}\bigg)^{\inv}
      \bigg(\lambda^2\Sigma_{\pre}^\dagger
      +\frac{1}{\tilsigma^2}\tilX^{\top}\tilX\bigg)
      \bigg(\frac{1}{\tilsigma^2}\tilX^{\top}\tilX+\lambda \Sigma_{\pre}^{\inv} \bigg)^{\inv},
\end{align*}
with the derivation provided in Appendix~\ref{app:cov_sft}.
Analogous to pretraining and \ac{icl}, \ac{sft} model
$\hat{\theta}_{\sft} \sim \mathcal{N}(\theta_{\sft}, \Sigma_{\sft})$
induces the predictive distribution
$\mathcal{N}(\theta_{\sft}^\top x, \|x\|_{\Sigma_{\sft}}^2)$ for a new query $x$.

The parameter $\lambda$ governs the trade-off between task-specific data and the pretrained model.
When $\lambda=0$, the estimator reduces to least squares based solely on the personalization data.
As $\lambda$ goes to infinity, the estimator ignores the task data and collapses to the pretrained parameters $\theta_{\pre}$.

\subsection{A Continuum User Model for LLM Serving}\label{sec:model_serving}
Each user on the platform chooses a personalization algorithm—\ac{icl} or \ac{sft}—and the number of personalization samples $\tilN$ to use. User experience, modeled as user cost, consists of two components: the quality of the \ac{llm}'s response and the monetary and time costs incurred to obtain it.

The first component of user cost depends on the pretrained model, the target task, the personalization algorithm, and the number of personalization samples. We capture this dependence via a user type $t\in\calT$, which summarizes properties of the pretrained model $(\mu_\theta,\Sigma_\theta,\sigma,X)$ and the target task $(\Sigma^*,\tilsigma,\tilX)$. The type space $\calT$ is compact with Borel $\sigma$-algebra $\mathscr{T}$, and user types are distributed according to a probability measure $T$. Each user chooses an action $\bara=(a,\tilN)\in\bar{\calA}=\calA\times[0,\infty)$, where $\calA={\icl,\sft}$ and $\tilN$ denotes the number of personalization samples. For a user of type $t$ taking action $(a,\tilN)$, we define the learning error of the \ac{llm} as
\begin{align}
    E_a(t,\tilN)
    = \bbE_{\theta^{*},\,\hat{\theta}_a(\tilN)}
      \big[\|\theta^{*}-\hat{\theta}_a(\tilN)\|^2\big],\label{eq:err}
\end{align}
where $\hat{\theta}_a(\tilN)\sim\calN(\theta_a(\tilN),\Sigma_a(\tilN))$ is the estimate produced by algorithm $a$ using $\tilN$ samples, and the expectation is taken over both the target task $\theta^{*}\sim\calN(0,\Sigma^*)$ and the estimator. Although $\hat{\theta}_a(\tilN)$ is not directly observable for \ac{llm}s, this error can be interpreted as the average prediction error obtained by querying the \ac{llm} along an orthonormal basis $\{e_i\}_{i=1}^d$, i.e., the average error of $e_i^\top \hat{\theta}_a(\tilN)$. When a user of type $t$ adopts a policy $\pi_t \in \Delta(\bar{\mathcal{A}})$, where $\Delta(\cdot)$ denotes the set of all distributions over a set, the resulting response quality is $\bbE_{(a,\tilN)\sim\pi_t}[E_a(t,\tilN)]$.

The second component of user cost depends on the chosen personalization algorithm and aggregate user behavior. A user taking action $\bar a=(a,\tilN)$ requests $\tilN$ samples using algorithm $a$, which requires $R_a$ units of resource per sample. As discussed in Section~\ref{sec:model_personal}, \ac{sft} is more resource-intensive than \ac{icl}, i.e., $R_{\sft}>R_{\icl}$. The platform charges $p>0$ per unit of resource, yielding a monetary cost of $R_a\tilN p$.
Users also incur a congestion-induced time cost: letting $R$ denote total resource demand, a congestion function $h:\mathbb{R}_{\geq 0}\to[0,\infty)$ maps $R$ to waiting time, yielding a cost $R_a\tilN h(R)$. We impose following \emph{regularity conditions} on $h$: it is non-negative, strictly increasing, and unbounded ($\lim_{x\rightarrow\infty}h(x)=\infty$). These ensure that congestion raises compute costs and that extreme demand becomes prohibitively expensive, consistent with observed service outages~\citep{openai2025serviceoutage}.
Accordingly, a user of type $t$ taking action $\bar a$ at congestion level $R$ incurs a total cost as 
\begin{align}
    C(t,\bar a,R)
    = E_a(t,\tilN) + R_a\tilN\big(p+h(R)\big).\label{eq:cost}
\end{align}
For convenience, define $\bar h(R,p)=p+h(R)$.
When a type-$t$ user adopts a policy $\pi_t$ over $\bar{\calA}=\calA\times\mathbb R_{+}$, her expected cost is $$C(t,\pi_t,R)
    = \sum_{a\in\calA}\int_{0}^{\infty}
      C\bigl(t,(a,\tilN),R\bigr)\,\pi_t(a,\rmd\tilN).$$

To evaluate aggregate resource demand, the population is modeled as a continuum of users, where type-$t\in\calT$ users have mass $T(\rmd t)$. A policy profile is a measurable mapping $\pi:\calT\to\Delta(\bar{\calA})$, assigning each type $t$ a distribution $\pi_t$ over actions. Given a policy profile $\pi$, the congestion level, i.e., aggregate resource demand, is the population-average requested resources: 
\begin{align}
    R(\pi)
    = \sum_{a\in\calA} R_a
      \int_{0}^{\infty} \tilN
      \int_{\calT} \pi_t(a,\rmd\tilN)\, T(\rmd t)
    = \sum_{a\in\calA} R_a
      \int_{0}^{\infty} \tilN\, \bar{\pi}(a,\rmd\tilN),\label{eq:congestion}
\end{align}
where $\bar{\pi}=\int_{\calT}\pi_t T(\rmd t)$ is the induced ``mean'' policy. Each user therefore interacts with the population only through $\bar{\pi}$. We refer to this interaction as a \ac{lscg}. This model belongs to the class of mean-field games~\citep{lasry2007mean,carmona2016mean}.

Within \ac{lscg}, users choose policies to trade off statistical error $E_a(t,\tilN)$ against congestion costs: more resource-intensive algorithms or larger $\tilN$ reduce error but increase aggregate congestion $R(\pi)$ and the associated cost $h(R)$. These trade-offs lead to a equilibrium concept.
\begin{definition}[Equilibrium]
    A pair of a policy and a congestion level $(\pi^*,R^{*})$, where $\pi^{*}:\calT\rightarrow \Delta(\bar{\calA})$ and $R^{*}\in\bbR$, is the  equilibrium if 
    \begin{itemize}
        \item (Optimality) The policy $\pi^{*}$ is optimal given the congestion level $R^{*}$ for almost every user, i.e., for $T$-a.e.\ $t$, we have $\pi^*_t(\BR(t,R^*))=1$, where the best response operator is 
        \begin{align*}
            \BR(t,R)=\arg\min_{(a,\tilN)\in\bar{\mathcal A}} C\big(t,(a,\tilN),R\big).
        \end{align*}
        \item (Consistency) The congestion level $R^{*}$ is induced by the policy $\pi^{*}$, i.e., $R^{*}=R(\pi^{*})$.
    \end{itemize}
\end{definition}
This definition formalizes a Nash fixed point in a continuum setting. Optimality requires that, given congestion level $R^{*}$, almost every user type minimizes total cost $C(t,(a,\tilN),R^{*})$, while consistency requires that $R^{*}$ equals the congestion level induced by the population policy $\pi^{*}$. Equivalently, an equilibrium satisfies $\operatorname{supp}(\pi_t^{*})\subseteq \BR(t,R(\pi^{*}))$ for $T$-a.e.\ $t$.

\section{Statistical Analysis of Personalization Algorithms}\label{sec:stat_analysis}
Which personalization method minimizes cost when users care only about answer quality? This section provides a sharp answer: the comparison depends on two primitives—pretraining coverage of the task space and the signal-to-noise ratio of the user’s data. We will first derive closed-form error expressions $E_a(t,\tilN)$ for each method in~\eqref{eq:err} (Section~\ref{sec:err}), and then establish a precise comparison between \ac{sft} and \ac{icl}(Section~\ref{sec:comp}). Throughout this section, we fix the user type $t$ and the sample size $\tilN$, and suppress this dependence in the notation. 
\subsection{Error Characterizations of \ac{icl} and \ac{sft}}\label{sec:err}
To isolate key differences between \ac{sft} and \ac{icl}, we first introduce several simplifying assumptions.

\begin{assumption}[Spectral alignment]\label{assump:svd}
We assume that the pretraining Gram matrix $X^\top X$, the \ac{sft} Gram matrix $\tilde X^\top \tilde X$, and the prior covariance matrix $\Sigma^\ast$ of $\theta^{*}$ are spectrally aligned: there exists an orthonormal matrix 
$V=[v_1,\cdots,v_d]\in\mathbb{R}^{d\times d}$ such that all three matrices are diagonal in it, i.e.,
\begin{align*}
X^\top X\! =\! V \mathrm{diag}(\pi_1,\cdots\!,\pi_r,0,\cdots\!,0) V^\top,
\tilde X^\top \tilde X \!=\! V \mathrm{diag}(s_1,\dots\!,s_d) V^\top,
\Sigma^\ast \!=\! V \mathrm{diag}(\tau_1,\dots\!,\tau_d) V^\top.
\end{align*}
The parameter $r\leq d$ specifies the rank of pretraining input $X$. For $i \in [r]$, $\pi_i > 0$ are the eigenvalues of $X^\top X$; for $i \in [d]$, $s_i \geq 0$ and $\tau_i \geq 0$ are the eigenvalues of $\tilde X^\top \tilde X$ and $\Sigma^\ast$, respectively.
\end{assumption}

Assumption~\ref{assump:svd} posits that the pretraining Gram matrix $X^\top X$, the \ac{sft} Gram matrix $\tilde X^\top \tilde X$, and the task prior covariance $\Sigma^\ast$ share a common orthonormal eigenbasis $V$. This spectral alignment allows all three matrices to be diagonalized simultaneously, decomposing the errors of pretraining, \ac{sft}, and \ac{icl} into $d$ independent one-dimensional problems with parameters $(\pi_i,s_i,\tau_i)$. It essentially requires that the pretrained and SFT models share the same feature subspace in their hidden representations. Practically, this means that when the pretrained model is prompted with a question from the SFT data, even if it cannot answer the question correctly, it may still encode the semantic structure of the question through aligned feature directions. This is plausible for modern LLMs, whose pretrained representations often capture rich semantic information. Such alignment is standard in the literature, including the case $V=I_d$~\citep{mallinar2024minimum,li2023transformers}.

Rank deficiency of the pretraining design, $\text{rank}(X)=r<d$, implies that pretraining observes only an $r$-dimensional subspace of $\mathbb{R}^d$. Directions orthogonal to $\col(X)$ are weakly identified, so pretrained representations may fail to cover directions emphasized by the downstream design $\tilX$ or task prior $\Sigma^*$. For instance, an \ac{llm} pretrained with limited cybersecurity content may underperform on cybersecurity tasks~\citep{weerawardhena2025llama}. From a personalization perspective, $d$ is the intrinsic task dimension, while $r$ is the effective dimension covered by pretraining; directions outside $\col(\tilX)$ and $\col(\Sigma^*)$ are irrelevant for the task and cannot be exploited without personalization.

\begin{assumption}[Pretraining inputs normalization]\label{assump:norm}
The pretraining covariates are normalized to have a constant $\Sigma_\theta$-norm, in the sense that $\|x_i\|_{\Sigma_\theta}^2 = x_i^\top \Sigma_\theta x_i = c^2$ for all samples  $i\in[N].$
\end{assumption}
The constant-$\Sigma_\theta$-norm assumption can be enforced by normalizing pretraining inputs, e.g., via length control or feature rescaling, and affects only the scalar factor $\bar\sigma^2=\sigma^2+c^2$ in the pretraining covariance~\eqref{eq:pre_cov}. In practice, this assumption means that different topics are equally represented in the pretraining data. With heterogeneous norms, the formulas would replace $\bar\sigma^2$ by an appropriate average of $\|x_i\|_{\Sigma_\theta}^2$ without altering qualitative comparisons among \ac{sft}, and \ac{icl}. We impose this assumption to keep error expressions interpretable and to emphasize the role of the spectral quantities $(\pi_i,s_i,\tau_i)$. Existing work imposes similar assumptions, e.g., $x_i$ are i.i.d. samples~\citep{lu2025asymptotic,li2023transformers}.
\begin{theorem}[Error Analyses of \ac{icl} and \ac{sft}]\label{thm:err_analysis}
   Under Assumptions~\ref{assump:svd} and~\ref{assump:norm}, when \ac{sft} and \ac{icl} use data $(\tilX,\tilY)$ generated according to~\eqref{eq:task_data}, the mean-squared errors of \ac{sft}, and \ac{icl} in \eqref{eq:err} are given as
    \begin{align*}
        E_{\icl} & = \sum_{i=1}^{r}\bigg[\alpha_{i}^2\cdot \underbrace{(E_{\pre,i}-\bar{\sigma}^2\pi_i^{-1})}_{\text{pretraining err. w/o uncertainty}}+  \underbrace{\frac{\tilsigma^2\bar{\sigma}^4s_i}{(\bar{\sigma}^2s_i+\tilsigma^2\pi_i)^2}}_{\text{data noise}}+\underbrace{\frac{\tilsigma^2\bar{\sigma}^2}{\bar{\sigma}^2s_i+\tilsigma^2\pi_i}}_{\text{\ac{icl} uncertainty}}\bigg] +\underbrace{\sum_{i=r+1}^{d}\tau_i}_{\text{task var.}}\\
        E_{\sft}^{\lambda} &= \sum_{i=1}^{r}\bigg[ (1-\beta_{i}^{2})\underbrace{E_{\pre,i}}_{\text{pretraining err.}} +\beta_{i}^{2} \big(\underbrace{\tilsigma^2s_{i}^{-1}}_{\text{LS var.}}+\!\!\!\!\underbrace{\tilsigma^2s_{i}^{-1}}_{\text{\ac{sft} uncertainty}}\!\!\!\!\!\! \big)\bigg]+\sum_{i=r+1}^{d} \big(\underbrace{\tilsigma^2s_{i}^{-1}}_{\text{LS var.}}+\!\!\!\underbrace{\tilsigma^2s_{i}^{-1}}_{\text{\ac{sft} uncertainty}}\!\!\!\!\!\!\!\big),
    \end{align*}
    where $E_{\pre,i}$ denotes the error of the pretrained \ac{llm} along the $i$-th dimension, detailed in Appendix~\ref{app:error_analysis}. Here $\bar{\sigma}^2 = \sigma^2 + c^2$ denotes the effective noise in pretraining, combining the observation noise variance $\sigma^2$ and the input power $c^2$. The coefficients $\{\alpha_i\}_{i=1}^r$ for \ac{icl} and $\{\beta_i\}_{i=1}^r$ for \ac{sft} are given by\looseness=-1
    \begin{align*}
        \alpha_i = \frac{\tilsigma^2\pi_i}{\bar{\sigma}^2s_i+\tilsigma^2\pi_i}, \qquad \beta_i = \frac{\bar{\sigma}^{2}s_i}{\bar{\sigma}^{2}s_i+\lambda \tilsigma^2\pi_i},
        \quad i\in[r].
    \end{align*}
\end{theorem}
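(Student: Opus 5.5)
The plan is to diagonalize everything in the common eigenbasis $V$ of Assumption~\ref{assump:svd} and reduce both errors to sums of $d$ scalar problems. Under Assumption~\ref{assump:norm}, $\Omega=\bar\sigma^2 I_N$, so \eqref{eq:pre_cov} collapses to $\Sigma_{\pre}=\bar\sigma^2 (X^\top X)^{\inv}=V\diag(\bar\sigma^2\pi_1^{-1},\dots,\bar\sigma^2\pi_r^{-1},0,\dots,0)V^\top$. Hence $\col(\Sigma_{\pre})=\mathrm{span}(v_1,\dots,v_r)$ and $\Sigma_{\pre}^{\inv}=\bar\sigma^{-2}X^\top X$. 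Writing $\theta^*_i=v_i^\top\theta^*$, $\theta_{\pre,i}=v_i^\top\theta_{\pre}$ and $\tilde\epsilon$-projections accordingly, all estimators decouple coordinatewise. For each estimator I will use the decomposition $\bbE\|\theta^*-\hat\theta_a\|^2=\bbE\|\theta^*-\theta_a\|^2+\mathrm{tr}(\Sigma_a)$: the sampling from $\calN(\theta_a,\Sigma_a)$ is independent of everything else. The trace term yields the "uncertainty" pieces. I will define $E_{\pre,i}=\bbE(\theta^*_i-\theta_{\pre,i})^2+\bar\sigma^2\pi_i^{-1}$, the per-direction pretraining error including its own uncertainty, which explains the subtraction in the \ac{icl} formula.

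For \ac{icl}, I will use the push-through identity to rewrite the gain as $\Sigma_{\pre}\tilX^\top(\tilX\Sigma_{\pre}\tilX^\top+\tilsigma^2I)^{-1}=(\Sigma_{\pre}\tilX^\top\tilX+\tilsigma^2 I)^{-1}\Sigma_{\pre}\tilX^\top$. This is diagonal in $V$ with entry $g_i=\bar\sigma^2\pi_i^{-1}/(\bar\sigma^2\pi_i^{-1}s_i+\tilsigma^2)$ times $v_i^\top\tilX^\top$ for $i\le r$, and it vanishes for $i>r$. Substituting $\tilY=\tilX\theta^*+\tilde\epsilon$ gives, for $i\le r$,
\[
\theta^*_i-\theta_{\icl,i}=(1-g_is_i)(\theta^*_i-\theta_{\pre,i})-g_i v_i^\top\tilX^\top\tilde\epsilon ,
\]
with $1-g_is_i=\alpha_i$. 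The two terms are independent; the cross term vanishes because $\tilde\epsilon$ is independent and mean zero. Squaring gives $\alpha_i^2(E_{\pre,i}-\bar\sigma^2\pi_i^{-1})+g_i^2\tilsigma^2 s_i$, and $g_i^2\tilsigma^2s_i$ simplifies to the data-noise term. For $i>r$, $\theta_{\icl,i}=\theta_{\pre,i}$ and $\Sigma_{\icl}$ is zero there. I will need $\theta_{\pre,i}=0$ for $i>r$, since the pseudoinverse puts no mass off $\col(X)$; this leaves $\bbE\theta_i^{*2}=\tau_i$. Finally $\mathrm{tr}\,\Sigma_{\icl}$ on coordinate $i$ equals $\bar\sigma^2\pi_i^{-1}(1-g_is_i)=\tilsigma^2\bar\sigma^2/(\bar\sigma^2s_i+\tilsigma^2\pi_i)$.

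For \ac{sft}, in the $V$ basis, \eqref{eqn:theta_sft} reads $\theta_{\sft,i}=(s_i\tilsigma^{-2}+\lambda\pi_i\bar\sigma^{-2})^{-1}(\tilsigma^{-2}v_i^\top\tilX^\top\tilY+\lambda\pi_i\bar\sigma^{-2}\theta_{\pre,i})$. This is a convex combination $\beta_i\hat\theta^{LS}_i+(1-\beta_i)\theta_{\pre,i}$ with the stated $\beta_i$ and $\hat\theta^{LS}_i=s_i^{-1}v_i^\top\tilX^\top\tilY$, which has error variance $\tilsigma^2s_i^{-1}$; for $i>r$, $\beta_i=1$. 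The squared error is then $\beta_i^2\tilsigma^2 s_i^{-1}+(1-\beta_i)^2\bbE(\theta^*_i-\theta_{\pre,i})^2$, since the two error sources are independent. The covariance $\Sigma_{\sft}$ contributes the \ac{sft}-uncertainty term, likewise diagonalized.

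The main obstacle is reconciling this computation with the stated form $(1-\beta_i^2)E_{\pre,i}+2\beta_i^2\tilsigma^2s_i^{-1}$. The direct calculation gives $(1-\beta_i)^2$ weights, and the printed $\Sigma_{\sft}$ has a $\lambda^2\Sigma_{\pre}^{\dagger}$ factor whose trace must be matched carefully. My first attempt will be to carry $\mathrm{tr}\,\Sigma_{\sft}$ exactly and compare its $i$-th entry with the missing pieces. If the identity does not close, I will fall back on the appendix definition of $E_{\pre,i}$ (Appendix~\ref{app:error_analysis}), checking whether $\theta_{\pre}$ randomness or the cross term $\bbE[(\theta^*_i-\theta_{\pre,i})\epsilon]$ is treated so as to produce the stated coefficients. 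I will also verify whether the stated formula presumes a specific $\lambda$, such as $\lambda=1$ where cancellations occur.
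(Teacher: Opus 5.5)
Your ICL computation is complete and correct. It is the paper's argument written coordinatewise: the appendix feeds the gain $K$ into the affine-estimator lemma (Proposition~\ref{prop:uni_err}, with $m=\theta_{\pre}$) and diagonalizes through the SVD of $\tilde{X}$, which is what your push-through identity and per-direction squaring do. Your definition of $E_{\pre,i}$ agrees with the appendix's $\tau_i+(v_i^\top\mu_\theta)^2+2\bar{\sigma}^2\pi_i^{-1}$, because $\mathbb{E}\theta_{\pre}=(X^\top X)^{\dagger}X^\top X\mu_\theta$ and $\mathrm{Cov}(\theta_{\pre})=\Sigma_{\pre}$. Your SFT mean-error term $(1-\beta_i)^2\,\mathbb{E}(\theta^*_i-\theta_{\pre,i})^2+\beta_i^2\tilsigma^2 s_i^{-1}$ is also right.

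The gap is that you stop before the one computation that settles the SFT case, and your fallbacks aim at a formula that is false. For $i\le r$, the $i$-th diagonal entry of $\Sigma_{\sft}$ in the $V$ basis is
\[
\frac{\tilsigma^{-2}s_i+\lambda^2\bar{\sigma}^{-2}\pi_i}{(\tilsigma^{-2}s_i+\lambda\bar{\sigma}^{-2}\pi_i)^2}
=\beta_i^2\,\tilsigma^2 s_i^{-1}+(1-\beta_i)^2\,\bar{\sigma}^2\pi_i^{-1},
\]
and for $i>r$ it is $\tilsigma^2 s_i^{-1}$. The second piece upgrades $(1-\beta_i)^2\mathbb{E}(\theta^*_i-\theta_{\pre,i})^2$ to $(1-\beta_i)^2E_{\pre,i}$, so the label ``SFT uncertainty'' on $\beta_i^2\tilsigma^2 s_i^{-1}$ covers only part of $\mathrm{tr}\,\Sigma_{\sft}$. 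The per-direction total is therefore
\[
(1-\beta_i)^2E_{\pre,i}+2\beta_i^2\tilsigma^2 s_i^{-1}.
\]
This is exactly the paper's final appendix display,
\[
\frac{2\tilsigma^2\bar{\sigma}^4 s_i+\lambda^2\tilsigma^4\pi_i^2\bigl(\tau_i+2\bar{\sigma}^2\pi_i^{-1}+(v_i^\top\mu_\theta)^2\bigr)}{(\bar{\sigma}^2 s_i+\lambda\tilsigma^2\pi_i)^2},
\]
and it is the form used later in the proof of Proposition~\ref{prop:comp}. The printed coefficient $(1-\beta_i^2)$ is a typo for $(1-\beta_i)^2$.

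None of your fallbacks can produce the printed version:
\begin{itemize}
\item The cross term with $\tilde{\epsilon}$ vanishes identically by independence.
\item The randomness of $\theta_{\pre}$ enters only through $\mathbb{E}(\theta^*_i-\theta_{\pre,i})^2$ and the $\lambda^2\Sigma_{\pre}^{\dagger}$ block of $\Sigma_{\sft}$, both already counted above.
\item $(1-\beta_i)^2-(1-\beta_i^2)=2\beta_i(\beta_i-1)$ vanishes only when $\beta_i\in\{0,1\}$, that is $\lambda=0$, $\lambda\to\infty$, or $s_i=0$. At $\lambda=1$ you only get $1-\beta_i=\alpha_i$, not a cancellation.
\end{itemize}
So finish the proof with the $(1-\beta_i)^2$ form and note the misprint, rather than searching for a reconciliation.
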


A formal statement is provided in Appendix~\ref{app:error_analysis}. For each algorithm $a\in\{\icl,\sft\}$, the estimator admits the decomposition $\hat{\theta}_a=\theta_a+\omega_a$, where $\omega_a\sim\calN(0,\Sigma_a)$ captures intrinsic model uncertainty. The mean-squared error then decomposes as
\begin{align*}
    \bbE \big[\|\htheta_{a}-\theta^{*}\|^2\big] = \bbE \big[\|\theta_{a}-\theta^{*}\|^2\big]+\tr(\Sigma_{a}) = \text{algo. mean est. err.}+\text{uncertainty},
\end{align*}
separating the algorithm’s mean estimation error from its uncertainty. Different algorithms yield different expressions for these two components.


To build intuition for the \ac{sft} and \ac{icl} error terms, suppose $\tilX$ consists of $\tilN/d$ repetitions of an orthonormal basis of $\bbR^d$, so that $s_i=\tilN/d=\Theta(\tilN)$. As shown in Section~\ref{sec:model_personal}, \ac{icl} performs a Bayesian update under the pretrained prior and therefore inherits the information encoded in that prior. Importantly, the uncertainty of the pretrained model is not part of this prior, since it arises from our probabilistic modeling of the pretrained model itself. Thus, the \ac{icl} error inherits the pretraining error excluding uncertainty, i.e., $E_{\pre,i}-\bar{\sigma}^2\pi_i^{-1}$, which is weighted by $\alpha_i^2$ for $i\le r$. The coefficients $\{\alpha_i\}_{i=1}^{r}$ depend on the pretraining precision $\pi_i$, the effective pretraining noise $\bar{\sigma}^2$, the personalization data strength $s_i$, and the personalization data noise $\tilsigma^2$. As the personalization data strength increases, with $s_i=\Theta(\tilN)$, we have $\alpha_i=O(\tilN^{-1})$ and hence $\alpha_i^2=O(\tilN^{-2})$. Thus, the contribution of the inherited pretraining error decays as more in-context data become available. However, the residual task variation $\sum_{i=r+1}^{d}\tau_i$ in the remaining $d-r$ dimensions cannot be reduced by additional in-context samples, because the pretrained prior does not support these directions. Beyond the inherited pretraining terms, \ac{icl} also incurs a data-noise term and an \ac{icl}-specific uncertainty term. The data-noise term captures the effect of noise in the personalization data during the Bayesian update, while the \ac{icl}-specific uncertainty term captures the posterior uncertainty after conditioning on the personalization data. Both terms depend on the personalization data strength $s_i$ and the noise level $\tilsigma^2$, and both vanish as $s_i\to\infty$ or $\tilsigma^2\to 0$.

Under \eqref{eqn:theta_sft}, \ac{sft} outputs a direction-wise weighted combination of the pretrained prior and the \ac{ls} estimator. Accordingly, the \ac{sft} error $E_{\sft}^\lambda$ in Theorem~\ref{thm:err_analysis} combines inherited pretraining error terms and personalization data-driven error terms, with the trade-off controlled by the coefficients $\{\beta_i\}_{i=1}^{r}$. The \ac{ls} error consists of two parts: the variance of the \ac{ls} estimator and the \ac{sft} uncertainty induced by fine-tuning on the personalization data. Both terms decrease with the personalization data strength $s_i$. The coefficients $\{\beta_i\}_{i=1}^{r}$ determine how much weight is placed on the data-driven estimator relative to the pretrained prior, and they increase with $s_i$. When $s_i=\Theta(\tilN)$ and the personalization sample size $\tilN$ grows, we have $\beta_i\to 1$, so the inherited pretraining error vanishes, and the total error is dominated by the data-driven terms: the \ac{ls} variance and the \ac{sft} uncertainty. The weighting by $\beta_i$ applies only to the $r$ pretrained directions, since the pretrained model only covers these subspaces. In the remaining $d-r$ coordinates, the error consists solely of personalization data-driven terms. Overall, $E_{\sft}^\lambda$ decays at rate $\Theta(\tilN^{-1})$ when $s_i=\Theta(\tilN)$.

Overall, the errors of \ac{icl} and \ac{sft} both contain a component inherited from the pretrained model and a component induced by the personalization data. The key difference lies in how the two methods weight the pretrained information and how personalization data reduces the resulting error. In particular, \ac{icl} can only reduce error within the subspace covered by the pretrained prior, because it updates the model through Bayesian conditioning under that prior. By contrast, \ac{sft} can reduce error beyond the pretrained subspace, provided that the personalization data contain sufficient signal in those directions.

\subsection{Comparisons of Personalization Algorithms}\label{sec:comp}

Before comparing personalization methods, we first explain why personalization algorithms are preferable to directly using pretrained models. To this end, we compare \ac{icl} with the pretrained \acp{llm}. Since \ac{icl} updates the pretrained prior using personalization data, pretraining corresponds to the zero-sample case of \ac{icl}, implying $E_{\pre}=E_{\icl}$ when $s_i=0$ for all $i\in[r]$. With additional samples, \ac{icl} weakly dominates pretraining in estimation error.\looseness=-1

\begin{corollary}[\ac{icl} Weakly Dominates Pretraining]\label{coro:pre_icl}
    If Assumptions~\ref{assump:svd} and \ref{assump:norm} hold, then $E_{\icl} \leq E_{\pre}$. Moreover, this inequality is strict in the non-degenerate case, i.e., when $\sigma + c > 0$ and there exists $i \in [r]$ such that $s_i > 0$.
\end{corollary}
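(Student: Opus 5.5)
The plan is to reduce the comparison to a coordinate-wise one using the closed form for $E_{\icl}$ in Theorem~\ref{thm:err_analysis}. The key observation is that $E_{\pre}$ is exactly the $s_i=0$ specialization of that formula. Setting $s_i=0$ gives $\alpha_i=1$, a data-noise term of $0$, and an \ac{icl}-uncertainty term of $\bar\sigma^2\pi_i^{-1}$. The $i$-th summand then collapses to $E_{\pre,i}$.

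For $i>r$, both errors contribute $\tau_i$. Here I would check from the appendix expression for $E_{\pre,i}$ that the pretrained model carries neither mean nor covariance in the uncovered directions, so $E_{\pre,i}=\tau_i$. It therefore suffices to show, for each $i\in[r]$, that the \ac{icl} summand $f_i(s_i)$ satisfies $f_i(s_i)\le E_{\pre,i}=f_i(0)$, with strict inequality when $s_i>0$ and $\bar\sigma>0$.

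Fix $i\le r$ and write $D=\bar\sigma^2 s_i+\tilsigma^2\pi_i$ and $B_i=E_{\pre,i}-\bar\sigma^2\pi_i^{-1}$. I would first argue that $B_i\ge 0$. By the decomposition following Theorem~\ref{thm:err_analysis}, $E_{\pre,i}$ equals the mean estimation error $\bbE[(v_i^\top(\theta^*-\theta_{\pre}))^2]$ plus the pretraining uncertainty $v_i^\top\Sigma_{\pre}v_i=\bar\sigma^2/\pi_i$. This uses Assumption~\ref{assump:norm}, which gives $\Sigma_{\pre}=\bar\sigma^2(X^\top X)^{\inv}$. Hence $B_i$ is a second moment and is nonnegative.

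Then I would bound the two pieces separately:
\begin{align*}
E_{\pre,i}-f_i(s_i)=(1-\alpha_i^2)B_i+\Big[\frac{\bar\sigma^2}{\pi_i}-\frac{\tilsigma^2\bar\sigma^4 s_i}{D^2}-\frac{\tilsigma^2\bar\sigma^2}{D}\Big].
\end{align*}
The first term is nonnegative because $0<\alpha_i\le 1$. For the bracket, I would combine the two fractions as $\tilsigma^2\bar\sigma^2(\bar\sigma^2 s_i+D)/D^2$ and clear denominators. The inequality then reduces to $\tilsigma^2\pi_i(2\bar\sigma^2 s_i+\tilsigma^2\pi_i)\le D^2$. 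Expanding $D^2$ gives the difference $\bar\sigma^4 s_i^2\ge 0$, so the bracket equals $\bar\sigma^6 s_i^2/(\pi_i D^2)$. Summing over $i$ gives $E_{\icl}\le E_{\pre}$.

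Strictness then follows directly. If $\sigma+c>0$ then $\bar\sigma^2>0$, and any $i\in[r]$ with $s_i>0$ makes the bracket $\bar\sigma^6s_i^2/(\pi_iD^2)$ strictly positive. This holds regardless of $B_i$.

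The only step needing care is identifying $E_{\pre,i}$ from the appendix, in two respects:
\begin{itemize}
\item confirming that it splits as a nonnegative mean-error term plus exactly $\bar\sigma^2\pi_i^{-1}$ for $i\le r$;
\item confirming that it equals $\tau_i$ for $i>r$.
\end{itemize}
I would verify both by writing $\theta_{\pre}=(X^\top X)^{\inv}X^\top Y$ in the basis $V$, which has zero component outside $\col(X)$. The remaining steps are the elementary algebra above.
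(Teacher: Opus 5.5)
Your proposal is correct and follows essentially the paper's route: a direct coordinate-wise computation of $E_{\pre}-E_{\icl}$ from the closed forms in Theorem~\ref{thm:err_analysis}, in which the $i>r$ terms cancel and each summand with $i\le r$ is shown to be nonnegative. Your split into $(1-\alpha_i^2)B_i$ plus the residual $\bar\sigma^6 s_i^2/(\pi_i D^2)$ is only a regrouping of the paper's single expanded numerator, though it makes the strictness claim immediate.
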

Intuitively, the degenerate case arises when the pretrained \acp{llm} either provides no useful information or already fully determines the task, both of which are unlikely in practice. Consequently, this result rationalizes the use of personalization algorithms.

We next compare the errors of \ac{sft} and \ac{icl}. To capture the main intuition and simplify the analysis, we impose the following isotropicity assumption.

\begin{assumption}[Isotropicity]\label{assump:homo}
Across all covered directions, the energies of the pretraining samples and the samples of the desired task are isotropic: there exist scalars $\tau, s, m > 0$ such that $\tau_i = \tau$, $s_i = s$, and $\lvert v_i^{\top}\mu_\theta\rvert = m$ for every $i \in [d]$, and there exists a scalar $\pi > 0$ such that $\pi_i = \pi$ for every $i \in [r]$.
\end{assumption}
The isotropicity assumption is a structural simplification, not a claim of truly isotropic pretrained features. Although ${\tau_i}$, ${s_i}$, ${|v_i^\top\mu_\theta|}$, and ${\pi_i}$ are generally heterogeneous, our analysis focuses on the relative performance of \ac{sft} and \ac{icl}. The key qualitative results—comparative error performance, optimal \ac{sft} regularization, and signal propagation within the covered subspace—depend on aggregate signal-to-noise properties rather than fine-grained anisotropy. This assumption is standard in analyses of \acp{llm}~\citep{lu2025asymptotic,su2025isotropic}.

\begin{proposition}[Comparison between \ac{icl} and \ac{sft}]\label{prop:comp}
Under Assumptions~\ref{assump:svd}, \ref{assump:norm}, and \ref{assump:homo}, we define the coverage coefficient as $R=r/(d-r)$, and the signal-to-noise ratio in the subspace not covered by the pretrained model as $\kappa=s\tau/(2\tilsigma^2)$. Then, \ac{sft} with the optimal regularization parameter $\lambda^*$ achieves weakly lower error than \ac{icl}, i.e., $E_{\sft}^{\lambda^*} \leq E_{\icl}$, if and only if the following condition holds.
\begin{align}\label{ieq:R_cond}
    R\geq \frac{(1-\kappa)\big(2\bar\sigma^2\kappa+\pi\tau\big)^2\big(2\bar\sigma^2\kappa+\pi\kappa(\tau+m^2)+\pi\tau\big)}
{\pi^{3}\,\kappa^2\,\tau\,(\tau+m^2)^{2}}.
\end{align}
We define the right-hand side of \eqref{ieq:R_cond} as $R_{\crit}$.
\end{proposition}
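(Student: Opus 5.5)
The plan is to reduce the comparison to a per-coordinate computation using Theorem~\ref{thm:err_analysis}. Under Assumption~\ref{assump:homo} every covered coordinate $i\le r$ has the same parameters $(\pi,s,\tau,m)$, and every uncovered coordinate $i>r$ has the same parameters $(s,\tau)$. Both errors therefore take the form $r\cdot(\text{covered term})+(d-r)\cdot(\text{uncovered term})$.

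\emph{Pretraining error.} First I will make $E_{\pre,i}$ explicit from Appendix~\ref{app:error_analysis}. I expect it to split into the squared prior-mean mismatch plus task variance, $m^2+\tau$, an estimation variance $\bar\sigma^2/\pi$, and a pretraining uncertainty $\tr$-contribution $\bar\sigma^2/\pi$. This gives $E_{\pre,i}=\tau+m^2+2\bar\sigma^2/\pi$, and hence $E_{\pre,i}-\bar\sigma^2\pi^{-1}=\tau+m^2+\bar\sigma^2/\pi$.

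\emph{ICL error.} Substituting into the $E_{\icl}$ formula with $\alpha=\tilsigma^2\pi/(\bar\sigma^2 s+\tilsigma^2\pi)$ gives a closed form $h$ for the covered coordinates. It is convenient to rewrite $h$ in terms of $\kappa=s\tau/(2\tilsigma^2)$, i.e.\ $s/\tilsigma^2=2\kappa/\tau$. The expression then involves only the combination $2\bar\sigma^2\kappa+\pi\tau$, which already appears in \eqref{ieq:R_cond}. The uncovered coordinates each contribute $\tau$.

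\emph{SFT error with optimal $\lambda$.} Next I will handle SFT. The uncovered coordinates each contribute $2\tilsigma^2/s=\tau/\kappa$, independently of $\lambda$. On a covered coordinate the error is a function of $\beta\in(0,1]$ only, and $\lambda\ge 0$ sweeps $\beta$ over this whole range, so optimizing over $\lambda$ is the same as optimizing over $\beta$.
\begin{itemize}
\item I will re-derive the per-coordinate error from the appendix in its bias--variance form, which I expect to be quadratic in $\beta$: roughly $(1-\beta)^2$ times the mismatch, plus variance and uncertainty terms.
\item Minimizing this quadratic gives $\beta^*$ and the optimal covered error $g$, a rational function of $(\bar\sigma^2,\pi,\tau,m,\kappa)$.
\end{itemize}
I also need to check that $\beta^*\in(0,1]$ is attained by some $\lambda^*\ge0$, or otherwise take the boundary value.

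\emph{Reducing to the threshold.} The condition $E_{\sft}^{\lambda^*}\le E_{\icl}$ becomes
\begin{align*}
r\,(h-g)\ \ge\ (d-r)\,\tau\Big(\frac1\kappa-1\Big).
\end{align*}
I will show $h-g>0$; it is the gain of optimally tuned SFT over the Bayesian ICL update within the covered subspace, and I expect it to factor with the cube and square of $2\bar\sigma^2\kappa+\pi\tau$ and related terms in the denominator. Dividing by $(d-r)(h-g)$ then gives $R\ge \tau(1-\kappa)/(\kappa(h-g))$. Simplifying $1/(h-g)$ should produce exactly the right-hand side of \eqref{ieq:R_cond}.

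The case $\kappa\ge1$ needs no separate treatment, since there $R_{\crit}\le 0\le R$ and the condition holds automatically. This is consistent with SFT then beating ICL even on the uncovered directions.

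The main obstacle is the algebra for $h-g$. That means getting the correct SFT per-coordinate error as a function of $\beta$, optimizing it, and then factoring the difference cleanly. I will first try putting everything over the common denominator $\pi(2\bar\sigma^2\kappa+\pi\tau)^2$ and checking that the numerator collapses to $\pi^3\kappa^2\tau(\tau+m^2)^2$ divided by the factor $2\bar\sigma^2\kappa+\pi\kappa(\tau+m^2)+\pi\tau$. I also need to verify that $h-g>0$, so that the direction of the inequality is preserved when dividing.
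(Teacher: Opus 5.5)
Your plan is correct and is essentially the paper's proof. The paper likewise computes the optimally tuned SFT error and the ICL error on the covered coordinates, shows the covered-space gap is strictly positive, and divides through after substituting $s=2\tilsigma^2\kappa/\tau$. Your minimization of the quadratic $(1-\beta)^2\zeta+2\beta^2\tilsigma^2/s$ over $\beta$ (with $\zeta=\tau+2\bar\sigma^2/\pi+m^2$), in place of the paper's $(A+B\lambda^2)/(C+D\lambda)^2$ over $\lambda$, is only a reparametrization: it gives the same $\lambda^*=2\bar\sigma^2/(\pi\zeta)$ and covered error $2\tilsigma^2\zeta/(s\zeta+2\tilsigma^2)$.

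Two notes for the algebra. First, your $(1-\beta_i)^2$ is right; the main-text display's $(1-\beta_i^2)$ is a typo, as the appendix's explicit rational form confirms. Second, the gap is $h-g=\pi^3\kappa\tau^2(\tau+m^2)^2/\big[(2\bar\sigma^2\kappa+\pi\tau)^2\big(2\bar\sigma^2\kappa+\pi\kappa(\tau+m^2)+\pi\tau\big)\big]$. It contains only a square of $2\bar\sigma^2\kappa+\pi\tau$, not a cube, and $\tau(1-\kappa)/(\kappa(h-g))=R_{\crit}$ follows immediately.
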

The proposition characterizes when \ac{sft} or \ac{icl} attains lower error, governed by two quantities: pretrained coverage $R=r/(d-r)$ and the ``null-space'' \ac{snr} $\kappa=s\tau/(2\tilsigma^2)$. Within the pretraining-covered subspace, \ac{sft} always dominates \ac{icl}, since its tunable shrinkage can match or improve upon \ac{icl}’s Bayesian update (shown in our proof). Thus, \ac{icl} can outperform \ac{sft} only through its conservatism in the null space: \ac{icl} ignores uncovered directions and incurs a fixed cost $\tau$, whereas \ac{sft} attempts to learn there and pays $2\tilsigma^2/s$, which is detrimental when $\kappa<1$. The threshold $R_{\crit}$ in \eqref{ieq:R_cond} quantifies the coverage required for \ac{sft}$’$s row-space advantage to offset this penalty: if $\kappa\ge1$, \ac{sft} strictly dominates for any $R>0$; if $0<\kappa<1$, \ac{sft} is optimal iff $R\ge R_{\crit}$. 

\begin{tcolorbox}[enhanced jigsaw,breakable,frame hidden, left = 0.1mm, right = 0.1mm, top = 0.1mm, bottom = 1.5mm]
\refstepcounter{finding}\label{find:icl}
\textbf{Finding \thefinding:}
When the personalization task is poorly covered by the pretrained prior and the personalization data are noisy or limited, \ac{icl} can achieve lower error than \ac{sft} under the linear model. In this regime, \ac{icl} is safer because it updates only within the subspace supported by the pretrained prior, while \ac{sft} may incur large variance when learning poorly supported directions. Conversely, when the personalization data are sufficiently informative, \ac{sft} can achieve lower error by learning beyond the pretrained subspace.

\end{tcolorbox}


\subsection{Comparison between Realistic and Linearly Approximated \acp{llm}}

\begin{table}[t]
\centering
\begin{tabular}{|p{2.2cm}|p{6.2cm}|p{6.2cm}|}
\hline
 & Realistic \ac{llm} & Linear approximation of \ac{llm} \\ \hline
Pretraining 
& Pretraining data cover diverse topics, allowing the model to learn a broad prior over them. 
& Data diversity: \eqref{eq:pretrain_data};  A broad prior: \eqref{eq:pre_cov}.
\\ \hline
\ac{icl} 
& \ac{icl} often outperforms pretraining alone, but it is difficult for \ac{icl} to exploit knowledge that is absent from pretraining. 
& Improved performance: Corollary~\ref{coro:pre_icl}; Constrained improvement: $\tau_i$ of $E_{\icl}$ in Theorem~\ref{thm:err_analysis}. 
\\ \hline
\ac{sft} 
& \ac{sft} may suffer from catastrophic forgetting, but it can also inject new knowledge into the pretrained model. 
& Catastrophic forgetting: when $s_i$ is small, $E_{\sft}^{\lambda}$ in Theorem~\ref{thm:err_analysis} is large; Knowledge injection: when $s_i$ is large, $E_{\sft}^{\lambda}$ tends to $0$. 
\\ \hline
\end{tabular}
\caption{Comparison between realistic \acp{llm} and their linear approximation under pretraining, \ac{icl}, and \ac{sft}.}
\label{tab:compare}
\end{table}
In the following, we explain how our linear approximation captures the behavior of realistic \acp{llm} in personalization tasks. Consider a user who wants to adapt a pretrained \ac{llm} to a target task using personalization data. Whether \ac{icl} or \ac{sft} is more effective depends on two factors: how well the target task is already covered by the pretrained model, and how informative the personalization data are. Table~\ref{tab:compare} compares this behavior in realistic \acp{llm} and in our linear approximation. During pretraining, realistic \acp{llm} learn from diverse corpora and acquire a broad prior over tasks~\citep{touvron2023llama,grattafiori2024llama}. In the linear model, this is captured by the heterogeneous task parameters in the pretraining data distribution \eqref{eq:pretrain_data} and by the learned prior covariance in \eqref{eq:pre_cov}. Given a personalization task, \ac{icl} uses demonstrations to identify and exploit the task-relevant knowledge already present in this prior, without changing the model parameters. This explains why \ac{icl} often improves over pretraining alone, as formalized by Corollary~\ref{coro:pre_icl}. However, because \ac{icl} updates only within the subspace supported by the pretrained prior, its improvement is limited when the target task requires knowledge poorly covered by pretraining; this limitation appears in Theorem~\ref{thm:err_analysis} through the residual task-variation terms in $E_{\icl}$. In contrast, \ac{sft} updates the model parameters using personalization data and can therefore learn beyond the pretrained subspace. This makes \ac{sft} more powerful when the personalization data are sufficiently informative. At the same time, when the personalization data are noisy or limited, \ac{sft} may fit poorly supported directions and incur large estimation error, reflecting the practical trade-off between retaining pretrained knowledge and injecting new task-specific knowledge~\citep{luo2025empirical}. Theorem~\ref{thm:err_analysis} captures this trade-off through $E_{\sft}^{\lambda}$: small personalization data strength leads to large data-driven error, whereas sufficiently large personalization data strength allows the \ac{sft} error to decrease and eventually learn the target task effectively.


\section{LLM-Serving Congestion Game of Users}\label{sec:mfcg}
Section~\ref{sec:stat_analysis} characterizes which personalization method is statistically optimal. However, users do not choose in isolation; they share a platform with limited computational resources. When many users personalize simultaneously, congestion arises, leading to higher latency as aggregate demand increases. This section asks: what equilibrium emerges when users strategically choose personalization methods and sample sizes while facing congestion?

We establish three main results. First, an equilibrium exists and the equilibrium congestion level is unique (Section~\ref{sec:exist}). Second, the pretrained \acp{llm} and target tasks exert nuanced, non-monotone effects on equilibrium congestion (Section~\ref{sec:homo_mfcg}). Finally, the pretrained \acp{llm} can cause the equilibrium to anchor on a particular user type (Section~\ref{sec:hetero_mfcg}).

\subsection{Existence and Congestion-Level Uniqueness of Equilibrium}\label{sec:exist}
We first characterize user types and derive the corresponding expressions for $E_a(t,\tilN)$ in~\eqref{eq:err}. Under Assumptions~\ref{assump:svd}, \ref{assump:norm}, and \ref{assump:homo}, the effective eigenvalue $s$ of $\tilX^\top\tilX$ scales with the number of personalization samples $\tilN$. We model this as $s=\tilN^{\alpha}$ for some $0<\alpha\le1$, capturing potentially sublinear information growth due to data redundancy. When the rows of $\tilX$ are i.i.d.\ standard Gaussian, $s=\tilN$ holds with high probability (see Appendix~\ref{app:random_cov}).

All remaining parameters in Assumptions~\ref{assump:svd}, \ref{assump:norm}, and \ref{assump:homo}—$d,r,\bar{\sigma},\tilsigma,m,\pi,$ and $\tau$—are intrinsic to the personalization problem. We therefore define the user type as $t=(d,r,\bar{\sigma},\tilsigma,m,\pi,\tau)$, with dependence on the sample size captured by $s=\tilN^{\alpha}$. The type space $\mathcal T$ is any compact, feasible subset of these parameters, e.g., $(d,r)\in\{(x,y)\given 500\le x\le1000,;500\le y\le x\}$. Under these assumptions, Theorem~\ref{thm:err_analysis} and Proposition~\ref{prop:comp} characterize the \ac{icl} and optimal \ac{sft} errors as
\begin{align}
    E_{\icl}(t,\tilN)
    &\!=\! \frac{r\tilsigma^2\big(2\bar{\sigma}^4 \tilN^{\alpha}+\tilsigma^2\pi^2\zeta\big)}
    {(\bar{\sigma}^2 \tilN^{\alpha}+\tilsigma^2\pi)^2}
    +(d-r)\tau,\, E_{\sft}(t,\tilN)
    \!=\! \frac{2r\tilsigma^2\zeta}{\tilN^{\alpha}\zeta+2\tilsigma^2}
    +2(d-r)\frac{\tilsigma^2}{\tilN^{\alpha}},\label{eq:sft_N}
\end{align}
where $\zeta = \tau + 2\bar{\sigma}^2\pi^{-1} + m^2$. Here we set the \ac{sft} regularization parameter to its optimal value $\lambda^{\ast}$, and omit the superscript $\lambda^{\ast}$ for notational simplicity. With these specifications, the cost function in~\eqref{eq:cost} admits explicit expressions. We then analyze the resulting equilibrium properties.



\begin{theorem}[Existence and Congestion-Level Uniqueness of Equilibrium]\label{thm:exist_unique}
    Under regularity conditions of $h$, an equilibrium $(\pi^{\ast},R^{\ast})$ exists. Moreover, the equilibrium congestion level $R^{\ast}$ is unique. In other words: if $(\pi_1^{\ast},R_{1}^{\ast})$ and $(\pi_2^{\ast},R_{2}^{\ast})$ are two equilibria, then $R_{1}^{\ast}=R_{2}^{\ast}$.
\end{theorem}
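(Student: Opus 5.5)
We reduce the equilibrium to a scalar fixed-point problem in the congestion level. Fix $R\ge 0$ and write $\bar h=\bar h(R,p)=p+h(R)>0$. For each type $t$ and algorithm $a$, the cost $\tilN\mapsto E_a(t,\tilN)+R_a\tilN\bar h$ is continuous on $[0,\infty)$. Both errors in \eqref{eq:sft_N} are bounded below, and the linear term $R_a\bar h\tilN$ tends to infinity, so the minimum over $\bar{\calA}$ is attained; hence $\BR(t,R)\neq\emptyset$. (If $E_{\sft}$ is taken as $+\infty$ at $\tilN=0$, we set $E_{\sft}(t,0)=E_{\pre}$, or simply note that the minimizer of $\sft$ lies in $(0,\infty)$.)

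For the upper bound, any best response satisfies $R_a\tilN\bar h\le E_{\icl}(t,0)=E_\pre(t)$, since choosing $\icl$ with $\tilN=0$ is always feasible. Therefore the resource used by any best response is at most $\bar E/\bar h(R,p)\le\bar E/p$, where $\bar E=\sup_{t\in\calT}E_\pre(t)<\infty$ by compactness and continuity in $t$. Define the set-valued aggregate map
\[
\Phi(R)=\Bigl\{\textstyle\int_\calT \rho\,T(\rmd t):\ \rho \text{ a measurable selection of } t\mapsto \{R_a\tilN : (a,\tilN)\in\BR(t,R)\}\Bigr\},
\]
and note that equilibria correspond to fixed points $R^*\in\Phi(R^*)$. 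Here randomized $\pi_t$ are allowed, so $\Phi(R)$ is the Aumann integral and is convex.

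\textbf{Existence.} $\Phi$ maps $[0,\bar E/p]$ into itself and has nonempty, convex, compact values. A measurable selection exists by the Kuratowski--Ryll-Nardzewski theorem, because $\BR(t,R)$ is compact-valued and upper hemicontinuous in $(t,R)$ by Berge's maximum theorem, the cost being jointly continuous in $(t,\tilN,R)$. Upper hemicontinuity of $R\mapsto\Phi(R)$ then follows from Berge combined with a Fatou/dominated-convergence argument for Aumann integrals, using the uniform bound $\bar E/p$. Kakutani's theorem on the interval, or simply the intermediate value argument for a closed-graph convex-valued correspondence on $[0,\bar E/p]$, gives a fixed point. We then build $\pi^*$ from the selection, mixing two pure selections if needed to hit an interior value.

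\textbf{Uniqueness.} This is the monotonicity step, and we expect it to be the crux. Let $u=R_a\tilN$ denote the resource used by a user. Then $C=E_a(t,\tilN)+u\,\bar h(R)$, so for fixed $t$ each user minimizes $g_t(u)+u\bar h$ with $g_t(u)=\min_a E_a(t,u/R_a)$. By the standard single-crossing (revealed preference) argument, if $\bar h_1<\bar h_2$, $u_1$ is optimal at $\bar h_1$, and $u_2$ is optimal at $\bar h_2$, then adding the two optimality inequalities gives $(u_1-u_2)(\bar h_2-\bar h_1)\ge 0$, hence $u_1\ge u_2$. Integrating over types, every element of $\Phi(R_1)$ is at least every element of $\Phi(R_2)$ whenever $R_1<R_2$, because $h$ is strictly increasing. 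Now suppose there are two equilibria with $R_1^*<R_2^*$. Then $R_1^*\in\Phi(R_1^*)$ implies $R_1^*\ge R_2^*$, which is a contradiction. This mirrors the Lasry--Lions monotonicity condition, applied only to the congestion level. The main care needed is measurability of the selections and the fact that the inequality holds pointwise in $t$ for arbitrary mixed best responses (apply it to each support point). We do not obtain uniqueness of $\pi^*$, since ties between $\icl$ and $\sft$ can persist.
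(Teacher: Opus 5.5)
Your proposal is correct and follows essentially the same route as the paper: a Kakutani fixed point for the Aumann-integrated best-response demand correspondence on a bounded interval, with the equilibrium policy recovered by mixing, and uniqueness from the same revealed-preference monotonicity argument as Lemma~\ref{lem:strong_mono}. Your uniform bound $\bar E/p$, obtained by comparing any best response with the free action $(\icl,0)$, is a more direct substitute for the paper's Lemma~\ref{lem:integrable}.

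The one thing to fix is the definition of $\Phi(R)$. Take selections of $\mathrm{co}\,D(t,R)$ (the paper's $\hat D(t,R)$) rather than of $D(t,R)$. When $T$ has atoms, the Aumann integral of $D(t,R)$ need not be convex, so Kakutani would not apply as stated. Homogeneous users are exactly such a case: there $\Phi(R)=D(t,R)$ can be a two-point set, and the equilibrium of Theorem~\ref{thm:homo} genuinely mixes.
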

Economically, the existence result rules out pathological cases in which demand fails to clear or conjectured congestion levels cannot be supported by any population behavior. The proof also yields an explicit upper bound on equilibrium congestion $R^{\ast}$, ensuring that congestion remains finite and that the system never requires unbounded computational resources, consistent with real-world \ac{llm} service constraints.

Unlike much of the mean-field game literature~\citep{lasry2007mean,guo2019learning,zhang2023learning}, which focuses on uniqueness of the equilibrium policy $\pi^{\ast}$, Theorem~\ref{thm:exist_unique} establishes uniqueness of the induced congestion level $R^{\ast}$. Thus, even when multiple equilibrium policies exist, the model delivers a unique prediction for congestion. This is crucial because $R^{\ast}$ determines latencies and effective per-unit compute costs $p+h(R)$ and is the key state variable for platform demand and revenue forecasting (Section~\ref{sec:smfcg}). 

The uniqueness proof relies on a no-crossing property (Lemma~\ref{lem:strong_mono}): higher congestion raises the per-unit cost $p+h(R)$ and shifts best-response resource demands downward. As a result, aggregate demand is decreasing in conjectured congestion, yielding a unique fixed point for $R^{\ast}$ (Theorem~\ref{thm:exist_unique}). This mirrors the classical monotonicity condition in mean-field games~\citep{lasry2007mean,graber2023monotonicity}, but at the congestion level: monotonicity of best responses in congestion ensures uniqueness of the equilibrium congestion.

Having established existence and uniqueness of the equilibrium congestion level, we next study two special cases—homogeneous users and two user types—to examine how \ac{llm} characteristics and user heterogeneity shape equilibrium outcomes.

\subsection{Analysis of LSCG with Homogeneous Users}\label{sec:homo_mfcg}


\paragraph{Characterization of the Equilibrium.} To build intuitions, we first study the \ac{lscg} with \emph{homogeneous} users, where all users share the same type and the type index $t$ is omitted from the notation. To highlight the core insights, we simplify the \ac{icl} and \ac{sft} error expressions in~\eqref{eq:sft_N} while preserving their asymptotic behavior as $\tilN\to0$ and $\tilN\to\infty$. When $\tilN$ is large, as is typical in practice, we have\looseness=-1
\begin{align*}
    E_{\icl}(\tilN) = (d-r)\tau +\Theta(\tilsigma^2r \tilN^{-\alpha}),\quad E_{\sft}(\tilN) = \Theta(\tilsigma^2d \tilN^{-\alpha}).
\end{align*}
In contrast, as $\tilN\downarrow 0$, we have that $E_{\icl}(0) = (d-r)\tau + r\zeta$, and $ \lim_{\tilN\downarrow 0}E_{\sft}(\tilN)=+\infty$. To simplify expressions while preserving these asymptotic behaviors, we approximate error functions \eqref{eq:sft_N} by
\begin{align}
    E_{\icl}(\tilN) =\frac{2\tilsigma^2r}{\tilN^\alpha + 2\tilsigma^2 \zeta^{-1}}+(d-r)\tau,\quad 
    E_{\sft}(\tilN) =\frac{2\tilsigma^2 d}{\tilN^\alpha}.\label{eq:simplified}
\end{align}
We emphasize that this simplification preserves the main insights underlying the comparison between \ac{icl} and \ac{sft} in Proposition~\ref{prop:comp}.
In particular, when the signal-to-noise ratio of the personalization data is large, i.e.,
$\kappa = \tau \tilN^\alpha/(2\tilsigma^2) > d/(d-r)$,
\ac{sft} achieves a lower error than \ac{icl}.

In the remainder of this section, we set $\alpha=1$, corresponding to the case of i.i.d.\ Gaussian covariates, as discussed in Appendix~\ref{app:random_cov}. For analytical convenience, we further adopt the quadratic congestion function $h(R)=R^2$, so that $\bar h(R,p)=p+R^2$.
This quadratic form is adopted for the ease of calculation. However, our results in this section hold for a wide range of convex and non-negative congestion functions $h$. We empirically verify that all our results hold for $h(x)=\exp(x)$ and $h(x)=\max\{0,x\}$ in Appendix~\ref{app:add_exp}. The following assumption is imposed to reflect the relative computational requirements of \ac{sft} and \ac{icl}.
\begin{assumption}[Relative Resource Consumption of \ac{sft} and \ac{icl}]\label{assump:large}
    The consumed resource of \ac{sft} per sample is larger than that of \ac{icl}, i.e., $R_{\sft}> R_{\icl}$.
\end{assumption}
Discussed in Section~\ref{sec:model}, \ac{sft} updates the \ac{llm}’s parameters using personalization data, requiring data prefill, gradient backpropagation, and updates to both parameters and optimizer states. In contrast, \ac{icl} involves only data prefill. Thus, \ac{sft} is more computationally demanding than \ac{icl}.

We analyze equilibrium behavior under homogeneous users by first characterizing outcomes when users are restricted to a single algorithm, and then comparing \ac{sft} and \ac{icl} to study algorithm switching and derive the full equilibrium. We begin by characterizing user behavior under a fixed algorithm.
For any $a\in\mathcal{A}=\{\sft,\icl\}$ and per-unit resource cost $H$, which aggregates monetary and time costs, we define the minimal cost and corresponding optimal sample size as
\begin{align*}
\Phi_a(H)=\min_{N\geq 0}\big\{E_a(N)+R_aNH\big\}, \quad N_a(H) = \argmin_{N\geq 0}\big\{E_a(N)+R_aNH\big\}.
\end{align*}
For both algorithms $a\in\{\icl,\sft\}$, the statistical error $E_a(N)$ in~\eqref{eq:simplified} is strictly convex in $N$, while the resource cost $R_a N H$ is linear in $N$ for fixed $H$; hence the optimal sample size $N_a(H)$ is uniquely defined. If users are restricted to a single algorithm $a\in\mathcal A$, they optimally choose $N_a(H)$ at congestion level $R = \sqrt{H-p}$. We characterize behavior under this restriction by the within-algorithm fixed point:
\[
H_a^{\ast}=\bar h\!\left(R_a N_a(H_a^{\ast}),p\right)=(R_a N_a(H_a^{\ast}))^2+p,
\]
where $R_a^{\ast}=\sqrt{H_a^{\ast}-p}$ is the induced equilibrium congestion. Existence and uniqueness of $H_a^{\ast}$ are established later. To connect these within-algorithm equilibria to the full game, we compare minimal costs across algorithms. Let $\psi(H)=\Phi_{\sft}(H)-\Phi_{\icl}(H)$, and define $H_{\sep}^{\ast}$ as the solution to $\psi(H)=0$, which separates regions where users prefer \ac{sft} or \ac{icl}.


\begin{theorem}[Equilibrium with Homogeneous Users]\label{thm:homo}
    The game specified by \eqref{eq:simplified}, $\alpha=1$, $h(x)=x^2$, and $R_{\sft}>R_{\icl}$ has the following properties.
    \begin{itemize}[leftmargin=1em]
        \item[1.] The optimal \ac{sft} sample size requires more resources than \ac{icl}: \mbox{$\!\!R_{\sft}N_{\sft}(H)\!\!>\!R_{\icl}N_{\icl}(H),\!\forall H\!\!\geq\!0$.}
        \item[2.] The fixed points $H_{\sft}^{\ast}$ and $H_{\icl}^{\ast}$ exists and are unique. In addition, $H_{\sft}^{\ast}>H_{\icl}^{\ast}\geq p$.
        \item[3.] The function $\psi(H)$ is strictly increasing in $H$, and has unique root $H_{\sep}^{\ast}>0$.
        \item[4.] The equilibrium $(\pi^{\ast},R^{\ast})$ is specified for the unique user type $ $ with $T(t)=1$ as follows.
        \begin{align*}
            &H^{\ast} = \max\big\{H_{\icl}^{\ast},\min\{H_{\sep}^{\ast},H_{\sft}^{\ast}\}\big\}, \quad R^{\ast} = \sqrt{H^{\ast}-p},\\
            &\pi_{t}^{\ast}\Big(\big(\sft,N_{\sft}(H^{\ast})\big)\Big) =\frac{\sqrt{H^{\ast}-p}-R_{\icl}N_{\icl}(H^{\ast})}{ R_{\sft}N_{\sft}(H^{\ast})-R_{\icl}N_{\icl}(H^{\ast})},\\
            &\pi_{t}^{\ast}\Big(\big(\icl,N_{\icl}(H^{\ast})\big)\Big)= 1-\pi^{\ast}\Big(\big(\sft,N_{\sft}(H^{\ast})\big)\Big).
        \end{align*}
    \end{itemize}
\end{theorem}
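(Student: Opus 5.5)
The plan is to compute everything in closed form from \eqref{eq:simplified} with $\alpha=1$. Write $A=2\tilsigma^2 r$, $b=2\tilsigma^2\zeta^{-1}$ and $D=2\tilsigma^2 d$.

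\textbf{Step 1: optimal sample sizes.} The first-order conditions are as follows.
\begin{itemize}
\item For \ac{sft}, $D/N^2=R_{\sft}H$, so $N_{\sft}(H)=\sqrt{D/(R_{\sft}H)}$ and $R_{\sft}N_{\sft}(H)=\sqrt{DR_{\sft}/H}$.
\item For \ac{icl}, $A/(N+b)^2=R_{\icl}H$, so $N_{\icl}(H)=\max\{0,\sqrt{A/(R_{\icl}H)}-b\}$.
\end{itemize}
Item 1 then follows from
\[
R_{\icl}N_{\icl}(H)\le \sqrt{AR_{\icl}/H}<\sqrt{DR_{\sft}/H},
\]
which uses $r\le d$ and $R_{\icl}<R_{\sft}$. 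At $H=0$ we have $N_{\sft}=\infty$; we interpret this via limits or note that $H\ge p>0$ is the relevant range.

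\textbf{Step 2: within-algorithm fixed points.} For each $a$, the map $H\mapsto R_aN_a(H)$ is continuous and nonincreasing, so $g_a(H)=H-p-(R_aN_a(H))^2$ is strictly increasing. Moreover $g_a(p)\le 0$ and $g_a(H)\to\infty$, which gives existence and uniqueness of $H_a^{\ast}\ge p$.

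For the strict ordering, suppose $H_{\sft}^{\ast}\le H_{\icl}^{\ast}$. Monotonicity and Step 1 would then give
\[
H_{\icl}^{\ast}-p=(R_{\icl}N_{\icl}(H_{\icl}^{\ast}))^2<(R_{\sft}N_{\sft}(H_{\icl}^{\ast}))^2\le (R_{\sft}N_{\sft}(H_{\sft}^{\ast}))^2=H_{\sft}^{\ast}-p,
\]
a contradiction. Equivalently, $g_{\sft}(H_{\icl}^{\ast})<g_{\icl}(H_{\icl}^{\ast})=0$.

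\textbf{Step 3: the separation function.} By the envelope theorem, $\Phi_a'(H)=R_aN_a(H)$. Hence
\[
\psi'(H)=R_{\sft}N_{\sft}(H)-R_{\icl}N_{\icl}(H)>0
\]
by item 1. For the boundary behavior:
\begin{itemize}
\item As $H\downarrow 0$, $\Phi_{\sft}(H)=2\sqrt{DR_{\sft}H}\to0$ while $\Phi_{\icl}(H)\to (d-r)\tau\ge 0$. To get $\psi<0$ near $0$, I will need to handle $d=r$ carefully: $\Phi_{\icl}\approx 2\sqrt{AR_{\icl}H}$ versus $2\sqrt{DR_{\sft}H}$. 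This step is the main obstacle.
\item As $H\to\infty$, $\Phi_{\sft}\to\infty$ while $\Phi_{\icl}\to E_{\icl}(0)$ is bounded, so $\psi>0$.
\end{itemize}
Strict monotonicity then yields a unique root $H_{\sep}^{\ast}$. If the root sits at or below $0$ because $r=d$, the formula in item 4 still works with $H^{\ast}=H_{\icl}^{\ast}$. I expect to assume $r<d$, as in the type space, or to interpret $H_{\sep}^{\ast}$ as the positive root when it exists.

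\textbf{Step 4: equilibrium characterization.} Given $H$, users strictly prefer \ac{sft} if $H<H_{\sep}^{\ast}$, strictly prefer \ac{icl} if $H>H_{\sep}^{\ast}$, and are indifferent at $H=H_{\sep}^{\ast}$. I then split into three cases.
\begin{itemize}
\item If $H_{\sep}^{\ast}\ge H_{\sft}^{\ast}$, then pure \ac{sft} at $H_{\sft}^{\ast}$ is consistent and optimal, and the mixing weight in the formula equals $1$.
\item If $H_{\sep}^{\ast}\le H_{\icl}^{\ast}$, then pure \ac{icl} at $H_{\icl}^{\ast}$ works, with weight $0$.
\item Otherwise $H^{\ast}=H_{\sep}^{\ast}$, and users mix between $(\sft,N_{\sft})$ and $(\icl,N_{\icl})$. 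Consistency, $\sqrt{H^{\ast}-p}=qR_{\sft}N_{\sft}+(1-q)R_{\icl}N_{\icl}$, gives the stated $q$. We have $q\in(0,1)$ because $g_{\sft}(H^{\ast})>0>g_{\icl}(H^{\ast})$ when $H_{\icl}^{\ast}<H^{\ast}<H_{\sft}^{\ast}$.
\end{itemize}
Uniqueness of $R^{\ast}$ follows from Theorem~\ref{thm:exist_unique}, so this equilibrium is the equilibrium congestion level.
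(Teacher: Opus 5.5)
Your proposal is correct and follows essentially the paper's route: closed-form $N_a(H)$, monotone within-algorithm fixed-point maps, strict monotonicity of $\psi$, and a three-case analysis. Two differences are worth noting. You get $\psi'(H)=R_{\sft}N_{\sft}(H)-R_{\icl}N_{\icl}(H)$ from the envelope identity $\Phi_a'(H)=R_aN_a(H)$, which is exactly the paper's piecewise derivative but more cleanly derived. In item 4 you argue sufficiency and then invoke uniqueness of $R^\ast$ from Theorem~\ref{thm:exist_unique}; the paper argues only necessity and never checks that the mixing weight lies in $[0,1]$.

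Two small corrections. First, $r=d$ is not an obstacle you can handle but a genuine exception. There $\psi(0^+)=-(d-r)\tau=0$ and $\psi>0$ on $(0,\infty)$, so item 3's claim $H_{\sep}^\ast>0$ really requires $r<d$. The paper uses this silently when it asserts $\psi(0)<0$, and its sample type space does allow $r=d$. Second, the signs in your Step 4 are reversed: for $H_{\icl}^\ast<H^\ast<H_{\sft}^\ast$ one has $g_{\icl}(H^\ast)>0>g_{\sft}(H^\ast)$. That is what gives $R_{\icl}N_{\icl}(H^\ast)<\sqrt{H^\ast-p}<R_{\sft}N_{\sft}(H^\ast)$, and hence a mixing weight in $(0,1)$.
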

\begin{wrapfigure}{r}{\myfigwidth}
    \centering
    \includegraphics[width=\myfigwidth]{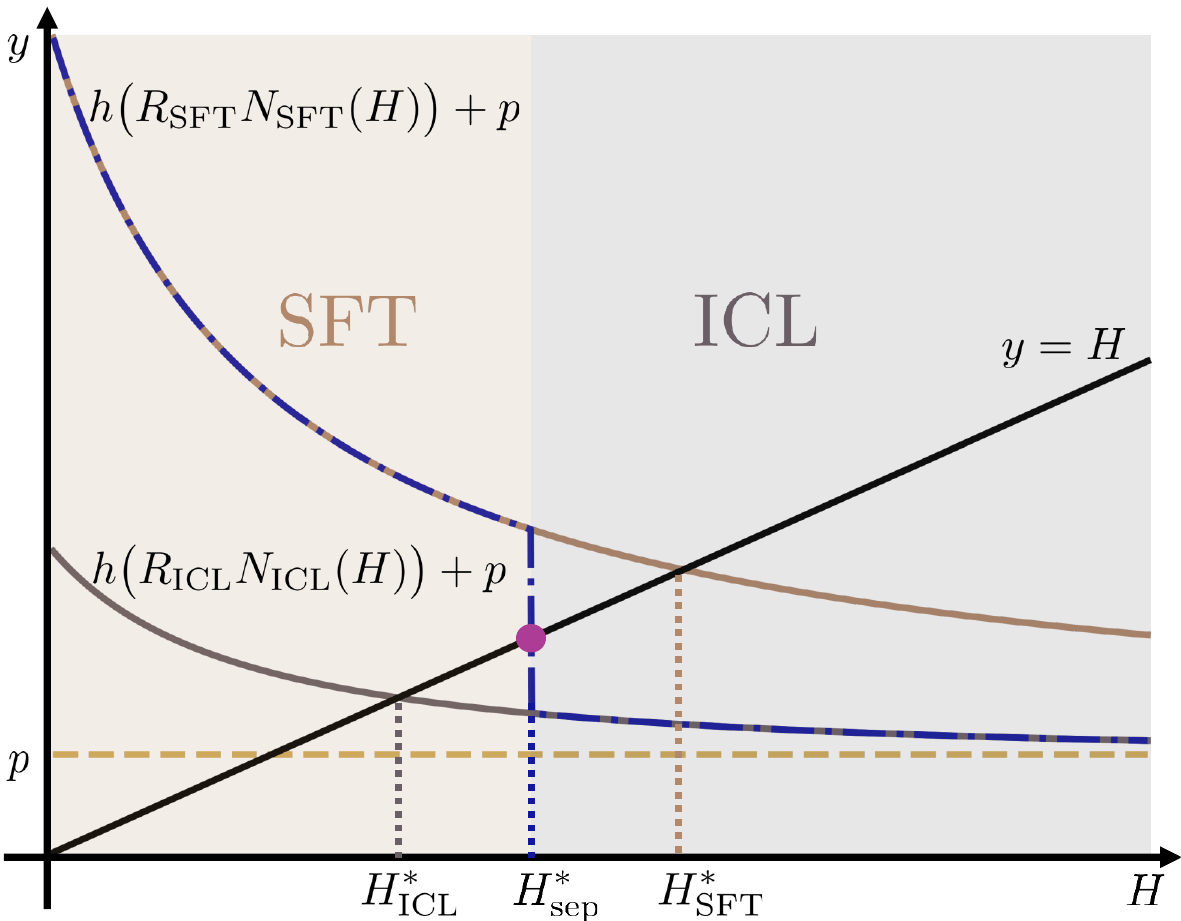}
    \caption{This figure illustrates the structure of the equilibrium with homogeneous users. The intersection point of the line $y=H$ and the blue dotted curve corresponds to the equilibrium of the game.}
    \label{fig:homo}
\end{wrapfigure}
Theorem~\ref{thm:homo} characterizes the equilibrium with homogeneous users by reducing the equilibrium problem to a one-dimensional fixed point in the effective per-unit compute cost $H=p+h(R)$, illustrated in Figure~\ref{fig:homo}. 

Parts (1)–(3) establish the structure underlying equilibrium selection. First, \ac{sft} induces strictly higher resource demand than \ac{icl} at any given $H$, as implied by Assumption~\ref{assump:large} and illustrated in Figure~\ref{fig:homo}, where $h(R_{\sft}N_{\sft}(H))>h(R_{\icl}N_{\icl}(H))$ for all $H$. Second, for each algorithm $a\in{\sft,\icl}$, the within-algorithm fixed point $H_a^{\ast}$ exists and is unique. As a consequence of part (1), we have $H_{\sft}^{\ast}>H_{\icl}^{\ast}$. Third, a unique threshold $H_{\sep}^{\ast}$ governs algorithm choice: users prefer \ac{sft} when $H<H_{\sep}^{\ast}$ and \ac{icl} otherwise, echoing Proposition~\ref{prop:comp}. When $H$ is small, Lemma~\ref{lem:strong_mono} implies higher optimal data usage, raising the signal-to-noise ratio $\kappa$ and eventually favoring \ac{sft} over \ac{icl}.

Part (4) shows that the equilibrium $H^{\ast}$ is obtained by clipping $H_{\sep}^{\ast}$ between the two within-algorithm fixed points, yielding a closed-form characterization of $R^{\ast}=\sqrt{H^{\ast}-p}$ and the induced mixing between \ac{sft} and \ac{icl}. Illustrated in Figure~\ref{fig:homo}, the equilibrium corresponds to the intersection of $y=H$ with the induced curve $p+h(R)$. Economically, the result highlights how congestion endogenously disciplines high-compute personalization:
when congestion is low (small $H$), users tilt toward \ac{sft};
when congestion is high (large $H$), users tilt toward \ac{icl};
and in the intermediate regime, the equilibrium features a mixture that exactly clears the congestion externality.

\begin{figure}[t]
\centering
\subfigure[The values of $R^{\ast}$ with various $\pi$.]{\includegraphics[width=\myfigwidth]{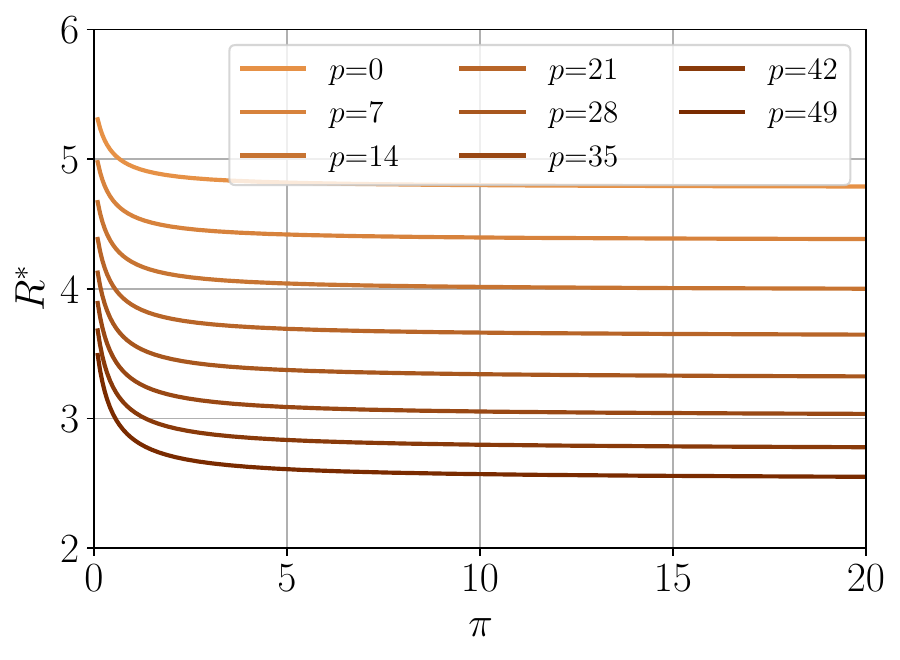}\label{fig:homo_pi}}
\hspace{0.3em}
\subfigure[The values of $R^{\ast}$ with various $r$.]{\includegraphics[width=\myfigwidth]{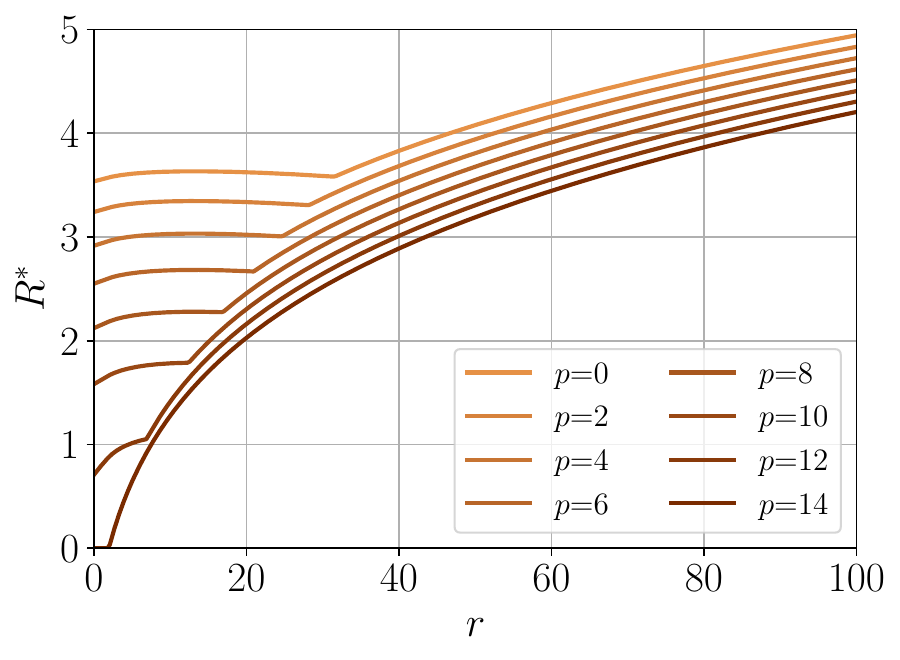}\label{fig:homo_r}}
\caption{The figures show how the equilibrium congestion level $R^{\ast}$ varies with the prior precision $\pi$ and the pretraining coverage $r$ of the pretrained \acp{llm}. Higher prior precision monotonically reduces congestion, whereas broader pretraining coverage can have a non-monotone effect on $R^{\ast}$.}
\label{fig:homo_1}
\end{figure}

\paragraph{Comparative Statics of Equilibrium Congestion Level.} In what follows, we study how the equilibrium varies with problem parameters. To illustrate our results, we run simulations for the \ac{lscg} defined in~\eqref{eq:simplified} with congestion function $h(R)=R^2$ under various parameter configurations.
Details of the experimental setup are provided in Appendix~\ref{app:exp_details}. Throughout, a function $f(x)$ is said to be increasing in $x$ if $f(x_2)\ge f(x_1)$ for all $x_2>x_1$.
We first focus on parameters related to the pretrained model, namely $\pi$ and $r$.

\begin{proposition}[Influence of Pretrained \ac{llm}s on Equilibrium]\label{prop:pretrain_influence}
    Under models \eqref{eq:simplified} with $\alpha=1$, $\barh(R,p)=p+R^2$, and $R_{\sft}>R_{\icl}$, the following holds.
    \begin{itemize}
        \item[1.] The congestion-level $R^{\ast}$ decreasing in $\pi$.
        \item[2.] Assume that $2\tau>\zeta$. If $p$ and $R_{\icl}$ are sufficiently small, then there exist thresholds $r^{*}<r^{**}<d$ such that the equilibrium congestion $R^{*}$ is increasing in $r$ on $[0,r^{*}] \cup [r^{**},d]$ and decreasing in $r$ on $[r^{*},r^{**}]$ as $r$ ranges from $0$ to $d$. Otherwise, $R^{*}$ is increasing in $r\in[0,d]$.

    \end{itemize}
\end{proposition}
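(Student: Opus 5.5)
Both parts rest on the closed form of Theorem~\ref{thm:homo}. The equilibrium per-unit cost is $H^{\ast}=\max\{H_{\icl}^{\ast},\min\{H_{\sep}^{\ast},H_{\sft}^{\ast}\}\}$ and $R^{\ast}=\sqrt{H^{\ast}-p}$. Since clipping is monotone in each argument, it suffices to sign the derivatives of $H_{\icl}^{\ast}$, $H_{\sft}^{\ast}$ and $H_{\sep}^{\ast}$ with respect to $\pi$ and $r$, and then track which regime is active.

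\textbf{Closed forms.} With $\alpha=1$, the first-order conditions in \eqref{eq:simplified} give
\[
N_{\sft}(H)=\sqrt{2\tilsigma^2 d/(R_{\sft}H)},\qquad N_{\icl}(H)=\Big(\sqrt{2\tilsigma^2 r/(R_{\icl}H)}-2\tilsigma^2\zeta^{-1}\Big)_+ .
\]
By the envelope theorem, the minimal costs are
\[
\Phi_{\sft}(H)=2\sqrt{2\tilsigma^2 dR_{\sft}H},\qquad \Phi_{\icl}(H)=(d-r)\tau+2\sqrt{2\tilsigma^2 rR_{\icl}H}-2\tilsigma^2\zeta^{-1}R_{\icl}H
\]
in the interior case, and $\Phi_{\icl}=(d-r)\tau+r\zeta$ at the corner. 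The within-algorithm fixed points solve $H=p+(R_aN_a(H))^2$. The left side increases in $H$ and the right side decreases, so each fixed point moves in the same direction as the right side when a parameter changes.

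\textbf{Part 1.} The parameter $\pi$ enters only through $\zeta=\tau+2\bar\sigma^2\pi^{-1}+m^2$, which decreases in $\pi$. Then $H_{\sft}^{\ast}$ does not depend on $\pi$. The map $N_{\icl}(H)$ increases in $\zeta$, so $H_{\icl}^{\ast}$ decreases in $\pi$. For the separator, $\partial\Phi_{\icl}/\partial\zeta\ge0$ by the envelope theorem ($\Phi_{\icl}$ is a minimum of terms increasing in $\zeta$), so $\Phi_{\icl}$ falls as $\pi$ rises. Hence $\psi=\Phi_{\sft}-\Phi_{\icl}$ rises in $\pi$, and because $\psi$ is increasing in $H$ (Theorem~\ref{thm:homo}, part 3), the root $H_{\sep}^{\ast}$ falls. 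All three inputs to the clipping are weakly decreasing in $\pi$, so $H^{\ast}$ and $R^{\ast}$ are decreasing in $\pi$.

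\textbf{Part 2.} Here $r$ moves quantities in conflicting directions. $H_{\icl}^{\ast}$ increases in $r$, since $N_{\icl}$ grows with $r$. $H_{\sft}^{\ast}$ does not depend on $r$. The sign of $\partial\Phi_{\icl}/\partial r$ equals the sign of $\zeta-\tau$ plus a term of order $1/N$ coming from the data terms. The assumption $2\tau>\zeta$ is what makes ICL become relatively \emph{more} attractive as $r$ grows in the intermediate range, so that $H_{\sep}^{\ast}$ decreases there. I would then show that, for small $p$ and $R_{\icl}$, $H_{\icl}^{\ast}$ is close to $p$ and the clipping produces three regimes as $r$ runs over $[0,d]$:
\begin{itemize}
\item near $r=0$, either pure SFT or the pure-ICL corner holds, and $R^{\ast}$ increases;
\item in the middle, the mixed regime holds with $H^{\ast}=H_{\sep}^{\ast}$ decreasing in $r$;
\item near $r=d$, $H_{\icl}^{\ast}$ rises above $H_{\sep}^{\ast}$, so $H^{\ast}=H_{\icl}^{\ast}$ is increasing.
\end{itemize}
The thresholds $r^{\ast},r^{\ast\ast}$ are the points where the active branch switches, and they are unique by the monotonicity of each branch. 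In the complementary case, either $p$ or $R_{\icl}$ is large, or $2\tau\le\zeta$. If $p$ or $R_{\icl}$ is large, the ICL branch or the SFT branch dominates throughout. If $2\tau\le\zeta$, $H_{\sep}^{\ast}$ is nondecreasing in $r$. Either way $H^{\ast}$ is nondecreasing in $r$.

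\textbf{Main obstacle.} The hard step is signing $\partial H_{\sep}^{\ast}/\partial r$ on the middle regime and verifying that this regime is nonempty with exactly one entry point and one exit point. I would differentiate $\psi(H_{\sep}^{\ast};r)=0$ implicitly, using the envelope expression
\[
\partial_r\Phi_{\icl}=-\tau+\frac{2\tilsigma^2}{N_{\icl}+2\tilsigma^2\zeta^{-1}} ,
\]
and show that this is negative in the regime where the small-$p$, small-$R_{\icl}$ limit makes $N_{\icl}$ large. If the full analytic argument stalls, the condition $2\tau>\zeta$ should enter precisely when bounding $N_{\icl}$ below at the separation point. As a fallback, I would establish the limiting case $p,R_{\icl}\to0$ and extend it by continuity.
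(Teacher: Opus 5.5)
Part~1 is fine. Your comparative-statics argument for $H_{\sep}^{\ast}$ is a clean substitute for the paper's check on the closed form: $E_{\icl}(N)$ is pointwise increasing in $\zeta$, so $\psi$ rises in $\pi$ and its root falls.

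Part~2 has a genuine gap, and the mechanism you propose for it is wrong. The key fact is that $H_{\sep}^{\ast}$ is \emph{unimodal} in $r$ for every parameter value. At $r=d$ we have $E_{\icl}(N)<E_{\sft}(N)$ for all $N$ and $R_{\icl}<R_{\sft}$, so $\psi>0$ on $(0,\infty)$ and $H_{\sep}^{\ast}(d)=0$. For $r<d$, $\psi(0)=-(d-r)\tau<0$ gives $H_{\sep}^{\ast}(r)>0$. Hence $H_{\sep}^{\ast}$ must fall on a terminal stretch whatever the relation between $2\tau$ and $\zeta$, so your claim that it is nondecreasing when $2\tau\le\zeta$ is false. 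Moreover, $H_{\sep}^{\ast}$ does not involve $p$ at all, so small $p$ cannot be what signs $dH_{\sep}^{\ast}/dr$.

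The paper uses the explicit root in Proposition~\ref{prop:H_fixed_points} and the calculus lemma Proposition~\ref{prop:monotone_g}. These show that $H_{\sep}^{\ast}$ increases on $[0,r^{\ast}]$ and decreases on $[r^{\ast},d]$, with $r^{\ast}$ its peak. The assumption $2\tau>\zeta$ is used only to ensure that the corner-to-interior switch of $\Phi_{\icl}$ at the separator happens once, and before $r^{\ast}$. The shape of $R^{\ast}$ then follows from $H^{\ast}=\max\{H_{\icl}^{\ast},\min\{H_{\sep}^{\ast},H_{\sft}^{\ast}\}\}$ with $H_{\icl}^{\ast}$ increasing. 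If $H_{\icl}^{\ast}(r^{\ast})$ lies above the peak, $H^{\ast}$ is increasing. Otherwise you get increase--decrease--increase, with $r^{\ast\ast}$ the crossing of $H_{\icl}^{\ast}$ with the falling separator. The smallness of $p$ and $R_{\icl}$ is used only in this peak comparison.

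Several pieces of your regime picture therefore fail as stated:
\begin{itemize}
\item $r^{\ast}$ is the peak of $H_{\sep}^{\ast}$, not a switch of the active branch. Near $r=0$ the equilibrium can well be the mixed regime with $H^{\ast}=H_{\sep}^{\ast}$ increasing; this happens, for instance, at the paper's simulation parameters.
\item Uniqueness of the thresholds cannot come from ``monotonicity of each branch'', because the separator branch is not monotone.
\item The SFT branch can never be active throughout $[0,d]$, since at $r=d$ the active branch is $H_{\icl}^{\ast}$.
\item The fallback of proving the limit $p,R_{\icl}\to0$ and extending by continuity cannot deliver exact monotonicity intervals.
\end{itemize}

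Your implicit-differentiation idea does close the gap, but not through a small-parameter limit. By the envelope formula, $\partial_r\Phi_{\icl}(H;r)=\min\{\zeta,\sqrt{2\tilsigma^2R_{\icl}H/r}\}-\tau$ in both the corner and interior regimes. So $dH_{\sep}^{\ast}/dr<0$ exactly when $H_{\sep}^{\ast}(r)/r<\tau^2/(2\tilsigma^2R_{\icl})$. Once this holds, $H_{\sep}^{\ast}$ falls while $r$ rises, so the ratio keeps falling and the sign can change only once. The ratio tends to $+\infty$ as $r\downarrow0$ and equals $0$ at $r=d$, which yields the unimodality.
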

A full statement is in Appendix~\ref{app:pretrain_influence}. These results characterize how improvements in pretraining affect the equilibrium congestion level $R^{\ast}$ in Figure~\ref{fig:homo_1}, i.e., the resources amount used at the equilibrium. The comparative statics operate via two channels: prior precision $\pi$ and coverage $r$.

First, increasing $\pi$ unambiguously reduces the equilibrium congestion level $R^{\ast}$. Since $\pi$ measures the precision of the pretrained prior, a higher $\pi$ lowers the marginal value of additional personalization samples, reducing users’ best-response resource demands and shifting the congestion fixed point downward. Economically, this reflects a substitution from online serving compute toward offline pretraining: improving pretrained model quality alleviates congestion at inference time.

The effect of expanding coverage $r$ is more nuanced, as it affects both the scope and effectiveness 
of personalization. When $p$ and $R_{\icl}$ are sufficiently small, users' personalization choices are highly elastic and the equilibrium responds on both an intensive margin (optimal personalization intensity conditional on the method) and an extensive margin (switching between \ac{icl} and \ac{sft}). For small $r$, expanding $r$ also enlarges the \emph{biased} portion of the covered representation---features learned in pretraining but misaligned with the target task---so the gains from correcting these biases via \ac{sft} rise; \ac{icl} remains at a corner with zero optimal samples, and aggregate compute demand (hence $R^{\ast}$) increases. Over an intermediate range $[r^{\ast},r^{**}]$, improved coverage reshuffles relative returns across methods so that substitution on the extensive margin dominates, reducing total resource demand and lowering congestion. For large $r$, the intensive margin reasserts itself: additional coverage makes marginal \ac{icl} samples highly productive on a broad covered subspace, users scale up sample sizes, and $R^{\ast}$ rises again, yielding the non-monotone ``increase--decrease--increase'' relationship.

In contrast, when $p$ or $R_{\icl}$ is large enough, higher effective costs compress optimal sample intensities. The equilibrium is then governed primarily by the intensive-margin adjustments within \ac{sft} and \ac{icl}, and expanding coverage raises the marginal returns to personalization on covered dimensions without generating sizable compositional shifts across methods. As a result, under $2\tau>\zeta$, equilibrium congestion $R^{\ast}$ is increasing in $r$ over $[0,d]$: broader coverage translates into higher aggregate compute demand rather than being offset by substitution effects.

Then we analyze the task-related parameter $\tilsigma$.
\begin{proposition}[Influence of Target Tasks on Equilibrium]\label{prop:task_sigma}
    There exist two numbers $\tilsigma_1\leq \tilsigma_2$, depending on all the other parameters, i.e., $R_{\sft}, R_{\icl}, d, r, \bar{\sigma}, m, \pi, \tau, s$, such that the following holds.\looseness=-1
    \begin{itemize}
        \item[1.] When $\tilsigma\in[0,\tilsigma_1]$, $R^{\ast}$ is increasing in $\tilsigma$.
        \item[2.] When $\tilsigma\in[\tilsigma_2,\infty)$, $R^{\ast}$ is decreasing in $\tilsigma$.
    \end{itemize}
\end{proposition}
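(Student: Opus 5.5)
The plan is to make the characterization of Theorem~\ref{thm:homo} explicit for the simplified errors \eqref{eq:simplified} with $\alpha=1$ and $h(R)=R^2$. We then track which branch of $H^{\ast}=\max\{H_{\icl}^{\ast},\min\{H_{\sep}^{\ast},H_{\sft}^{\ast}\}\}$ is active in the two extreme regimes of $\tilsigma$. Throughout we assume the non-degenerate case $d>r$.

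\textbf{Closed forms.} We first compute the within-algorithm quantities.
\begin{itemize}
\item For \ac{sft}, minimizing $2\tilsigma^2 d/N+R_{\sft}NH$ gives $N_{\sft}(H)=\sqrt{2\tilsigma^2 d/(R_{\sft}H)}$, resource demand $R_{\sft}N_{\sft}(H)=\sqrt{2\tilsigma^2 dR_{\sft}/H}$, and $\Phi_{\sft}(H)=2\sqrt{2\tilsigma^2 dR_{\sft}H}$.
\item For \ac{icl}, write $c=2\tilsigma^2/\zeta$. The first-order condition $2\tilsigma^2 r/(N+c)^2=R_{\icl}H$ gives $N_{\icl}(H)=\max\{0,\sqrt{2\tilsigma^2 r/(R_{\icl}H)}-c\}$.
\item In particular $N_{\icl}(H)=0$ exactly when $H\ge \zeta^2 r/(2\tilsigma^2R_{\icl})$. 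On that region $\Phi_{\icl}(H)=(d-r)\tau+r\zeta$, which is independent of $H$ and $\tilsigma$. Otherwise $\Phi_{\icl}(H)=(d-r)\tau+2\sqrt{2\tilsigma^2rR_{\icl}H}-R_{\icl}Hc$.
\end{itemize}
Since $\Phi_{\icl}\le (d-r)\tau+r\zeta$ and $\Phi_{\icl}\ge (d-r)\tau$, the sign of $\psi$ is controlled by comparing $2\sqrt{2\tilsigma^2 dR_{\sft}H}$ with these two constants.

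\textbf{Small $\tilsigma$.} Here we claim $H^{\ast}=H_{\sft}^{\ast}$.
\begin{itemize}
\item The \ac{sft} fixed point solves $R^2(p+R^2)=2\tilsigma^2dR_{\sft}$ with $R=R_{\sft}^{\ast}$. The left side is strictly increasing in $R\ge0$, so $R_{\sft}^{\ast}$ is strictly increasing in $\tilsigma$, and $H_{\sft}^{\ast}\to p$ as $\tilsigma\to0$.
\item To show $H_{\sep}^{\ast}\ge H_{\sft}^{\ast}$, it suffices (since $\psi$ is increasing by Theorem~\ref{thm:homo}(3)) that $\psi(H_{\sft}^{\ast})\le 0$.
\item This holds once $2\sqrt{2\tilsigma^2dR_{\sft}H_{\sft}^{\ast}}\le (d-r)\tau$. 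The left side tends to $0$ with $\tilsigma$ because $H_{\sft}^{\ast}$ stays bounded, so the inequality holds on some interval $[0,\tilsigma_1]$.
\item By Theorem~\ref{thm:homo}(2), $H_{\icl}^{\ast}<H_{\sft}^{\ast}$, so the outer max does not bind and $H^{\ast}=H_{\sft}^{\ast}$.
\end{itemize}
Hence $R^{\ast}=R_{\sft}^{\ast}$, which is increasing in $\tilsigma$ on $[0,\tilsigma_1]$.

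\textbf{Large $\tilsigma$.} Here we claim $H^{\ast}=H_{\icl}^{\ast}=p$.
\begin{itemize}
\item The \ac{icl} shrinkage offset $c$ grows like $\tilsigma^2$, while the interior sample term grows only like $\tilsigma$. Once $\tilsigma^2\ge \zeta^2r/(2R_{\icl}p)$, we have $N_{\icl}(H)=0$ for every $H\ge p$.
\item So the \ac{icl} fixed point is $H_{\icl}^{\ast}=p$ with $R_{\icl}^{\ast}=0$.
\item For \ac{sft}, $\psi(p)=2\sqrt{2\tilsigma^2dR_{\sft}p}-(d-r)\tau-r\zeta$, which is positive for $\tilsigma$ large. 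Since $\psi$ is increasing, $H_{\sep}^{\ast}\le p$.
\item Therefore $\min\{H_{\sep}^{\ast},H_{\sft}^{\ast}\}\le p=H_{\icl}^{\ast}$, and $H^{\ast}=p$, $R^{\ast}=0$.
\end{itemize}
Let $\tilsigma_2$ be the larger of the two thresholds above. On $[\tilsigma_2,\infty)$, $R^{\ast}\equiv 0$ is decreasing in the paper's weak sense. Finally, we set $\tilsigma_1\le\tilsigma_2$ by taking the minimum if needed.

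\textbf{Main obstacle.} The delicate step is rigorously pinning the active branch of the clipping formula in each regime. Two points need care:
\begin{itemize}
\item $H_{\sep}^{\ast}$ may lie below $p$, or $\psi$ may fail to vanish on the relevant domain. The argument should therefore be phrased purely through the sign of $\psi$ at $H_{\sft}^{\ast}$ and at $p$, rather than through the root itself.
\item In the small-$\tilsigma$ regime we need a uniform bound on $H_{\sft}^{\ast}$; the bound $H_{\sft}^{\ast}\le p+\sqrt{2\tilsigma^2dR_{\sft}}$ suffices.
\end{itemize}
If a strict decrease for large $\tilsigma$ were desired, one would instead have to analyze the mixed regime $H^{\ast}=H_{\sep}^{\ast}$. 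There one would implicitly differentiate $\psi(H,\tilsigma)=0$ and show that $\partial_{\tilsigma}\psi>0$ dominates. We avoid this by using the weak monotonicity convention.
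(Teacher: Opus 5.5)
Your proof is correct, but it reaches both claims by a different and cruder route than the paper.

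For claim~1, the paper takes $\tilsigma_1$ to be the exact crossing point $H_{\sep}^{\ast}=H_{\sft}^{\ast}$. It gets there from the closed form of $H_{\sep}^{\ast}$ in Proposition~\ref{prop:H_fixed_points}, which is decreasing in $\tilsigma$, together with the fact that $H_{\sft}^{\ast}$ is increasing in $\tilsigma$. You instead use the bound $\Phi_{\icl}\ge (d-r)\tau$ to certify $\psi(H_{\sft}^{\ast})\le 0$. This needs no formula for $H_{\sep}^{\ast}$ and no monotonicity of it in $\tilsigma$, but it yields a smaller, non-sharp $\tilsigma_1$.

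For claim~2, the paper writes $H^{\ast}=\max\{H_{\icl}^{\ast},H_{\sep}^{\ast}\}$ beyond $\tilsigma_1$. It then invokes the unimodality of the ICL fixed point in $\tilsigma$ (Proposition~\ref{prop:icl_root_property}). This produces a range where $R^{\ast}>0$ and genuinely falls as ICL users cut back their sample sizes. You only exhibit the terminal region where $N_{\icl}\equiv 0$ on $[p,\infty)$ and $\psi(p)>0$, so that $R^{\ast}\equiv 0$ there. Under the paper's weak convention for ``decreasing'' this does establish the statement.

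Your handling of the ordering $\tilsigma_1\le\tilsigma_2$ is also cleaner than the paper's. The paper sets $\tilsigma_2=x_\ast$ without ensuring $x_\ast\ge\tilsigma_1$; it should take $\max\{x_\ast,\tilsigma_1\}$.

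The price is that your version of claim~2 is nearly vacuous. It shows congestion vanishes for very noisy tasks, not that harder tasks reduce congestion while personalization is still active. That second reading is the economic content the paper's discussion attributes to the proposition. As you note, recovering it would require analyzing the regime where $H^{\ast}$ sits at $H_{\sep}^{\ast}$ or on the decreasing branch of $H_{\icl}^{\ast}$.
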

The proposition shows that the equilibrium congestion $R^{\ast}$ can be non-monotone in task noise $\tilsigma$ (Figure~\ref{fig:tilsigma}). Since $\tilsigma$ governs the productivity of personalization, the result identifies two thresholds $\tilsigma_1\le\tilsigma_2$ at which the direction of the congestion response reverses.

When $\tilsigma$ is small, personalization remains effective: increasing noise raises baseline error without significantly reducing sample informativeness, inducing users to increase personalization intensity and raising equilibrium congestion. Thus, $R^{\ast}$ increases with $\tilsigma$ on $[0,\tilsigma_1]$. When $\tilsigma$ is large, noise sharply reduces the marginal returns to personalization, leading users to scale back compute usage and lowering aggregate demand; accordingly, $R^{\ast}$ decreases with $\tilsigma$ on $[\tilsigma_2,\infty)$.


\begin{figure}[t]
\centering
\subfigure[The values of $R^{\ast}$ with various $\tilde{\sigma}$.]{\includegraphics[width=\myfigwidth]{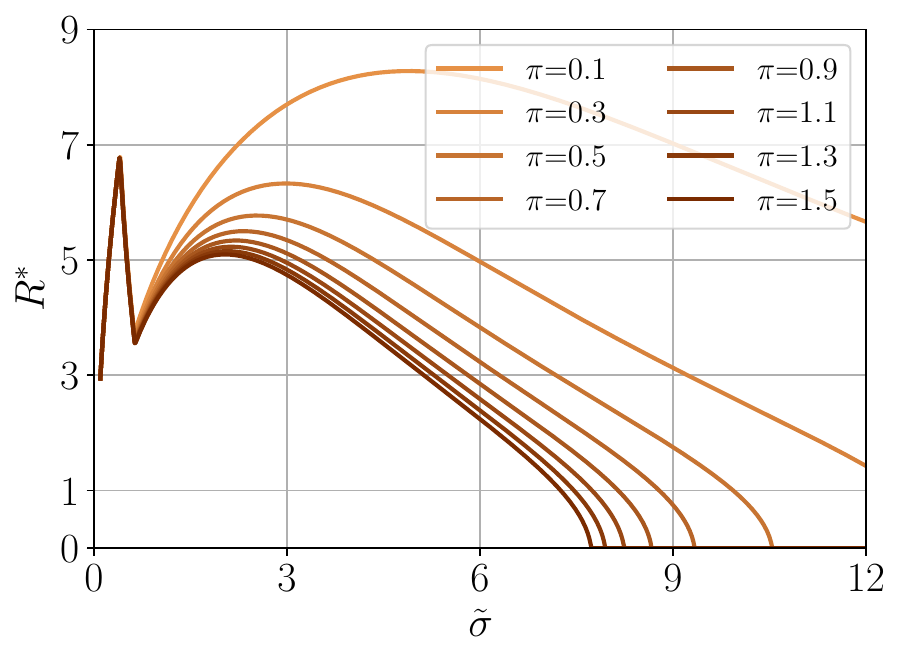}\label{fig:tilsigma}}
\hspace{0.3em}
\subfigure[The values of $R^{\ast}$ with various $R_\sft$.]{\includegraphics[width=\myfigwidth]{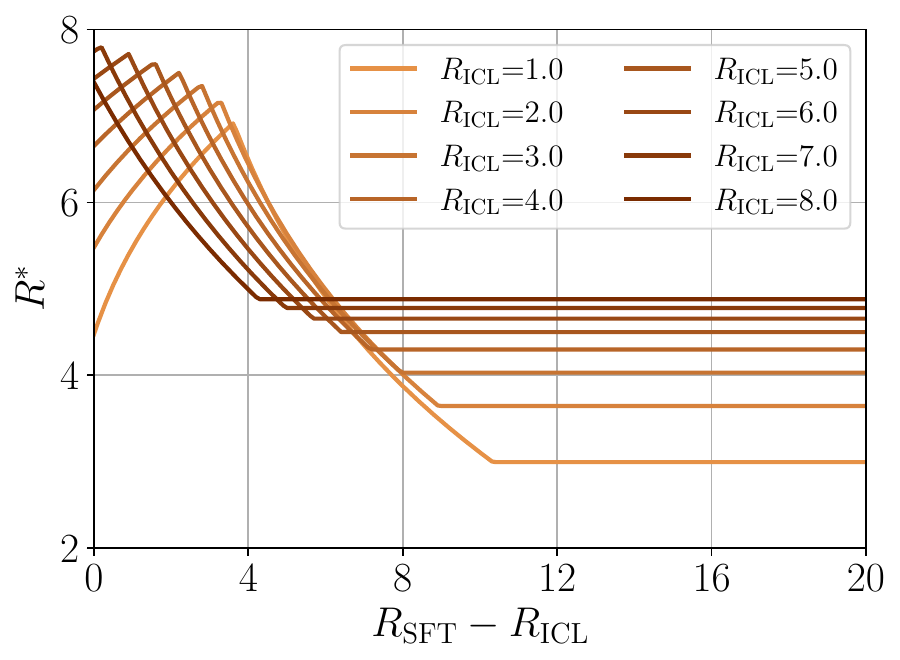}\label{fig:r_sft}}
\caption{The figures show how the equilibrium congestion level $R^{\ast}$ varies with the personalization data noise $\tilsigma$ and the per-sample resource requirements of \ac{sft} and \ac{icl}, $R_{\sft}$ and $R_{\icl}$. The equilibrium congestion level $R^{\ast}$ is non-monotone in the data noise $\tilsigma$ and can also be non-monotone in the \ac{sft} resource requirement $R_{\sft}$.}
\label{fig:homo_2}
\end{figure}

We then analyze how $R_\sft$ and $R_\icl$ affect the equilibrium. 
\begin{proposition}[Influence of Resource Consumption on Equilibrium]\label{prop:r_sft}
    There exists a threshold $R_{\icl}^{\ast}$, depending onmodel parameters $\tilsigma, d, r, \bar{\sigma}, m, \pi, \tau,$ and $s$, such that the following holds. 
    \begin{itemize}
        \item[1.] If $R_{\icl} < R_{\icl}^{\ast}$, then as $R_{\sft}$ increases from $R_{\icl}$ to $\infty$,
    the equilibrium congestion level $R^{\ast}$ in the equilibrium first increases and then decreases.
        \item[2.] If $R_{\icl} \ge R_{\icl}^{\ast}$, then as $R_{\sft}$ increases from $R_{\icl}$ to $\infty$, the equilibrium congestion level $R^{\ast}$ in the equilibrium decreases monotonically.
    \end{itemize}
\end{proposition}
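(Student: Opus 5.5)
The plan is to use the closed-form structure of Theorem~\ref{thm:homo}. Since $R^{\ast}=\sqrt{H^{\ast}-p}$ is increasing in $H^{\ast}$, it suffices to track
\[
H^{\ast}=\max\{H_{\icl}^{\ast},\min\{H_{\sep}^{\ast},H_{\sft}^{\ast}\}\}
\]
as a function of $R_{\sft}$. The first step is to compute the single-algorithm quantities under \eqref{eq:simplified} with $\alpha=1$.

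For \ac{sft}, minimizing $2\tilsigma^2 d/N+R_{\sft}NH$ gives $N_{\sft}(H)=\sqrt{2\tilsigma^2 d/(R_{\sft}H)}$. Hence the resource demand is $R_{\sft}N_{\sft}(H)=\sqrt{2\tilsigma^2 dR_{\sft}/H}$ and the minimal cost is $\Phi_{\sft}(H)=2\sqrt{2\tilsigma^2 dR_{\sft}H}$.

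For \ac{icl}, write $b=2\tilsigma^2\zeta^{-1}$. The first-order condition gives $N_{\icl}(H)=\big(\sqrt{2\tilsigma^2 r/(R_{\icl}H)}-b\big)_+$. This yields
\[
\Phi_{\icl}(H)=2\sqrt{2\tilsigma^2 rR_{\icl}H}-bR_{\icl}H+(d-r)\tau
\]
on the interior branch and $\Phi_{\icl}(H)=r\zeta+(d-r)\tau$ on the corner branch. Crucially, $\Phi_{\icl}$, $N_{\icl}$ and hence $H_{\icl}^{\ast}$ do not depend on $R_{\sft}$.

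From these formulas I read off three monotonicity facts in $R_{\sft}$.
\begin{itemize}
\item[(i)] $H_{\sft}^{\ast}$ solves $H(H-p)=2\tilsigma^2 dR_{\sft}$, so it is strictly increasing in $R_{\sft}$.
\item[(ii)] $\psi(H)=\Phi_{\sft}(H)-\Phi_{\icl}(H)$ is strictly increasing in $R_{\sft}$ for each fixed $H$. Because $\psi$ is also increasing in $H$ (Theorem~\ref{thm:homo}, part 3), the root $H_{\sep}^{\ast}$ is strictly decreasing in $R_{\sft}$.
\item[(iii)] As $R_{\sft}\to\infty$, $\psi>0$ at every fixed $H>0$, so $H_{\sep}^{\ast}\to 0$, and eventually $H^{\ast}=H_{\icl}^{\ast}$.
\end{itemize}

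Consequently, $\min\{H_{\sep}^{\ast},H_{\sft}^{\ast}\}$ is the minimum of an increasing and a decreasing function of $R_{\sft}$. These two curves cross at most once. Therefore $H^{\ast}$ is unimodal in $R_{\sft}$: it increases while $H_{\sft}^{\ast}<H_{\sep}^{\ast}$, then decreases along $H_{\sep}^{\ast}$, and finally becomes flat at $H_{\icl}^{\ast}$. It is purely nonincreasing exactly when $H_{\sft}^{\ast}\ge H_{\sep}^{\ast}$ already at the left endpoint $R_{\sft}=R_{\icl}$. Since $\psi$ is increasing in $H$, that condition is equivalent to $\psi(H_{\sft}^{\ast})\ge 0$ at $R_{\sft}=R_{\icl}$, i.e., users weakly prefer \ac{icl} at the pure-\ac{sft} fixed point. (Where the increasing phase of $\min\{\cdot\}$ lies below $H_{\icl}^{\ast}$, the outer max adds a flat segment, which does not break the ``increase then decrease'' shape in the weak sense used in the paper.)

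The main obstacle is showing that this endpoint condition is a threshold condition in $R_{\icl}$. I will set $R_{\sft}=R_{\icl}=\rho$ and note that both minimal costs then depend on $(\rho,H)$ only through $u=\rho H$. The relevant value is $u(\rho)=\rho H_{\sft}^{\ast}(\rho)=\rho\big(p+\sqrt{p^2+8\tilsigma^2 d\rho}\big)/2$, which is strictly increasing in $\rho$.

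As a function of $u$,
\[
\psi=2\sqrt{2\tilsigma^2 u}\,(\sqrt d-\sqrt r)+bu-(d-r)\tau
\]
on the interior branch, and $\psi=2\sqrt{2\tilsigma^2 du}-r\zeta-(d-r)\tau$ on the corner branch. Both branches are strictly increasing in $u$, and since $\Phi_{\icl}$ is continuous and concave in $u$, $\psi$ is strictly increasing in $u$ overall. It is negative at $u=0$ when $d>r$, and tends to $+\infty$. So there is a unique root $u_0$, and I define $R_{\icl}^{\ast}$ by $u(R_{\icl}^{\ast})=u_0$.

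Then $R_{\icl}<R_{\icl}^{\ast}$ gives $\psi(H_{\sft}^{\ast})<0$ at the left endpoint, which is case 1 (strict initial increase). And $R_{\icl}\ge R_{\icl}^{\ast}$ gives case 2 (monotone decrease).

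I will also check the degenerate case $r=d$, where $\psi(0)$ is no longer negative, and the role of $p$ in $u(\rho)$. If needed, $p$ should be listed among the parameters on which $R_{\icl}^{\ast}$ depends.
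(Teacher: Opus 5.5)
Your proposal is correct and follows essentially the paper's argument. Both track $H^{\ast}=\max\{H_{\icl}^{\ast},\min\{H_{\sep}^{\ast},H_{\sft}^{\ast}\}\}$, use that $H_{\sft}^{\ast}$ increases and $H_{\sep}^{\ast}$ decreases to $0$ in $R_{\sft}$, and reduce the dichotomy to comparing $H_{\sep}^{\ast}$ with $H_{\sft}^{\ast}$ at $R_{\sft}=R_{\icl}$. There the paper gets a single crossing in $R_{\icl}$ from the explicit two-branch formula for $H_{\sep}^{\ast}$, while your substitution $u=\rho H$ (giving $H_{\sep}^{\ast}=u_0/\rho$) achieves the same without a case split.

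Two small remarks. First, your parenthetical about the outer max is unnecessary: part~2 of Theorem~\ref{thm:homo} gives $H_{\sft}^{\ast}>H_{\icl}^{\ast}$, so $H^{\ast}=H_{\sft}^{\ast}$ strictly increases in the first phase. Second, you are right that $R_{\icl}^{\ast}$ also depends on $p$, as the paper's own threshold inequality shows.
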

This proposition characterizes how equilibrium congestion responds to the per-sample resource intensity $R_{\sft}$, shown in Figure~\ref{fig:r_sft}.
An increase in $R_{\sft}$ raises the marginal cost of \ac{sft}, affecting both users’ algorithm choice and optimal sample sizes, with the resulting effect on $R^{\ast}$ depending on the relative compute efficiency of \ac{icl}, as captured by $R_{\icl}$.

If $R_{\icl}<R_{\icl}^{\ast}$, then \ac{icl} is sufficiently compute-efficient that the equilibrium features an economically meaningful substitution margin. As $R_{\sft}$ rises from $R_{\icl}$, equilibrium congestion can initially increase because users’ best responses adjust along the intensity margin while \ac{sft} remains competitive for a subset of users; aggregate demand rises at the fixed point. When $R_{\sft}$ becomes large, \ac{sft} is priced out and users substitute toward \ac{icl}, reducing aggregate resource usage and driving $R^{\ast}$ down. If $R_{\icl}\ge R_{\icl}^{\ast}$, since $R_{\sft}$ increases from $R_{\icl}$, its initial value is high enough . Raising $R_{\sft}$ mainly deters \ac{sft} without inducing compensating increases in equilibrium personalization intensity, so aggregate demand shifts downward and $R^{\ast}$ decreases monotonically with $R_{\sft}$.

\begin{tcolorbox}[enhanced jigsaw,breakable,frame hidden, left = 0.1mm, right = 0.1mm, top = 0.1mm, bottom = 1.5mm]
\refstepcounter{finding}\label{find:homo_game}
\textbf{Finding \thefinding:}
The equilibrium congestion level $R^*$ responds non-monotonically to changes in pretraining coverage, task difficulty, and the relative cost of \ac{sft}, because users adjust both the personalization method they choose (extensive margin) and the amount of personalization data they use (intensive margin). For example, when personalization data become more valuable, users may use fewer samples to achieve the same error level, or they may switch to \ac{sft} and use more samples to further reduce error. Which force dominates depends on the regime determined by the pretrained model’s coverage and the quality of the personalization data.
\end{tcolorbox}


\subsection{Analysis of LSCG with Heterogeneous Users}\label{sec:hetero_mfcg}
In this section, we study an \ac{lscg} with two types of users, $t\in\{t_1,t_2\}$, with population shares $T(t_1)=q$, and $ T(t_2)=1-q.$ To distinguish parameters across types, we use the subscript $i$ to index user types. For example, $\tilsigma_1$ and $\tilsigma_2$ denote the noise amplitudes of type~1 and type~2 users, respectively. As in Section~\ref{sec:homo_mfcg}, we adopt the simplification for \ac{icl} and \ac{sft} errors given in~\eqref{eq:simplified}.

\paragraph{Characterization of the Equilibrium.} Analogous to the homogeneous case, for each user type $t$ and algorithm $a\in\mathcal A$ we define the minimal cost and the corresponding optimal number of samples as
\begin{align*}
\Phi_a(t,H)=\min_{N\geq 0}\big\{E_a(t,N)+R_aNH\big\}, \quad N_a(t,H) = \argmin_{N\geq 0}\big\{E_a(t,N)+R_aNH\big\}.
\end{align*}
If all users are of type $t$ and are restricted to a single algorithm $a$, the resulting equilibrium is characterized by the unique solution $H_a^{\ast}(t)$ to the fix-point equation 
\begin{align*}
    H = \barh(R_a\cdot N_a(t,H),p) =(R_a\cdot N_a(t,H))^2+p.
\end{align*}
In addition, to compare algorithms for type-$t$ users, we define the difference of \ac{sft} and \ac{icl} in minimal costs as $\psi(t,H)=\Phi_{\sft}(t,H)-\Phi_{\icl}(t,H).$ By the homogeneous analysis in Section~\ref{sec:homo_mfcg}, the equation $\psi(t,H)=0$ admits a unique solution for each type $t$. We denote this threshold by $H_{\sep}^{\ast}(t)$ and, without loss of generality, assume $H_{\sep}^{\ast}(t_2)\le H_{\sep}^{\ast}(t_1)$. Before stating our main result, define 
\begin{align*}
e(H)  &= qR_{\sft}N_{\sft}(t_1,H) + (1-q)R_{\sft}N_{\sft}(t_2,H),\\ f(H) &= qR_{\sft}N_{\sft}(t_1,H) + (1-q)R_{\icl}N_{\icl}(t_2,H),\\
g(H)  &= qR_{\icl}N_{\icl}(t_1,H) + (1-q)R_{\icl}N_{\icl}(t_2,H).
\end{align*}
Here, $N_{\sft}(t,H)$ and $N_{\icl}(t,H)$ denote the optimal numbers of personalization samples chosen by a type-$t$ user under \ac{sft} and \ac{icl}, respectively, at unit resource cost $H$. Accordingly, $e(H)$ represents aggregate resource demand when all users adopt \ac{sft}; $f(H)$ represents aggregate resource demand when type-$t_1$ users adopt \ac{sft} and type-$t_2$ users adopt \ac{icl}; and $g(H)$ represents aggregate resource demand when all users adopt \ac{icl}. In each case, users choose their sample sizes optimally given $H$. We next characterize the equilibrium of \eqref{eq:simplified} under $\alpha=1$, $T(t_1)=q$, $T(t_2)=1-q$, $h(x)=x^2$, and $R_{\sft}>R_{\icl}$.


\begin{theorem}[Equilibrium with Two-type Users]\label{thm:hetero}
    When $H_{\sep}^{\ast}(t_2)< H_{\sep}^{\ast}(t_1)$, the equilibrium has a threshold structure:  at low congestion all users choose \ac{sft}, at high congestion all users choose \ac{icl}, and at intermediate congestion type-$t_1$ users choose \ac{sft} while type-$t_2$ users choose \ac{icl}, with possible mixing only at the two indifference thresholds. Within each regime, $H^*$ is characterized by the fixed point $H=\Phi(H)^2+p$ for some function $\Phi\in\{e,f,g\}$, and $R^*=\sqrt{H^*-p}$.
\end{theorem}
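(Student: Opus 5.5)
The plan is to reduce the two-type game to a one-dimensional problem in the effective per-unit cost $H=p+h(R)=p+R^2$. This mirrors the homogeneous analysis behind Theorem~\ref{thm:homo}. By Theorem~\ref{thm:exist_unique}, an equilibrium exists and its congestion level $R^\ast$ is unique, so $H^\ast=p+(R^\ast)^2$ is unique. It therefore suffices to describe the best responses of each type as a function of $H$, together with the aggregate demand they induce, and then to locate the consistent $H$.

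\emph{Step 1: best responses by type.} Fix $H$. For each type $t_i$, the strict convexity of $E_a(t_i,\cdot)$ in \eqref{eq:simplified} gives a unique optimal sample size $N_a(t_i,H)$ within each algorithm. The type-$t_i$ user therefore chooses between only two candidate actions, $(\sft,N_{\sft}(t_i,H))$ and $(\icl,N_{\icl}(t_i,H))$. By part~3 of Theorem~\ref{thm:homo}, applied type by type, $\psi(t_i,H)$ is strictly increasing in $H$ with unique root $H_{\sep}^\ast(t_i)$. Hence type $t_i$ strictly prefers \ac{sft} for $H<H_{\sep}^\ast(t_i)$, strictly prefers \ac{icl} for $H>H_{\sep}^\ast(t_i)$, and is indifferent only at the root. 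Under $H_{\sep}^\ast(t_2)<H_{\sep}^\ast(t_1)$ this yields three open regimes:
\begin{itemize}
\item for $H<H_{\sep}^\ast(t_2)$, both types choose \ac{sft}, and demand is $e(H)$;
\item for $H_{\sep}^\ast(t_2)<H<H_{\sep}^\ast(t_1)$, type $t_1$ chooses \ac{sft} and type $t_2$ chooses \ac{icl}, and demand is $f(H)$;
\item for $H>H_{\sep}^\ast(t_1)$, both types choose \ac{icl}, and demand is $g(H)$.
\end{itemize}
Mixing can occur only at the two thresholds. There the aggregate demand fills the interval between the adjacent regime values, for example between $e$ and $f$ at $H_{\sep}^\ast(t_2)$. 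This happens because by part~1 of Theorem~\ref{thm:homo} we have $R_{\sft}N_{\sft}(t_i,H)>R_{\icl}N_{\icl}(t_i,H)$, so $e\ge f\ge g$ pointwise.

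\emph{Step 2: fixed point.} Define the set-valued map $\mathcal D(H)$ of aggregate demands compatible with best responses at $H$. Consistency requires $H-p\in\mathcal D(H)^2$, that is, $H=D^2+p$ for some $D\in\mathcal D(H)$. I would show that each of $e,f,g$ is continuous and strictly decreasing in $H$. This follows from the explicit minimizers: for instance $N_{\sft}(t,H)=\sqrt{2\tilsigma^2 d/(R_{\sft}H)}$, and $N_{\icl}(t,H)=\max\{0,\sqrt{2\tilsigma^2r/(R_{\icl}H)}-2\tilsigma^2\zeta^{-1}\}$, which is nonincreasing. Consequently $\mathcal D$ has a closed graph, is decreasing, and has convex values. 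The map $H\mapsto H-p$ is strictly increasing, so it crosses the graph of $\mathcal D(\cdot)^2$ exactly once, consistent with Lemma~\ref{lem:strong_mono}. If the crossing lies in the interior of a regime, $H^\ast$ solves $H=\Phi(H)^2+p$ with $\Phi$ equal to the corresponding function in $\{e,f,g\}$. If instead it lies at a threshold, $H^\ast=H_{\sep}^\ast(t_i)$ and the mixing probability of the indifferent type is chosen to clear demand. This is exactly the formula in part~4 of Theorem~\ref{thm:homo}, with the other type's demand held fixed.

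\emph{Main obstacle.} The delicate step is the monotonicity and clipping argument when the crossing happens at a threshold. It must be shown that the single crossing of $H-p$ with the decreasing correspondence lands in a well-defined regime. One also has to verify the ordering $e\ge f\ge g$ and handle corner solutions where $N_{\icl}=0$, where monotonicity is only weak. My first approach would be to reuse the clipping representation $H^\ast=\max\{H_g^\ast,\min\{H_{\sep}^\ast(t_1),\max\{H_f^\ast,\min\{H_{\sep}^\ast(t_2),H_e^\ast\}\}\}\}$, where $H_\Phi^\ast$ denotes the unique fixed point of $H=\Phi(H)^2+p$. I would check this case by case, using $H_e^\ast\ge H_f^\ast\ge H_g^\ast$.
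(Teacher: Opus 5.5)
Your proposal is correct and follows essentially the same route as the paper's proof of Theorem~\ref{thm:hetero_full}. The paper also reduces to the fixed-point inclusion $\sqrt{H-p}\in\bar{J}(H)$ and splits into the five cases $H<H_{\sep}^{*}(t_2)$, $H=H_{\sep}^{*}(t_2)$, $H_{\sep}^{*}(t_2)<H<H_{\sep}^{*}(t_1)$, $H=H_{\sep}^{*}(t_1)$, and $H>H_{\sep}^{*}(t_1)$, with aggregate demand $e$, $[f,e]$, $f$, $[g,f]$, $g$ respectively. Your nested clipping formula is a correct compact encoding of the paper's five case conditions given the ordering $H_e^{*}>H_f^{*}>H_g^{*}$; for example, the paper's condition $H_{\sep}^{*}(t_2)>e(H_{\sep}^{*}(t_2))^2+p$ is exactly $H_e^{*}<H_{\sep}^{*}(t_2)$.
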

The full characterization, including the case $H_{\sep}^*(t_2)=H_{\sep}^*(t_1)$, is provided in Appendix~\ref{app:hetero}. The key insight is that heterogeneity makes equilibrium congestion a regime-switching outcome governed by a small number of type-specific thresholds, illustrated in Figure~\ref{fig:hetero}. The separator $H_{\sep}^*(t)$ summarizes a type’s willingness to pay in resource costs to use the more resource-intensive method. The ordering  $H_{\sep}^*(t_2)<H_{\sep}^*(t_1)$ implies that 
\begin{wrapfigure}{r}{\myfigwidth}
    \centering
    \includegraphics[width=\myfigwidth]{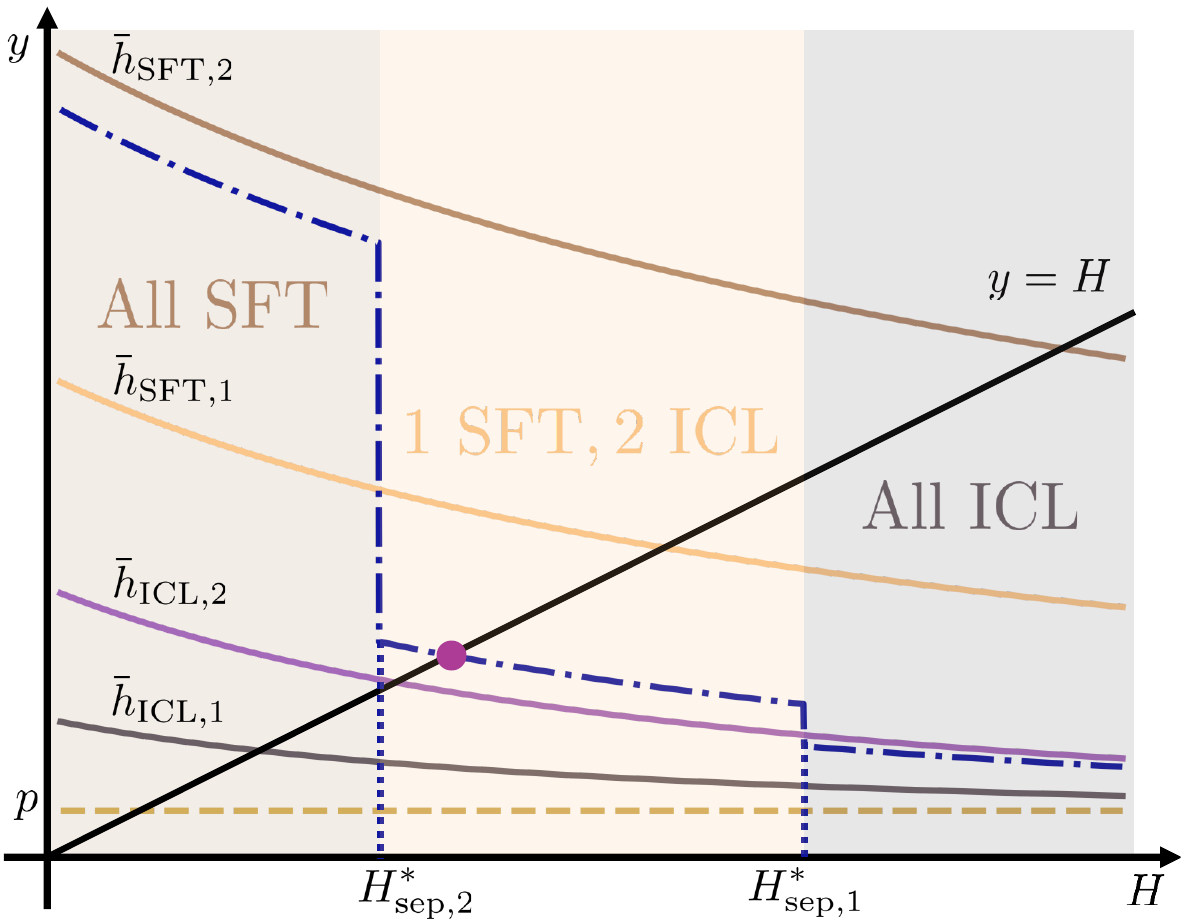} 
    \caption{This figure illustrates the equilibrium with two user types. 
For $i\in\{1,2\}$ and $a\in\calA$, we define $\bar h_{a,i}(H)= p+h(R_a N_a(t_i,H))$. 
The intersection of the line $y=H$ with the blue dotted curve corresponds to the equilibrium.}
    \label{fig:hetero}
\end{wrapfigure}
type $t_2$ switches first as congestion prices change. The aggregate resource usage is determined either in interior regimes with strict method choices or in boundary regimes where the marginal type mixes and absorbs parameter changes through its mixing probability. Consequently, comparative statics can be locally muted in mixing regimes, where $H^*$ is pinned at a separator, but shift sharply when the economy crosses a threshold, as small parameter changes induce discrete shifts in method adoption. 
The extension to $K$ user types is direct.


\paragraph{Comparative Statics of Equilibrium Congestion Level.} To study user interactions in the \ac{lscg}, we restrict heterogeneity to the noise parameter $\tilsigma$, holding all other parameters fixed. The equilibrium congestion level can then be written as $R^*=R^*(\tilsigma_1,\tilsigma_2,q)$. We say that equilibrium congestion is \emph{anchored} by type $\tilsigma_2$ if, holding $\tilsigma_2$ fixed, small changes in $\tilsigma_1$ or the population share $q$ leave $R^*$ unchanged, i.e., $\partial_{\tilsigma_1}R^*=\partial_q R^*=0$. The following proposition characterizes when anchoring obtains for fixed $\tilsigma_2>0$ with $q\in(0,1)$ and $\tilsigma_1<\tilsigma_2$.

\begin{proposition}[Necessary and sufficient conditions for $\partial_{\tilde{\sigma}_1}R^{\ast}=\partial_q R^{\ast}=0$]\label{prop:only_one_type}
The equilibrium congestion level is anchored by type $\tilsigma_2$, i.e.,  $\partial_{\tilde{\sigma}_1}R^*=\partial_qR^*=0$ if and only if the equilibrium lies in a locally ``flat'' regime:
either (i) the fixed point is \emph{pinned} at the type-2 separator, $H^*=H_{\sep}^*(\tilde{\sigma}_2)$
(ii) the equilibrium is the corner $R^*=0$.

The following sufficient conditions ensure the pinning regime in (i):
\begin{itemize}
\item The data noise level $\tilde{\sigma}_2$ lies in an intermediate band, so the type-2 separator falls in the interior mixing region rather than near the zero-congestion corner or an extreme-congestion regime.
\item The pretraining precision proxy $\pi/\zeta$ and the residual scale $(d-r)\tau$ are large, so the relevant demand mappings are well separated.
\item The type-1 population share $q$ is small, so type 1 does not shift the fixed point away from the type-2 separator.
\end{itemize}

\end{proposition}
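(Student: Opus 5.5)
The plan is to work from the regime decomposition of Theorem~\ref{thm:hetero} (and its full version in Appendix~\ref{app:hetero}). The equilibrium effective cost $H^*$ has one of two forms. It is either an interior fixed point $H=\Phi(H)^2+p$ with $\Phi\in\{e,f,g\}$, where $R^*=\Phi(H^*)$. Or it is pinned at a separator $H_{\sep}^*(t_i)$, where the marginal type mixes and absorbs changes through its mixing probability. For each case I will compute $\partial_{\tilsigma_1}R^*$ and $\partial_q R^*$ by the implicit function theorem, and show both vanish exactly in the two listed cases.

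\emph{Necessity.} In an interior regime, differentiate $H=\Phi(H;\tilsigma_1,q)^2+p$. By Theorem~\ref{thm:homo} part 1 and Lemma~\ref{lem:strong_mono}, $\Phi$ is nonincreasing in $H$, so $1-2\Phi\Phi'>0$ and the implicit function theorem applies. This gives $\partial_q H^*\propto \Phi\,\partial_q\Phi$. When $R^*=\Phi>0$, this vanishes only if $\partial_q\Phi=0$. For $e$ and $g$ that means $N_a(t_1,H)=N_a(t_2,H)$, and for $f$ it means $R_{\sft}N_{\sft}(t_1,H)=R_{\icl}N_{\icl}(t_2,H)$.

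In the regimes using $e$ or $g$, equality of demands would require equal errors at the optimum, which is impossible for $\tilsigma_1\neq\tilsigma_2$. To see this, use the explicit first-order conditions from \eqref{eq:simplified}: $N_{\sft}(t,H)=\sqrt{2\tilsigma^2 d/(R_{\sft}H)}$, which is strictly increasing in $\tilsigma$. The ICL case is similar wherever the optimum is interior. For $f$, I will use the $\tilsigma_1$-derivative instead. It equals $q R_{\sft}\partial_{\tilsigma}N_{\sft}(t_1,H)>0$, so it does not vanish.

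Next, consider a regime pinned at $H_{\sep}^*(\tilsigma_1)$. Here $H^*$ depends on $\tilsigma_1$ through the separator, which moves strictly with $\tilsigma_1$ by Theorem~\ref{thm:homo} part 3 and the implicit function theorem on $\psi(t_1,H)=0$. So this case is excluded as well. The remaining cases are exactly (i) and (ii).

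\emph{Sufficiency.} If $H^*=H_{\sep}^*(\tilsigma_2)$ in the interior of a mixing region, then $H_{\sep}^*(\tilsigma_2)$ does not depend on $(\tilsigma_1,q)$. Small perturbations are absorbed by the type-2 mixing weight, which remains in $(0,1)$ by continuity, so $R^*=\sqrt{H_{\sep}^*-p}$ is locally constant. If $R^*=0$ at a corner, all demands vanish in a neighborhood (the ICL corner $N_{\icl}=0$ holds when $H$ exceeds the marginal value at zero), so both derivatives are zero.

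\emph{Sufficient conditions for pinning.} Pinning at the type-2 separator occurs iff
\[
f\big(H_{\sep}^*(t_2)\big)^2+p\;\ge\; H_{\sep}^*(t_2)\;\ge\; g\big(H_{\sep}^*(t_2)\big)^2+p,
\]
with the mixing interpolating between $f$ and $g$. I will verify each listed condition against these inequalities. Small $q$ makes $f$ and $g$ close to the type-2-only demands $R_{\sft}N_{\sft}(t_2,\cdot)$ and $R_{\icl}N_{\icl}(t_2,\cdot)$, so the homogeneous clipping in Theorem~\ref{thm:homo} part 4 transfers. An intermediate $\tilsigma_2$ places $H_{\sep}^*(t_2)$ strictly between $H_{\icl}^*$ and $H_{\sft}^*$. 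Large $\pi/\zeta$ and large $(d-r)\tau$ raise the SFT advantage and shrink ICL demand, which separates the curves with a strict margin.

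The main obstacle is this last quantitative step. Making ``intermediate band'' and ``large'' precise needs explicit bounds on $H_{\sep}^*$, which is only implicitly defined by $\psi=0$. I would first try bounding $\Phi_{\sft}$ and $\Phi_{\icl}$ in closed form: $\Phi_{\sft}(H)=2\sqrt{2\tilsigma^2 d R_{\sft}H}$, and $\Phi_{\icl}$ is piecewise explicit. I would then show the strict inequalities above by continuity from the $q\to0$ limit.
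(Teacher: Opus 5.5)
Your skeleton matches the paper's proof: a case analysis over the regimes of Theorem~\ref{thm:hetero}. Your treatment of the all-SFT regime $e$, the mixed regime $f$ (via $\partial_{\tilsigma_1}$), the type-1-separator regime, and both sufficiency directions is fine. The genuine gap is in the all-ICL regime $g$.

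You dismiss $\partial_q R^*=0$ there by asserting that ICL demand, like SFT demand, is strictly monotone in $\tilsigma$. It is not. With the truncation inactive, $x_i(H)=R_{\icl}N_{\icl}(t_i,H)=\tilsigma_i\sqrt{2rR_{\icl}/H}-2R_{\icl}\tilsigma_i^2/\zeta$ is a concave quadratic in $\tilsigma_i$. Since $\partial_q g=x_1-x_2$, the condition $\partial_q R^*=0$ actually holds with $\tilsigma_1<\tilsigma_2$ whenever $\sqrt{2rR_{\icl}/H^*}=(2R_{\icl}/\zeta)(\tilsigma_1+\tilsigma_2)$. Your remark about equal errors does not rescue this, since equal demand does not force equal error. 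The missing step is to use both conditions jointly. In this regime $\partial_{\tilsigma_1}R^*=0$ forces $\partial_{\tilsigma_1}x_1(H^*)=0$, i.e.\ $\sqrt{2rR_{\icl}/H^*}=4R_{\icl}\tilsigma_1/\zeta$. Combined with the previous equation this gives $\tilsigma_1=\tilsigma_2$, a contradiction, which is how the paper closes this case. The truncated subcases then go as you say: $\partial_q g=\pm x_j\neq0$ if exactly one $x_i$ vanishes, and the corner if both do.

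Second, your pinning interval is wrong. $H^*_{\sep}$ is decreasing in $\tilsigma$ and $\tilsigma_1<\tilsigma_2$, so at $H=H^*_{\sep}(t_2)$ type 1 strictly prefers SFT and only type 2 mixes. The attainable aggregate demand is therefore $[f(H),e(H)]$, and pinning means $f(H^*_{\sep}(t_2))<\sqrt{H^*_{\sep}(t_2)-p}<e(H^*_{\sep}(t_2))$, with strict inequalities needed for local constancy. Your $[g,f]$ is the interval for pinning at the type-1 separator.

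This also inverts your small-$q$ heuristic. As $q\to0$ both $f$ and $g$ tend to the type-2 ICL demand, so your interval collapses. What lets the homogeneous clipping of Theorem~\ref{thm:homo} (with $H^*_{\icl}(t_2)<H^*_{\sep}(t_2)<H^*_{\sft}(t_2)$) transfer is that $e\to R_{\sft}N_{\sft}(t_2,\cdot)$ and $f\to R_{\icl}N_{\icl}(t_2,\cdot)$.

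For the explicit conditions, the paper sidesteps the implicit-$H^*_{\sep}$ obstacle you flagged. Large $\pi/\zeta$ and large $(d-r)\tau$ place $H^*_{\sep}(t_2)$ on the closed-form branch $((d-r)\tau+r\zeta)^2/(8dR_{\sft}\tilsigma_2^2)$ of Proposition~\ref{prop:H_fixed_points}. They also truncate the type-2 ICL demand to zero there, so $f$ reduces to the type-1 SFT term. The band on $\tilsigma_2$, the condition $\tilsigma_1\le\tilsigma_2/2$, and the bound on $q$ then give both strict inequalities by direct computation.
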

A formal statement is in Appendix~\ref{app:only_one_type}. Proposition~\ref{prop:only_one_type} shows that the stationarity condition $\partial_{\tilsigma_1}R^{\ast}=\partial_q R^{\ast}=0$ arises only when equilibrium congestion is pinned, rather than determined by smooth interior adjustments. In the first regime, $H^{\ast}=h(R^{\ast})+p$ is pinned at the type-$2$ separator: type-$2$ users are indifferent between \ac{sft} and \ac{icl}, and small changes in $(\tilsigma_1,q)$ can be absorbed through their mixing behavior. In the second 
\begin{wrapfigure}{r}{\myfigwidth}
    \centering
    \includegraphics[width=\myfigwidth]{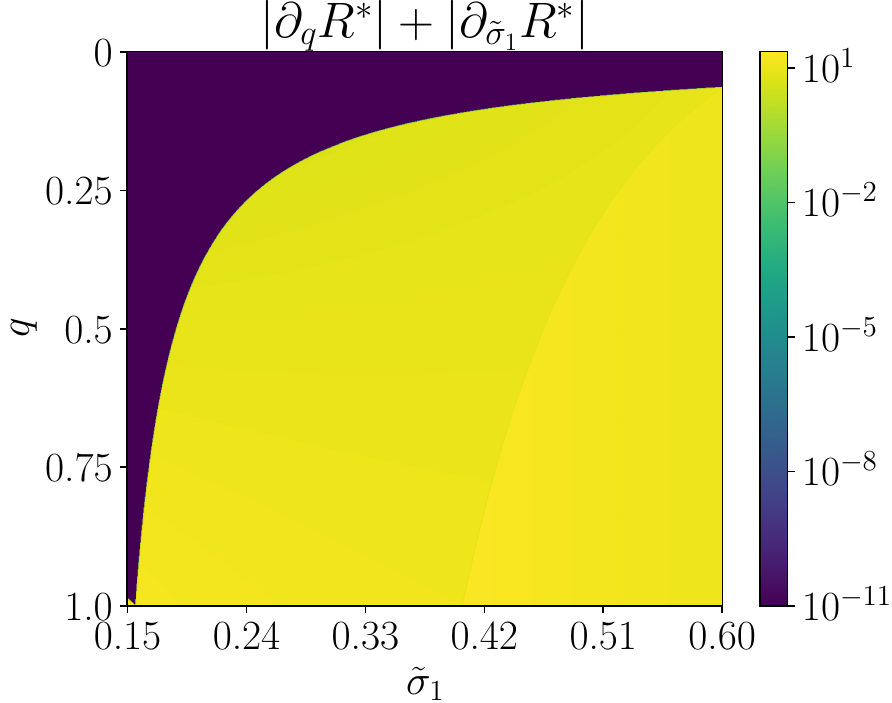} 
    \caption{The figure reports $|\partial_q R^{\ast}|+|\partial_{\tilsigma_1}R^{\ast}|$ across different values of the type-$1$ task noise $\tilsigma_1$ and population share $q$. In the dark-blue region, the near-zero sensitivity indicates that the equilibrium congestion level is pinned by the type-$2$ separator and is locally insensitive to changes in $\tilsigma_1$ and $q$.}
    \vspace{-1.2em}
    \label{fig:p1_grad}
\end{wrapfigure}
regime, $R^{\ast}=0$ is a stable zero-congestion corner: both types strictly prefer \ac{icl} and choose $\tilN_{\icl}=0$.

The sufficient conditions place the game in a robust type-2-pinned mixing regime. The bounds on $\tilsigma_2$ position $H_{\sep}^{\ast}(\tilsigma_2)$ away from both the zero-congestion corner and the extreme-congestion regime, ensuring an interior mixing wedge. A small pretraining-error index $\zeta$ makes \ac{icl} effective, lowering marginal \ac{icl} demand and widening the resource gap between the type-2 \ac{icl} and \ac{sft} choices at the separator. A large uncovered-error floor $(d-r)\tau$ further ensures strict inequalities, so the separator is a genuine switching locus rather than a knife-edge. Finally, the upper bound on $q$ keeps the type-1 population sufficiently small, and hence leaves enough type-2 mass for type-2 mixing behavior to pin aggregate congestion; the admissible type-1 share decreases with $\tilsigma_1$. This setting is illustrated in Figure~\ref{fig:p1_grad}.

\begin{tcolorbox}[enhanced jigsaw,breakable,frame hidden, left = 0.1mm, right = 0.1mm, top = 0.1mm, bottom = 1.5mm]
\refstepcounter{finding}\label{find:hetero_game}
\textbf{Finding \thefinding:}
The equilibrium becomes anchored in two cases: either one type of users is exactly indifferent between \ac{icl} and \ac{sft}, or users choose not to use any personalization data because computation is too costly. In these cases, small parameter changes are absorbed by adjustments in mixing behavior or by continued inactivity, so aggregate compute demand remains locally unchanged.

\end{tcolorbox}

\section{Stackelberg LLM-Serving Congestion Game Between a Platform and Users}\label{sec:smfcg}
\paragraph{Equilibrium Definition.} Section~\ref{sec:mfcg} characterizes equilibrium properties under a fixed platform serving strategy, such as a fixed price $p$. In practice, however, the platform seeks to maximize its profit by optimizing the serving strategy. This section asks: which strategies maximize the platform’s profit at the induced user equilibrium? The interaction between the platform and users proceeds as follows.
\begin{itemize}
    \item[1.] The platform sets the unit resource price $p$.
    \item[2.] Given $p$, users reach the equilibrium $(\pi^{*}(p),R^{*}(p))$ of the induced game via \eqref{eq:cost} and \eqref{eq:sft_N}; we explicitly indicate the dependence on $p$.
    \item[3.] The platform collects profit $I(p)=(p-1)\cdot R^{*}(p)$.
\end{itemize}
In the platform’s profit, the constant $1$ represents the normalized cost of one unit of computational resource. We highlight that the profit $I(p)$ is well-defined: by Theorem~\ref{thm:exist_unique}, for each price $p$ the followers’ game admits an equilibrium with a unique congestion level, so the induced aggregate consumption $R^*(p)$ and the resulting profit $I(p)$ are single-valued.  This interaction can be formulated as a \ac{slscg}, with the platform acting as the leader that sets the price and users as followers. The equilibrium concept is defined as follows.

\begin{definition}
    The triplet $(p^{*},\pi^{*},R^{*})$ is the Stackelberg equilibrium if
    \begin{itemize}
        \item (Follower equilibrium) The pair $(\pi^{*},R^{*})$ is the equilibrium of the \ac{lscg} induced by price $p^*$.
        \item (Leader optimality) The price $p^*$ maximizes profit $I(p)=(p-1)R^*(p)$ over $p>0$.

    \end{itemize}
\end{definition}
Intuitively, the platform moves first and anticipates how users will respond to its price. For any price $p$, users choose their personalization policies and reach an equilibrium resource demand $R^*(p)$. The platform then selects the price that maximizes its profit, given by the per-unit margin $p-1$ multiplied by equilibrium resource consumption. Thus, the Stackelberg equilibrium combines users’ optimal responses with the platform’s optimal pricing decision.
\paragraph{Price Effects on Equilibrium and Profit.} We then analyze how the price $p$ affects the equilibrium congestion level $R^{*}(p)$ and profit $I(p)$.

\begin{theorem}[Monotonicity of equilibrium congestion in price]
\label{thm:r_decreasing_p}
Under regularity conditions of $h$, the unique equilibrium congestion level
$R^*(p)$ is decreasing in the unit resource price $p$. That is, for any $p_1<p_2$, we have that $R^*(p_2)\le R^*(p_1)$.
\end{theorem}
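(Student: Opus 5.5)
The plan is to reduce everything to the one-dimensional effective unit cost $H=p+h(R)$ and use the no-crossing property behind Theorem~\ref{thm:exist_unique} (Lemma~\ref{lem:strong_mono}). Best responses depend on $(p,R)$ only through $\bar h(R,p)=p+h(R)$. For a given $H$, let $D(H)$ be the set of aggregate demands $\int_{\calT}\sum_a R_a\tilN\,\pi_t(a,\rmd\tilN)\,T(\rmd t)$ obtainable from selections of $\BR$ at unit cost $H$. Lemma~\ref{lem:strong_mono} says that best-response resource demands move weakly downward in $H$ with no crossing: if $H<H'$, $D\in D(H)$ and $D'\in D(H')$, then $D'\le D$. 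In particular, for every type, $\max\{R_a\tilN:(a,\tilN)\in\BR(t,H')\}\le\min\{R_a\tilN:(a,\tilN)\in\BR(t,H)\}$, and integrating over $T$ gives the aggregate statement.

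I would argue by contradiction. Fix $p_1<p_2$ and let $(\pi_j^*,R_j^*)$ be equilibria at $p_j$; by Theorem~\ref{thm:exist_unique}, $R_j^*=R^*(p_j)$ is well defined. Suppose $R_2^*>R_1^*$. Since $h$ is strictly increasing,
\[
H_2:=p_2+h(R_2^*)>p_1+h(R_1^*)=:H_1 .
\]
By optimality, $\pi_j^*$ is supported on $\BR(\cdot,H_j)$ for $T$-a.e.\ type. Consistency then gives $R_j^*=R(\pi_j^*)\in D(H_j)$, and the no-crossing property yields $R_2^*\le R_1^*$, a contradiction. Hence $R^*(p_2)\le R^*(p_1)$.

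An equivalent route is to define $\Gamma(H)=p+h(D(H))$ and observe that the equilibrium is the unique crossing of the set-valued, nonincreasing map $D$ with the strictly increasing curve $R\mapsto h^{-1}(H-p)$. Raising $p$ shifts that curve down in $R$ for each $H$, so the crossing moves weakly left in $R$.

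The main obstacle is establishing the no-crossing inequality across types at the level of sets, not just for single-valued demands. The issue is that $\BR(t,H)$ may contain both an \ac{icl} and an \ac{sft} action. For this I would use a single-crossing argument on $C(t,(a,\tilN),H)=E_a(t,\tilN)+R_a\tilN H$. Write $x=R_a\tilN$ for the resource usage, so the cost is $\tilde E(x)+xH$ with $\tilde E(x)=\min_a E_a(t,x/R_a)$. This is a function with decreasing differences in $(x,H)$, because the cross term is $xH$. Topkis' theorem then gives the strong set order: every element of $\arg\min$ at $H'$ is at most every element at $H$ whenever $H'>H$.

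To make the comparison strict-or-equal in the right direction, I would take $x\in\arg\min(H)$ and $x'\in\arg\min(H')$ and sum the two optimality inequalities:
\[
(x-x')(H'-H)\ge 0 .
\]
This gives $x'\le x$ directly, with no convexity needed. That is exactly the needed pointwise bound, and measurability is only required for integrating the inequality, which holds for every selection.
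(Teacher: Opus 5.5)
Your proof is correct and follows the paper's argument: a revealed-preference inequality $(x-x')(H'-H)\ge 0$ gives per-type no-crossing of best-response resource demands, and a contradiction at the two fixed points concludes. The only difference is that you fold the paper's two monotonicity steps (in $R$ via Lemma~\ref{lem:strong_mono}, and separately in $p$) into one by noting that costs depend on $(p,R)$ only through $H=p+h(R)$, which is a harmless streamlining. The appeal to Topkis is unnecessary: the plain strong set order would not by itself give the every-element comparison you need, whereas your summation argument does.
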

This theorem formalizes congestion pricing: a higher unit resource price $p$ raises users’ marginal cost, shifting best responses toward less resource-intensive behavior. As a result, users demand fewer personalization samples and substitute away from compute-heavy algorithms, reducing aggregate demand at every conjectured congestion level and implying that the unique equilibrium congestion $R^*(p)$ is decreasing in $p$.

Although $R^*(p)$ decreases in $p$, a natural question is whether the profit-maximizing price $p^*$ is finite. For analytical tractability, we focus on the case $\alpha=1$ for \eqref{eq:sft_N}.

\begin{proposition}[Boundedness of Platform Profit]\label{prop:s_exist}
    When regularity conditions of $h$ holds, $\alpha=1$, and the following holds
    \begin{align*}
        \sup_{t\in\calT} ((d-r)\tau + r\zeta)^{2}(d-r)^{-1}\tilsigma^{-2}+\bar{\sigma}^2r(\pi\zeta-\bar{\sigma}^2)\pi^{-2}\tilsigma^{-2}<\infty,
    \end{align*}
    the Stackelberg equilibrium $(p^{*},\pi^{*},R^{*})$ exists with $p^*<\infty$ and $I(p^*)<\infty$.
\end{proposition}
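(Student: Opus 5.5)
Our plan is to show that the profit $I(p)=(p-1)R^*(p)$ is bounded on $p>0$, tends to a nonpositive limit (in fact $0$) as $p\to\infty$, and is upper semicontinuous, so that a maximizer with $p^*<\infty$ exists. The key quantity is an explicit bound on per-user resource demand at unit cost $H=p+h(R)\ge p$. For each type $t$ and $a\in\calA$ we bound $R_aN_a(t,H)$ directly from the first-order condition $-\partial_N E_a(t,N)=R_aH$ with $\alpha=1$ in \eqref{eq:sft_N}.

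\textbf{Step 1: demand bounds.} We first note that $N=0$ is suboptimal for SFT, since $E_{\sft}\to\infty$ as $N\downarrow0$. Moreover, any optimal $N$ must satisfy $R_aNH\le \Phi_a(t,H)\le E_{\icl}(t,0)=(d-r)\tau+r\zeta$ for ICL. For SFT we compare against a feasible choice of $N$ to bound $\Phi_{\sft}$. The sharper route is through the FOC. The SFT term $2(d-r)\tilsigma^2/N$ gives $N_{\sft}\lesssim$ the level at which $2(d-r)\tilsigma^2/N^2\ge R_{\sft}H$, hence $R_{\sft}N_{\sft}\le\sqrt{2(d-r)\tilsigma^2R_{\sft}/H}+\dots$. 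The ICL FOC, $-\partial_NE_{\icl}=R_{\icl}H$, bounds $N_{\icl}$ via $|\partial_N E_{\icl}|\le r\tilsigma^2\bar\sigma^2(\pi\zeta-\bar\sigma^2)\cdot(\text{const})/(\tilsigma^2\pi)^2$-type expressions. The two quantities in the hypothesis are exactly what is needed to control, uniformly in $t$, either the cost-based bound $((d-r)\tau+r\zeta)/H$ relative to the SFT scale $(d-r)\tilsigma^2$, or the maximal ICL marginal value $\bar\sigma^2r(\pi\zeta-\bar\sigma^2)\pi^{-2}\tilsigma^{-2}$. The target is a bound of the form $R_aN_a(t,H)\le K/H$ together with $N_{\icl}(t,H)=0$ once $H R_{\icl}$ exceeds the maximal marginal ICL gain, where $K<\infty$ is the supremum in the hypothesis.

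\textbf{Step 2: profit bound.} Integrating over $T$ gives $R^*(p)\le K/(p+h(R^*(p)))\le K/p$, so $I(p)\le (p-1)K/p\le K$. This bound alone only yields boundedness, so we need more. From Step 1, for large $p$ ICL demand vanishes, and SFT is chosen only if $\Phi_{\sft}\le\Phi_{\icl}$. At large $H$, $\Phi_{\sft}(t,H)\ge 2\sqrt{2(d-r)\tilsigma^2R_{\sft}H}\to\infty$, while $\Phi_{\icl}\le(d-r)\tau+r\zeta$ uniformly. Hence there is a finite $\bar p$ beyond which every type chooses ICL with $N=0$, and $R^*(p)=0$, so $I(p)=0$. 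Using the first hypothesis term, $\bar p$ can be made uniform: SFT is dominated once $8(d-r)\tilsigma^2R_{\sft}p>((d-r)\tau+r\zeta)^2$. Also $I(p)\le0$ for $p\le1$. It therefore suffices to maximize over $[1,\bar p]$.

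\textbf{Step 3: attainment.} We show $R^*(p)$ is upper semicontinuous on $[1,\bar p]$, so that $I$ attains its supremum there. By Theorem~\ref{thm:r_decreasing_p}, $R^*$ is monotone, so it has one-sided limits. We argue closedness of the equilibrium correspondence: take $p_n\to p$ with equilibria $(\pi_n,R_n)$, which lie in a compact set by Step 1. By Berge's maximum theorem the best-response correspondence is upper hemicontinuous in $(R,p)$, so limits of equilibria are equilibria. Uniqueness of the congestion level (Theorem~\ref{thm:exist_unique}) then gives continuity of $R^*$, and with it of $I$. The maximizer $p^*\in[1,\bar p]$ then exists with $I(p^*)\le K<\infty$.

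The main obstacle is Step 3, specifically passing to the limit in the consistency condition $R_n=\int N\,\bar\pi_n$. This requires uniform integrability of sample sizes, which is where the uniform demand bound from Step 1, namely the hypothesis, is essential. The required tightness should follow from $N\le K/(R_{\icl}p)$ for $p\ge1$.
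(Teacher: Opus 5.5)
Your proposal follows the paper's proof essentially step for step. You find a uniform threshold $\bar p$ beyond which $R^*(p)=0$: SFT is dominated via $\Phi_{\sft}\ge 2\sqrt{2(d-r)\tilsigma^2R_{\sft}H}$ against $\Phi_{\icl}\le (d-r)\tau+r\zeta$, and $N_{\icl}=0$ once $R_{\icl}H$ exceeds the marginal ICL gain at $N=0$. You get continuity of $R^*(p)$ from the closed graph of the fixed-point correspondence together with uniqueness of the congestion level. You then maximize the continuous profit on a compact interval; using $I(1)=0$ to restrict to $[1,\bar p]$ is a harmless variant of the paper's extension of $I$ to $p=0$.

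One small repair is needed in Step 1. The uniform best-response bound is $R_aN\,H\le\Phi_{\icl}(t,H)\le(d-r)\tau+r\zeta$ for both methods; for SFT it holds because $\Phi_{\sft}\le\Phi_{\icl}$ at a best response. Its constant is $\sup_{t}((d-r)\tau+r\zeta)$, not the hypothesis constant. The SFT first-order condition alone cannot give it, since it only yields an $O(H^{-1/2})$ bound on the within-SFT optimizer.
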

This proposition establishes conditions under which the platform’s Stackelberg problem is well posed and admits a finite optimal price $p^*<\infty$. The key requirement is to rule out $p\to\infty$ as an optimal strategy by showing that, for sufficiently large $p$, equilibrium congestion $R^*(p)$ decays fast enough that the revenue term $(p-1)R^*(p)$ eventually decreases. Intuitively, high prices dominate users’ incentives, leading them to reduce personalization sample sizes and substitute toward compute-light methods (\ac{icl}); beyond a threshold, even \ac{icl} is not worth sampling, so users optimally choose $\tilN_{\icl}=0$ and $R^*(p)\to 0$. The boundedness condition ensures this demand collapse occurs uniformly across types, implying that the objective cannot be maximized at $p=\infty$ and hence that a finite optimal $p^*$ exists.

\paragraph{Menu Effects on Equilibrium and Profit.} Beyond pricing, platforms can also choose which personalization algorithms to offer. Early \ac{llm} services provided only \ac{icl}~\citep{openai2021service}. We therefore study whether offering both \ac{sft} and \ac{icl} yields higher platform profit than offering \ac{icl} alone.
\begin{assumption}[Relative Resource Consumption of \ac{sft} and \ac{icl}]\label{assump:large_large}
    The resource consumed of \ac{sft} per sample is larger than that of \ac{icl} with a multiplier: $R_{\sft}\ge R_{\icl}\cdot \sup_{t\in\calT}(r\beta(t)/(d-r))^{1/\alpha}$, where $\beta(t)=1+\pi(\tau+m^2)/\bar\sigma^2$.
\end{assumption}
This assumption requires that \ac{sft} and \ac{icl} differ substantially in per-sample resource consumption. It imposes a uniform compute gap between $R_{\sft}$ and $R_{\icl}$ so that, in equilibrium and for all user types, \ac{sft} is the genuinely compute-heavy personalization method.

We refer to a game as the \emph{full game} when the platform offers both \ac{sft} and \ac{icl}, and as the \ac{icl} game when only \ac{icl} is available. Given price $p$, the corresponding equilibrium congestion levels are denoted by $R^*(p)$ and $R_{\icl}^*(p)$, respectively.
\begin{theorem}[Choice Diversity Does Not Harm Profit]\label{thm:sft_better}
    We denote the congestion levels at the equilibrium of the full game and the \ac{icl} game by $R^{*}(p)$ and $R_{\icl}^{*}(p)$, respectively. Under regularity conditions of $h$ and Assumption~\ref{assump:large_large}, it holds that $R^{*}(p)\geq R_{\icl}^{*}(p)$ for any $p>0$. Therefore, adding \ac{sft} to the platform’s personalization menu cannot reduce optimal profit, i.e., $\sup_{p>0}(p-1)R^{*}(p)\geq \sup_{p>0}(p-1)R_{\icl}^{*}(p)$.
\end{theorem}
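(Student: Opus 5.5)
The plan is to reduce the comparison $R^*(p)\ge R^*_{\icl}(p)$ to a pointwise comparison of best-response resource demands at a common congestion level. Then we invoke the no-crossing/monotonicity structure behind Theorem~\ref{thm:exist_unique} (Lemma~\ref{lem:strong_mono}). For a fixed price $p$, define for each type $t$ and conjectured congestion $R$ the set of best-response resource demands in the full game, $D(t,R)=\{R_a\tilN:(a,\tilN)\in\BR(t,R)\}$. Define the analogous single-valued demand in the \ac{icl} game, $D_{\icl}(t,R)=R_{\icl}N_{\icl}(t,p+h(R))$; it is unique by strict convexity of $E_{\icl}$ in $\tilN$. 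Aggregating over $T$ gives correspondences $R\mapsto\mathcal D(R)$ and $R\mapsto \mathcal D_{\icl}(R)$. Both are decreasing in $R$ by Lemma~\ref{lem:strong_mono}, since higher $R$ raises the unit cost $H=p+h(R)$. Equilibria are their fixed points, $R\in\mathcal D(R)$ and $R=\mathcal D_{\icl}(R)$.

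\textbf{Key step.} The core claim is that for every $t$ and $H$, any \ac{sft} best response uses at least as much resource as the \ac{icl} optimum: $R_{\sft}N_{\sft}(t,H)\ge R_{\icl}N_{\icl}(t,H)$. This generalizes part 1 of Theorem~\ref{thm:homo} to the exact errors \eqref{eq:sft_N} and general $\alpha$. Given the claim, every element of $D(t,R)$ is at least $D_{\icl}(t,R)$. In the full game a type either picks \ac{icl}, in which case its demand is exactly $D_{\icl}(t,R)$, or picks \ac{sft}, in which case its demand is weakly larger. Hence $\inf\mathcal D(R)\ge\mathcal D_{\icl}(R)$ for all $R$.

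To prove the claim, I would use first-order conditions. $N_a(t,H)$ solves $-E_a'(N)=R_aH$, or is $0$ if $-E_a'(0^+)\le R_aH$. Substituting $u=R_aN$ turns the condition into $-\tfrac{1}{R_a}E_a'(u/R_a)=H$. It therefore suffices to show that the marginal error-reduction per unit resource of \ac{sft}, evaluated at resource level $u$, dominates that of \ac{icl} at the same $u$. Because $-E_a'$ is decreasing, this ordering of the marginal-value curves forces the \ac{sft} root to lie to the right of the \ac{icl} root.

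From \eqref{eq:sft_N}:
\begin{itemize}
\item The \ac{sft} marginal contains the term $2\alpha(d-r)\tilsigma^2 N^{-\alpha-1}$ from the uncovered directions.
\item The \ac{icl} marginal behaves like $\alpha r\tilsigma^2\bar\sigma^{-2}\cdot(\text{bounded factor})\cdot N^{-\alpha-1}$, and the bounded factor is controlled by $\beta(t)=1+\pi(\tau+m^2)/\bar\sigma^2$. This step requires bounding $(2\bar\sigma^4N^\alpha+\tilsigma^2\pi^2\zeta)$ over $(\bar\sigma^2N^\alpha+\tilsigma^2\pi)^2$ and its derivative uniformly.
\end{itemize}
Comparing the two at the same $u$ introduces the factor $(R_{\sft}/R_{\icl})^{\alpha}$. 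Assumption~\ref{assump:large_large} supplies exactly $(R_{\sft}/R_{\icl})^\alpha\ge r\beta(t)/(d-r)$, which is what closes the inequality. I expect this derivative bookkeeping to be the main obstacle. In particular, the corner case $N_{\icl}=0$ is trivial, and the interior case requires a clean inequality $-E_{\icl}'(N)\le \alpha r\beta\tilsigma^2\cdot(\text{same power of }N)$ compared against $-E_{\sft}'$. I would first try to prove it after the substitution $x=N^\alpha$, where both derivatives become rational in $x$.

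\textbf{Conclusion.} Let $R^*_{\icl}$ be the \ac{icl}-game fixed point, and suppose toward contradiction that $R^*<R^*_{\icl}$. Monotonicity of $\mathcal D_{\icl}$ and the domination $\mathcal D\ge\mathcal D_{\icl}$ give $R^*\in\mathcal D(R^*)\ge\mathcal D_{\icl}(R^*)\ge\mathcal D_{\icl}(R^*_{\icl})=R^*_{\icl}>R^*$, a contradiction. Hence $R^*(p)\ge R^*_{\icl}(p)$. Uniqueness from Theorem~\ref{thm:exist_unique} makes both sides well defined. For $p>1$ this yields $(p-1)R^*(p)\ge(p-1)R^*_{\icl}(p)$. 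For $p\le1$ the \ac{icl} profit is nonpositive, while $\sup_p(p-1)R^*(p)\ge0$ by taking $p\to1$. Taking suprema gives the profit inequality.
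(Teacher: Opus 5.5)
Your proposal is correct and follows essentially the paper's route. The paper also reduces to the pointwise comparison $R_{\sft}N_{\sft}(t,H)>R_{\icl}N_{\icl}(t,H)$ and proves it with $x=N^{\alpha}$ and the bounds $x^2J_{\sft}(t,x)>2(d-r)\tilsigma^2$ and $x^2J_{\icl}(t,x)<2r\tilsigma^2\beta(t)$. Note the factor $2$ on the \ac{icl} side: the inequality you need is $-E_{\icl}'(N)\le 2\alpha r\beta\tilsigma^2N^{-\alpha-1}$, not the $\alpha r\beta\tilsigma^2N^{-\alpha-1}$ you wrote. That matching factor of $2$ is exactly what lets Assumption~\ref{assump:large_large} close the comparison. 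The paper then runs the same monotone fixed-point sandwich, using monotonicity of the full-game minimal-demand map $F_{\full}$ where you use the \ac{icl} demand map, and your explicit treatment of $p\le 1$ in the profit comparison fills in a step the paper leaves implicit.
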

The key insight is that adding \ac{sft} as an available personalization option cannot reduce equilibrium congestion. Economically, \ac{sft} is the high-resource method, so whenever some users find \ac{sft} privately cost-effective, they switch into a more compute-intensive mode of personalization, raising aggregate demand and thus the congestion fixed point. In short, expanding the menu of personalization methods can increase platform load even though it gives users more flexibility, because the new option is compute-heavy.
\begin{tcolorbox}[enhanced jigsaw,breakable,frame hidden, left = 0.1mm, right = 0.1mm, top = 0.1mm, bottom = 1.5mm]
\refstepcounter{finding}\label{find:stackelberg}
\textbf{Finding \thefinding:}
Adding \ac{sft} cannot lower the equilibrium congestion level because it expands users’ options with a compute-heavy personalization method, so any adoption raises aggregate resource demand and platform load.
\end{tcolorbox}

\section{Mapping the Model to Practice}\label{sec:exp}
In this section, we connect our theoretical analysis to practical \ac{llm} serving by validating the model in Section~\ref{sec:model} and testing our theoretical predictions.

\begin{figure}[t]
\centering
\subfigure[Estimation erros of \ac{icl} with different $\tilN$.]{\includegraphics[width=\myfigwidth]{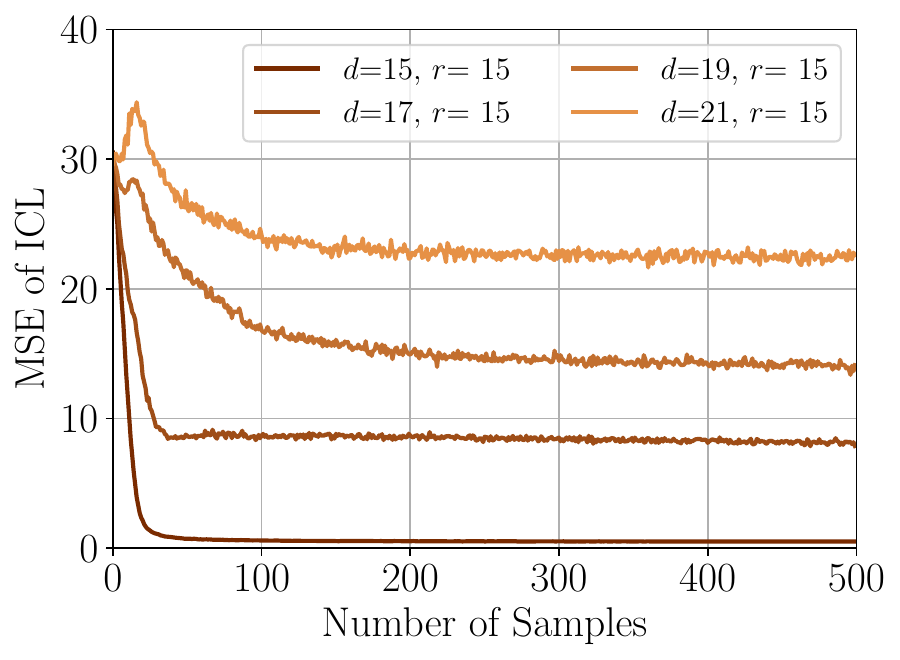}\label{fig:icl}}
\hspace{0.3em}
\subfigure[\ac{icl} errors with $500$ samples and coverage $r=15$.]{\includegraphics[width=\myfigwidth]{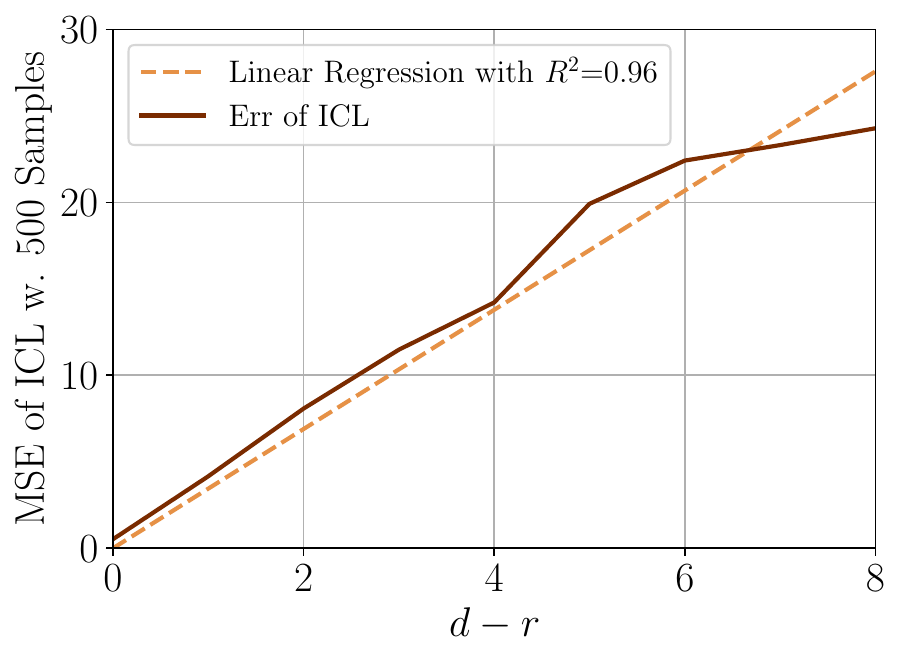}\label{fig:icl_500}}
\caption{The figures show the estimation error of \ac{icl} across different numbers of in-context samples $\tilN$ and ambient dimensions $d$. The results confirm that \ac{icl} cannot eliminate the error in the subspace uncovered by pretraining, even with infinitely many samples. This irreducible error scales linearly with the number of uncovered dimensions, $d-r$.}
\end{figure}

\vspace{10pt}
\noindent\textbf{Experimental Setting.}
We train a $22$M-parameter GPT-2 model~\citep{radford2019language} and evaluate it on in-context linear regression, a standard benchmark for \acp{llm} behavior~\citep{garg2022can,von2023transformers}. The model is prompted with $\tilN$ i.i.d.\ samples $\{(\tilx_i,\tily_i)\}_{i=1}^{\tilN}$ generated by~\eqref{eq:task_data} and a query $\tilx_{\mathrm{q}}$, and outputs a prediction $\hat y$ of $\tilx_{\mathrm{q}}^\top\theta^*$, evaluated by squared error. To isolate the differences between \ac{sft} and \ac{icl} predicted by Theorem~\ref{thm:err_analysis} and Proposition~\ref{prop:comp}, we pretrain the model from scratch under a rank-deficient design with $r<d$, where only the first $r$ coordinates of $x_i\in\mathbb{R}^d$ are nonzero, and set $r=15$. After pretraining, personalization is performed either via \ac{icl}, by prompting with $\{(\tilx_i,\tily_i)\}_{i=1}^{\tilN}$, or via \ac{sft}, by fine-tuning on the same data for at least 100 steps, as fewer steps produce negligible deviation from the pretrained model. Additional details are provided in Appendix~\ref{app:exp_details}.

For \ac{icl}, Theorem~\ref{thm:err_analysis} yields two implications. First, even as the number of personalization samples diverges, the error does not vanish but converges to an irreducible bias induced by directions not covered in pretraining. Second, under the isotropic setting (Assumption~\ref{assump:homo}), this bias equals $(d-r)\tau$ and is linear in the uncovered dimension. Figure~\ref{fig:icl} confirms the first result, showing that error decreases with sample size before plateauing, while Figure~\ref{fig:icl_500} confirms the second, with the limiting bias regressed on $(d-r)$ yielding the coefficient of determination as $R^2=0.96$.

Proposition~\ref{prop:comp} identifies two regimes comparing \ac{icl} and \ac{sft}. When the effective signal-to-noise ratio is high ($\kappa>1$), corresponding to many personalization samples, \ac{sft} attains lower error than \ac{icl}; when samples are scarce ($\kappa\to 0$), \ac{sft} incurs higher error. Figures~\ref{fig:icl} and~\ref{fig:sft} confirm these
predictions: \ac{sft} outperforms \ac{icl} at large sample sizes (e.g., $500$), but performs worse at very small sample sizes (e.g., near $1$).

\begin{figure}[t]
\centering
\subfigure[Estimation erros of \ac{sft} with different $\tilN$.]{\includegraphics[width=\myfigwidth]{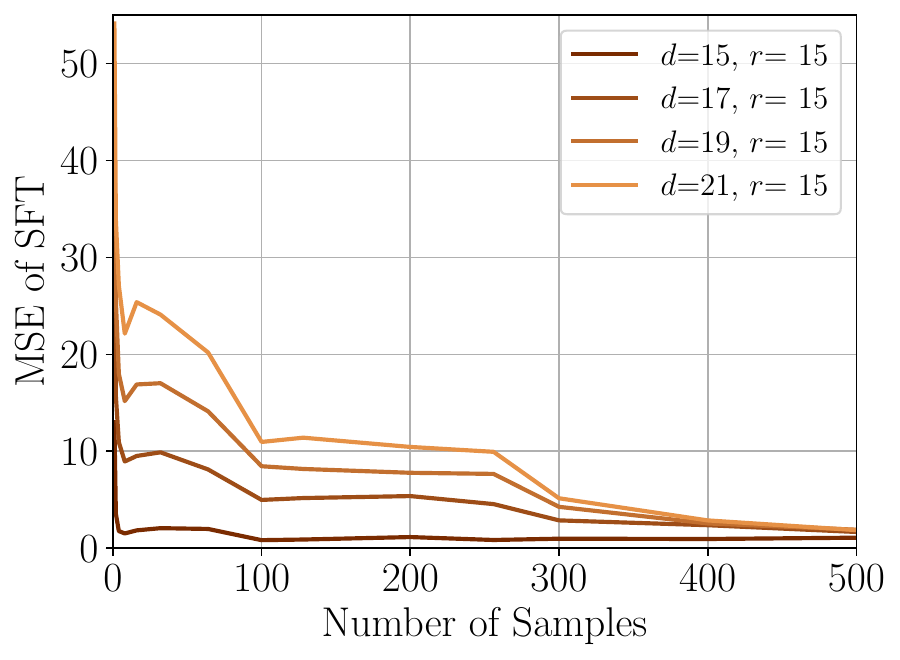}\label{fig:sft}}
\hspace{0.3em}
\subfigure[Percentage of platforms supporting \ac{sft}.]{\includegraphics[width=\myfigwidth]{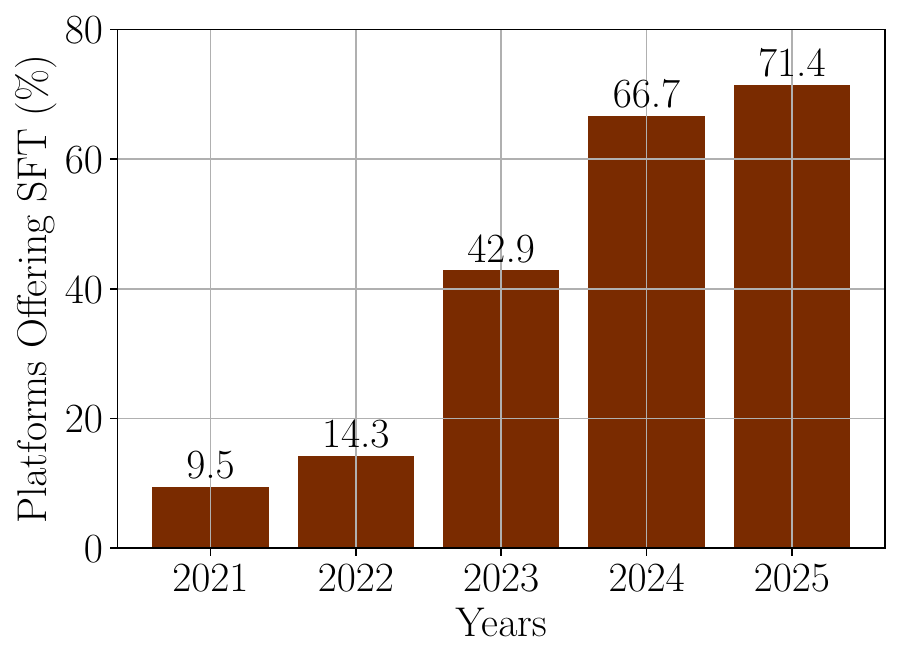}\label{fig:sft_service}}
\caption{Panel (a) shows the estimation error of \ac{sft} across different numbers of personalization samples $\tilN$, illustrating that \ac{sft} can asymptotically drive the estimation error to zero as the sample size increases. Panel (b) reports the fraction of platforms that offer \ac{sft} in addition to \ac{icl}, showing that support for \ac{sft}-based personalization has increased in recent years.}
\end{figure}

Beyond \ac{llm} behavior, we study platforms’ serving strategies—specifically, whether they offer only \ac{icl} or both \ac{sft} and \ac{icl}. We summarize the offerings of 21 companies releasing major foundation models listed in the Artificial Intelligence Index Reports 2024 and 2025~\citep{maslej2024artificial,maslej2025artificial}, with details in Appendix~\ref{app:add_exp}. Figure~\ref{fig:sft_service} shows a clear upward trend in the fraction of platforms offering both methods, consistent with Theorem~\ref{thm:sft_better}, which predicts that adding the \ac{sft} option does not reduce platform profit.

\section{Discussion and Conclusion}
The personalization of \acp{llm} is not just a technical challenge—it is an economic one. As billions of users customize AI systems for specialized tasks on shared computational infrastructure, their choices create a complex web of strategic interactions. When should a user invest in expensive \ac{sft} versus lightweight \ac{icl}? How does congestion from others' personalization decisions reshape these incentives? And what strategies should platforms pursue when computational resources are scarce and valuable?

This paper developed the first theoretical framework that jointly addresses these questions. By building linear statistical models of personalization algorithms and embedding them in a continuum game with congestion, we obtained sharp predictions about user behavior and platform strategy. Three core insights emerge. First, the comparison between \ac{sft} and \ac{icl} hinges on a precise interplay between pretraining coverage and data quality—not simply which method is "better." Second, equilibrium congestion responds to system parameters in surprisingly non-monotone ways: better pretraining can increase peak loads, harder tasks can reduce demand, and higher \ac{sft} costs can initially amplify congestion before alleviating it. Third, offering algorithmic diversity—both \ac{sft} and \ac{icl}—never hurts platform profits, rationalizing the rapid adoption of multiple personalization methods we observe in practice.

Our model admits several natural extensions. First, we assume a single platform serving all users, whereas in practice, multiple platforms compete. This can be incorporated by allowing users to choose among platforms, with congestion defined at the platform level. Second, beyond personalization algorithms, platforms increasingly offer reasoning models that deliver higher-quality responses at the cost of greater latency (e.g., GPT-o1). Modeling such choices requires expanding the set of algorithms available in our framework.


\bibliography{sample-bibliography}

@String{Springer = "Springer-Verlag" }

@article{henderson1974road,
  title={Road congestion: a reconsideration of pricing theory},
  author={Henderson, J Vernon},
  journal={Journal of Urban Economics},
  volume={1},
  number={3},
  pages={346--365},
  year={1974},
  publisher={Elsevier}
}

@book{small2024economics,
  title={The economics of urban transportation},
  author={Small, Kenneth A and Verhoef, Erik T and Lindsey, Robin},
  year={2024},
  publisher={Routledge}
}

@article{redding2015transportation,
  title={Transportation costs and the spatial organization of economic activity},
  author={Redding, Stephen J and Turner, Matthew A},
  journal={Handbook of regional and urban economics},
  volume={5},
  pages={1339--1398},
  year={2015},
  publisher={Elsevier}
}

@article{duranton2011fundamental,
  title={The fundamental law of road congestion: Evidence from US cities},
  author={Duranton, Gilles and Turner, Matthew A},
  journal={American Economic Review},
  volume={101},
  number={6},
  pages={2616--2652},
  year={2011},
  publisher={American Economic Association}
}

@book{patriksson2015traffic,
  title={The traffic assignment problem: models and methods},
  author={Patriksson, Michael},
  year={2015},
  publisher={Courier Dover Publications}
}

@article{kreindler2024peak,
  title={Peak-Hour Road Congestion Pricing: Experimental Evidence and Equilibrium Implications},
  author={Kreindler, Gabriel},
  journal={Econometrica},
  volume={92},
  number={4},
  pages={1233--1268},
  year={2024},
  publisher={Wiley Online Library}
}

@article{hu2024unveiling,
  title={Unveiling the statistical foundations of chain-of-thought prompting methods},
  author={Hu, Xinyang and Zhang, Fengzhuo and Chen, Siyu and Yang, Zhuoran},
  journal={arXiv preprint arXiv:2408.14511},
  year={2024}
}

@article{carmona2013probabilistic,
  title={Probabilistic analysis of mean-field games},
  author={Carmona, Ren{\'e} and Delarue, Fran{\c{c}}ois},
  journal={SIAM Journal on Control and Optimization},
  volume={51},
  number={4},
  pages={2705--2734},
  year={2013},
  publisher={SIAM}
}

@book{carmona2018probabilistic,
  title={Probabilistic theory of mean field games with applications I-II},
  author={Carmona, Ren{\'e} and Delarue, Fran{\c{c}}ois and others},
  volume={3},
  year={2018},
  publisher={Springer}
}

@article{meng2022locating,
  title={Locating and editing factual associations in gpt},
  author={Meng, Kevin and Bau, David and Andonian, Alex and Belinkov, Yonatan},
  journal={Advances in neural information processing systems},
  volume={35},
  pages={17359--17372},
  year={2022}
}

@article{wang2025muon,
  title={Muon outperforms Adam in tail-end associative memory learning},
  author={Wang, Shuche and Zhang, Fengzhuo and Li, Jiaxiang and Du, Cunxiao and Du, Chao and Pang, Tianyu and Yang, Zhuoran and Hong, Mingyi and Tan, Vincent YF},
  journal={arXiv preprint arXiv:2509.26030},
  year={2025}
}

@article{zhang2024learning,
  title={Learning regularized graphon mean-field games with unknown graphons},
  author={Zhang, Fengzhuo and Tan, Vincent YF and Wang, Zhaoran and Yang, Zhuoran},
  journal={Journal of Machine Learning Research},
  volume={25},
  number={372},
  pages={1--95},
  year={2024}
}

@article{xie2021explanation,
  title={An explanation of in-context learning as implicit bayesian inference},
  author={Xie, Sang Michael and Raghunathan, Aditi and Liang, Percy and Ma, Tengyu},
  journal={arXiv preprint arXiv:2111.02080},
  year={2021}
}

@article{guo2019learning,
  title={Learning mean-field games},
  author={Guo, Xin and Hu, Anran and Xu, Renyuan and Zhang, Junzi},
  journal={Advances in neural information processing systems},
  volume={32},
  year={2019}
}

@article{luo2025empirical,
  title={An empirical study of catastrophic forgetting in large language models during continual fine-tuning},
  author={Luo, Yun and Yang, Zhen and Meng, Fandong and Li, Yafu and Zhou, Jie and Zhang, Yue},
  journal={IEEE Transactions on Audio, Speech and Language Processing},
  year={2025},
  publisher={IEEE}
}

@article{akyureklearning,
  title={What learning algorithm is in-context learning? Investigations with linear models},
  author={Aky{\"u}rek, Ekin and Schuurmans, Dale and Andreas, Jacob and Ma, Tengyu and Zhou, Denny},
  journal={The Eleventh International Conference on Learning Representations},
  year={2023}
}

@techreport{chatterji2025people,
  title={How people use chatgpt},
  author={Chatterji, Aaron and Cunningham, Thomas and Deming, David J and Hitzig, Zoe and Ong, Christopher and Shan, Carl Yan and Wadman, Kevin},
  year={2025},
  institution={National Bureau of Economic Research}
}

@article{lacker2017limit,
  title={Limit theory for controlled McKean--Vlasov dynamics},
  author={Lacker, Daniel},
  journal={SIAM Journal on Control and Optimization},
  volume={55},
  number={3},
  pages={1641--1672},
  year={2017},
  publisher={SIAM}
}

@article{lu2025asymptotic,
  title={Asymptotic theory of in-context learning by linear attention},
  author={Lu, Yue M and Letey, Mary and Zavatone-Veth, Jacob A and Maiti, Anindita and Pehlevan, Cengiz},
  journal={Proceedings of the National Academy of Sciences},
  volume={122},
  number={28},
  pages={e2502599122},
  year={2025},
  publisher={National Academy of Sciences}
}

@article{su2025isotropic,
  title={Isotropic Curvature Model for Understanding Deep Learning Optimization: Is Gradient Orthogonalization Optimal?},
  author={Su, Weijie},
  journal={arXiv preprint arXiv:2511.00674},
  year={2025}
}

@article{mallinar2024minimum,
  title={Minimum-norm interpolation under covariate shift},
  author={Mallinar, Neil and Zane, Austin and Frei, Spencer and Yu, Bin},
  journal={arXiv preprint arXiv:2404.00522},
  year={2024}
}

@inproceedings{li2023transformers,
  title={Transformers as algorithms: Generalization and stability in in-context learning},
  author={Li, Yingcong and Ildiz, Muhammed Emrullah and Papailiopoulos, Dimitris and Oymak, Samet},
  booktitle={International conference on machine learning},
  pages={19565--19594},
  year={2023},
  organization={PMLR}
}

@article{weerawardhena2025llama,
  title={Llama-3.1-foundationai-securityllm-8b-instruct technical report},
  author={Weerawardhena, Sajana and Kassianik, Paul and Nelson, Blaine and Saglam, Baturay and Vellore, Anu and Priyanshu, Aman and Vijay, Supriti and Aufiero, Massimo and Goldblatt, Arthur and Burch, Fraser and others},
  journal={arXiv preprint arXiv:2508.01059},
  year={2025}
}

@article{anahtarci2023q,
  title={Q-learning in regularized mean-field games},
  author={Anahtarci, Berkay and Kariksiz, Can Deha and Saldi, Naci},
  journal={Dynamic Games and Applications},
  volume={13},
  number={1},
  pages={89--117},
  year={2023},
  publisher={Springer}
}

@inproceedings{yardim2023policy,
  title={Policy mirror ascent for efficient and independent learning in mean field games},
  author={Yardim, Batuhan and Cayci, Semih and Geist, Matthieu and He, Niao},
  booktitle={International Conference on Machine Learning},
  pages={39722--39754},
  year={2023},
  organization={PMLR}
}

@article{lauriere2022learning,
  title={Learning in mean field games: A survey},
  author={Lauri{\`e}re, Mathieu and Perrin, Sarah and P{\'e}rolat, Julien and Girgin, Sertan and Muller, Paul and {\'E}lie, Romuald and Geist, Matthieu and Pietquin, Olivier},
  journal={arXiv preprint arXiv:2205.12944},
  year={2022}
}

@article{carmona2016mean,
  title={Mean field games with common noise},
  author={Carmona, Ren{\'e} and Delarue, Fran{\c{c}}ois and Lacker, Daniel},
  year={2016}
}

@article{graber2023monotonicity,
  title={On monotonicity conditions for mean field games},
  author={Graber, P Jameson and M{\'e}sz{\'a}ros, Alp{\'a}r R},
  journal={Journal of Functional Analysis},
  volume={285},
  number={9},
  pages={110095},
  year={2023},
  publisher={Elsevier}
}

@article{lasry2007mean,
  title={Mean field games},
  author={Lasry, J. and Lions, P.},
  journal={Japanese journal of mathematics},
  volume={2},
  number={1},
  pages={229--260},
  year={2007},
  publisher={Springer}
}

@article{zhang2023learning,
  title={Learning regularized monotone graphon mean-field games},
  author={Zhang, Fengzhuo and Tan, Vincent and Wang, Zhaoran and Yang, Zhuoran},
  journal={Advances in Neural Information Processing Systems},
  volume={36},
  pages={67297--67308},
  year={2023}
}

@inproceedings{duetting2024mechanism,
  title={Mechanism design for large language models},
  author={Duetting, Paul and Mirrokni, Vahab and Paes Leme, Renato and Xu, Haifeng and Zuo, Song},
  booktitle={Proceedings of the ACM Web Conference 2024},
  pages={144--155},
  year={2024}
}

@article{hajiaghayi2024ad,
  title={Ad auctions for llms via retrieval augmented generation},
  author={Hajiaghayi, MohammadTaghi and Lahaie, S{\'e}bastien and Rezaei, Keivan and Shin, Suho},
  journal={Advances in Neural Information Processing Systems},
  volume={37},
  pages={18445--18480},
  year={2024}
}

@inproceedings{dubey2024auctions,
  title={Auctions with llm summaries},
  author={Dubey, Avinava and Feng, Zhe and Kidambi, Rahul and Mehta, Aranyak and Wang, Di},
  booktitle={Proceedings of the 30th ACM SIGKDD Conference on Knowledge Discovery and Data Mining},
  pages={713--722},
  year={2024}
}

@article{hadfield2025economy,
  title={An economy of AI agents},
  author={Hadfield, Gillian K and Koh, Andrew},
  journal={arXiv preprint arXiv:2509.01063},
  year={2025},
  publisher={arXiv}
}

@inproceedings{bergemann2025economics,
  title={The Economics of Large Language Models: Token Allocation, Fine-Tuning, and Optimal Pricing},
  author={Bergemann, Dirk and Bonatti, Alessandro and Smolin, Alex},
  booktitle={Proceedings of the 26th ACM Conference on Economics and Computation},
  pages={786--786},
  year={2025}
}

@article{fish2024algorithmic,
  title={Algorithmic collusion by large language models},
  author={Fish, Sara and Gonczarowski, Yannai A},
  year={2024}
}

@inproceedings{bergemann2025data,
  title={Data-Driven Mechanism Design: Jointly Eliciting Preferences and Information},
  author={Bergemann, Dirk and Bojko, Marek and Duetting, Paul and Paes Leme, Renato and Xu, Haifeng and Zuo, Song},
  booktitle={Proceedings of the 26th ACM Conference on Economics and Computation},
  pages={507--507},
  year={2025}
}

@inproceedings{soumalias2024truthful,
  title={Truthful Aggregation of LLMs$\backslash$$\backslash$with an Application to Online Advertising},
  author={Soumalias, Ermis and Curry, Michael and Seuken, Sven},
  year={2024},
  booktitle={Agentic Markets Workshop at ICML 2024}
}

@inproceedings{von2023transformers,
  title={Transformers learn in-context by gradient descent},
  author={Von Oswald, Johannes and Niklasson, Eyvind and Randazzo, Ettore and Sacramento, Jo{\~a}o and Mordvintsev, Alexander and Zhmoginov, Andrey and Vladymyrov, Max},
  booktitle={International Conference on Machine Learning},
  pages={35151--35174},
  year={2023},
  organization={PMLR}
}

@inproceedings{wolf2020transformers,
  title={Transformers: State-of-the-art natural language processing},
  author={Wolf, Thomas and Debut, Lysandre and Sanh, Victor and Chaumond, Julien and Delangue, Clement and Moi, Anthony and Cistac, Pierric and Rault, Tim and Louf, Remi and Funtowicz, Morgan and others},
  booktitle={Proceedings of the 2020 conference on empirical methods in natural language processing: system demonstrations},
  pages={38--45},
  year={2020}
}

@article{radford2019language,
  title={Language models are unsupervised multitask learners},
  author={Radford, Alec and Wu, Jeffrey and Child, Rewon and Luan, David and Amodei, Dario and Sutskever, Ilya and others},
  journal={OpenAI blog},
  volume={1},
  number={8},
  pages={9},
  year={2019}
}

@article{garg2022can,
  title={What can transformers learn in-context? a case study of simple function classes},
  author={Garg, Shivam and Tsipras, Dimitris and Liang, Percy S and Valiant, Gregory},
  journal={Advances in neural information processing systems},
  volume={35},
  pages={30583--30598},
  year={2022}
}

@techreport{agarwal2025designing,
  title={Designing Human-AI Collaboration: A Sufficient-Statistic Approach},
  author={Agarwal, Nikhil and Moehring, Alex and Wolitzky, Alexander},
  year={2025},
  institution={National Bureau of Economic Research}
}

@article{perrin2020fictitious,
  title={Fictitious play for mean field games: Continuous time analysis and applications},
  author={Perrin, S. and P{\'e}rolat, J. and Lauri{\`e}re, M. and Geist, M. and Elie, R. and Pietquin, O.},
  journal={Advances in Neural Information Processing Systems},
  volume={33},
  pages={13199--13213},
  year={2020}
}

@article{openai2025serviceoutage,
  title   = {OpenAI Service Incident: Major Outage across ChatGPT and API},
  author  = {{OpenAI}},
  journal = {OpenAI Status},
  year    = {2025},
  note    = {Incident report},
  url     = {https://status.openai.com/incidents/01JMYB63BJ47J3SXV6KSCT4D2A}
}

@article{openai2021service,
  title   = {GPT-3 powers the next generation of apps},
  author  = {{OpenAI}},
  journal = {OpenAI Status},
  year    = {2021},
  note    = {Incident report},
  url     = {https://openai.com/index/gpt-3-apps}
}

@article{touvron2023llama,
  title={Llama 2: Open foundation and fine-tuned chat models},
  author={Touvron, Hugo and Martin, Louis and Stone, Kevin and Albert, Peter and Almahairi, Amjad and Babaei, Yasmine and Bashlykov, Nikolay and Batra, Soumya and Bhargava, Prajjwal and Bhosale, Shruti and others},
  journal={arXiv preprint arXiv:2307.09288},
  year={2023}
}

@article{grattafiori2024llama,
  title={The llama 3 herd of models},
  author={Grattafiori, Aaron and Dubey, Abhimanyu and Jauhri, Abhinav and Pandey, Abhinav and Kadian, Abhishek and Al-Dahle, Ahmad and Letman, Aiesha and Mathur, Akhil and Schelten, Alan and Vaughan, Alex and others},
  journal={arXiv preprint arXiv:2407.21783},
  year={2024}
}

@article{brown2020language,
  title={Language models are few-shot learners},
  author={Brown, Tom and Mann, Benjamin and Ryder, Nick and Subbiah, Melanie and Kaplan, Jared D and Dhariwal, Prafulla and Neelakantan, Arvind and Shyam, Pranav and Sastry, Girish and Askell, Amanda and others},
  journal={Advances in neural information processing systems},
  volume={33},
  pages={1877--1901},
  year={2020}
}

@Article{zzzz23,
  author  = {Zhang, Yufeng and Zhang, Fengzhuo and Yang, Zhuoran and Wang, Zhaoran},
  journal = {arXiv preprint arXiv:2305.19420},
  title   = {What and how does in-context learning learn? bayesian model averaging, parameterization, and generalization},
  year    = {2023},
}

@article{maslej2024artificial,
  title={Artificial intelligence index report 2024},
  author={Maslej, Nestor and Fattorini, Loredana and Perrault, Raymond and Parli, Vanessa and Reuel, Anka and Brynjolfsson, Erik and Etchemendy, John and Ligett, Katrina and Lyons, Terah and Manyika, James and others},
  journal={arXiv preprint arXiv:2405.19522},
  year={2024}
}

@article{maslej2025artificial,
  title={Artificial intelligence index report 2025},
  author={Maslej, Nestor and Fattorini, Loredana and Perrault, Raymond and Gil, Yolanda and Parli, Vanessa and Kariuki, Njenga and Capstick, Emily and Reuel, Anka and Brynjolfsson, Erik and Etchemendy, John and others},
  journal={arXiv preprint arXiv:2504.07139},
  year={2025}
}
\bibliographystyle{ims}
\newpage
\appendix
\section{Summary of Notations}\label{app:notation}

We summarize the notation used throughout the paper in Table~\ref{tab:notation}. The notations are organized into three groups: general model setup, \ac{sft} and \ac{icl} analysis, and the game-theoretic framework.

\begin{longtable}{p{3.2cm}p{9.5cm}}
\caption{Summary of notation organized by topic.}\label{tab:notation}\\
\toprule
\textbf{Symbol} & \textbf{Description} \\
\midrule
\endfirsthead
\toprule
\textbf{Symbol} & \textbf{Description} \\
\midrule
\endhead
\midrule
\endfoot
\bottomrule
\endlastfoot

\multicolumn{2}{l}{\textbf{General}} \\
\midrule
$[N]$ & The set $\{1, 2, \ldots, N\}$ \\
$X^{\inv}$ & Moore--Penrose pseudoinverse of matrix $X$ \\
$\tr(X)$ & Trace of matrix $X$ \\
$\col(X)$ & Column space of matrix $X$ \\
$I_{N}$ & Identity matrix of size $N \times N$ \\
$\calN(\mu, \Sigma)$ & Gaussian distribution with mean $\mu$ and covariance $\Sigma$ \\
$\|x\|_{\Sigma}$ & $\Sigma$-norm: $\sqrt{x^{\top}\Sigma x}$ for positive semidefinite $\Sigma$ \\
$\Delta(\cdot)$ & Set of all probability distributions over a given set \\

\midrule
\multicolumn{2}{l}{\textbf{\acs{sft} and \acs{icl}}} \\
\midrule
$d$ & Dimension of the feature/parameter space \\
$N$ & Number of pretraining samples \\
$x_i \in \bbR^d$ & Pretraining feature vector (prefix representation) \\
$y_i$ & Pretraining response \\
$X \in \bbR^{N \times d}$ & Pretraining design matrix: $X = [x_1,\ldots,x_N]^{\top}$ \\
$Y \in \bbR^{N}$ & Pretraining response vector: $Y = [y_1,\ldots,y_N]^{\top}$ \\
$\theta_i \in \bbR^d$ & Latent task parameter for the $i$-th pretraining sample \\
$\mu_\theta, \Sigma_\theta$ & Mean and covariance of the task prior $\calN(\mu_\theta, \Sigma_\theta)$ \\
$\sigma^2$ & Observation noise variance in pretraining \\
$c^2$ & Input power: $\|x_i\|_{\Sigma_\theta}^2 = c^2$ (Assumption~\ref{assump:norm}) \\
$\bar{\sigma}^2$ & Effective pretraining variance: $\bar{\sigma}^2 = \sigma^2 + c^2$ \\
$\theta^* \in \bbR^d$ & Target task parameter, drawn from $\calN(0, \Sigma^*)$ \\
$\Sigma^*$ & Prior covariance of the target task $\theta^*$ \\
$\tilN$ & Number of personalization samples \\
$\tilx_i \in \bbR^d$ & Personalization feature vector \\
$\tily_i$ & Personalization response \\
$\tilX \in \bbR^{\tilN \times d}$ & Personalization design matrix: $\tilX = [\tilx_1,\ldots,\tilx_{\tilN}]^{\top}$ \\
$\tilY \in \bbR^{\tilN}$ & Personalization response vector: $\tilY = [\tily_1,\ldots,\tily_{\tilN}]^{\top}$ \\
$\tilsigma^2$ & Observation noise variance in personalization data \\
$\theta_{\pre}$ & Pretrained estimator mean (least-squares estimate of $\mu_\theta$) \\
$\Sigma_{\pre}$ & Covariance of the pretrained estimator \\
$\htheta_{\pre}$ & Pretrained model estimator: $\htheta_{\pre} \sim \calN(\theta_{\pre}, \Sigma_{\pre})$ \\
$\theta_{\icl}$ & \ac{icl} posterior mean \\
$\Sigma_{\icl}$ & \ac{icl} posterior covariance \\
$\htheta_{\icl}$ & \ac{icl} estimator: $\htheta_{\icl} \sim \calN(\theta_{\icl}, \Sigma_{\icl})$ \\
$\theta_{\sft}$ & \ac{sft} estimator mean \\
$\Sigma_{\sft}$ & \ac{sft} estimator covariance \\
$\htheta_{\sft}$ & \ac{sft} estimator: $\htheta_{\sft} \sim \calN(\theta_{\sft}, \Sigma_{\sft})$ \\
$\lambda$ & \ac{sft} regularization parameter \\
$\lambda^{*}$ & Optimal \ac{sft} regularization \\
$V = [v_1,\ldots,v_d]$ & Shared orthonormal eigenbasis of $X^{\top}X$, $\tilX^{\top}\tilX$, and $\Sigma^*$ \\
$r$ & Rank of the pretraining design $X$ (pretraining coverage) \\
$\pi_i$ & $i$-th eigenvalue of $X^{\top}X$ for $i \in [r]$ \\
$s_i$ & $i$-th eigenvalue of $\tilX^{\top}\tilX$ for $i \in [d]$ \\
$\tau_i$ & $i$-th eigenvalue of $\Sigma^*$ for $i \in [d]$ \\
$E_{\pre}$ & Mean-squared error of the pretrained model \\
$E_{\icl}$ & Mean-squared error of \ac{icl} \\
$E_{\sft}^{\lambda}$ & Mean-squared error of \ac{sft} with regularization $\lambda$ \\
$\alpha_i$ & \ac{icl} shrinkage coefficient: $\tilsigma^2\pi_i / (\bar{\sigma}^2 s_i + \tilsigma^2\pi_i)$ \\
$\beta_i$ & \ac{sft} shrinkage coefficient: $\bar{\sigma}^2 s_i / (\bar{\sigma}^2 s_i + \lambda\tilsigma^2\pi_i)$ \\
$\pi, s, \tau, m$ & Common values of $\pi_i$, $s_i$, $\tau_i$, and $|v_i^{\top}\mu_\theta|$ \\
$\zeta$ & Aggregate parameter: $\zeta = \tau + 2\bar{\sigma}^2/\pi + m^2$ \\
$\kappa$ & Signal-to-noise ratio in uncovered subspace: $s\tau / (2\tilsigma^2)$ \\
$R_{\crit}$ & Critical coverage ratio for \ac{sft} to dominate \ac{icl} \\

\midrule
\multicolumn{2}{l}{\textbf{LLM-Serving Congestion Game}} \\
\midrule
$t \in \calT$ & User type, summarizing model and task parameters \\
$T$ & Probability measure over the type space $\calT$ \\
$\calA = \{\icl, \sft\}$ & Set of personalization algorithms \\
$\bar{\calA} = \calA \times [0,\infty)$ & Action set: algorithm and sample size pairs \\
$\bar{a} = (a, \tilN)$ & User action: algorithm $a$ with $\tilN$ samples \\
$R_a$ & Resource consumption per sample for algorithm $a$ \\
$R_{\sft}, R_{\icl}$ & Per-sample resource cost of \ac{sft} and \ac{icl} \\
$E_a(t, \tilN)$ & Learning error of algorithm $a$ for type $t$ with $\tilN$ samples \\
$C(t, \bar{a}, R)$ & Total cost: $E_a(t,\tilN) + R_a\tilN(p + h(R))$ \\
$\alpha$ & Information growth exponent: $s = \tilN^{\alpha}$ \\
$p$ & Unit resource price set by the platform \\
$h(\cdot)$ & Congestion function: maps aggregate demand to waiting cost \\
$\barh(R, p)$ & Total per-unit cost: $p + h(R)$ \\
$R(\pi)$ & Congestion level (aggregate resource demand) under policy $\pi$ \\
$\pi_t$ & Policy (mixed action) of type-$t$ user \\
$\BR(t, R)$ & Best response operator for type $t$ at congestion $R$ \\
$(\pi^*, R^*)$ & Equilibrium: optimal policy and congestion level \\
$H$ & Effective per-unit compute cost: $H = p + h(R)$ \\
$\Phi_a(H)$ & Minimal cost for algorithm $a$ at per-unit cost $H$ \\
$N_a(H)$ & Optimal sample size for algorithm $a$ at per-unit cost $H$ \\
$H_a^*$ & Within-algorithm equilibrium fixed point for algorithm $a$ \\
$\psi(H)$ & Cost difference: $\Phi_{\sft}(H) - \Phi_{\icl}(H)$ \\
$H_{\sep}^*$ & Algorithm separation threshold: root of $\psi(H) = 0$ \\
$t_1, t_2$ & Two user types \\
$q$ & Population share of type $t_1$: $T(t_1) = q$ \\
$H_{\sep}^*(t)$ & Type-specific algorithm separation threshold \\
$e(H), f(H), g(H)$ & Aggregate demand functions for different algorithm regimes \\
$I(p)$ & Platform profit: $(p - 1) \cdot R^*(p)$ \\
$p^*$ & Profit-maximizing price \\
$(p^*, \pi^*, R^*)$ & Stackelberg equilibrium \\
$R_{\icl}^*(p)$ & Equilibrium congestion in the \ac{icl}-only game at price $p$ \\

\end{longtable}

\section{Discussions and Derivations of Statistical Personalization Models in Section~\ref{sec:model}}\label{app:model_details}

\subsection{High Probability Analysis of Covariates}\label{app:random_cov}

Throughout the paper, we assume that the covariates $X$ and $\tilX$ are deterministic. In this section, we show that our assumptions hold with high probability if rows of $X$ and $\tilX$ are i.i.d.\ Gaussian vectors.

\begin{proposition}
Let $V\in\mathbb R^{d\times d}$ be orthogonal. Fix integers $N,\tilde N\ge 1$ and nonnegative scalars
\[
\pi_1\ge \pi_2\ge \cdots \ge \pi_r>0,\text{ and }s_1,\dots,s_d\ge 0.
\]
Define the target Gram matrices as
\[
G_X = V\mathrm{diag}(\pi_1,\dots,\pi_r,0,\dots,0)V^\top,\text{ and }
G_{\tilde X} = V\mathrm{diag}(s_1,\dots,s_d)V^\top.
\]
Set the corresponding population covariances as
\[
\Sigma_X = \frac1NG_X,\qquad \Sigma_{\tilde X}=\frac1{\tilde N}G_{\tilde X}.
\]
Let $X\in\mathbb R^{N\times d}$ have i.i.d.\ rows $x_i\sim\mathcal N(0,\Sigma_X)$ and let $\tilde X\in\mathbb R^{\tilde N\times d}$ have i.i.d.\ rows $\tilde x_i\sim\mathcal N(0,\Sigma_{\tilde X})$, independent of $X$. Then we have the following results.
\begin{itemize}
\item (Unbiasedness) $\mathbb E[X^\top X]=G_X$ and $\mathbb E[\tilde X^\top \tilde X]=G_{\tilde X}$.
\item (High-probability operator-norm control) There exist universal constants $c,C>0$ such that for every $t\ge 0$,
\begin{align*}
P\bigg(\bigl\|X^\top X-G_X\bigr\|_{2}
\le C\|G_X\|_{2}\bigg(\sqrt{\frac{d+t}{N}}+\frac{d+t}{N}\bigg)\bigg)
&\ge 1-2e^{-ct},\\
P\bigg(\bigl\|\tilde X^\top \tilde X-G_{\tilde X}\bigr\|_{2}
\le C\|G_{\tilde X}\|_{2}\bigg(\sqrt{\frac{d+t}{\tilde N}}+\frac{d+t}{\tilde N}\bigg)\bigg)
&\ge 1-2e^{-ct}.
\end{align*}
The norm $\|\cdot\|_{2}$ denotes the operator norm of matrices. In particular, for fixed $d$ and $t= \Theta( \log(1/\delta))$, both deviations are $O\big(\|G\|_{2}\sqrt{d/N}\big)$ with probability at least $1-\delta$.
\item (Rank-deficient target) If $r<d$, then $\Sigma_X$ has rank $r$, and the rows of $X$ lie almost surely in the $r$-dimensional subspace $\mathrm{range}(V_r)$ spanned by the first $r$ columns $V_r$ of $V$.
\end{itemize}
\end{proposition}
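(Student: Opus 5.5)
The plan is to reduce all three claims to the standard Gaussian representation $x_i=\Sigma_X^{1/2}g_i$ with $g_i\sim\calN(0,I_d)$ i.i.d. (and likewise $\tilde x_i=\Sigma_{\tilde X}^{1/2}\tilde g_i$), and to treat $X$ and $\tilde X$ separately, since the arguments are identical after replacing $(N,G_X)$ by $(\tilde N,G_{\tilde X})$. Independence of $X$ and $\tilde X$ plays no role except to allow a union bound if one wants both events simultaneously.

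\emph{Unbiasedness.} Write $X^\top X=\sum_{i=1}^N x_ix_i^\top$. Then
\[
\mathbb E[X^\top X]=\sum_{i=1}^N\mathbb E[x_ix_i^\top]=N\Sigma_X=G_X,
\]
and identically for $\tilde X$. This step is immediate.

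\emph{Operator-norm concentration.} Write
\[
X^\top X-G_X=N\,\Sigma_X^{1/2}\Big(\frac1N\sum_i g_ig_i^\top-I_d\Big)\Sigma_X^{1/2},
\]
so that
\[
\|X^\top X-G_X\|_2\le N\|\Sigma_X\|_2\,\Big\|\frac1N\sum_i g_ig_i^\top-I_d\Big\|_2=\|G_X\|_2\,\Big\|\frac1N\sum_i g_ig_i^\top-I_d\Big\|_2 .
\]
It then suffices to invoke the standard covariance estimation bound for isotropic sub-Gaussian vectors (e.g.\ Vershynin, HDP Thm.~4.6.1). That bound gives
\[
\Big\|\frac1N\sum_i g_ig_i^\top-I_d\Big\|_2\le C\big(\delta+\delta^2\big),\qquad \delta=\frac{\sqrt d+\sqrt t}{\sqrt N},
\]
with probability at least $1-2e^{-t}$. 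Using $\sqrt d+\sqrt t\le\sqrt{2(d+t)}$ converts $\delta$ into $\sqrt{(d+t)/N}$ and $\delta^2$ into $(d+t)/N$ up to constants, with probability $\ge 1-2e^{-ct}$ after relabeling constants. The main obstacle is to confirm that the proof does not need the sharper $\|\Sigma\|$-dependent (effective-rank) bound: since $\Sigma_X$ may be singular, a naive whitening argument fails. This is handled by the factorization above, which never inverts $\Sigma_X$. Setting $t=\Theta(\log(1/\delta))$ with $d$ fixed gives the stated $O(\|G\|_2\sqrt{d/N})$ rate, because the linear term is dominated once $N\gtrsim d+t$.

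\emph{Rank deficiency.} $\Sigma_X=\frac1N V\mathrm{diag}(\pi_1,\dots,\pi_r,0,\dots,0)V^\top$ has exactly $r$ positive eigenvalues, hence rank $r$. Moreover, $\Sigma_X^{1/2}=V_r\,\mathrm{diag}(\sqrt{\pi_j/N})V_r^\top$, so every $x_i=\Sigma_X^{1/2}g_i$ lies in $\mathrm{range}(V_r)$ surely, not merely almost surely. Equivalently, $v_j^\top x_i\sim\calN(0,0)$ for $j>r$, so these coordinates vanish a.s.
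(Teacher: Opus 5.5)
Your proof is correct and follows the paper's argument essentially step for step. The paper writes $X=\frac{1}{\sqrt N}ZD_X^{1/2}V^\top$, which is a rotated form of your $\Sigma_X^{1/2}g_i$ factorization, and bounds $\|X^\top X-G_X\|_2\le\|G_X\|_2\,\|\frac1N Z^\top Z-I_d\|_2$ without inverting $\Sigma_X$; it then applies the same standard Gaussian covariance-concentration bound (your reparametrization $t\mapsto\sqrt t$ of Vershynin's version is handled correctly), and it treats unbiasedness and the rank-deficiency claim exactly as you do.
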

Thus, if the Gram matrices $\Sigma_X$ and $\Sigma_{\tilX}$ satisfy Assumptions~\ref{assump:svd} and \ref{assump:homo}, then the corresponding sample Gram matrices constructed from $X$ and $\tilX$ satisfy the same assumptions with high probability. Specifically, when $\Sigma_{\tilX}=I_d$, we have that $s_i=\tilN$ for $i\in[d]$. We prove this proposition below.

\begin{proof}
We prove the three results separately. 

For the first result, we let $D_X=\mathrm{diag}(\pi_1,\dots,\pi_r,0,\dots,0)$ and $D_{\tilde X}=\mathrm{diag}(s_1,\dots,s_d)$ so that
$G_X=VD_XV^\top$ and $G_{\tilde X}=V D_{\tilde X}V^\top$.
Define the square-roots
\[
D_X^{1/2}=\mathrm{diag}(\sqrt{\pi_1},\dots,\sqrt{\pi_r},0,\dots,0),\qquad
D_{\tilde X}^{1/2}=\mathrm{diag}(\sqrt{s_1},\dots,\sqrt{s_d}).
\]
Let $Z\in\mathbb R^{N\times d}$ have i.i.d.\ $\mathcal N(0,1)$ entries. Then the distributional definition
$x_i\sim \mathcal N(0,\Sigma_X)$ is equivalent to the row-wise representation
\[
X = \frac{1}{\sqrt N}ZD_X^{1/2}V^\top.
\]
Similarly, with $\tilde Z\in\mathbb R^{\tilde N\times d}$ i.i.d.\ standard normal,
\[
\tilde X = \frac{1}{\sqrt{\tilde N}}\tilde ZD_{\tilde X}^{1/2}V^\top.
\]

Note that $X^\top X=\sum_{i=1}^N x_i x_i^\top$. Since the $x_i$ are i.i.d.\ with $\mathbb E[x_i x_i^\top]=\Sigma_X$,
\[
\mathbb E[X^\top X]=\sum_{i=1}^N \mathbb E[x_i x_i^\top]=N\Sigma_X=G_X.
\]
The same argument gives $\mathbb E[\tilde X^\top \tilde X]=\tilde N\Sigma_{\tilde X}=G_{\tilde X}$.

For the second result, we have that
\[
X^\top X - G_X
= VD_X^{1/2}\left(\frac1N Z^\top Z - I_d\right)D_X^{1/2}V^\top.
\]
Taking operator norms and using $\|VAV^\top\|_{2}=\|A\|_{2}$ for orthogonal $V$,
\begin{align*}
\|X^\top X-G_X\|_{2}=\left\|D_X^{1/2}\left(\frac1N Z^\top Z - I_d\right)D_X^{1/2}\right\|_{2}\le \|D_X^{1/2}\|_{2}^2\left\|\frac1N Z^\top Z - I_d\right\|_{2}= \|G_X\|_{2}\left\|\frac1N Z^\top Z - I_d\right\|_{2}.
\end{align*}

The Bernstein inequality for Gaussian random matrices states that there exist universal constants $c,C>0$ such that for all $t\ge 0$,
\[
P\bigg(\bigg\|\frac1N Z^\top Z - I_d\bigg\|_{2}
\le C\bigg(\sqrt{\frac{d+t}{N}}+\frac{d+t}{N}\bigg)\bigg)\ge 1-2e^{-ct}.
\]
Combining this with the inequality of $\|X^\top X-G_X\|_{2}$ yields
\[
P\left(\|X^\top X-G_X\|_{2}
\le C\|G_X\|_{2}\left(\sqrt{\frac{d+t}{N}}+\frac{d+t}{N}\right)\right)\ge 1-2e^{-ct}.
\]
The same argument gives the bound for $\tilde X$.

For the third result, if $r<d$, then $D_X$ has exactly $r$ positive diagonal entries, hence $\mathrm{rank}(D_X)=r$ and
$\mathrm{rank}(\Sigma_X)=\mathrm{rank}(G_X)=r$. Moreover, from the explicit representation
$x_i = \frac1{\sqrt N} V D_X^{1/2} z_i$, we have $D_X^{1/2}z_i\in\mathbb R^d$ supported on the first $r$ coordinates,
so $x_i\in \mathrm{range}(V_r)$ almost surely, where $V_r$ consists of the first $r$ columns of $V$.
This proves the last claim.
\end{proof}

\subsection{Covariance Calculation of Pretraining}\label{app:cov_pre}
We first calculate the covariance of $Y$. In fact, we have that
\begin{align*}
&\Var(y_i )
= \Var(x_i^\top \theta_i + \epsilon_i )
= \Var(x_i^\top \theta_i) + \Var(\epsilon_i)= \|x_i\|_{\Sigma_\theta}^2 + \sigma_\epsilon^2,\text{ and }
& \Cov(y_i,y_j ) = 0 \text{ for }i\neq j.
\end{align*}
Thus, we have that
\begin{align*}
\Cov(Y)
= \Omega
= \sigma_\epsilon^2 I_N + \operatorname{diag}\big(\|x_1\|_{\Sigma_\theta}^2,\dots,\|x_N\|_{\Sigma_\theta}^2\big). \end{align*}
Now the covariance of $\theta_\pre$ is
\begin{align*}
\Cov(\theta_\pre )
= \Cov\big((X^\top X)^\dagger X^\top Y \big) 
= (X^\top X)^\dagger X^\top \Cov(Y) X \big((X^\top X)^\dagger X^\top\big)^\top 
= \Sigma_\pre.
\end{align*}

\subsection{ICL as a Bayesian Update of the Prior}\label{app:icl_bayes}
\begin{proposition}\label{prop:icl}
    Given the prior $\theta\sim \calN(\mu,\Sigma)$ and the data $(\tilY,\tilX)$ generated $\theta^{*}$ from  $\calN(\mu,\Sigma)$ according to \eqref{eq:task_data}, the posterior of $\theta$ is $\calN(\theta_{\icl},\Sigma_{\icl})$, which is defined as
    \begin{align*}
        \theta_{\icl} &= \mu + \Sigma\tilX^{\top} (\tilX \Sigma \tilX^{\top}+\tilsigma^2 I_{\tilN})^{-1}(\tilY-\tilX\mu),\nonumber\\
        \Sigma_{\icl} & = \Sigma - \Sigma\tilX^{\top} (\tilX \Sigma \tilX^{\top}+\tilsigma^2 I_{\tilN})^{-1}\tilX\Sigma.
\end{align*}
\end{proposition}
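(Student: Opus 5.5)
The plan is to treat $(\theta,\tilY)$ as jointly Gaussian and apply the standard Gaussian conditioning formula. Under the prior $\theta\sim\calN(\mu,\Sigma)$ and the observation model $\tilY=\tilX\theta+\tilde\epsilon$ with $\tilde\epsilon\sim\calN(0,\tilsigma^2 I_{\tilN})$ independent of $\theta$ (the model assumed by the \ac{llm}, as in \eqref{eq:task_data} with $\theta$ in place of $\theta^*$), the vector $(\theta,\tilY)$ is an affine image of the independent Gaussians $(\theta,\tilde\epsilon)$ and hence jointly Gaussian. I will compute its moments:
\begin{align*}
\bbE[\theta]=\mu,\quad \bbE[\tilY]=\tilX\mu,\quad \Cov(\theta,\tilY)=\Sigma\tilX^\top,\quad \Cov(\tilY)=\tilX\Sigma\tilX^\top+\tilsigma^2 I_{\tilN}.
\end{align*}
The cross-covariance uses independence of $\theta$ and $\tilde\epsilon$. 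The covariance of $\tilY$ follows from bilinearity.

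Next I will invoke the conditional distribution of a partitioned Gaussian. If $(U,W)$ is jointly Gaussian and $\Cov(W)$ is invertible, then $U\mid W=w$ is Gaussian with mean $\bbE U+\Cov(U,W)\Cov(W)^{-1}(w-\bbE W)$ and covariance $\Cov(U)-\Cov(U,W)\Cov(W)^{-1}\Cov(W,U)$. Invertibility holds here because $\tilsigma^2>0$ makes $\tilX\Sigma\tilX^\top+\tilsigma^2 I_{\tilN}$ positive definite, even when $\Sigma$ is singular, as happens for $\Sigma_{\pre}$ when $r<d$. Substituting the moments gives exactly the stated $\theta_{\icl}$ and $\Sigma_{\icl}$.

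For completeness I will justify the conditioning formula by decorrelation. Set $Z=\theta-\Sigma\tilX^\top(\tilX\Sigma\tilX^\top+\tilsigma^2 I)^{-1}\tilY$. Then $Z$ is Gaussian and uncorrelated with $\tilY$, so it is independent of $\tilY$. Hence $\theta\mid\tilY$ equals $Z$ shifted by a deterministic function of $\tilY$, and its covariance is $\Cov(Z)$, which expands to the stated $\Sigma_{\icl}$.

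The only delicate point is the degenerate case in which $\Sigma$ is singular, so that the prior has no Lebesgue density. I avoid density-based (precision-form) updates entirely: the decorrelation argument needs only invertibility of $\Cov(\tilY)$, which $\tilsigma^2>0$ guarantees. The resulting posterior is supported on $\mu+\col(\Sigma)$, consistent with the paper's remark that \ac{icl} only updates directions supported by the learned prior.
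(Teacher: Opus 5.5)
Your argument is correct, but it takes a different route from the paper. The paper works in information (precision) form. Setting $u=\theta-\mu$ and $\Pi=\Sigma\Sigma^{\dagger}$, it writes the prior as a density on $\mathrm{col}(\Sigma)$ proportional to $\exp(-\frac{1}{2}u^\top\Sigma^{\dagger}u)$, multiplies by the likelihood, and completes the square. This gives $\Sigma_{\mathrm{ICL}}=(\Sigma^{\dagger}+\tilde\sigma^{-2}\Pi\tilde X^\top\tilde X\Pi)^{\dagger}$ and $\theta_{\mathrm{ICL}}=\mu+\tilde\sigma^{-2}\Sigma_{\mathrm{ICL}}\Pi\tilde X^\top(\tilde Y-\tilde X\mu)$. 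It then checks by Woodbury-type algebra (using $\Pi\Sigma=\Sigma$ and $\Pi=\Sigma\Sigma^{\dagger}$) that these agree with the stated covariance-form expressions.

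You bypass the information form entirely. The joint moments of $(\theta,\tilde Y)$ together with the decorrelation $Z=\theta-\Sigma\tilde X^\top(\tilde X\Sigma\tilde X^\top+\tilde\sigma^2 I)^{-1}\tilde Y$ give the covariance form directly. The only nondegeneracy you need is invertibility of $\tilde X\Sigma\tilde X^\top+\tilde\sigma^2 I$. This makes your proof shorter and more rigorous in the case that actually matters here, $\Sigma=\Sigma_{\mathrm{pre}}$ with $r<d$. You avoid a ``density'' on a subspace, the paper's informal extension of the posterior off $\mathrm{col}(\Sigma)$, and the pseudo-inverse identification step.

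What the paper's route buys is the information-form intermediate, which reads as prior precision plus data precision. It therefore sits side by side with the SFT normal equations \eqref{eqn:theta_sft}. In ICL the data precision enters as $\Pi\tilde X^\top\tilde X\Pi$, projected onto $\mathrm{col}(\Sigma)$, whereas SFT uses the unprojected $\tilde X^\top\tilde X$. That is the structural source of the ICL/SFT contrast in uncovered directions.
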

\begin{proof}
We prove this via direct calculations. Define $u = \theta - \mu$ and $v = \tilY-\tilX\mu$. The projection matrix of $\col(\Sigma)$ is $\Pi=\Sigma\Sigma^{\inv}$. Then we have that
\begin{align*}
    p(u)\propto \exp\bigg(-\frac{1}{2}u^{\top}\Sigma^{\inv}u\bigg)\text{ for }u\in\col(\Sigma), \quad p(v\given u)\propto \exp\bigg(-\frac{1}{2\tilde{\sigma}^2}\| v-\tilX u\|^2\bigg).
\end{align*}
Thus, we have that
\begin{align*}
    p(u\given \tilX,\tilY)& \propto \exp\bigg(-\frac{1}{2}\big[u^{\top}(\Sigma^{\inv}+\tilde{\sigma}^{-2}\tilX^{\top}\tilX)u-2\tilde{\sigma}^{-2}v^{\top}\tilX u\big]\bigg)\cdot\bbI\{u\in\col(\Sigma)\}\\
    & = \exp\bigg(-\frac{1}{2}\big[u^{\top}(\Sigma^{\inv}+\tilde{\sigma}^{-2}\Pi\tilX^{\top}\tilX \Pi)u-2\tilde{\sigma}^{-2}v^{\top}\tilX \Pi u\big]\bigg)\cdot\bbI\{u\in\col(\Sigma)\},
\end{align*}
where we use the property that $\Pi u=u$ for $u\in\col(\Sigma)$. For the vector $u\notin\col(\Sigma)$, we define that
\begin{align*}
    p(u\given \tilX,\tilY) = p(\Pi u\given \tilX,\tilY).
\end{align*}
Then we have that 
\begin{align*}
    \Sigma_{\icl} & = (\Sigma^{\inv}+\tilde{\sigma}^{-2}\Pi\tilX^{\top}\tilX \Pi)^{\inv} \\
    \theta_{\icl} & = \mu+ \tilde{\sigma}^{-2}\Sigma_{\icl} \Pi\tilX^{\top}(\tilY-\tilX\mu).
\end{align*}
In the following, we just show that these two results are equal to the results in Proposition~\ref{prop:icl}. For $\theta_{\icl}$, we only need to show that
\begin{align*}
    \Sigma_{\icl}^{\inv}\Sigma\tilX^{\top} (\tilX \Sigma \tilX^{\top}+\tilde{\sigma}^2 I_{\tilN})^{-1}(\tilY-\tilX\mu)= \tilde{\sigma}^{-2}\Pi\tilX^{\top}(\tilY-\tilX\mu).
\end{align*}
Define $A=\tilX \Sigma \tilX^{\top}+\tilde{\sigma}^2 I_{\tilN}$. Then we have that 
\begin{align*}
    &\Sigma_{\icl}^{\inv}\Sigma\tilX^{\top} (\tilX \Sigma \tilX^{\top}+\tilde{\sigma}^2 I_{\tilN})^{-1}(\tilY-\tilX\mu)\\
    &\quad = (\Sigma^{\inv}+\tilde{\sigma}^{-2}\Pi\tilX^{\top}\tilX \Pi)\Sigma\tilX^{\top}A^{-1}v\\
    &\quad = \Pi\tilX^{\top}(A^{-1}+\tilde{\sigma}^{-2}\tilX\Sigma\tilX^{\top}A^{-1})v\\
    &\quad = \tilde{\sigma}^{-2}\Pi\tilX^{\top}(\tilY-\tilX\mu),
\end{align*}
where the second equality uses the facts that $\Pi=\Sigma\Sigma^{\inv}$ and $\Pi\Sigma=\Sigma$. To prove the results of $\Sigma_{\icl}$, we define that
\begin{align*}
    S = \Sigma - \Sigma\tilX^{\top}A^{-1}\tilX\Sigma, A=\tilX \Sigma \tilX^{\top}+\tilde{\sigma}^2 I_{\tilN}.
\end{align*}
Then we only need to prove that
\begin{align*}
    (\Sigma^{\inv}+\tilde{\sigma}^{-2}\Pi\tilX^{\top}\tilX \Pi)S=\Pi.
\end{align*}
In fact, we have that
\begin{align*}
    (\Sigma^{\inv}+\tilde{\sigma}^{-2}\Pi\tilX^{\top}\tilX \Pi)S = \Pi + \Pi\tilX^{\top}\big[\tilde{\sigma}^{-2}I_{\tilN}-A^{-1}-\tilde{\sigma}^{-2}\tilX\Sigma\tilX^{\top}A^{-1}\big]\tilX\Sigma,
\end{align*}
where we use the facts that $\Pi=\Sigma\Sigma^{\inv}$ and $\Pi\Sigma=\Sigma$. It is easy to see that
\begin{align*}
    \tilde{\sigma}^{-2}I_{\tilN}-A^{-1}-\tilde{\sigma}^{-2}\tilX\Sigma\tilX^{\top}A^{-1}=0.
\end{align*}
Thus, we conclude the proof of Proposition~\ref{prop:icl}.
\end{proof}

\subsection{Covariance Calculation of \ac{sft}}\label{app:cov_sft}

We first derive the closed form for the \ac{sft} estimator. The loss is 
\begin{align*}
J(\theta)
=
\frac{1}{\tilsigma^2}\|\tilde Y-\tilde X\theta\|_2^2
+
\lambda \|\theta-\theta_{\pre}\|^2_{\Sigma_{\pre}^\dagger}
=
\frac{1}{\tilsigma^2}\|\tilde Y-\tilde X\theta\|_2^2
+
\lambda (\theta-\theta_{\pre})^\top\Sigma_{\pre}^\dagger(\theta-\theta_{\pre}). 
\end{align*}
Setting $\nabla_\theta J(\theta)=0$ gives the normal equations
\begin{align*}
-\frac{2}{\tilsigma^2}\tilde X^\top(\tilde Y-\tilde X\theta)
+ 2\lambda \Sigma_{\pre}^\dagger(\theta-\theta_{\pre}) = 0. 
\end{align*}
Define
\begin{align*}
A = \frac{1}{\tilsigma^2}\tilde X^\top\tilde X+\lambda \Sigma_{\pre}^\dagger,
\qquad
b = \frac{1}{\tilsigma^2}\tilde X^\top\tilde Y + \lambda \Sigma_{\pre}^\dagger\theta_{\pre}.
\end{align*}
Using the Moore–Penrose pseudoinverse, the SFT solution is $\theta_{\sft} = A^\dagger b.$  Then we calculate the covariance of it as
\begin{align*}
\Cov(\theta_{\sft})
&= A^\dagger\Cov\Big(\lambda \Sigma_{\pre}^\dagger\theta_{\pre}
+ \frac{1}{\tilsigma^2}\tilde X^\top\tilde\epsilon\Big)A^\dagger 
= A^\dagger\left(\lambda^2\Sigma_{\pre}^\dagger
+ \frac{1}{\tilsigma^2}\tilde X^\top\tilde X\right)A^\dagger.
\end{align*}

\section{Proof of Theorems in Section~\ref{sec:stat_analysis}}\label{app:thm_proof}
\subsection{Proof of Theorem~\ref{thm:err_analysis}}\label{app:error_analysis}
We first state the full version of Theorem~\ref{thm:err_analysis}.

\begin{theorem}[Error Analyses of pretraining, \ac{icl} and \ac{sft}]
   Under Assumptions~\ref{assump:svd} and~\ref{assump:norm}, when \ac{sft} and \ac{icl} use data $(\tilX,\tilY)$ generated according to~\eqref{eq:task_data}, the mean-squared errors of pretraining, \ac{sft}, and \ac{icl} are given as follows.
    \begin{align*}
        E_{\pre} &= \underbrace{\sum_{i=1}^{d}\tau_i}_{\text{task var.}} + \underbrace{\sum_{i=1}^{r} (v_i^{\top}\mu_\theta)^2}_{\text{task mismatch}}+  \underbrace{\sum_{i=1}^{r}\bar{\sigma}^2\pi_i^{-1}}_{\text{algo. var.}}+ \underbrace{\sum_{i=1}^{r}\bar{\sigma}^2\pi_i^{-1}}_{\text{uncertainty}}=\sum_{i=1}^{d}E_{\pre,i} \\
        E_{\icl} & = \sum_{i=1}^{r}\bigg[\alpha_{i}^2\cdot \underbrace{\big(\tau_i+(v_i^{\top}\mu_\theta)^2+\bar{\sigma}^2\pi_i^{-1}\big)}_{\text{pretraining err. w/o uncertainty}}+  \underbrace{\frac{\tilsigma^2\bar{\sigma}^4s_i}{(\bar{\sigma}^2s_i+\tilsigma^2\pi_i)^2}}_{\text{data noise}}+\underbrace{\frac{\tilsigma^2\bar{\sigma}^2}{\bar{\sigma}^2s_i+\tilsigma^2\pi_i}}_{\text{\ac{icl} uncertainty}}\bigg] +\underbrace{\sum_{i=r+1}^{d}\tau_i}_{\text{task var.}}\\
        E_{\sft}^{\lambda} &= \sum_{i=1}^{r}\bigg[ (1-\beta_{i}^{2})\underbrace{\big(\tau_i+(v_i^{\top}\mu_\theta)^2+2\bar{\sigma}^2\pi_i^{-1}\big)}_{\text{pretraining err.}} +\beta_{i}^{2} \big(\underbrace{\tilsigma^2s_{i}^{-1}}_{\text{LS var.}}+\!\!\!\!\underbrace{\tilsigma^2s_{i}^{-1}}_{\text{\ac{sft} uncertainty}}\!\!\!\!\!\! \big)\bigg]+\sum_{i=r+1}^{d} \big(\underbrace{\tilsigma^2s_{i}^{-1}}_{\text{LS var.}}+\!\!\!\underbrace{\tilsigma^2s_{i}^{-1}}_{\text{\ac{sft} uncertainty}}\!\!\!\!\!\!\!\big),
    \end{align*}
    Here $\bar{\sigma}^2 = \sigma^2 + c^2$ denotes the effective variance in pretraining, combining the observation noise variance $\sigma^2$ and the input power $c^2$. The coefficients $\{\alpha_i\}_{i=1}^r$ for \ac{icl} and $\{\beta_i\}_{i=1}^r$ for \ac{sft} are given by
    \begin{align*}
        \alpha_i = \frac{\tilsigma^2\pi_i}{\bar{\sigma}^2s_i+\tilsigma^2\pi_i}, \qquad \beta_i = \frac{\bar{\sigma}^{2}s_i}{\bar{\sigma}^{2}s_i+\lambda \tilsigma^2\pi_i},
        \quad i\in[r].
    \end{align*}
\end{theorem}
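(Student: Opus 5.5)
The approach is to rotate everything into the common eigenbasis $V$ of Assumption~\ref{assump:svd}. There every matrix involved ($X^\top X$, $\tilX^\top\tilX$, $\Sigma^*$, and hence $\Sigma_{\pre}$, $\Sigma_{\icl}$, $\Sigma_{\sft}$) is diagonal, so each mean-squared error splits into $d$ scalar problems indexed by $i$. In each case I will use the decomposition $\bbE\|\htheta_a-\theta^*\|^2=\bbE\|\theta_a-\theta^*\|^2+\tr(\Sigma_a)$ stated after Theorem~\ref{thm:err_analysis}. The expectation is over $\theta^*\sim\calN(0,\Sigma^*)$, the personalization noise $\tilde\epsilon$, and the pretraining randomness (the $\theta_i$ and $\epsilon_i$). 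These three sources are independent, so cross terms between them vanish.

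\textbf{Step 1: pretraining.} By Assumption~\ref{assump:norm}, $\Omega=\bar\sigma^2 I_N$, so \eqref{eq:pre_cov} gives $\Sigma_{\pre}=\bar\sigma^2(X^\top X)^\dagger=V\diag(\bar\sigma^2/\pi_1,\dots,\bar\sigma^2/\pi_r,0,\dots,0)V^\top$. Next, $\theta_{\pre}=(X^\top X)^\dagger X^\top Y$ has mean $P_r\mu_\theta$, where $P_r$ projects onto the span of $v_1,\dots,v_r$ (each $\theta_i$ has mean $\mu_\theta$, and the remaining fluctuation is exactly the noise accounted for by $\Omega$). Its covariance is $\Sigma_{\pre}$. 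Writing $\theta^*$ in coordinates, with $v_i^\top\theta^*\sim\calN(0,\tau_i)$ independent of the pretraining randomness, gives $\bbE(v_i^\top(\theta_{\pre}-\theta^*))^2=\tau_i+(v_i^\top\mu_\theta)^2+\bar\sigma^2/\pi_i$ for $i\le r$, and $\tau_i$ for $i>r$. Adding $\tr\Sigma_{\pre}$ yields $E_{\pre}$.

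\textbf{Step 2: ICL.} Apply Proposition~\ref{prop:icl} with prior $\calN(\theta_{\pre},\Sigma_{\pre})$. By the precision form derived in its proof, $\Sigma_{\icl}=(\Sigma_{\pre}^\dagger+\tilsigma^{-2}\Pi\tilX^\top\tilX\Pi)^\dagger$. This is diagonal with entries $\big(\pi_i/\bar\sigma^2+s_i/\tilsigma^2\big)^{-1}=\tilsigma^2\bar\sigma^2/(\bar\sigma^2 s_i+\tilsigma^2\pi_i)$ for $i\le r$ and $0$ otherwise; these give the ICL-uncertainty term. For the mean, in coordinate $i\le r$ the update reads $\theta_{\icl,i}=\alpha_i\theta_{\pre,i}+(1-\alpha_i)\,\tilY$-LS estimate. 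Concretely, $\theta_{\icl,i}=\alpha_i\theta_{\pre,i}+(1-\alpha_i)(\theta^*_i+\tilde\epsilon$-term$)$ with noise variance $\tilsigma^2/s_i$. Hence $\theta_{\icl,i}-\theta^*_i=\alpha_i(\theta_{\pre,i}-\theta^*_i)+(1-\alpha_i)\xi_i$ with $\bbE\xi_i^2=\tilsigma^2/s_i$. Squaring and using independence gives $\alpha_i^2(\tau_i+(v_i^\top\mu_\theta)^2+\bar\sigma^2/\pi_i)+(1-\alpha_i)^2\tilsigma^2/s_i$. Since $(1-\alpha_i)^2\tilsigma^2/s_i$ equals the stated data-noise term, this matches. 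For $i>r$ the update leaves the coordinate at $\theta_{\pre,i}=0$, giving error $\tau_i$.

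\textbf{Step 3: SFT.} Use the closed form \eqref{eqn:theta_sft}. In coordinate $i\le r$ the normal equation gives $\theta_{\sft,i}=\beta_i\hat\theta^{LS}_i+(1-\beta_i)\theta_{\pre,i}$ with $\beta_i$ as stated. For $i>r$, $\Sigma_{\pre}^\dagger$ vanishes, so $\theta_{\sft,i}=\hat\theta^{LS}_i$ with variance $\tilsigma^2/s_i$. The covariance formula of Appendix~\ref{app:cov_sft} is diagonal as well, and I will compute its $i$-th entry and check that it combines with the mean error into the stated $(1-\beta_i^2)$ and $\beta_i^2$ weights. This coefficient matching is the step I expect to be the main obstacle: the raw expression has weights $(1-\beta_i)^2$ on the pretraining error and $\beta_i^2$ on the LS variance, plus an uncertainty term $A^{-2}(\lambda^2\pi_i/\bar\sigma^2+s_i/\tilsigma^2)$. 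It must be reorganized, presumably by rewriting $\lambda^2\pi_i/\bar\sigma^2$ in terms of $\beta_i$ and the $2\bar\sigma^2/\pi_i$ pretraining term, to reach exactly the displayed form. I would do this by direct algebra in the scalar variables $u=s_i/\tilsigma^2$ and $w=\lambda\pi_i/\bar\sigma^2$, for which $\beta_i=u/(u+w)$.
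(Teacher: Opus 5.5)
Your Steps 1 and 2 are correct and match the paper's route. The paper applies the affine-estimator identity (Proposition~\ref{prop:uni_err}) with reference mean $\theta_{\pre}$ and a gain matrix $K$, then diagonalizes in $V$. That is exactly your per-coordinate computation giving $\alpha_i^2(\tau_i+(v_i^\top\mu_\theta)^2+\bar\sigma^2\pi_i^{-1})$, the data-noise term $(1-\alpha_i)^2\tilsigma^2 s_i^{-1}$, and the posterior trace.

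The gap is Step~3. You leave it open, and you aim it at a target that cannot be reached. Put $A_i=s_i/\tilsigma^2+\lambda\pi_i/\bar\sigma^2$. Then $\beta_i=(s_i/\tilsigma^2)/A_i$ and $1-\beta_i=(\lambda\pi_i/\bar\sigma^2)/A_i$, so the covariance entry splits exactly:
\[
\frac{\lambda^2\pi_i/\bar\sigma^2+s_i/\tilsigma^2}{A_i^2}=(1-\beta_i)^2\,\frac{\bar\sigma^2}{\pi_i}+\beta_i^2\,\frac{\tilsigma^2}{s_i}.
\]
Add your (correct) mean error $(1-\beta_i)^2(\tau_i+(v_i^\top\mu_\theta)^2+\bar\sigma^2\pi_i^{-1})+\beta_i^2\tilsigma^2 s_i^{-1}$. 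This gives
\[
(1-\beta_i)^2\big(\tau_i+(v_i^\top\mu_\theta)^2+2\bar\sigma^2\pi_i^{-1}\big)+2\beta_i^2\,\tilsigma^2 s_i^{-1}.
\]
The weight on the pretraining error is $(1-\beta_i)^2$, not $(1-\beta_i^{2})$.

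No rearrangement produces the displayed weight. The two expressions differ by $2\beta_i(1-\beta_i)\big(\tau_i+(v_i^\top\mu_\theta)^2+2\bar\sigma^2\pi_i^{-1}\big)$. This is strictly positive for any $s_i>0$ and $\lambda\in(0,\infty)$ (with $\bar\sigma,\tilsigma>0$).

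The $(1-\beta_i)^2$ form is also what the paper's own proof actually derives. Its final SFT expression, $\sum_{i\le r}\frac{2\tilsigma^2\bar\sigma^4 s_i+\lambda^2\tilsigma^4\pi_i^2(\tau_i+2\bar\sigma^2\pi_i^{-1}+(v_i^\top\mu_\theta)^2)}{(\bar\sigma^2 s_i+\lambda\tilsigma^2\pi_i)^2}$, is exactly that form. The proof of Proposition~\ref{prop:comp} uses the same form in \eqref{eq:SFT-master}, including the computation of $\lambda^*$.

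So the $(1-\beta_i^{2})$ in the statement should be read as a typo for $(1-\beta_i)^2$. Finish Step~3 with the identity above and state the corrected form, rather than trying to force the literal display.
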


In this section, we present the proof of this full version of Theorem~\ref{thm:err_analysis}, i.e., the error analysis of pretraining, \ac{sft}, and \ac{icl}.  We denote the \ac{svd} of $\tilX$ is $\tilX= U\Sigma_{\tilX} V^{\top}$. The singular values $\Sigma_{\tilX}$ satisfy that $\Sigma_{\tilX}^{\top}\Sigma_{\tilX} = S$.

\textbf{Error Analysis of Pretraining.} 
For pretraining, we have that $\htheta_\pre = \theta_\pre+ \omega_\pre $, where $\omega_\pre\sim\calN(0,\Sigma_{\pre})$. The covariance $\Sigma_\pre$ is calculated as
\begin{align}
    \Sigma_\pre  =  (\sigma^2+c^2)(X^\top X)^{\inv}
X^\top X (X^\top X)^{\inv} = (\sigma^2+c^2)(X^\top X)^{\inv}.\label{eq:pre_sigma}
\end{align}
In the following, we define $\bar{\sigma}^2 = \sigma^2+c^2$.
Thus, we have that
\begin{align}
    \bbE\big[\|\htheta_\pre-\theta^*\|^2\big] = \bbE\big[\|\theta_\pre-\theta^*\|^2\big]+\tr(\Sigma_\pre)=\bbE\big[\|\theta_\pre-\theta^*\|^2\big]+\bar{\sigma}^2\sum_{i=1}^{r}\pi_i^{-1}.\label{eq:pre_err}
\end{align}
For the first term in the right-hand side of \eqref{eq:pre_err}, we adopt Proposition~\ref{prop:uni_err} with $m_\pre=\theta_\pre$ and $K_\pre=0$ to derive
 \begin{align*}
    \bbE\big[\|\theta_\pre-\theta^{*}\|^2\big]= \tr\Big(\Sigma^*+\Sigma_\pre+\mu_m\mu_m^\top\Big),
\end{align*}
where we utilize the fact that $\Sigma_m=\Cov(\theta_\pre)=\Sigma_\pre$ proved in Appendix~\ref{app:cov_pre}, and the expectation of the pretraining model $\mu_m=\bbE[\theta_\pre]$ is
\begin{align}
    \bbE\big[(X^\top X)^{\inv} X^\top Y\big]=\bbE\big[(X^\top X)^{\inv} \sum_{i=1}^{N}x_i(x_i^{\top}\theta_i+\epsilon_i)\big] = (X^\top X)^{\inv} X^\top X\mu_\theta.\label{eq:mean_pre}
\end{align}
Thus, we have that
\begin{align*}
    \bbE\big[\|\htheta_\pre-\theta^*\|^2\big] = \sum_{i=1}^{d}\tau_i+ 2\bar{\sigma}^2\sum_{i=1}^{r}\pi_i^{-1} + \sum_{i=1}^{r} (v_i^{\top}\mu_\theta)^2
\end{align*}

\textbf{Error Analysis of \ac{sft}.}
For \ac{sft}, we have that $\htheta_\sft = \theta_\sft+ \omega_\sft $, where $\omega_\sft\sim\calN(0,\Sigma_{\sft})$.  Define $\Pi=\mathrm{diag}(\pi_1,\cdots,\pi_r,0,\cdots,0), S= \mathrm{diag}(s_1,\cdots,s_d)$. The covariance $\Sigma_\sft$ is calculated as
\begin{align*}
    &\bigg(\frac{1}{\tilde{\sigma}^2}\tilX^{\top}\tilX+\lambda \Sigma_{\pre}^{\inv}\bigg)^{\inv}\cdot\bigg(\lambda^2\Sigma_{\pre}^\dagger
+\frac{1}{\tilde{\sigma}^2}\tilX^{\top}\tilX\bigg)\cdot\bigg(\frac{1}{\tilde{\sigma}^2}\tilX^{\top}\tilX+\lambda \Sigma_{\pre}^{\inv} \bigg)^{\inv}\\
& = \big(\tilde{\sigma}^{-2} VSV^{\top}+\lambda\bar{\sigma}^{-2} V\Pi V^{\top}\big)^{\inv}\cdot \big(\tilde{\sigma}^{-2} VSV^{\top}+\lambda^2\bar{\sigma}^{-2} V\Pi V^{\top}\big)\cdot\big(\tilde{\sigma}^{-2} VSV^{\top}+\lambda\bar{\sigma}^{-2} V\Pi V^{\top}\big)^{\inv}\\
& = V \big(\tilde{\sigma}^{-2} S+\lambda\bar{\sigma}^{-2} \Pi\big)^{\inv}\cdot \big(\tilde{\sigma}^{-2} S+\lambda^2\bar{\sigma}^{-2} \Pi \big)\cdot \big(\tilde{\sigma}^{-2} S+\lambda\bar{\sigma}^{-2} \Pi\big)^{\inv} V^{\top},
\end{align*}
where the first equality results from Assumption~\ref{assump:svd} and \eqref{eq:pre_sigma}. Thus, we have that
\begin{align}
    \tr(\Sigma_\sft) = \sum_{i=1}^{r}\frac{\tilde{\sigma}^{-2}s_i+\lambda^2 \bar{\sigma}^{-2}\pi_i}{(\tilde{\sigma}^{-2}s_i+\lambda \bar{\sigma}^{-2}\pi_i)^2}+\sum_{i=r+1}^{d} \tilde{\sigma}^{2}s_i^{-1}.\label{eq:sft_uncertain}
\end{align}
Thus, we have that
\begin{align}
    \bbE\big[\|\htheta_\sft-\theta^*\|^2\big] = \bbE\big[\|\theta_\sft-\theta^*\|^2\big]+\sum_{i=1}^{r}\frac{\tilde{\sigma}^{-2}s_i+\lambda^2 \bar{\sigma}^{-2}\pi_i}{(\tilde{\sigma}^{-2}s_i+\lambda \bar{\sigma}^{-2}\pi_i)^2}+\sum_{i=r+1}^{d} \tilde{\sigma}^{2}s_i^{-1}.\label{eq:sft_err}
\end{align}
For the first term of the right-hand side of \eqref{eq:sft_err}, we adopt Proposition~\ref{prop:uni_err} with $m_\sft=\theta_\pre$ and $K_\sft=\tilde{\sigma}^{-2}(\tilde{\sigma}^{-2}\tilX^{\top}\tilX+\lambda \Sigma_{\pre}^{\inv} )^{\inv}\tilX^{\top}$. Then we have that
\begin{align*}
    \bbE\big[\|\theta_{\sft}-\theta^{*}\|^2\big]= \tr\Big((I-K_{\sft}\tilX)\big(\Sigma^*+\Sigma_m+\mu_m\mu_m^\top\big)(I-K_{\sft}\tilX)^\top\Big)
+\tilde{\sigma}^2\tr(K_{\sft}^\top K_{\sft}).
\end{align*}
In the following, we calculate the value of each term. For the last term, we have that
\begin{align*}
    \tr(K_\sft^{\top}K_\sft)= \tilde{\sigma}^{-4}\tr\big((\tilde{\sigma}^{-2}S+\lambda \bar{\sigma}^{-2}\Pi)^{-2}S\big)=\sum_{i=1}^{r}\frac{\tilde{\sigma}^{-4} s_i}{(\tilde{\sigma}^{-2}s_i+\lambda \bar{\sigma}^{-2}\pi_i)^2}  +\sum_{i=r+1}^{d}s_i^{-1}.
\end{align*}
For the first three terms, we have that
\begin{align*}
    K_{\sft}\tilde{X} = \tilde{\sigma}^{-2}V(\tilde{\sigma}^{-2}S+\lambda \bar{\sigma}^{-2}\Pi)^{-1}SV^{\top}.
\end{align*}
Thus, we have that
\begin{align*}
    \tr\big((I-K_{\sft}X)\Sigma^{*}(I-K_{\sft}X)\big)&=\sum_{i=1}^{r} \bigg(\frac{\lambda \bar{\sigma}^{-2}\pi_i}{\tilde{\sigma}^{-2}s_i+\lambda \bar{\sigma}^{-2}\pi_i}\bigg)^2\tau_i\\
    \tr\big((I-K_{\sft}X)\Sigma_{m}(I-K_{\sft}X)\big)&=\sum_{i=1}^{r} \bigg(\frac{\lambda \bar{\sigma}^{-2}\pi_i}{\tilde{\sigma}^{-2}s_i+\lambda \bar{\sigma}^{-2}\pi_i}\bigg)^2\bar{\sigma}^2\pi_i^{-1}\\
    \tr\big((I-K_{\sft}X)\mu_{m}\mu_{m}^{\top}(I-K_{\sft}X)\big)&=\sum_{i=1}^{r} \bigg(\frac{\lambda \bar{\sigma}^{-2}\pi_i}{\tilde{\sigma}^{-2}s_i+\lambda \bar{\sigma}^{-2}\pi_i}\bigg)^2(v_i\mu_\theta)^2.
\end{align*}
Thus, we have that
\begin{align*}
    \bbE\big[\|\htheta_\sft-\theta^*\|^2\big] = \sum_{i=1}^{r} \frac{2\tilde{\sigma}^2\bar{\sigma}^4s_i + \lambda^2 \tilde{\sigma}^4\pi_i^{2}\big(\tau_i+ 2\bar{\sigma}^2\pi_i^{-1}+(v_i\mu_\theta)^2\big)}{(\bar{\sigma}^2 s_i + \lambda \tilde{\sigma}^2\pi_i)^2}+2\sum_{i=r+1}^{d} \tilde{\sigma}^2s_{i}^{-1}
\end{align*}
Substituting the coefficients $\{\beta_i\}_{i=1}^{r}$ from Theorem~\ref{thm:err_analysis} into the expression for $E_{\sft}^{\lambda}$ yields the same expression as above.

\textbf{Error Analysis of \ac{icl}.} 
For \ac{icl}, we have that $\htheta_\icl = \theta_\icl+ \omega_\icl $, where $\omega_\icl\sim\calN(0,\Sigma_{\icl})$. The covariance $\Sigma_\icl$ is calculated as
\begin{align}
    &\Sigma_\pre - \Sigma_\pre\tilX^{\top} (\tilX \Sigma_\pre \tilX^{\top}+\tilde{\sigma}^2 I_{\tilN})^{-1}\tilX\Sigma_\pre\nonumber\\
    &\quad = \bar{\sigma}^2 V\Pi^{\inv}V^{\top} - \bar{\sigma}^2V\Pi^{\inv}V^{\top} V\Sigma_{\tilX}^{\top} U^{\top}\Big(\bar{\sigma}^2U\Sigma_{\tilX} V^{\top}V\Pi^{\inv}V^{\top} V\Sigma_{\tilX}^{\top} U^{\top}+\tilde{\sigma}^2 I_{\tilN}\Big)^{-1}U\Sigma_{\tilX} V^{\top}\bar{\sigma}^2 V\Pi^{\inv}V^{\top}\nonumber\\
    &\quad = \bar{\sigma}^2 V\Pi^{\inv}V^{\top} - \bar{\sigma}^4 V\Pi^{\inv}\Sigma_{\tilX}^{\top}\Big(\bar{\sigma}^2\Sigma_{\tilX}\Pi^{\inv}\Sigma_{\tilX}^{\top} +\tilde{\sigma}^2 I_{\tilN}\Big)^{-1}\Sigma_{\tilX}\Pi^{\inv}V^{\top},\label{eq:icl_sigma}
\end{align}
where the first equality results from Assumption~\ref{assump:svd}. Thus, we have that
\begin{align*}
    \tr(\Sigma_{\icl}) = \sum_{i=1}^{r}\frac{\tilde{\sigma}^2\bar{\sigma}^2}{\bar{\sigma}^2 s_i+\tilde{\sigma}^2\pi_i}.
\end{align*}
Thus, we have that
\begin{align}
    \bbE\big[\|\htheta_\icl-\theta^*\|^2\big] = \bbE\big[\|\theta_\icl-\theta^*\|^2\big]+\tr(\Sigma_\icl)=\bbE\big[\|\theta_\icl-\theta^*\|^2\big]+\sum_{i=1}^{r}\frac{\tilde{\sigma}^2\bar{\sigma}^2}{\bar{\sigma}^2 s_i+\tilde{\sigma}^2\pi_i}.\label{eq:icl_err}
\end{align}
For the first term of the right-hand side of \eqref{eq:icl_err}, we adopt Proposition~\ref{prop:uni_err} with $m_\icl=\theta_\pre$ and $K_\icl=\Sigma_\pre\tilX^{\top} (\tilX \Sigma_\pre \tilX^{\top}+\tilde{\sigma}^2 I_{\tilN})^{-1}$ and have that
\begin{align}
    \bbE\big[\|\theta_\icl-\theta^{*}\|^2\big]= \tr\Big((I-K_\icl\tilX)\big(\Sigma^*+\Sigma_m+\mu_m\mu_m^\top\big)(I-K_\icl\tilX)^\top\Big)
+\tilde{\sigma}^2\tr(K_\icl^\top K_\icl)\label{eq:icl_err_decomp}
\end{align}
According to \eqref{eq:mean_pre} and Appendix~\ref{app:cov_pre}, we have that
\begin{align*}
    \mu_m=\bbE[\theta_\pre ]= (X^\top X)^{\inv} X^\top X\mu_\theta, \quad \Sigma_m=\Cov(\theta_\pre)=\Sigma_\pre.
\end{align*}
To calculate the value of the four terms in the right-hand side of \eqref{eq:icl_err_decomp}, we first simplify the $K_\icl$ as
\begin{align*}
    K_\icl &= \bar{\sigma}^2V\Pi^{\inv}V^{\top} V\Sigma_{\tilX}^{\top} U^{\top}\Big(\bar{\sigma}^2U\Sigma_{\tilX} V^{\top}V\Pi^{\inv}V^{\top} V\Sigma_{\tilX}^{\top} U^{\top}+\tilde{\sigma}^2 I_{\tilN}\Big)^{-1}\\
    &= \bar{\sigma}^2V\Pi^{\inv}\Sigma_{\tilX}^{\top}\Big(\bar{\sigma}^2\Sigma_{\tilX} \Pi^{\inv}\Sigma_{\tilX}^{\top} +\tilde{\sigma}^2 I_{\tilN}\Big)^{-1}U^{\top}
\end{align*}
Then we have that
\begin{align}
    \tr(K_\icl^{\top}K_\icl) = \sum_{i=1}^{r}\frac{\bar{\sigma}^4 s_i \pi_i^{-2}}{(\tilde{\sigma}^2+\bar{\sigma}^2 s_i\pi_i^{-1})^2}=\sum_{i=1}^{r}\frac{\bar{\sigma}^4 s_i }{(\tilde{\sigma}^2\pi_i+\bar{\sigma}^2 s_i)^2}.\label{eq:icl_KK}
\end{align}
The value $K_\icl\tilX$ can be simplified as
\begin{align*}
    K_\icl\tilX &= \bar{\sigma}^2V\Pi^{\inv}\Sigma_{\tilX}^{\top}\Big(\bar{\sigma}^2\Sigma_{\tilX} \Pi^{\inv}\Sigma_{\tilX}^{\top} +\tilde{\sigma}^2 I_{\tilN}\Big)^{-1}U^{\top}U\Sigma_{\tilX} V^{\top}\nonumber\\
    & = \bar{\sigma}^2V\Pi^{\inv}\Sigma_{\tilX}^{\top}\Big(\bar{\sigma}^2\Sigma_{\tilX} \Pi^{\inv}\Sigma_{\tilX}^{\top} +\tilde{\sigma}^2 I_{\tilN}\Big)^{-1}\Sigma_{\tilX} V^{\top}.
\end{align*}
Then we have that
\begin{align}
    \tr\Big((I-K_\icl\tilX)\Sigma^*(I-K_\icl\tilX)^\top\Big)&=\sum_{i=1}^{r}\bigg(1- \frac{\bar{\sigma}^2\pi_i^{-1}s_i}{\bar{\sigma}^2\pi_i^{-1}s_i+\tilde{\sigma}^2}\bigg)^2\tau_i +\sum_{i=r+1}^{d}\tau_i \nonumber\\
    &= \sum_{i=1}^{r}\bigg(\frac{\tilde{\sigma}^2 \pi_i}{\bar{\sigma}^2s_i+\tilde{\sigma}^2\pi_i}\bigg)^2\tau_i +\sum_{i=r+1}^{d}\tau_i, \label{eq:icl_err_1} \\
    \tr\Big((I-K_\icl\tilX)\Sigma_m(I-K_\icl\tilX)^\top\Big)& =\sum_{i=1}^{r}\bigg(1- \frac{\bar{\sigma}^2\pi_i^{-1}s_i}{\bar{\sigma}^2\pi_i^{-1}s_i+\tilde{\sigma}^2}\bigg)^2\bar{\sigma}^2\pi_i^{-1} \nonumber\\
    &=  \sum_{i=1}^{r}\bigg(\frac{\tilde{\sigma}^2 \pi_i}{\bar{\sigma}^2s_i+\tilde{\sigma}^2\pi_i}\bigg)^2\bar{\sigma}^2\pi_i^{-1}\label{eq:icl_err_2}\\
    \tr\Big((I-K_\icl\tilX)\mu_m\mu_m^\top(I-K_\icl\tilX)^\top\Big)& =\sum_{i=1}^{r}\bigg(\frac{\tilde{\sigma}^2 \pi_i}{\bar{\sigma}^2s_i+\tilde{\sigma}^2\pi_i}\bigg)^2(v_i^\top \mu_\theta)^2\label{eq:icl_err_3}.
\end{align}
Combining \eqref{eq:icl_err_decomp}, \eqref{eq:icl_KK}, \eqref{eq:icl_err_1}, \eqref{eq:icl_err_2}, \eqref{eq:icl_err_3}, we have that
\begin{align*}
    \bbE\big[\|\theta_\icl-\theta^{*}\|^2\big]= \sum_{i=1}^{r} \frac{\tilde{\sigma}^4 \pi_i^2\big(\tau_i+\bar{\sigma}^2\pi_i^{-1}+(v_i^\top \mu_\theta)^2\big)+\tilde{\sigma}^2\bar{\sigma}^4s_i}{(\bar{\sigma}^2s_i+\tilde{\sigma}^2\pi_i)^2}+\sum_{i=r+1}^{d}\tau_i.
\end{align*}
Thus, we have that
\begin{align*}
    \bbE\big[\|\htheta_\icl-\theta^*\|^2\big] = \sum_{i=1}^{r} \frac{\tilde{\sigma}^4 \pi_i^2\big(\tau_i+\bar{\sigma}^2\pi_i^{-1}+(v_i^\top \mu_\theta)^2\big)+\tilde{\sigma}^2\bar{\sigma}^4s_i+\tilde{\sigma}^2\bar{\sigma}^2(\bar{\sigma}^2s_i+\tilde{\sigma}^2\pi_i)}{(\bar{\sigma}^2s_i+\tilde{\sigma}^2\pi_i)^2}+\sum_{i=r+1}^{d}\tau_i.
\end{align*}
Substituting the coefficients $\{\alpha_i\}_{i=1}^{r}$ from Theorem~\ref{thm:err_analysis} into the expression for $E_{\icl}$ yields the same expression as above.

\section{Proof of Theorems in Section~\ref{sec:mfcg}}\label{app:mfcg_proof}
\subsection{Proof of Theorem~\ref{thm:exist_unique}}\label{app:exist}
we first prove the existence of the equilibrium and then prove the uniqueness of the congestion level at equilibirum.

\textbf{Proof of the Existence of Equilibrium}

To prove the existence of equilibrium, we first show that there exists $R^{*}$ satisfying some fixed-point equation. Then we show that a equilibrium can be constructed from $R^{*}$. We define the fixed-point equation of $R^{*}$ as follows. For a given type $t$ and congestion level $R$, we first define the congestion level of the optimal actions and the corresponding convex hull as
\begin{align*}
    D(t,R)=\big\{R_a\tilN\given (a,\tilN)\in\BR(t,R)\big\}, \text{ and }\hatD(t,R)=\mathrm{co}\,D(t,R), \text{ respectively.}
\end{align*}
We call $\hatD(t,R)$ the \emph{attainable demand} for type $t$ at the congestion level $R$. Then the aggregate attainable demand correspondence is the Aumann integral
\begin{equation}\label{eq:Aumann}
\barD(R)=\int_{\calT}\hatD(t,R)\,T(\rmd t)
=\bigg\{\int_{\calT} d(t)\,T(\rmd t)\,\bigg |\, d(t)\ \text{measurable and } d(t)\in \hatD(t,R)\ \text{for a.e.\ }t\bigg\}.
\end{equation}
Our proofs proceed in five steps.
\begin{itemize}
    \item Checking the plausibility of $\barD(R)$ definition.
    \item Restricting the domain of the fixed-point of $\barD(R)$.
    \item Building the properties of $\barD(R)$.
    \item Applying the fixed-point theorem to $\barD(R)$.
    \item Constructing equilibrium from the fixed point of $\barD(R)$.
\end{itemize}

\textbf{Step 1: Checking the plausibility of $\barD(R)$ definition.}

We will first show that $\barD(R)$ is well-defined, i.e., $\barD(R)$ is not an empty set. We achieve this by proving that
\begin{itemize}
    \item The set $\hatD(t,R)$ is non-empty for any $t$ and $R$.
    \item We can find a measurable selection $d(t)\in\hatD(t,R)$, i.e., $\barD(R)$ is non-empty.
\end{itemize}

For the non-emptiness of $\hatD(t,R)$, we will show that $\BR(t,R)$ is non-empty and compact. Fix $t\in\calT$, $R\in[0,\barR]$, and any $\tilN_0>0$. For each $a\in\calA$ there exists $K_a(t,R)>0$
such that
\begin{align*}
E_a(t,\tilN)+R_a\tilN \barh(R,p)\ \ge\ E_a(t,\tilN_0)+R_a\tilN_0 \barh(R,p)+1
\quad\text{for all }\tilN\ge K_a(t,R).
\end{align*}
Here $K_a(t,R)$ can be explicitly constructed as
\begin{align*}
    K_a(t,R) = \frac{E_a(t,\tilN_0)+1}{R_a \cdot\big(h(0)+p\big)}+\tilN_0.
\end{align*}
For $a=\sft$, there also exists $\varepsilon(t,R)>0$ such that
\begin{align*}
E_{\sft}(t,\tilN)+R_{\sft}\tilN \barh(R,p)\ \ge\ E_{\sft}(t,\tilN_0)+R_{\sft}\tilN_0 \barh(R,p)+1
\quad\text{for all }\tilN\in(0,\varepsilon(t,R)].
\end{align*}
Here $\varepsilon(t,R)$ can be explicitly constructed as
\begin{align*}
    \varepsilon(t,R) = \bigg(\frac{2(d-r)\tilde{\sigma}^2}{E_{\sft}(t,\tilN_0)+R_{\sft}\tilN_0 \barh(R,p)+1}\bigg)^{1/\alpha}.
\end{align*}
Therefore, minimizing over $\bar{\calA}$ is equivalent to minimizing over the compact set
\begin{align*}
\bar{\calA}(t,R)=\Big\{(\sft,\tilN)\given\tilN\in\big[\varepsilon(t,R),K_{\sft}(t,R)\big]\Big\}\cup
\Big\{(\icl,\tilN)\given\tilN\in\big[0,K_{\icl}(t,R)\big]\Big\}.
\end{align*}

Note that the map $(a,\tilN)\mapsto C(t,(a,\tilN),R)$
is continuous on $\bar{\calA}(t,R)$ (it is continuous in $\tilN$ on each branch $a$, and $a$ ranges over a finite set), and that a continuous function on a compact set attains its minimum.
Thus, $\BR(t,R)$ is nonempty. Moreover, since $\BR(t,R)$ is the set of minimizers of a continuous function over a compact set, it is compact. 

Then we show that we can find a measurable selection $d(t)\in\hatD(t,R)$. We will prove that the maps $t\mapsto \inf \hatD(t,R)$, and $t\mapsto \sup \hatD(t,R)$ are measurable for each fixed $R$. Since $\hatD(t,R)=\mathrm{co}\,D(t,R)$ and that
\begin{align*}
\inf(\mathrm{co}\,D(t,R))=\inf D(t,R),
\qquad
\sup(\mathrm{co}\,D(t,R))=\sup D(t,R),
\end{align*}
it suffices to show $t\mapsto \inf D(t,R)$ is measurable. Since the map $(t,\tilN)\mapsto E_a(t,\tilN)$ is jointly measurable for each $a$,
$(t,a,\tilN)\mapsto C(t,(a,\tilN),R)$ is jointly measurable for any $R$. Define the minimum value 
\begin{align*}
    m(t,R) = \inf_{(a,\tilN)\in\calA}C(t,(a,\tilN),R).
\end{align*}
Because $C$ is jointly measurable and $\bar{\calA}$ is a Borel set, the map $t\mapsto m(t,R)$ is measurable for any $R$. Thus, the graph $\mathrm{Gr}(R)=\{(t,a,\tilN): C(t,(a,\tilN),R)=m(t,R)\}$ for any $R$. This results in the measurability of $\inf D(t,R)$ by noting that
\begin{align*}
\inf D(t,R)=\inf\{R_a\tilN \given (t,a,\tilN)\in \mathrm{Gr}(R)\}.
\end{align*}
The proof for $t\mapsto \sup D(t,R)$ is identical. Therefore, the extremal selections $t\mapsto \inf \hatD(t,R)$ and $t\mapsto \sup \hatD(t,R)$ are measurable. Thus, $\barD(R)$ is non-empty.

\textbf{Step 2: Restricting the domain of the fixed-point of $\barD(R)$.}

We first prove the following lemma.
\begin{lemma}[Uniform bound on best-response demand]\label{lem:integrable}
There exists a number $\barR<\infty$
such that for all $t\in\calT$, all $R\in[0,\infty)$, and all $(a,\tilN)\in \BR(t,R)$,
\begin{align*}
R_a\tilN\le \barR.
\end{align*}
\end{lemma}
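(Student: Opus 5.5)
The bound will come from comparing any best response with the zero-sample \ac{icl} action, which is always feasible. For every $R\ge 0$ we have $\barh(R,p)=p+h(R)\ge p+h(0)\ge p>0$, since $h$ is non-negative and increasing. The action $(\icl,0)$ costs exactly $E_{\icl}(t,0)$. From \eqref{eq:sft_N},
\[
E_{\icl}(t,0)=\frac{r\tilsigma^2\cdot \tilsigma^2\pi^2\zeta}{\tilsigma^4\pi^2}+(d-r)\tau=r\zeta+(d-r)\tau .
\]
If $(a,\tilN)\in\BR(t,R)$, optimality against this alternative gives
\[
R_a\tilN\,\barh(R,p)\le C\big(t,(a,\tilN),R\big)\le C\big(t,(\icl,0),R\big)=r\zeta+(d-r)\tau,
\]
where the first inequality uses $E_a(t,\tilN)\ge 0$. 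Both error expressions in \eqref{eq:sft_N} are sums of non-negative terms, so this holds. Dividing by $\barh(R,p)\ge p$ yields
\[
R_a\tilN\le \frac{r\zeta+(d-r)\tau}{p}.
\]

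It remains to make this uniform in $t$. Set
\[
\barR=\sup_{t\in\calT}\frac{r\zeta+(d-r)\tau}{p},\qquad \zeta=\tau+2\bar\sigma^2\pi^{-1}+m^2 .
\]
The type space $\calT$ is a compact, feasible set of parameters $(d,r,\bar\sigma,\tilsigma,m,\pi,\tau)$, and on it $\pi>0$. The map $t\mapsto r\zeta+(d-r)\tau$ is therefore continuous on a compact set and hence bounded, so $\barR<\infty$. This $\barR$ does not depend on $R$ or on the selected best response, which is exactly the claim of Lemma~\ref{lem:integrable}.

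The only delicate point is the positivity of $\pi$ on $\calT$. If the type space permits $\pi\to 0$, then $\zeta$ blows up and the uniform bound breaks. I would state explicitly that feasibility of $\calT$ includes $\pi$ bounded away from zero, which compactness of a set of admissible types with $\pi>0$ ensures. If one prefers to avoid relying on $p>0$, the same argument works with $p+h(0)$ in place of $p$, provided that quantity is positive.
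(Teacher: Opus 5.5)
Your proof is correct, but it takes a different route from the paper's.

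\textbf{The paper's route.} It uses convexity of $E_a(t,\cdot)$ (Proposition~\ref{prop:cvx_func}) to get, for each algorithm, a unique minimizer that is decreasing in $R$. From this it concludes that the largest best-response demand over all congestion levels occurs at zero congestion, i.e.\ $\sup_{R}S(t,R)\le S(t,0)$. It then bounds $S(t,0)$ uniformly over the compact type space by appealing to continuity in $t$.

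\textbf{Your route.} You compare every best response directly with the always-feasible outside option $(\mathrm{ICL},0)$. This yields $R_a\tilde N\,\bar h(R,p)\le E_{\mathrm{ICL}}(t,0)=r\zeta+(d-r)\tau$ in one step.

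\textbf{What your route buys.} You need no convexity, no uniqueness of minimizers, no comparative statics in $R$, and no continuity of the argmin in $t$. The paper's final step tacitly needs some semicontinuity of $S(\cdot,0)$, which is delicate at types where the optimal algorithm switches. You use only nonnegativity of the errors and boundedness of $t\mapsto r\zeta+(d-r)\tau$ on $\mathcal T$ (with $\pi$ bounded away from zero, which, as you note, compactness gives). You also get an explicit constant, $\sup_{t}(r\zeta+(d-r)\tau)/p$. This is exactly the outside-option comparison the paper itself uses later in Proposition~\ref{prop:R_bd}, there with $h(R)$ in place of $p$ and applied to fixed points rather than to all best responses.

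\textbf{What the paper's route buys.} It gives a tighter, structural fact: demand is maximal at zero congestion, which anticipates Lemma~\ref{lem:strong_mono}.

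\textbf{Price dependence.} Both bounds depend on the price and both break down if $p+h(0)=0$, since the paper's $S(t,0)$ is then undefined. So your remark about replacing $p$ by $p+h(0)$ is the right caveat. It is harmless here because the model fixes $p>0$.
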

\begin{proof}
    We define
    \begin{align*}
        M(t)= \sup_{R\in[0,\infty)} S(t,R),\text{ where }S(t,R) =\sup_{(a,\tilN)\in \BR(t,R)}R_a\tilN.
    \end{align*}
    According to Proposition~\ref{prop:cvx_func}, it is easy to verify that $E_{a}(t,\tilN)$ is strictly convex and decreasing in $\tilN$ for any $a\in\calA$ and $t\in\calT$. Thus, for any $a\in\calA$ and $t\in\calT$, the minimizer $\tilN^{*}(a,t)$ of $C(t,(a,\tilN),R)$ is unique and decreasing with increasing $R$. Thus, we have that
    \begin{align*}
        M(t)= \sup_{R\in[0,\infty)} S(t,R)\leq S(t,0).
    \end{align*}
    Since the minimizers are unique, and $C(t,(a,\tilN),R)$ are continuous in $t$, $S(t,0)$ has its finite maximal value $\barR$ on $\calT$, due to the compactness of $\calT$. Thus, $M(t)\leq \barR$.
\end{proof}

Lemma~\ref{lem:integrable} implies that for any $R\in[0,\infty)$ and any measurable selection $d(t)\in\hatD(t,R)$,
we have $0\le d(t)\le \barR$ a.e., hence
\begin{align*}
0\le \int_{\calT} d(t)\,T(\rmd t)\le \barR<\infty.
\end{align*}
Then for all $R\in[0,\infty)$,
\begin{equation}\label{eq:range_inclusion}
\barD(R)\subseteq [0,\barR].
\end{equation}
Thus, it suffices to find a fixed point in the compact convex set $K=[0,\barR]$.

\textbf{Step 3: Building the properties of $\barD(R)$.}

In this step, we will build three properties of $\barD(R)$: convexity, compactness, and upper hemicontinuity.

We start with convexity. Convexity of $\barD(R)$ follows directly: if $x_i=\int d_i(t)\,T(\rmd t)\in\barD(R)$ for $i=1,2$, and $\lambda\in[0,1]$,
then $\lambda d_1(t)+(1-\lambda)d_2(t)\in \hatD(t,R)$ because $\hatD(t,R)$ is convex for each $t$.
Hence
\begin{align*}
\lambda x_1+(1-\lambda)x_2=\int\big(\lambda d_1(t)+(1-\lambda)d_2(t)\big)\,T(\rmd t)\in \barD(R).
\end{align*}

For compactness, we show that $\barD(R)$ is closed and bounded in $\mathbb R$. Boundedness follows from \eqref{eq:range_inclusion}.
For closedness, let $\{x_n\}\subseteq \barD(R)$ with $x_n\to x$.
For each $n$ pick a measurable selection $d_n(t)\in \hatD(t,R)$ such that $x_n=\int_{\calT} d_n(t)\,T(\rmd t)$.
By Lemma~\ref{lem:integrable}, $0\le d_n(t)\le \barR$ a.e.\ for all $n$, so $\{d_n\}$ is uniformly integrable in $L^1(T)$. By the Komlos' theorem, there exists a subsequence (not relabeled) and a function $d\in L^1(T)$ such that $\bar d_N=\frac{1}{N}\sum_{n=1}^N d_n$ converges a.e.\ to $d$ and in $L^1$.
Because each $\hatD(t,R)$ is convex and closed, $\bar d_N(t)\in \hatD(t,R)$ for each $N$ and a.e.\ $t$,
and taking limits preserves membership: $d(t)\in \hatD(t,R)$ a.e.
Then by $L^1$ convergence,
\begin{align*}
\int_{\calT} d(t)\,T(\rmd t)=\lim_{N\to\infty}\int_{\calT} \bar d_N(t)\,T(\rmd t)=\lim_{N\to\infty}\frac{1}{N}\sum_{n=1}^N x_n=x.
\end{align*}
Hence $x\in \barD(R)$, proving closedness.

For the upper hemicontinuity, it suffices (in $\mathbb R$) to show that the graph of $R\mapsto \barD(R)$ is closed:
if $R_n\to R$ and $x_n\in \barD(R_n)$ with $x_n\to x$, then $x\in \barD(R)$. Fix such sequences. For each $n$, choose a measurable $d_n(t)\in \hatD(t,R_n)$ such that $x_n=\int_{\calT} d_n(t)\,T(\rmd t)$.
Again $0\le d_n\le M$ a.e., so by the same Komlos argument,
there exists a subsequence and a limit $d\in L^1(T)$ such that $\bar d_N$ converges a.e.\ and in $L^1$ to $d$.
In the following, we only need to show that $d(t)\in \hatD(t,R)$ a.e. and $\int_{\calT} d(t)\,T(\rmd t)=x$.

To prove this claim, note that $\hatD(t,R)$ is the convex hull of demands induced by minimizers of $C(t,\cdot,R)$.
Because $C(t,\cdot,R)$ is continuous in $R$, the objective $C(t,(a,\tilN);R_n)$ converges pointwise to $C(t,(a,\tilN),R)$.
By standard maximum theorem arguments on each compactified feasible set, the set of minimizers varies upper hemicontinuously, implying that any limit point of minimizer-induced demands at $R_n$
belongs to the minimizer-induced demand set at $R$. Convexification and closedness of $\hatD(t,R)$ then imply that any limit point of convex combinations belongs to $\hatD(t,R)$. Since $\bar d_N(t)$ is a convex combination of elements in $\hatD(t,R_n)$
with varying $n$, and $\hatD(t,R)$ is upper hemicontinuous in $R$, the a.e.\ limit $d(t)$ lies in $\hatD(t,R)$. Finally, $L^1$ convergence yields
\begin{align*}
\int_{\calT} d(t)\,T(\rmd t)=\lim_{N\to\infty}\int_{\calT} \bar d_N(t)\,T(\rmd t)=\lim_{N\to\infty}\frac{1}{N}\sum_{n=1}^N x_n=x.
\end{align*}
so $x\in \barD(R)$, proving closedness of the graph and hence upper hemicontinuity.

\textbf{Step 4:  Applying the fixed-point theorem to $\barD(R)$.}

We have shown:
\begin{itemize}
\item $K=[0,\barR]$ is nonempty, compact, convex;
\item for each $R\in K$, $\barD(R)$ is nonempty, compact, convex (Steps 1,3);
\item $\barD(R)\subseteq K$ (Step 2);
\item $\barD(\cdot)$ is upper hemicontinuous (Step 3).
\end{itemize}
Therefore, by Kakutani's fixed point theorem, there exists $R^{*}\in K$ such that
\begin{align*}
R^{*}\in \barD(R^{*}).
\end{align*}
By definition of the Aumann integral \eqref{eq:Aumann}, there exists a measurable selection
$d^{*}(t)\in \hatD(t,R^{*})$ for a.e. $t$ such that
\begin{align*}
R^{*}=\int_{\calT} d^{*}(t)\,T(\rmd t).
\end{align*}

\textbf{Step 5: Constructing equilibrium from the fixed point of $\barD(R)$}

Fix $t$. Since $d^{*}(t)\in \hatD(t,R^{*})=\mathrm{co}\,D(t,R^{*})$, there exist
(best-response) actions $(a,\tilN_a)\in \BR(t,R^{*})$ and weights $\lambda_a\ge 0$ with $\sum_{a\in\calA} \lambda_a=1$
such that
\begin{align*}
d^{*}(t)=\sum_{a\in\calA} \lambda_a R_{a}\tilN_a.
\end{align*}
Define $\pi^{*}_t$ as the probability distribution assigning mass $\lambda_a$ to action $(a,\tilN_a)$.
Then $\pi^{*}_t$ is supported on $\BR(t,R^{*})$. By construction, the expected demand of type $t$ under $\pi^{*}_t$ equals $d^{*}(t)$, hence
\begin{align*}
R(\pi^{*})=\int_{\calT}\bigg(\sum_{a\in\calA}\int_{0}^{\infty} R_a\tilN\,\pi^{*}_t(a,\rmd\tilN))\bigg)\,T(\rmd t)
=\int_{\calT} d^{*}(t)\,T(\rmd t)=R^{*}.
\end{align*}
Thus $(\pi^{*},R^{*})$ satisfies optimality and consistency, proving existence. Thus, we conclude the proof of this theorem.


\textbf{Proof of the Congestion-Level Uniqueness of Equilibrium}

We first prove the following lemma.
\begin{lemma}[Monotonicity of best-response demand]\label{lem:strong_mono}
For each $t\in\calT$ and any $0\le R_1<R_2$, if we define $D(t,R)=\{R_a\tilN\given (a,\tilN)\in\BR(t,R)\},$ then $\sup D(t,R_2) \le \inf D(t,R_1).$
\end{lemma}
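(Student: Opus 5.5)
The natural route is a revealed-preference (single-crossing) argument, with no closed forms needed. Fix $t$ and $0\le R_1<R_2$, and write $H_j=\bar h(R_j,p)$. Since $h$ is strictly increasing, $H_2>H_1$. Take any $(a_1,\tilN_1)\in\BR(t,R_1)$ and $(a_2,\tilN_2)\in\BR(t,R_2)$, and set $x_j=R_{a_j}\tilN_j$. The claim $\sup D(t,R_2)\le\inf D(t,R_1)$ is equivalent to showing $x_2\le x_1$ for every such pair.

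Optimality of each action at its own congestion level gives two inequalities:
\begin{align*}
E_{a_1}(t,\tilN_1)+x_1H_1 &\le E_{a_2}(t,\tilN_2)+x_2H_1,\\
E_{a_2}(t,\tilN_2)+x_2H_2 &\le E_{a_1}(t,\tilN_1)+x_1H_2.
\end{align*}
Both sides are finite. This is automatic for ICL, since $E_{\icl}(t,0)$ is finite. For SFT, best responses satisfy $\tilN\ge\varepsilon(t,R)>0$ by Step 1 of the existence proof, so $E_{\sft}$ is finite there as well. Adding the two inequalities cancels the error terms and leaves $(x_1-x_2)(H_2-H_1)\ge 0$. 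Because $H_2-H_1>0$, this gives $x_2\le x_1$. Taking the supremum over $x_2$ and the infimum over $x_1$ yields the lemma.

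The key point is that the cost $C(t,(a,\tilN),R)=E_a(t,\tilN)+R_a\tilN\,\bar h(R,p)$ has the form \emph{type-and-action-dependent error plus demand $\times$ a scalar price}. The cross-difference is therefore linear in demand and has strictly increasing differences in $(-x,H)$. Notably, the argument does not rely on convexity of $E_a$ or on uniqueness of minimizers, and it compares across algorithms: it covers both the intensive margin ($\tilN$) and the extensive margin (SFT versus ICL) simultaneously.

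The only real obstacle is bookkeeping. One must make sure the best-response sets are nonempty, which Step 1 provides. One must also avoid the degenerate case $E_{\sft}(t,0)=+\infty$ in the subtraction, which is handled by working only with actual minimizers, whose costs are finite. Any remaining subtlety would arise only if $h$ were merely weakly increasing. In that case $H_1=H_2$ could occur and the inequality would fail for set-valued best responses, but the regularity condition that $h$ is strictly increasing rules this out.
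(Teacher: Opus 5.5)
Your proposal is correct and is essentially the paper's argument: the paper adds the same two optimality inequalities and uses $\bar h(R_2,p)>\bar h(R_1,p)$ to conclude $r_2\le r_1$. The only cosmetic difference is that the paper first collapses the choice to the scalar demand problem $\min_{r\ge 0}\{g_t(r)+\bar h(R,p)\,r\}$ with $g_t(r)=\min_a E_a(t,r/R_a)$, whereas you work directly with action pairs $(a,\tilN)$; the two formulations are equivalent.
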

\begin{proof}
Fix a type $t$. Define the resource demand variable $r = R_a\tilde N.$
For each $r\ge 0$, we define the minimal statistical error with resource $r$ as
\begin{align*}
g_t(r)
=\min_{a\in\{\mathrm{sft},\mathrm{icl}\}} E_{a}\!\left(t,\frac{r}{R_a}\right).
\end{align*}
Then the type-$t$ optimization problem is equivalent to
\begin{align*}
\min_{(a,\tilN)\in\bar{\calA}}E_{a}(t,\tilde N)+ R_a\tilde N\, \barh(R,p) = \min_{r\ge 0}\ \big\{ g_t(r)+\barh(R,p)\, r \big\}.
\end{align*}
Hence, we have that
\begin{align*}
D(t,R)=\arg\min_{r\ge 0}\big\{ g_t(r)+\barh(R,p)r\big\}.
\end{align*}

Let $R_1<R_2$. Since $h$ is increasing, we have $\barh(R_1,p)<\barh(R_2,p).$
Pick arbitrary
\begin{align*}
r_1\in D(t,R_1), \qquad r_2\in D(t,R_2).
\end{align*}
Optimality of $r_1$ and $r_2$ implies
\begin{align}
g_t(r_1)+\barh(R_1,p)r_1 &\le g_t(r_2)+\barh(R_1,p)r_2, \label{eq:ineq1}\\
g_t(r_2)+\barh(R_2,p)r_2 &\le g_t(r_1)+\barh(R_2,p)r_1. \label{eq:ineq2}
\end{align}
Adding \eqref{eq:ineq1} and \eqref{eq:ineq2} and canceling terms yields
\begin{align*}
\bigl(\barh(R_2,p)-\barh(R_1,p)\bigr)\,(r_2-r_1)\le 0.
\end{align*}
Since $\barh(R_2,p)>\barh(R_1,p)$, it follows that $r_2\le r_1.$

Because the choice of $r_1\in D(t,R_1)$ and $r_2\in D(t,R_2)$ was arbitrary, we have $\sup D(t,R_2)\le \inf D(t,R_1).$ Thus, we conclude the proof of this lemma.
\end{proof}
Then we prove the uniqueness as follows. Assume for contradiction that there exist two equilibrium congestion levels $R_1<R_2$. By the  definition of equilibrium with mixed policies, we must have
\begin{align*}
R_i\in \barD(R_i)=\int_{\calT}\hatD(t,R_i)\,T(\rmd t)\text{ for }i=1,2,
\end{align*}
where $\hatD(t,R)=\mathrm{co}\,D(t,R)$, and the integral is the Aumann integral. The existence proof has proved that $\barD(\cdot)$ is well-defined. According to the definition of the Aumann integral, there exist measurable selections $d_i(t)\in \hatD(t,R_i)$ such that
\begin{align*}
R_i=\int_{\calT} d_i(t)\,T(\rmd t)\text{ for }i=1,2.
\end{align*}

Fix $t$. Because $d_1(t)\in \hatD(t,R_1)$ and $d_2(t)\in \hatD(t,R_2)$, we have
\begin{align*}
d_1(t)\ \ge\ \inf \hatD(t,R_1),\qquad d_2(t)\ \le\ \sup \hatD(t,R_2).
\end{align*}
In addition, note that $\inf \hatD(t,R_1)=\inf D(t,R_1),$ and $\sup \hatD(t,R_2)=\sup D(t,R_2).$ Therefore, $d_2(t)\ \le\ \sup D(t,R_2)\ \le\ \inf D(t,R_1)\ \le\ d_1(t),$ where the middle inequality uses Lemma~\ref{lem:strong_mono}. Thus $d_2(t)\le d_1(t)$ for $T$-a.e.\ $t$. Integrating yields
\begin{align*}
R_2=\int d_2(t)\,T(\rmd t)\ \le\ \int d_1(t)\,T(\rmd t)=R_1,
\end{align*}
contradicting $R_1<R_2$. Hence, there cannot exist two distinct equilibrium congestion levels. We conclude the proof of Theorem~\ref{thm:exist_unique}.

\subsection{Proof of Theorem~\ref{thm:homo}}\label{app:homo}
We will prove our claims one by one.

For the first result, we calculate the closed-form expression of $\Phi_a$ and $N_a$ in the following proposition.
\begin{proposition}\label{prop:opt_alg}
    For \ac{icl} and \ac{sft} with errors in \eqref{eq:simplified}, we have that
    \begin{align*}
        N_{\sft}(H) = \sqrt{\frac{2\tilsigma^2d}{R_{\sft}H}},\text{ and } N_{\icl}(H) = \max\bigg\{0,\sqrt{\frac{2\tilsigma^2 r}{R_{\icl}H}}-\frac{2\tilsigma^2}{\zeta}\bigg\}.
    \end{align*}
    The corresponding minimal costs are
    \begin{align*}
        \Phi_{\sft}(H) & =2\sqrt{2\tilsigma^{2}dR_{\sft}H}, \text{ and }
    \end{align*}
    \begin{equation*}
        \Phi_{\icl}(H) = (d-r)\tau+\left\{
        \begin{array}{ll}
        r\zeta, &  \text{if }R_{\icl}H\geq \frac{r\zeta^2}{2\tilsigma^2}\\
        2\sqrt{2r\tilsigma^2R_{\icl}H}-R_{\icl}\cdot\frac{2\tilsigma^2}{\zeta}\cdot H, & \text{if }R_{\icl}H<\frac{r\zeta^2}{2\tilsigma^2}.
        \end{array}
        \right.
    \end{equation*}

\end{proposition}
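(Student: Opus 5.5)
The plan is to solve each one-dimensional problem $\min_{N\ge 0}\{E_a(N)+R_aNH\}$ with $\alpha=1$ and the simplified errors in \eqref{eq:simplified}. For each algorithm we use strict convexity of $E_a$ in $N$ and read off the first-order condition, projected onto $N\ge 0$. Throughout we take $H>0$, which holds because $H=p+h(R)\ge p>0$.

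For \ac{sft}, the objective $2\tilsigma^2 d/N+R_{\sft}NH$ is strictly convex on $(0,\infty)$ and tends to $+\infty$ at both ends. Setting the derivative $-2\tilsigma^2 d/N^2+R_{\sft}H$ to zero gives $N_{\sft}(H)=\sqrt{2\tilsigma^2 d/(R_{\sft}H)}$. Substituting back, both terms equal $\sqrt{2\tilsigma^2 dR_{\sft}H}$, so $\Phi_{\sft}(H)=2\sqrt{2\tilsigma^2dR_{\sft}H}$ by the usual AM–GM equality.

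For \ac{icl}, we substitute $u=N+2\tilsigma^2/\zeta$, so that $u\ge u_0:=2\tilsigma^2/\zeta$. The objective becomes
\begin{align*}
(d-r)\tau+\frac{2\tilsigma^2 r}{u}+R_{\icl}H\,u-R_{\icl}H\,u_0,
\end{align*}
which is strictly convex in $u$. Its unconstrained minimizer is $u^\ast=\sqrt{2\tilsigma^2 r/(R_{\icl}H)}$, and projecting onto $u\ge u_0$ yields $N_{\icl}(H)=\max\{0,u^\ast-u_0\}$.

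The main care point is checking the case split against the threshold stated in the proposition. The condition $u^\ast\le u_0$ is equivalent to $2\tilsigma^2 r/(R_{\icl}H)\le 4\tilsigma^4/\zeta^2$, that is, $R_{\icl}H\ge r\zeta^2/(2\tilsigma^2)$.
\begin{itemize}
\item In the corner case $u^\ast\le u_0$ we have $N=0$, and the value is $(d-r)\tau+2\tilsigma^2r/u_0=(d-r)\tau+r\zeta$.
\item In the interior case the value is $(d-r)\tau+2\sqrt{2r\tilsigma^2R_{\icl}H}-R_{\icl}H\cdot 2\tilsigma^2/\zeta$, again by the AM–GM step at $u^\ast$.
\end{itemize}
The two expressions agree at the boundary, since there $u^\ast=u_0$ and the interior value reduces to $(d-r)\tau+2\tilsigma^2 r/u_0$. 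This confirms that $\Phi_{\icl}$ is continuous and that the stated formulas are consistent.
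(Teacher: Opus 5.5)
Your proof is correct and follows the same route as the paper's. That route is strict convexity of each objective, the first-order condition for SFT, and the first-order condition projected onto the boundary $N\ge 0$ for ICL. Your shift $u=N+2\tilsigma^2/\zeta$, the AM--GM evaluation, and the explicit check that $u^\ast\le u_0$ is equivalent to $R_{\icl}H\ge r\zeta^2/(2\tilsigma^2)$ just fill in the ``direct calculations'' the paper leaves implicit.
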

The proof is provided in Appendix~\ref{app:opt_alg}. Thus, we have that
\begin{align*}
    R_{\sft}N_{\sft}(H) =  \sqrt{\frac{2R_{\sft}\tilsigma^2d}{H}}>\sqrt{\frac{2R_{\icl}\tilsigma^2r}{H}}>\max\bigg\{0,\sqrt{\frac{2R_{\icl}\tilsigma^2 r}{H}}-\frac{2\tilsigma^2}{\zeta}R_{\icl}\bigg\}=R_{\icl}N_{\icl}(H),
\end{align*}
where the first inequality results from Assumption~\ref{assump:large} and that $d\geq r$.

For the second result, we define the function
\begin{align*}
    f_a(H) = \barh\big(R_a N_a(H),p\big)-H =p +(R_a N_a(H))^2 -H .
\end{align*}
We note that  $f_a(H)$ are strictly decrcreasing in $H$ and $\lim_{H\rightarrow\infty}f_a(H)=-\infty$ for any $a\in\calA$. Since $\lim_{H\downarrow 0}f_a(H)=\infty$ for any $a\in\calA$, both $H_{\icl}^{*}$ and $H_{\sft}^{*}$ exist and are unique. Since $f_a(p)\geq 0$ for all $a$, $H_a^{*}\geq p$. The relationship $H_{\sft}^{*}>H_{\icl}^{*}$ directly follows from our first claim $R_{\sft}N_{\sft}(H)>R_{\icl}N_{\icl}(H)$ for any $H\geq 0$, since they are the solutions of $H_a^* =(R_a\cdot N_a(H_a^*))^2+p$. 

For the third claim, Proposition~\ref{prop:opt_alg} shows that
\begin{align*}
    \psi(H) =2\sqrt{2\tilsigma^{2}dR_{\sft}H} - 2\sqrt{2r\tilsigma^2R_{\icl}H}+R_{\icl}\cdot\frac{2\tilsigma^2}{\zeta}\cdot H - (d-r)\tau \text{ if }H<\frac{r\zeta^2}{2R_{\icl}\tilsigma^2}.
\end{align*}
We have that 
\begin{align*}
    \psi^{\prime}(H) = \frac{\sqrt{2\tilsigma^{2}dR_{\sft}} - \sqrt{2r\tilsigma^2R_{\icl}}}{\sqrt{H}}+R_{\icl}\cdot\frac{2\tilsigma^2}{\zeta}>0.
\end{align*}
With the similar procedure, we can show $\psi(H)$ is also strictly increasing when $R_{\icl}H\geq r\zeta^2/(2\tilsigma^2)$. To show that the root of $\psi(H)$ exists, we note that
\begin{align*}
    \psi(0)<0,\text{ and }\lim_{H\rightarrow\infty}\psi(H)=\infty.
\end{align*}
Thus, $H_{\sep}^{*}>0$ exists. The uniqueness results from the strict monotonicity of $\psi$.

In the following, we prove the last result. For the equilibrium $(\pi^{*},R^{*})$, we note that $\pi^{*}$ is a distribution on $\Delta(\bar{\calA})$, since there is only one user type. We adopt $H^{*}$ to denote the corresponding cost per unit computational resource, i.e., $H^{*}=\barh(R^{*},p)$. We consider three cases as follows.
\begin{itemize}
    \item If $H^{*}<H_{\sep}^{*}$, then $\psi(H^{*})<0$. Thus, all the users will adopt \ac{sft}. In other words, $H^{*}$ must be $H_{\sft}^{*}$, since $H_{\sft}^{*}$ is the fixed point of $H_a^* = \barh\big(R_a\cdot N_a(H_a^*),p\big)$. Then $\pi^*$ adopts $(\sft,N_{\sft}(H^{*}))$ with probability $1$.
    \item If $H^{*}>H_{\sep}^{*}$, the arguments are similar to the previous ones. We have that $H^{*}=H_{\icl}^{*}$ and that $\pi^*$ adopts $(\icl,N_{\icl}(H^{*}))$ with probability $1$.
    \item If $H^{*}=H_{\sep}^{*}$, then the users can adopt either $(\sft,N_{\sft}(H^{*}))$ and $(\icl,N_{\icl}(H^{*}))$. Assume that we adopt $\sft$ with probability $\lambda$. Then we must have
    \begin{align*}
        \sqrt{H^{*}-p} = \lambda R_{\sft}N_{\sft}(H^{*}) + (1-\lambda) R_{\icl}N_{\icl}(H^{*}).
    \end{align*}
    Thus, we have that
    \begin{align*}
        \lambda = \frac{\sqrt{H^{*}-p}-R_{\icl}N_{\icl}(H^{*})}{ R_{\sft}N_{\sft}(H^{*})-R_{\icl}N_{\icl}(H^{*})}.
    \end{align*}
\end{itemize}

Thus, we conclude the proof of Theorem~\ref{thm:homo}.

\subsection{Full Statement and Proof of Theorem~\ref{thm:hetero}}\label{app:hetero}
\begin{theorem}\label{thm:hetero_full}
    Consider the game \eqref{eq:simplified} with $\alpha=1$, $T(t_1)=q$, $T(t_2)=1-q$, $h(x)=x^2$, and $R_{\sft}>R_{\icl}$. When $H_{\sep}^{\ast}(t_2)< H_{\sep}^{\ast}(t_1)$, the equilibrium has a threshold structure:
    \begin{itemize}
        \item[1.] If $H_{\sep}^{*}(t_2)> e(H_{\sep}^{*}(t_2))^2+p$, then $H^*$ at equilibrium is the root of $H=e(H)^2+p$.
        \item[2.] If $f(H_{\sep}^{*}(t_2))^2+p\leq H_{\sep}^{*}(t_2)\leq e(H_{\sep}^{*}(t_2))^2+p $, $H^*=H_{\sep}^{*}(t_2)$.
        \item[3.] If $H_{\sep}^{*}(t_2)<  f(H_{\sep}^{*}(t_2))^2+p$ and $f(H_{\sep}^{*}(t_1))^2+p< H_{\sep}^{*}(t_1)$, then $H^*$ at equilibrium is the root of $H=f(H)^2+p$.
        \item[4.] If $g(H_{\sep}^{*}(t_1))^2+p\leq H_{\sep}^{*}(t_1)\leq f(H_{\sep}^{*}(t_1))^2+p $, $H^*=H_{\sep}^{*}(t_1)$.
        \item[5.] If $H_{\sep}^{*}(t_1)< g(H_{\sep}^{*}(t_1))^2+p$, then $H^*$ at equilibrium is the root of $H=g(H)^2+p$.
    \end{itemize}
    The equilibrium congestion level is $R^*=\sqrt{H^*-p}$, where $H^*$ is determined by cases (1)–(5).
    
    When $H_{\sep}^*(t_2)= H_{\sep}^*(t_1)$, the following holds.
    \begin{itemize}
        \item[1.] If $H_{\sep}^{*}(t_2)> e(H_{\sep}^{*}(t_2))^2+p$, then $H^*$ at equilibrium is the root of $H=e(H)^2+p$.
        \item[2.] If $g(H_{\sep}^{*}(t_2))^2+p\leq H_{\sep}^{*}(t_2)\leq e(H_{\sep}^{*}(t_2))^2+p $, $H^*=H_{\sep}^{*}(t_2)$.
        \item[3.] If $H_{\sep}^{*}(t_2)< g(H_{\sep}^{*}(t_2))^2+p$, then $H^*$ at equilibrium is the root of $H=g(H)^2+p$.
    \end{itemize}
    The equilibrium congestion level is $R^*=\sqrt{H^*-p}$, where $H^*$ is determined by cases (1)–(3).
\end{theorem}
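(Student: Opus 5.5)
The plan is to reduce the equilibrium to a scalar fixed point in the effective unit cost $H=p+h(R)=p+R^2$, exactly as in the proof of Theorem~\ref{thm:homo}.

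\textbf{Step 1 (each type's best response).} For each type $t_i$, Theorem~\ref{thm:homo} part~3 shows that $\psi(t_i,\cdot)$ is strictly increasing with unique root $H_{\sep}^{*}(t_i)$. Hence type $t_i$ strictly prefers $(\sft,N_{\sft}(t_i,H))$ when $H<H_{\sep}^*(t_i)$. It strictly prefers $(\icl,N_{\icl}(t_i,H))$ when $H>H_{\sep}^*(t_i)$, and it may mix between the two at equality. Within each algorithm the optimal sample size is unique, because $E_a$ is strictly convex in $N$.

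\textbf{Step 2 (aggregate demand correspondence).} Assume first $H_{\sep}^*(t_2)<H_{\sep}^*(t_1)$. Then the aggregate demand correspondence $\mathcal D(H)$ is:
\begin{itemize}
\item $\{e(H)\}$ on $(0,H_{\sep}^*(t_2))$;
\item the interval $[f(H),e(H)]$ at $H=H_{\sep}^*(t_2)$, where type $t_2$ mixes;
\item $\{f(H)\}$ on $(H_{\sep}^*(t_2),H_{\sep}^*(t_1))$;
\item the interval $[g(H),f(H)]$ at $H=H_{\sep}^*(t_1)$;
\item $\{g(H)\}$ beyond $H_{\sep}^*(t_1)$.
\end{itemize}
The ordering $e\ge f\ge g$ follows from Theorem~\ref{thm:homo} part~1 applied type by type, i.e. $R_{\sft}N_{\sft}(t,H)>R_{\icl}N_{\icl}(t,H)$. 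The functions $e,f,g$ are nonincreasing in $H$ by the closed forms in Proposition~\ref{prop:opt_alg}.

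An equilibrium is then exactly a point $H^*$ with $H^*-p\in\mathcal D(H^*)^2$, where squaring is applied elementwise to the set. By Lemma~\ref{lem:strong_mono}, $\mathcal D$ is monotone nonincreasing as a correspondence: $\sup\mathcal D(H_2)\le\inf\mathcal D(H_1)$ for $H_1<H_2$. So the map $\Gamma(H)=H-p-\mathcal D(H)^2$ is increasing as a correspondence. It is negative near $H=0$, where $e$ blows up like $H^{-1/2}$, and positive for large $H$. It therefore has a unique zero. Existence also follows from Theorem~\ref{thm:exist_unique}, and uniqueness of $H^*$ from the same theorem.

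\textbf{Step 3 (locating the zero; case analysis).} Case $k$ of the statement corresponds to the zero lying in the $k$-th piece of the partition in Step~2. To check each correspondence I use that each scalar map $H\mapsto H-p-\Phi(H)^2$, for $\Phi\in\{e,f,g\}$, is strictly increasing.
\begin{itemize}
\item \emph{Case 1.} Suppose $H_{\sep}^*(t_2)>e(H_{\sep}^*(t_2))^2+p$. Then the root of $H=e(H)^2+p$ lies strictly below $H_{\sep}^*(t_2)$, so it is in the all-SFT region and is consistent.
\item \emph{Case 2.} The two-sided condition says that $H_{\sep}^*(t_2)-p$ lies in $[f^2,e^2]$ at the separator. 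This is exactly the pinning condition: type $t_2$'s mixing probability is solved linearly, as in Theorem~\ref{thm:homo} part~4.
\item \emph{Case 3.} The condition says $H-p-f(H)^2$ is negative at $H_{\sep}^*(t_2)$ and positive at $H_{\sep}^*(t_1)$. Hence its root lies in the open middle region.
\item \emph{Cases 4 and 5.} These are symmetric to Cases~2 and~1, at $t_1$'s separator and using $g$.
\end{itemize}
Because $\Gamma$ is increasing, these conditions are mutually exclusive and exhaust all possibilities. Concretely: if Case~1 fails, then $H_{\sep}^*(t_2)\le e^2+p$; if Case~2 also fails, then $H_{\sep}^*(t_2)<f^2+p$; and so on down the list.

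\textbf{Degenerate ordering.} When $H_{\sep}^*(t_1)=H_{\sep}^*(t_2)$, the middle region collapses. At the common separator both types may mix, so the demand interval is $[g,e]$, and the same argument yields the three cases.

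\textbf{Main obstacle.} I expect the main difficulty to be the exhaustiveness and consistency bookkeeping at the thresholds. This means checking that the boundary inequalities, strict versus weak, match the correspondence values so that no gap or overlap appears between cases. It also requires $e,f,g$ to be continuous and nonincreasing, including at the kink where $N_{\icl}$ hits zero. I would handle this by working throughout with the single increasing correspondence $\Gamma$ rather than case by case.
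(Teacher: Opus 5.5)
Your proposal is correct and follows essentially the same route as the paper. The paper likewise forms the aggregate attainable-demand set $\bar{J}(H)$ and computes it piecewise as $\{e(H)\}$, $[f(H),e(H)]$, $\{f(H)\}$, $[g(H),f(H)]$, $\{g(H)\}$ across the regions cut out by $H_{\sep}^*(t_2)$ and $H_{\sep}^*(t_1)$. It then reads off the fixed-point condition $\sqrt{H-p}\in\bar{J}(H)$ region by region.

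Your monotone-correspondence argument that the cases are mutually exclusive and exhaustive is a useful addition the paper leaves implicit. The only slip is that the relevant domain is $H\ge p$, so the sign check should be $\Gamma(p)=p-p-\mathcal{D}(p)^2=-\mathcal{D}(p)^2\le 0$, not the blow-up of $e$ as $H\to 0$.
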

The explicit expressions of $e,f,g$ are
\begin{align*}
    e(H) & = q\sqrt{\frac{2\tilsigma_{1}^2 d_1 R_\sft}{H}} + (1-q)\sqrt{\frac{2\tilsigma_{2}^2 d_2 R_\sft}{H}},\\
    f(H) &= q\sqrt{\frac{2\tilsigma_{1}^2 d_1 R_\sft}{H}} + (1-q)\max\bigg\{0,\sqrt{\frac{2\tilsigma_{2}^2 r_2 R_\icl}{H}}- \frac{2\tilsigma_2^2 R_\icl}{\zeta_2}\bigg\},\\
    g(H) & = q\max\bigg\{0,\sqrt{\frac{2\tilsigma_{1}^2 r_1 R_\icl}{H}}- \frac{2\tilsigma_1^2 R_\icl}{\zeta_1}\bigg\} + (1-q)\max\bigg\{0,\sqrt{\frac{2\tilsigma_{2}^2 r_2 R_\icl}{H}}- \frac{2\tilsigma_2^2R_\icl}{\zeta_2}\bigg\}.
\end{align*}

Before the proof, we first define several quantities. 
\begin{align*}
    J(t,H) &= \big\{R_a\tilN_a(t,H) \given \Phi_a(t,H)\leq \Phi_{a^{\prime}}(t,H)\text{ for all }a^{\prime}\in\calA\big\},\\
    \hatJ(t,H) & = \mathrm{co}\, J(t,H),\\
    \barJ(H) &= \big\{ q\cdot R_1+(1-q)\cdot R_2\given R_{i}\in \hatJ(t_i,H)\text{ for }i=1,2\big\}.
\end{align*}
Intuitively, the set $J(t,H)$ includes all the possible amount of consumed resource of the pure actions given $H$ for type-$t$ users, $\hatJ(t,H)$ is the convex hull of $J(t,H)$, reflecting the randomized policy, and $\barJ(H)$ includes all the possible amount of consumed resource for all the users. Thus, $H^*=h(R^*)+p$ at equilibrium is the root of the following equation.
\begin{align}
    h^{-1}(H^*-p) \in \barJ(H^*),\label{eq:H_fixed_equation_hetero}
\end{align}
where $h^{-1}$ is the inverse function of $h$. In the following, we explicitly calculate $\barJ(H)$. First, we assume that $H_{\sep}^{*}(t_2)< H_{\sep}^{*}(t_1)$. Then we consider $5$ cases.

\textbf{Case 1: $H< H_{\sep}^{*}(t_2)$.}

Since $H_{\sep}^{*}(t_2)< H_{\sep}^{*}(t_1)$, we have that $\psi(t_1,H)<0$ and $\psi(t_2,H)<0$. Thus, all users choose $\ac{sft}$. Define the function
\begin{align*}
    e(H) = q\sqrt{\frac{2\tilsigma_{1}^2 d_1 R_\sft}{H}} + (1-q)\sqrt{\frac{2\tilsigma_{2}^2 d_2 R_\sft}{H}}.
\end{align*}
Under this case, we have that $\barJ(H) = e(H)$. Obviously, $e(H)$ is strictly decreasing in $H$. If $\sqrt{H_{\sep}^{*}(t_2)-p}> e(H_{\sep}^{*}(t_2))$, then the root $H^*$ of \eqref{eq:H_fixed_equation_hetero} is not larger than $H_{\sep}^{*}(t_2)$, which is the unique root of
\begin{align*}
    \sqrt{H-p}=e(H).
\end{align*}

\textbf{Case 2: $H= H_{\sep}^{*}(t_2)$.}

Since $H_{\sep}^{*}(t_2)< H_{\sep}^{*}(t_1)$, we have that $\psi(t_1,H)<0$ and $\psi(t_2,H)=0$. The users of type $t_1$ will choose \ac{sft}, and the users of type $t_2$ can choose both \ac{sft} and \ac{icl}. Due to Assumption~\ref{assump:large}, we have that 
\begin{align*}
    R_{\sft}N_{\sft}(t,H)\geq R_{\icl}N_{\icl}(t,H).
\end{align*}
To specify $\barJ(H)$, we define the following function
\begin{align*}
    f(H) = q\sqrt{\frac{2\tilsigma_{1}^2 d_1 R_\sft}{H}} + (1-q)\max\bigg\{0,\sqrt{\frac{2\tilsigma_{2}^2 r_2 R_\icl}{H}}- \frac{2\tilsigma_2^2R_\icl}{\zeta_2}\bigg\}.
\end{align*}
Under this case, we have that $\barJ(H) = [f(H),e(H)].$ If $f(H_{\sep}^{*}(t_2))\leq \sqrt{H_{\sep}^{*}(t_2)-p}\leq e(H_{\sep}^{*}(t_2))$, then we have that the root $H^*$ of \eqref{eq:H_fixed_equation_hetero} is $H_{\sep}^{*}(t_2)$.

\textbf{Case 3: $ H_{\sep}^{*}(t_2)<H<H_{\sep}^{*}(t_1)$.}

Under this setting, the users of type $t_2$ will choose \ac{icl}, while the users of type $t_1$ will choose \ac{sft}. Thus, we have that $\barJ(H)= f(H)$. We note that $f(H)$ is strictly decreasing in $H$. If $\sqrt{H_{\sep}^{*}(t_2)-p}<f(H_{\sep}^{*}(t_2))$ and $\sqrt{H_{\sep}^{*}(t_1)-p}>f(H_{\sep}^{*}(t_1))$, then the root $H^*$ of \eqref{eq:H_fixed_equation_hetero} is the root of the following equation
\begin{align*}
    \sqrt{H-p}=f(H).
\end{align*}

\textbf{Case 4: $ H=H_{\sep}^{*}(t_1)$.}

Under this case, the users of type $t_2$ will choose \ac{icl}, and the users of type $t_1$ can choose both \ac{sft} and \ac{icl}. Thus, we define the following function
\begin{align*}
    g(H) = q\max\bigg\{0,\sqrt{\frac{2\tilsigma_{1}^2 r_1 R_\icl}{H}}- \frac{2\tilsigma_1^2R_\icl}{\zeta_1}\bigg\} + (1-q)\max\bigg\{0,\sqrt{\frac{2\tilsigma_{2}^2 r_2 R_\icl}{H}}- \frac{2\tilsigma_2^2R_\icl}{\zeta_2}\bigg\}.
\end{align*}
We have that $\barJ(H) = [g(H),f(H)]$. Thus, if $g(H_{\sep}^{*}(t_1))\leq \sqrt{H_{\sep}^{*}(t_1)-p}\leq f(H_{\sep}^{*}(t_1))$, then the root $H^*$ of \eqref{eq:H_fixed_equation_hetero} is equal to $H_{\sep}^{*}(t_1)$.

\textbf{Case 5: $ H>H_{\sep}^{*}(t_1)$.}

Under this case, all the users adopt \ac{icl}. Thus, we have that $\barJ(H) = g(H)$. The function $g(H)$ is decreasing in $H$. If $\sqrt{H_{\sep}^{*}(t_1)-p}< g(H_{\sep}^{*}(t_1))$, the root $H^*$ of \eqref{eq:H_fixed_equation_hetero} is the root of the following equation
\begin{align*}
    \sqrt{H-p}=g(H).
\end{align*}

Then, we handle the case $H_{\sep}^{*}(t_2)= H_{\sep}^{*}(t_1)$.

\textbf{Case 1: $H< H_{\sep}^{*}(t_2)=H_{\sep}^{*}(t_1)$.}

All the users will choose \ac{sft} in this case. Thus, we have that $\barJ(H) = e(H)$. If $\sqrt{H_{\sep}^{*}(t_2)-p}> e(H_{\sep}^{*}(t_2))$, then the root $H^*$ of \eqref{eq:H_fixed_equation_hetero} is not larger than $H_{\sep}^{*}(t_2)$, which is the unique root of $\sqrt{H-p}=e(H)$.

\textbf{Case 2: $H= H_{\sep}^{*}(t_2)=H_{\sep}^{*}(t_1)$.}

In this case, all the users can choose both \ac{sft} and \ac{icl}. Thus, we have that $\barJ(H) =[g(H),e(H)] $. If $g( H_{\sep}^{*}(t_2))\leq \sqrt{ H_{\sep}^{*}(t_2)-p}\leq e( H_{\sep}^{*}(t_2))$, then $H^*= H_{\sep}^{*}(t_2)$.

\textbf{Case 3: $H> H_{\sep}^{*}(t_2)=H_{\sep}^{*}(t_1)$.}

All the users will choose \ac{icl} in this case. Thus, we have that $\barJ(H) = g(H)$. If $\sqrt{H_{\sep}^{*}(t_2)-p}< g(H_{\sep}^{*}(t_2))$, the root $H^*$ of \eqref{eq:H_fixed_equation_hetero} is the root of the following equation $\sqrt{H-p}=g(H).$

\section{Proof of Theorems in Section~\ref{sec:smfcg}}\label{app:smfcg_proof}
\subsection{Proof of Theorem~\ref{thm:r_decreasing_p}}\label{app:r_decreasing_p}
The proof proceeds in three steps.

\textbf{Step 1: Individual demand is decreasing in $p$.}

Fix a type $t\in \calT$ and a congestion level $R\ge 0$.
Recall that the user cost is
\begin{align*}
C\bigl(t,(a,N),R,p\bigr)
=E_a(t,N)+R_a N \bigl(p+h(R)\bigr),
\end{align*}
where we explicitly write out $p$ as a variable of $C$.
Define the total resource consumption $r=R_aN$ and the induced statistical error function
\begin{align*}
g_t(r)=\min_{a\in\{\sft,\icl\}} E_a\!\left(t,\frac{r}{R_a}\right).
\end{align*}
Then the type-$t$ optimization problem is equivalent to
\begin{align*}
\min_{r\ge 0}\; \bigl\{ g_t(r)+\bar h(R,p)\, r \bigr\},\text{ where }
\barh(R,p)=p+h(R).
\end{align*}
Let $D(t,R,p)$ denote the set of minimizers. Take any $p_1<p_2$, and let $r_1\in D(t,R,p_1)$, $r_2\in D(t,R,p_2)$.
Optimality implies
\begin{align*}
g_t(r_1)+\bar h(R,p_1) r_1
\le g_t(r_2)+\bar h(R,p_1) r_2,
\end{align*}
\begin{align*}
g_t(r_2)+\bar h(R,p_2) r_2
\le g_t(r_1)+\bar h(R,p_2) r_1.
\end{align*}
Adding the two inequalities and cancelling $g_t(\cdot)$ yields
\begin{align*}
\bigl(\bar h(R,p_2)-\bar h(R,p_1)\bigr)(r_2-r_1)\le 0.
\end{align*}
Since $\bar h(R,p_2)>\bar h(R,p_1)$, it follows that $r_2\le r_1$.
Because the choice of $r_1,r_2$ was arbitrary,
\begin{align*}
\sup D(t,R,p_2)\le \inf D(t,R,p_1).
\tag{A.1}
\label{eq:D_monotone_p}
\end{align*}

\textbf{Step 2: Aggregate best response is decreasing in $p$.}

Let $\hatD(t,R,p)=\operatorname{co} D(t,R,p)$ be the convexified demand correspondence,
and define the aggregate demand correspondence via the Aumann integral
\begin{align*}
\bar D(R,p)=\int_T \hatD(t,R,p)\,T(dt).
\end{align*}
Convexification preserves the ordering of extrema, and \eqref{eq:D_monotone_p} implies
\begin{align*}
\sup \bar D(R,p_2)\le \inf \bar D(R,p_1)
\qquad \forall R\ge 0.
\tag{A.2}
\label{eq:aggregate_monotone_p}
\end{align*}

\textbf{Step 3: Comparison of fixed points.}

By the equilibrium definition, we have that
\begin{align*}
R^*(p)\in \bar D(R^*(p),p).
\end{align*}
Assume for contradiction that $p_1<p_2$ but $R^*(p_2)>R^*(p_1)$.
Using the monotonicity of $\bar D(\cdot,p_2)$ in $R$ (Lemma~\ref{lem:strong_mono}) and
\eqref{eq:aggregate_monotone_p}, we obtain
\begin{align*}
R^*(p_2)
\leq  \sup\bar D(R^*(p_2),p_2)
\leq\inf \bar D(R^*(p_1),p_2)
\leq \inf \bar D(R^*(p_1),p_1)
\leq R^*(p_1),
\end{align*}
which contradicts $R^*(p_2)>R^*(p_1)$.
Therefore,
\begin{align*}
R^*(p_2)\le R^*(p_1).
\end{align*}

We conclude the proof of Theorem~\ref{thm:r_decreasing_p}.

\subsection{Proof of Theorem~\ref{thm:sft_better}}\label{app:sft_better}
We first define several quantities. For $H>0$, define the optimal number of sample for each algorithm as
\begin{align*}
    N_a(t,H)=\argmin_{N\geq 0}\big\{E_a(t,N)+R_a H N\big\}.
\end{align*}
For each $(t,R)$ define the minimal best-response demand
\[
d_{\full}(t,R)=\inf\{R_a\tilde N\given (a,\tilde N)\in BR(t,R)\}.
\]
Lemma~\ref{lem:strong_mono} shows that the map $R\mapsto d_{\full}(t,R)$ is decreasing for each $t$. We also define the analogy for \ac{icl} as
\begin{align*}
    d_{\icl}(t,R) = R_\icl\cdot N_\icl\big(t,p+h(R)\big).
\end{align*}
In the following, since we fix the value $p$ through the whole proof, we will omit the dependency on $p$ in the notations.

\textbf{Step 1: Comparison of $d_{\full}$ and $d_{\icl}$.}

To compare $d_{\full}$ and $d_{\icl}$, we present the following proposition.
\begin{proposition}\label{prop:sft_icl_mono}
    Under Assumption~\ref{assump:large_large}, for any $t\in\calT,H>0$, we have that $R_{\sft}N_\sft(t,H)> R_{\icl}N_\icl(t,H)$.
\end{proposition}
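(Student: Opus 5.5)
The plan is to compare the two optimal demands in a common "resource" coordinate, and to do it through first-order conditions. Fix $t$ and $H>0$, and write $s=\tilN^{\alpha}$ and $x=R_a\tilN$. Define $e_a(x)=E_a(t,x/R_a)$, using the exact error formulas \eqref{eq:sft_N}. Then $R_aN_a(t,H)$ is the minimizer of $e_a(x)+Hx$ over $x\ge 0$.

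\textbf{Setup.} First I will check that each $e_a$ is strictly convex and decreasing in $x$, as already used in Lemma~\ref{lem:integrable} via Proposition~\ref{prop:cvx_func}. Then $-e_a'$ is strictly decreasing, and the minimizer is characterized as follows.
\begin{itemize}
\item For \ac{sft}, $E_{\sft}\to\infty$ as $\tilN\downarrow 0$, so the minimizer $x_{\sft}>0$ is interior and satisfies $-e_{\sft}'(x_{\sft})=H$.
\item For \ac{icl}, either $x_{\icl}=0$, or $-e_{\icl}'(x_{\icl})=H$.
\end{itemize}
If $x_{\icl}=0$ the claim $R_{\sft}N_{\sft}(t,H)>R_{\icl}N_{\icl}(t,H)$ is immediate, since $x_{\sft}>0$. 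Otherwise it suffices to prove the pointwise domination
\begin{align*}
-e_{\sft}'(x)>-e_{\icl}'(x)\quad\text{for all }x>0.
\end{align*}
Given this, evaluating at $x=x_{\icl}$ gives $-e_{\sft}'(x_{\icl})>H=-e_{\sft}'(x_{\sft})$. Monotonicity of $-e_{\sft}'$ then forces $x_{\sft}>x_{\icl}$.

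\textbf{Lower bound on the \ac{sft} marginal value.} Both terms of $E_{\sft}$ are decreasing in $s$, so I can drop the covered-subspace term and keep only the null-space term $2(d-r)\tilsigma^2 s^{-1}$. This gives
\begin{align*}
-e_{\sft}'(x)\ \ge\ \frac{2\alpha(d-r)\tilsigma^2}{R_{\sft}}\Big(\frac{R_{\sft}}{x}\Big)^{\alpha+1}.
\end{align*}
The inequality is strict because the dropped term has a strictly negative derivative, as $\zeta>0$.

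\textbf{Upper bound on the \ac{icl} marginal value.} Write the \ac{icl} error as $(d-r)\tau+F(s)$, where
\begin{align*}
F(s)=\frac{r\tilsigma^2\big(2\bar\sigma^4 s+\tilsigma^2\pi^2\zeta\big)}{(\bar\sigma^2 s+\tilsigma^2\pi)^2}.
\end{align*}
I will aim for the bound
\begin{align*}
|F'(s)|\le \frac{2r\beta\tilsigma^2}{s^2},\qquad \beta=1+\frac{\pi(\tau+m^2)}{\bar\sigma^2}.
\end{align*}
To obtain it, substitute $u=\bar\sigma^2 s$ and $c=\tilsigma^2\pi$, and differentiate. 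The identity $\pi\zeta/\bar\sigma^2=\beta+1$ converts the numerator into a combination of $u$ and $c$ with coefficients involving $\beta$. I then multiply through by $s^2=u^2/\bar\sigma^4$ and show the resulting rational function of $u/c$ is bounded by $2r\beta\tilsigma^2$. Given this bound, the chain rule with $s=(x/R_{\icl})^{\alpha}$ yields
\begin{align*}
-e_{\icl}'(x)\le \frac{2\alpha r\beta\tilsigma^2}{R_{\icl}}\Big(\frac{R_{\icl}}{x}\Big)^{\alpha+1}.
\end{align*}

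\textbf{Conclusion.} Comparing the two bounds, the domination $-e_{\sft}'(x)>-e_{\icl}'(x)$ follows from $(d-r)R_{\sft}^{\alpha}\ge r\beta(t) R_{\icl}^{\alpha}$. This is exactly Assumption~\ref{assump:large_large} raised to the power $\alpha$. The powers of $x$ match on both sides, which is why the assumption carries the exponent $1/\alpha$.

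\textbf{Main obstacle.} The main obstacle is the uniform derivative estimate on $F$ with precisely the constant $2r\beta\tilsigma^2$. If the sharp constant fails to come out of the direct computation, my fallback is to verify the inequality separately in the two regimes $u\ge c$ and $u<c$. If needed, I would also exploit the slack provided by strictness from the dropped covered-subspace term of $E_{\sft}$.
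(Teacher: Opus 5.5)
Your proposal is correct and follows essentially the same argument as the paper's proof. Both use the same lower bound $2(d-r)\tilsigma^2/s^2$ on the SFT marginal error reduction from the null-space term. Both use the same change of variables $y=\bar\sigma^2 s/(\tilsigma^2\pi)$, giving $s^2|F'(s)| = 2r\tilsigma^2\, y^2(y+\beta)/(1+y)^3 < 2r\tilsigma^2\beta$, since $\frac{y+\beta}{1+y}\le\beta$ for $\beta\ge 1$. So the sharp constant you flag as the main obstacle comes out directly, and both proofs then reduce to Assumption~\ref{assump:large_large}.

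The only cosmetic difference is the last step. The paper inverts the two first-order conditions into explicit power bounds on $N_{\sft}$ and $N_{\icl}$, which needs only the first-order conditions. You instead compare marginal values pointwise in the resource coordinate and use convexity of $e_{\sft}$; that convexity holds, so your version also goes through.
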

The proof is in Appendix~\ref{app:sft_icl_mono}. Thus, we have that
\begin{align*}
    d_{\full}(t,R) = \inf_{(a,\tilde N)\in BR(t,R)} R_a\tilde N \geq R_{\icl}N_\icl(t,p+h(R))=d_{\icl}(t,R) \text{ for all }t\in\calT,R\geq 0.
\end{align*}

\textbf{Step 2: Aggregate maps and their monotonicity.}

Define the aggregate minimal-demand map in the full game and the aggregate ICL-demand map:
\[
F_{\full}(R):=\int_{\calT} d_{\full}(t,R)\,T(\rmd t),
\qquad
F_{\icl}(R):=\int_{\calT} d_{\mathrm{icl}}(t,R)\,T(\rmd t).
\]
These integrals are well-defined by measurability and nonnegativity. Since $d_{\full}(t,R)\geq d_{\icl}(t,R)$ pointwisely, we have that 
\begin{equation}
\label{eq:Fmin_ge_Ficl}
F_{\full}(R)\ \ge\ F_{\icl}(R),
\qquad \forall R\ge 0.
\end{equation}

\textbf{Step 3: Equilibria Comparison.}

By definition of $R_{\mathrm{icl}}^*$ as the equilibrium congestion level in the ICL-only game, it is the fixed point of the following equation
\begin{equation}
\label{eq:icl_fixedpoint}
R_{\mathrm{icl}}^*\ =\ F_{\icl}(R_{\mathrm{icl}}^*).
\end{equation}
The existence of such equilibrium can be similarly proved as Theorem~\ref{thm:exist_unique}. For conciseness, we omit it here. Combining \eqref{eq:Fmin_ge_Ficl} and \eqref{eq:icl_fixedpoint} yields
\begin{equation}
\label{eq:Fmin_at_Ricl}
F_{\full}(R_{\mathrm{icl}}^*)\ \ge\ R_{\mathrm{icl}}^*.
\end{equation}

Let $(\pi^*,R^*)$ be an equilibrium in the full game.
For $T$-a.e.\ $t$, $\pi_t^*$ is supported on $BR(t,R^*)$.
Therefore, for $T$-a.e.\ $t$,
\[
\mathbb E_{(a,\tilN)\sim\pi_t^*}[R_a\tilde N]\ \ge\ \inf_{(a,\tilde N)\in BR(t,R^*)} R_a\tilde N
\ =\ d_{\full}(t,R^*).
\]
Integrating over $t$ and using consistency ($R^*$ equals aggregate expected demand) gives
\begin{equation}
\label{eq:R_ge_Fmin}
R^*\ =\ \int_{\calT} \mathbb E_{\pi_t^*}[R_a\tilde N]\,T(\rmd t)\ \ge\ \int_{\calT} d_{\full}(t,R^*)\,T(\rmd t)\ =\ F_{\full}(R^*).
\end{equation}

Suppose for contradiction that $R^*<R_{\mathrm{icl}}^*$.
Since $F_{\full}$ is decreasing,
\[
F_{\full}(R^*)\ \ge\ F_{\full}(R_{\mathrm{icl}}^*)\ \ge\ R_{\mathrm{icl}}^* \ >\ R^*,
\]
where the second inequality is \eqref{eq:Fmin_at_Ricl}.
This contradicts \eqref{eq:R_ge_Fmin}, which states $R^*\ge F_{\full}(R^*)$.
Hence $R^*\ge R_{\mathrm{icl}}^*$. Thus, we conclude the proof of Theorem~\ref{thm:sft_better}.

\section{Proofs of Propositions and Corollaries}\label{app:prop_proof}

\subsection{Proof of Corollary~\ref{coro:pre_icl}}\label{app:coro_pre_icl}
Define $\bar{\sigma}^2 = \sigma^{2}+c^2$. We directly calculate the difference between the errors of pretrained models and \ac{icl} as follows.
\begin{align*}
    &E_{\pre}-E_{\icl} \\
    &\quad = \sum_{i=1}^{r} (\bar{\sigma}^2 s_i+\tilde{\sigma}^2 \pi_i)^{-2} \Big\{\big[\tau_i + 2\bar{\sigma}^2 \pi_i^{-1}+ (v_i^{\top}\mu_{\theta})^2\big]\cdot (\bar{\sigma}^2 s_i+ \tilde{\sigma}^2 \pi_i)^2 \nonumber\\ 
    &\quad\qquad\qquad\qquad\qquad\qquad\qquad\qquad\quad - \tilde{\sigma}^4\pi_i^{2}\tau_i - 2\tilde{\sigma}^4\bar{\sigma}^2\pi_i - \tilde{\sigma}^4\pi_i^2(v_i^{\top}\mu_{\theta})^2 - 2 \tilde{\sigma}^2\bar{\sigma}^4 s_i  \Big\}\nonumber\\
    &\quad =  \sum_{i=1}^{r} (\bar{\sigma}^2 s_i+\tilde{\sigma}^2 \pi_i)^{-2} \Big\{\big[\tau_i + (v_i^{\top}\mu_{\theta})^2\big]\cdot (\bar{\sigma}^4 s_i^2 + 2\bar{\sigma}^2\tilde{\sigma}^2 s_i\pi_i)+ 2\bar{\sigma}^6 s_i^{2} \pi_i^{-1} + 2 \tilde{\sigma}^2\bar{\sigma}^4 s_i \Big\}.
\end{align*}
The right-hand side of this equality is non-negative. Thus, we have that $E_\pre\geq E_\icl$. In the non-degenerate case, the right-hand side of this equality is positive.

\subsection{Proof of Proposition~\ref{prop:comp}}\label{app:prop_comp}

We first calculate the optimal value $\lambda^*$ for \ac{sft} and the corresponding error $E_{\sft}^{\lambda^*}$. With Assumption~\ref{assump:homo}, Theorem~\ref{thm:err_analysis} shows that the \ac{sft} error with a fixed regularization $\lambda>0$ is
\begin{align}
\label{eq:SFT-master}
E_{\sft}^{\lambda}
=r\cdot
\frac{2\,\tilde{\sigma}^2\,\bar{\sigma}^4\,s
+\lambda^2\,\tilde{\sigma}^4\,\pi^2\big(\tau+2\bar{\sigma}^2/\pi+m^2\big)}
{\big(\bar{\sigma}^2 s+\lambda\,\tilde{\sigma}^2\pi\big)^2}
+2(d-r)\cdot\frac{\tilde{\sigma}^2}{s}.
\end{align}
To calculate the minimal value of $E_{\sft}^{\lambda}$, we define following quantities.
\begin{align*}
A = 2r\,\tilde{\sigma}^2\,\bar{\sigma}^4\,s,\quad 
B = \tilde{\sigma}^4\,\pi^2\,r\big(\tau+2\bar{\sigma}^2/\pi+m^2\big),\quad  C = \bar{\sigma}^2 s,\quad D=\tilde{\sigma}^2\pi.
\end{align*}
Then the first term of the error in \eqref{eq:SFT-master} equals
\begin{align}
\label{eq:f-lambda}
F(\lambda)=\frac{A+B\lambda^2}{(C+D\lambda)^2}.
\end{align}
Thus, we have that
\begin{align*}
E_{\sft}^{\lambda}=F(\lambda)+2(d-r)\,\frac{\tilde{\sigma}^2}{s}.
\end{align*}
Differentiating \eqref{eq:f-lambda}, we get
\begin{align*}
F'(\lambda)
&=\frac{(2B\lambda)(C+D\lambda)^2-(A+B\lambda^2)\cdot 2D(C+D\lambda)}{(C+D\lambda)^4}\\
&=\frac{2(C+D\lambda)\big[B\lambda(C+D\lambda)-D(A+B\lambda^2)\big]}{(C+D\lambda)^4}.
\end{align*}
Since $C+D\lambda>0$, critical points satisfy
\[
B\lambda(C+D\lambda)-D(A+B\lambda^2)=BC\lambda-DA=0,
\]
hence the unique minimizer on $(0,\infty)$ is
\begin{align*}
\lambda^{*}=\frac{A\cdot D}{B\cdot C}
=\frac{\tilde{\sigma}^2\pi\cdot2r\,\tilde{\sigma}^2\bar{\sigma}^4 s}
{\tilde{\sigma}^4\pi^2 r\big(\tau+2\bar{\sigma}^2/\pi+m^2\big)\cdot\bar{\sigma}^2 s}
=\frac{2\,\bar{\sigma}^2}{\pi\big(\tau+m^2\big)+2\,\bar{\sigma}^2}.
\end{align*}

\textbf{Minimal row-space value.} For $F(\lambda)=(A+B\lambda^2)/(C+D\lambda)^2$ with $A,B,C,D>0$, the minimal value at $\lambda^{*}$ is
\begin{align}
\label{eq:Fmin-template}
F(\lambda^{*})=\frac{A\,B}{A D^2 + B C^2}.
\end{align}
Substituting $A,B,C,D$ and simplifying, we have that
\begin{align*}
F(\lambda^{*})
&=\frac{2r\,\tilde{\sigma}^2\,\Big(\tau+2\bar{\sigma}^2\pi^{-1}+m^2\Big)}
{s\,\Big(\tau+2\bar{\sigma}^2\pi^{-1}+m^2\Big)+2\,\tilde{\sigma}^2}.
\end{align*}

Thus, the optimal \ac{sft} error is 
\begin{align}
E_{\mathrm{sft}}^{\lambda^{*}} =F(\lambda^{*})+2(d-r)\frac{\tilde{\sigma}^2}{s}
=\frac{2r\,\tilde{\sigma}^2\,\Big(\tau+2\bar{\sigma}^2\pi^{-1}+m^2\Big)}
{s\,\Big(\tau+2\bar{\sigma}^2\pi^{-1}+m^2\Big)+2\,\tilde{\sigma}^2}
+2(d-r)\frac{\tilde{\sigma}^2}{s}.\label{eq:sft_opt}
\end{align}
We first derive the error of \ac{icl} under Assumption~\ref{assump:homo}. According to Theorem~\ref{thm:err_analysis}, \ac{icl}’s row-space per-direction error in the first $r$ subspaces simplifies to
\begin{align*}
\frac{\tilde{\sigma}^4 \pi_i^2\big(\tau_i+\bar{\sigma}^2\pi_i^{-1}+(v_i^\top \mu_\theta)^2\big)+\tilde{\sigma}^2\bar{\sigma}^4s_i+\tilde{\sigma}^2\bar{\sigma}^2(\bar{\sigma}^2s_i+\tilde{\sigma}^2\pi_i)}{(\bar{\sigma}^2s_i+\tilde{\sigma}^2\pi_i)^2}=\frac{\tilsigma^4\pi^2\big(\tau+\bar\sigma^2/\pi+m^2\big)+2\tilsigma^2\bar\sigma^4 s+\bar\sigma^2\tilsigma^4\pi}
{(\bar\sigma^2 s+\tilsigma^2\pi)^2}.
\end{align*}
To further simplify the expression, we define
\begin{align*}
    \zeta=\tau+\frac{2\bar\sigma^2}{\pi}+m^2.
\end{align*}
Observe that $\tilsigma^4\pi^2(\tau+\bar\sigma^2/\pi+m^2)
=\tilsigma^2\big(\tilsigma^2\pi^2 \zeta- \bar\sigma^2\tilsigma^2\pi\big)$, so the \ac{icl} error becomes
\begin{align*}
 E_{\icl}=r\cdot \frac{\tilsigma^2\big(2\bar\sigma^4 s+\tilsigma^2\pi^2 \zeta\big)}{(\bar\sigma^2 s+\tilsigma^2\pi)^2}+(d-r)\tau.
\end{align*}
According to \eqref{eq:sft_opt}, the $\ac{sft}$ error can be simplified as
\begin{align*}
     E_{\sft}^{*} = r\cdot \frac{2\tilsigma^2 \zeta}{s\zeta+2\tilsigma^2} + (d-r)\frac{2\tilsigma^2}{s}.
\end{align*}

The difference between the \ac{sft} and \ac{icl} errors is
\begin{align*}
\Delta=E_{\mathrm{sft}}^{\lambda^*}-E_{\mathrm{icl}}
= r\Big(\frac{2\tilsigma^2 \zeta}{s\zeta+2\tilsigma^2}-\frac{\tilsigma^2\big(2\bar\sigma^4 s+\tilsigma^2\pi^2 \zeta\big)}{(\bar\sigma^2 s+\tilsigma^2\pi)^2}\Big)
+(d-r)\Big(\frac{2\tilsigma^2}{s}-\tau\Big) = r\cdot \delta +(d-r)\Big(\frac{2\tilsigma^2}{s}-\tau\Big).
\end{align*}
The term $\delta$ can be simplified as
\begin{align*}
\delta&=\frac{2\tilsigma^2 \zeta\,(\bar\sigma^2 s+\tilsigma^2\pi)^2-(s\zeta+2\tilsigma^2)\tilsigma^2(2\bar\sigma^4 s+\tilsigma^2\pi^2 \zeta)}{(s\zeta+2\tilsigma^2)\,(\bar\sigma^2 s+\tilsigma^2\pi)^2}\\
&=-\frac{\pi^{3}\,\tilsigma^4\,(\tau+m^2)^{2}\,s}
{\big(\bar\sigma^2 s+\pi\tilsigma^2\big)^{2}\,\big(2\bar\sigma^2 s+\pi s(\tau+m^2)+2\pi\tilsigma^2\big)}.
\end{align*}
The fact that $\delta<0$ establishes that \emph{SFT is strictly better than ICL within the pretraining subspace}. To further study the comparison between \ac{sft} and \ac{icl}, we define
\begin{align*}
    \kappa=\dfrac{s\tau}{2\tilsigma^2}, \text{ and }R=\frac{r}{d-r}
\end{align*}
Then the error difference is expressed as
\begin{align*}
\Delta= (d-r)\Big(R\cdot \delta+\frac{2\tilsigma^2}{s}-\tau\Big).
\end{align*}
Hence $\Delta\lesseqqgtr 0$ if and only if
\begin{align*}
R\ \gtreqqless\ \frac{\tau-2\tilsigma^2s^{-1}}{\delta}
=-\frac{\tau-2\tilsigma^2s^{-1}}{\,|\delta|\,}.
\end{align*}
Substitute $s=2\tilsigma^2\tau^{-1}\kappa$ into $\delta$ and simplify to obtain
\begin{align*}
R_{\crit}
=\frac{(1-\kappa)\big(2\bar\sigma^2\kappa+\pi\tau\big)^2\big(2\bar\sigma^2\kappa+\pi\kappa(\tau+m^2)+\pi\tau\big)}
{\pi^{3}\,\kappa^2\,\tau\,(\tau+m^2)^{2}}.
\end{align*}
Thus, we reach the following conclusions
\begin{itemize}
\item If $\kappa\ge 1$ then \ac{sft} has less error than \ac{icl}.
\item If $0<\kappa<1$, then the sign of $\Delta$ is decided by $R$ relative to $R_{\crit}$.
\end{itemize}
We conclude the proof of Proposition~\ref{prop:comp}.

\subsection{Proof of Proposition~\ref{prop:pretrain_influence}}\label{app:pretrain_influence}
\begin{proposition}
    Under models \eqref{eq:simplified} with $\alpha=1$, $\barh(R,p)=p+R^2$, and $R_{\sft}>R_{\icl}$, the following holds.
    \begin{itemize}
        \item[1.] The congestion-level $R^{\ast}$ decreasing in $\pi$.
        \item[2.] Assume that $2\tau > \zeta$. If the following condition holds 
        \begin{align*}
            p< \frac{r^*\zeta^2}{2R_{\icl}\tilsigma^2},\text{ and } r^*> 2 \tilsigma^2 R_{\icl}\tau^{-4}\zeta^{-2}\big[4\tilsigma^4R_{\icl}^2(\zeta-\tau)^2+\tau^2W^2p\big],
        \end{align*}
        where
        \begin{align*}
            r^*=\bigg(\frac{\sqrt D-\sqrt{D-(1-E)F}}{1-E}\bigg)^2\text{ for } D= R_{\sft}d/R_{\icl}, E=\tau/\zeta, \text{ and }F=d,
        \end{align*}
        then $R^{\ast}$ is increasing in $r$. Otherwise, there exist thresholds $0< r^{*} < r^{**} < d$ (with $r^{*}$ as defined above) such that the equilibrium congestion $R^{*}(r)$ is increasing in $r$ on $[0,r^{*}] \cup [r^{**},d]$ and decreasing in $r$ on $[r^{*},r^{**}]$.
    \end{itemize}
\end{proposition}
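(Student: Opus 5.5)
Both parts reduce to the one-dimensional characterization of Theorem~\ref{thm:homo}: $H^{\ast}=\max\{H_{\icl}^{\ast},\min\{H_{\sep}^{\ast},H_{\sft}^{\ast}\}\}$ and $R^{\ast}=\sqrt{H^{\ast}-p}$. We use the closed forms of Proposition~\ref{prop:opt_alg}, with $N_{\sft}(H)=\sqrt{2\tilsigma^2 d/(R_{\sft}H)}$ and $N_{\icl}(H)=\max\{0,\sqrt{2\tilsigma^2 r/(R_{\icl}H)}-2\tilsigma^2/\zeta\}$. Since $R^{\ast}$ is a monotone function of $H^{\ast}$, it suffices to track how each of the three ingredients $H_{\icl}^{\ast},H_{\sft}^{\ast},H_{\sep}^{\ast}$ moves with the parameter, and which of them is active.

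For part 1, $\pi$ enters only through $\zeta=\tau+2\bar\sigma^2/\pi+m^2$, which is decreasing in $\pi$. We take the three ingredients in turn.
\begin{itemize}
\item $H_{\sft}^{\ast}$ does not depend on $\pi$.
\item $R_{\icl}N_{\icl}(H)$ is increasing in $\zeta$, hence decreasing in $\pi$. The fixed point of the strictly decreasing map $f_{\icl}(H)=p+(R_{\icl}N_{\icl}(H))^2-H$ therefore moves down, so $H_{\icl}^{\ast}$ is decreasing in $\pi$.
\item $\Phi_{\icl}(H)$ is increasing in $\zeta$: its derivative in $\zeta$ via the envelope theorem is $2\tilsigma^2R_{\icl}H/\zeta^2\ge 0$, and in the saturated branch it equals $r\zeta$. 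Hence $\psi=\Phi_{\sft}-\Phi_{\icl}$ is pointwise decreasing in $\zeta$, i.e.\ increasing in $\pi$. Since $\psi$ is strictly increasing in $H$, its root $H_{\sep}^{\ast}$ decreases in $\pi$.
\end{itemize}
The clipping operator $\max\{a,\min\{b,c\}\}$ is monotone in each argument. So $H^{\ast}$, and hence $R^{\ast}$, is decreasing in $\pi$.

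For part 2 we repeat the bookkeeping in $r$, where the effects no longer align.
\begin{itemize}
\item $H_{\sft}^{\ast}$ does not depend on $r$.
\item $R_{\icl}N_{\icl}$ is increasing in $r$, so $H_{\icl}^{\ast}$ is increasing in $r$.
\item $\Phi_{\icl}(H)=(d-r)\tau+2\sqrt{2r\tilsigma^2R_{\icl}H}-2\tilsigma^2R_{\icl}H/\zeta$ on the active branch, and $(d-r)\tau+r\zeta$ on the saturated one. Its $r$-derivative is $-\tau+\sqrt{2\tilsigma^2R_{\icl}H/r}$ on the active branch and $\zeta-\tau$ on the saturated one.
\end{itemize}
Under $2\tau>\zeta$ these derivatives change sign, so $H_{\sep}^{\ast}(r)$ is non-monotone in $r$. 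We then solve $\psi(H)=0$ explicitly: it is a quadratic in $\sqrt H$ on the active branch. This gives $H_{\sep}^{\ast}$ as an explicit function of $\sqrt r$, and with $D=R_{\sft}d/R_{\icl}$, $E=\tau/\zeta$, $F=d$ the turning point becomes the root $\sqrt{r^{\ast}}=(\sqrt D-\sqrt{D-(1-E)F})/(1-E)$.

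The global shape follows by tracking which branch of the clip is active as $r$ runs over $[0,d]$. For small $r$, $N_{\icl}=0$ and the equilibrium is pinned at $H_{\sep}^{\ast}$ or at $H_{\sft}^{\ast}$, and $H_{\sep}^{\ast}$ rises with $r$. On $[r^{\ast},r^{\ast\ast}]$, $H_{\sep}^{\ast}$ falls. Beyond $r^{\ast\ast}$ the clip switches to $H_{\icl}^{\ast}$, which increases. Here $r^{\ast\ast}$ is the point where $H_{\sep}^{\ast}(r)$ meets $H_{\icl}^{\ast}(r)$, and it exists by the intermediate value theorem because $H_{\icl}^{\ast}$ is increasing while $H_{\sep}^{\ast}$ is decreasing there.

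The main obstacle is deciding which regime holds, i.e.\ whether the decreasing stretch of $H_{\sep}^{\ast}$ is ever the active argument of the clip. The stated inequalities on $p$ and $r^{\ast}$ are exactly the conditions under which, at $r=r^{\ast}$, either $H_{\icl}^{\ast}\ge H_{\sep}^{\ast}$ or the ICL branch is saturated. In that case the clip never selects the decreasing part and $R^{\ast}$ is monotone increasing. We would verify this by substituting $r^{\ast}$ into $f_{\icl}$ and checking its sign against $H_{\sep}^{\ast}(r^{\ast})$. The "otherwise" case is the negation of that check. This comparison is algebra-heavy, and the delicate points are the branch switch at $R_{\icl}H=r\zeta^2/(2\tilsigma^2)$ and confirming that only one crossing $r^{\ast\ast}$ occurs. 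For the latter we would use monotonicity of $H_{\sep}^{\ast}-H_{\icl}^{\ast}$ on $[r^{\ast},d]$.
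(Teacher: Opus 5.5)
Your reduction to the clip $H^{*}=\max\{H_{\icl}^{*},\min\{H_{\sep}^{*},H_{\sft}^{*}\}\}$ and your part 1 are correct and follow the paper. Your envelope argument that $\psi$ is pointwise increasing in $\pi$ is a tidier substitute for the paper's closed form plus Proposition~\ref{prop:mono1}. The architecture of part 2 (explicit quadratic root for $H_{\sep}^{*}$, peak at $r^{*}$ via Proposition~\ref{prop:monotone_g}, comparison at $r^{*}$, intermediate value theorem for $r^{**}$) also matches.

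The gap is your final identification. The displayed inequalities do not say that $H_{\icl}^{*}\ge H_{\sep}^{*}$ at $r^{*}$ or that ICL is saturated; they say the opposite. Run the check you defer. At the peak, Proposition~\ref{prop:monotone_g} gives $g(r^{*})=E\sqrt{r^{*}}$, so $H_{\sep}^{*}(r^{*})=\tau^{2}r^{*}/(2\tilsigma^{2}R_{\icl})$. This is exactly where your active-branch derivative $-\tau+\sqrt{2\tilsigma^{2}R_{\icl}H/r}$ vanishes. Hence $R_{\icl}N_{\icl}(H_{\sep}^{*}(r^{*}))=2\tilsigma^{2}R_{\icl}(\zeta-\tau)/(\tau\zeta)>0$, so ICL is not saturated there, and
\[
f_{\icl}\big(H_{\sep}^{*}(r^{*})\big)=p+\frac{4\tilsigma^{4}R_{\icl}^{2}(\zeta-\tau)^{2}}{\tau^{2}\zeta^{2}}-\frac{\tau^{2}r^{*}}{2\tilsigma^{2}R_{\icl}}.
\]
This is negative if and only if $r^{*}>2\tilsigma^{2}R_{\icl}\tau^{-4}\zeta^{-2}\big[4\tilsigma^{4}R_{\icl}^{2}(\zeta-\tau)^{2}+\tau^{2}\zeta^{2}p\big]$. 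That is the displayed second inequality with $W=\zeta$, and it implies the first one because $\zeta>\tau$. Since $f_{\icl}$ is strictly decreasing with root $H_{\icl}^{*}$, the displayed conditions are equivalent to $H_{\icl}^{*}(r^{*})<H_{\sep}^{*}(r^{*})$. Because $H_{\sft}^{*}>H_{\icl}^{*}$ always, the clip then follows $\min\{H_{\sep}^{*},H_{\sft}^{*}\}$, which is non-increasing after $r^{*}$. So under these conditions you get increase--decrease--increase, not monotonicity.

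In other words, the two conclusions in the displayed statement are swapped, and your plan cannot prove it as written. The step you postpone (substituting $r^{*}$ into $f_{\icl}$) yields the opposite sign. The true version is the one in the main-text Proposition~\ref{prop:pretrain_influence} (non-monotone when $p$ and $R_{\icl}$ are small). It is also what the paper's appendix argument actually establishes, by showing that $H_{\sep}^{*}>H_{\icl}^{*}$ at $r^{*}$ if and only if these inequalities hold, and concluding non-monotonicity in that case. Concretely: under the displayed inequalities, $R^{*}$ increases on $[0,r^{*}]\cup[r^{**},d]$ and decreases on $[r^{*},r^{**}]$. When they fail, for $r\ge r^{*}$ one has $H_{\sep}^{*}(r)\le H_{\sep}^{*}(r^{*})\le H_{\icl}^{*}(r^{*})\le H_{\icl}^{*}(r)$, so $H^{*}=H_{\icl}^{*}$ there and $R^{*}$ is increasing on all of $[0,d]$. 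Redirect your last step to this corrected statement.

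A smaller point: $2\tau>\zeta$ is not what makes $\partial_{r}\Phi_{\icl}$ change sign; on the saturated branch that derivative is $\zeta-\tau>0$ regardless. The paper uses it, through Proposition~\ref{prop:mono2}, to show that $H_{\sep}^{*}$ switches from the saturated-branch formula to the quadratic-root formula exactly once as $r$ grows. You need that single switch before invoking unimodality of the second formula on $[r^{*},d]$.
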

Note that the congestion level at the equilibrium is $R^{*} = \sqrt{H^{*}-p}$. We only need to prove the monotonicity of $H^{*}$, which is a threholded version of $H_{\sep}^{*}$ as
\begin{align*}
    H^{*} = \max\big\{H_{\icl}^{*},\min\{H_{\sep}^{*},H_{\sft}^{*}\}\big\}.
\end{align*}
In the following, we will prove the monotonicity of $H_{\icl}^{*}$, $H_{\sft}^{*}$, and $H_{\sep}^{*}$ in $r$ and $\pi$.

We first derive the definitions or expressions of them in the following proposition.
\begin{proposition}\label{prop:H_fixed_points}
    Under models \eqref{eq:simplified} with $\alpha=1$ and $\barh(R,p)=p+R^2$, we have the following results.
    \begin{itemize}
        \item For \ac{sft}, the root $H_{\sft}^{*}$ is 
        \begin{align*}
            H_{\sft}^*=\frac{p+\sqrt{p^2+8\tilsigma^2 d\,R_{\sft}}}{2}.
        \end{align*}
        \item For \ac{icl}, $H_{\icl}^{*}$ is the root of the following equation
        \begin{align}
            \max\left\{0,\sqrt{\frac{2\tilsigma^2 rR_{\icl}}{H}}-\frac{2\tilsigma^2 R_{\icl}}{\zeta}\right\} = \sqrt{H-p}.\label{eq:H_icl}
        \end{align}
        Specifically, when 
        \begin{align*}
           p\geq \barH =  \frac{r\zeta^2}{2R_{\icl}\tilsigma^2},
        \end{align*}
        the root is $H_{\icl}^{*}=p$.
        \item For $H_{\sep}^{*}$, we have that
        \begin{align}\label{eq:H_sep}
            H_{\sep}^*
            &= 
            \begin{cases}
            \displaystyle \frac{\left((d-r)\tau+r\zeta\right)^2}{8\tilsigma^2 d\,R_{\sft}}, &\text{if }\frac{R_\sft}{R_\icl}\leq \frac{\big((d-r)\tau+r\zeta\big)^2}{4dr\zeta^2} ,\\[6pt]
            \displaystyle \left(\frac{-B+\sqrt{B^2+4C(d-r)\tau}}{2C}\right)^2, & \text{otherwise},
            \end{cases}
        \end{align}
        where
        \begin{align*}
        B =2\sqrt{2}\tilsigma\Big(\sqrt{dR_{\sft}}-\sqrt{rR_{\icl}}\Big),
        \qquad
        C =\frac{2\tilsigma^2 R_{\icl}}{\zeta}.
        \end{align*}
    \end{itemize}
\end{proposition}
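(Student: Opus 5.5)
The statement to prove is Proposition~\ref{prop:H_fixed_points}. I will derive all three items directly from the closed forms in Proposition~\ref{prop:opt_alg}, using $\barh(R,p)=p+R^2$. By Theorem~\ref{thm:homo}, each within-algorithm fixed point solves $H=p+(R_aN_a(H))^2$, which is equivalent to $\sqrt{H-p}=R_aN_a(H)$ with $H\ge p$.

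\textbf{SFT.} Proposition~\ref{prop:opt_alg} gives $R_{\sft}N_{\sft}(H)=\sqrt{2\tilsigma^2dR_{\sft}/H}$. Squaring the fixed-point equation yields $H(H-p)=2\tilsigma^2dR_{\sft}$, that is, $H^2-pH-2\tilsigma^2dR_{\sft}=0$. Taking the positive root gives the stated $H_{\sft}^*$. It satisfies $H_{\sft}^*\ge p$, and uniqueness was already shown in Theorem~\ref{thm:homo}.

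\textbf{ICL.} The defining equation \eqref{eq:H_icl} is exactly $\sqrt{H-p}=R_{\icl}N_{\icl}(H)$ after substituting the $\max$ form of $N_{\icl}$. For the special case, the left side of \eqref{eq:H_icl} vanishes precisely when $R_{\icl}H\ge r\zeta^2/(2\tilsigma^2)$, i.e.\ $H\ge\barH$. If $p\ge\barH$, then $H=p$ makes both sides zero. By the strict monotonicity argument of Theorem~\ref{thm:homo}, this root is the unique one.

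\textbf{Separator.} I will solve $\psi(H)=0$ using the piecewise form of $\Phi_{\icl}$.

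\emph{Flat branch.} On $R_{\icl}H\ge r\zeta^2/(2\tilsigma^2)$, the equation is $2\sqrt{2\tilsigma^2dR_{\sft}H}=(d-r)\tau+r\zeta$. This gives $H=((d-r)\tau+r\zeta)^2/(8\tilsigma^2dR_{\sft})$. The candidate is valid exactly when it lies in this branch: $R_{\icl}((d-r)\tau+r\zeta)^2/(8\tilsigma^2dR_{\sft})\ge r\zeta^2/(2\tilsigma^2)$. This rearranges to $R_{\sft}/R_{\icl}\le((d-r)\tau+r\zeta)^2/(4dr\zeta^2)$, which is the first case of \eqref{eq:H_sep}.

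\emph{Curved branch.} Otherwise I substitute $u=\sqrt H$. Then $\psi=0$ becomes the quadratic $Cu^2+Bu-(d-r)\tau=0$, with $B$ and $C$ as defined. We have $B>0$ because $dR_{\sft}>rR_{\icl}$, and $C>0$, so exactly one root is positive. Squaring that root gives the second formula.

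\textbf{Main obstacle.} The delicate step is consistency of the case split. I must confirm that when the first inequality fails, the quadratic root really lies in the curved region $R_{\icl}H<\barH R_{\icl}/p\cdot p$, i.e.\ $R_{\icl}H<r\zeta^2/(2\tilsigma^2)$. For this I will use that $\psi$ is continuous and strictly increasing (Theorem~\ref{thm:homo}, part 3). Evaluate $\psi$ at the branch point $H_b=r\zeta^2/(2\tilsigma^2R_{\icl})$. The first condition is equivalent to $\psi(H_b)\le0$, which places the root in the flat branch. Conversely, $\psi(H_b)>0$ places the root below $H_b$, in the curved branch. This single sign comparison settles both cases at once.
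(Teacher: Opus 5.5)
Your proposal is correct and follows the paper's proof essentially step for step: substitute the closed forms from Proposition~\ref{prop:opt_alg} into $H=p+(R_aN_a(H))^2$, solve the resulting quadratic for \ac{sft}, observe that $H=p$ makes both sides of \eqref{eq:H_icl} vanish when $p\ge r\zeta^2/(2R_{\icl}\tilsigma^2)$, and solve $\psi(H)=0$ separately on the flat and curved branches of $\Phi_{\icl}$ using $u=\sqrt{H}$. Your extra check goes beyond the paper: the case condition is equivalent to $\psi\le 0$ at the kink $r\zeta^2/(2\tilsigma^2R_{\icl})$, and combined with strict monotonicity of $\psi$ this makes explicit the branch-consistency step that the paper leaves implicit.
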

The proof is in Appendix~\ref{app:H_fixed_points}.

Then we prove the monotonicity of them.  We start from the parameter $\pi$. Since $H_{\sft}^{*}$ is not influenced by $\pi$, which is a trivially monotonic function of $\pi$. For $H_{\icl}^{*}$, we first note that the parameter $\zeta$ depends on $\pi$ as $\zeta = \tau + 2\bar{\sigma}^2\pi^{-1} + m^2$. Thus, $\zeta^{-1}$ is an increasing function in $\pi$. Proposition~\ref{prop:H_fixed_points} shows that $H_{\icl}^{*}$ is the root of \eqref{eq:H_icl}, whose left-hand side is decreasing in $\pi$. In addition, the left-hand side of it is decreasing in $H$, while the right-hand side is increasing in $H$. Thus, when $\pi$ increases, the root $H_{\icl}^{*}$ accordingly decreases to balance both sides. For $H_{\sep}^{*}$, we note that when $R_\sft/R_\icl\leq \big((d-r)\tau+r\zeta\big)^2/(4dr\zeta^2)$, $H_{\sep}^{*}$ is decreasing in $\pi$. When this condition does not hold, Proposition~\ref{prop:mono1} in Appendix~\ref{app:mono} shows that the function
\[
f(x)=\frac{\sqrt{A+x}-\sqrt{A}}{x}
\]
is decreasing in $x$ when $A>0$ and $x>0$. Thus, $H_{\sep}^{*}$ is also decreasing in $\pi$. In summary, all of $H_{\icl}^{*}$, $H_{\sft}^{*}$, and $H_{\sep}^{*}$ is decreasing in $\pi$.
We then analyze the parameter $r$. We first note that the value of $H_{\sft}^{*}$ is independent of $r$. For $H_{\icl}^{*}$, since the left-hand side of \eqref{eq:H_icl} is increasing in $r$. Thus, $H_{\icl}^{*}$ increases with increasing $r$. For $H_{\sep}^{*}$, we first analyze when the following relationship is satisfied
\begin{align}
    \frac{R_\sft}{R_\icl}\leq \frac{\big((d-r)\tau+r\zeta\big)^2}{4dr\zeta^2}.\label{ieq:cond1}
\end{align}
Proposition~\ref{prop:mono2} in Appendix~\ref{app:mono} shows that the function
\begin{align*}
f(x)\;=\;\frac{(A-x+Bx)^2}{AB^2x},\qquad 0<x<A,\ \ A>0,\ \ B> 0.
\end{align*}
is strictly decreasing if $0<B<2$. In addition, it achieves its minimal value $f(A)=1$ at $x=A$. Set $A=d$ and $B=\zeta/\tau$. Since $\zeta\leq 2\tau$, the right-hand side of \eqref{ieq:cond1} is decreasing and achieves its minimal value $1/4$ when $d=r$. Since $R_{\sft}>R_{\icl}$, the expression of $H_{\sep}^{*}$ has one shift from the first expression in \eqref{eq:H_sep} to the second one. When \eqref{ieq:cond1} holds, $H_{\sep}^{*}$  is a increasing function of $r$ due to $\zeta>\tau$. When \eqref{ieq:cond1} does not hold, we note that Proposition~\ref{prop:monotone_g} shows that the function
\begin{align*}
g(r)=\sqrt{\bigl(\sqrt D-\sqrt r\bigr)^2+E(F-r)}-\bigl(\sqrt D-\sqrt r\bigr)
\end{align*}
 is strictly increasing on $(0,r^{*})$ and strictly decreasing on $(r^{*},F)$ when $D>F>0$, $E<1$, and $0<x<F$. The value of $r^{*}$ is given by
\begin{align*}
t^{*}=\frac{\sqrt D-\sqrt{D-(1-E)F}}{1-E}, \text{ and }r^{*}=(t^{*})^2.
\end{align*}
We set $D= R_{\sft}d/R_{\icl}$, $E=\tau/\zeta$, and $F=d$. To study the relationship between $r^*$ and the phase shift in \eqref{ieq:cond1}, it is easy to directly verify that
\begin{align*}
    \frac{R_\sft}{R_\icl}> \frac{\big((d-r^{*})\tau+r^{*}\zeta\big)^2}{4dr^{*}\zeta^2}.
\end{align*}
Thus, $H_{\sep}^{*}$ is increasing in $r$ for $r\in[0,r^*]$ and is decreasing in $r$ for $r\in[r^*,d]$. When $r=d$, $H_{\sep}^{*}=0$. Thus, at the point $r^*$, if $H_{\icl}^*\geq H_{\sep}^*$, then $H^*$ is increasing in $r$. If $H_{\icl}^*<H_{\sep}^*$, then $H^*$ is increasing in $r$ for $r\in[0,r^*]$ and $[r^{**},d]$ and deacreasing in $r$ for $[r^*,r^{**}]$. In the following, we only need to compare $H_{\icl}^*$ and $H_{\sep}^*$ at $r^*$. It is easy to verify that $H_{\sep}^*> H_{\icl}^*$ if and only if
\begin{align}
    p< \frac{r^*\zeta^2}{2R_{\icl}\tilsigma^2},\text{ and } \sqrt{\frac{2\tilsigma^2 r R_{\icl}}{H_{\sep}^*}}-\frac{2\tilsigma^2R_{\icl}}{\zeta}<\sqrt{H_{\sep}^*-p}.\label{eq:conds}
\end{align}
We note that the latter inequality in  \eqref{eq:conds} can be achieved if and only if 
\begin{align*}
    r^*> 2 \tilsigma^2 R_{\icl}\tau^{-4}\zeta^{-2}\big[4\tilsigma^4R_{\icl}^2(\zeta-\tau)^2+\tau^2W^2p\big].
\end{align*}
Thus, we conclude the proof of Proposition~\ref{prop:pretrain_influence}.

\subsection{Proof of Proposition~\ref{prop:task_sigma}}\label{app:task_sigma}
Proposition~\ref{prop:H_fixed_points} in the proof of Proposition~\ref{prop:pretrain_influence} shows that $H_{\sft}^{*}$ is increasing in $\tilsigma$, $H_{\sft}^{*}=p$ if $\tilsigma=0$, and $H_{\sft}^{*}\rightarrow\infty$ as $\tilsigma\rightarrow\infty$. In addition, we have that $H_{\sep}^{*}$ is decreasing in $\tilsigma$, $H_{\sep}^{*}\rightarrow \infty$ as $\tilsigma\downarrow 0$, and $H_{\sep}^{*}\rightarrow 0$ as $\tilsigma\rightarrow \infty$. Thus, we can take $\tilsigma_1$ as their intersection. When $\tilsigma\in[0,\tilsigma_1]$, $H^{*}=H_{\sft}^{*}$ is increasing in $\tilsigma$. Thus, we prove the first claim.

For the second claim, we note that when $\tilsigma\in[\tilsigma_1,\infty)$, $H^{*}=\max\{H_{\icl}^{*},H_{\sep}^{*}\}$. Thus, we only need to prove that $H_{\icl}^{*}$ is decreasing in $\tilsigma$ when $\tilsigma\in[\tilsigma_2,\infty)$. 

Proposition~\ref{prop:icl_root_property} in Appendix~\ref{app:mono} analyzes the root of 
\begin{equation*}
\frac{e x}{\sqrt{H}} - f x^2 - \sqrt{H-g}=0
\end{equation*}
for $e,f,g>0$. Denote the unique root $H_*(x)$ as a function of $x$, it shows that when $0 < x \le x_{\max} =e/(f\sqrt{g})$, there exists $x_* < x_{\max}$ such that $H_*(x)$ is increasing on $(0,x_*]$ and decreasing on $(x_*, x_{\max}]$. In our setting, $x=\tilsigma$, $e = \sqrt{2rR_{\icl}}$, $f = 2 R_\icl/\zeta$, $g=p$. If $e/(f\sqrt{g})>\tilsigma_1$, then we can set $\tilsigma_2 = x_*$ in Proposition~\ref{prop:icl_root_property}. If $e/(f\sqrt{g})<\tilsigma_1$, we can set $\tilsigma_2=\tilsigma_1$. Thus, we conclude the proof of Proposition~\ref{prop:task_sigma}.

\subsection{Proof of Proposition~\ref{prop:r_sft}}\label{app:r_sft}
Proposition~\ref{prop:H_fixed_points} in the proof of Proposition~\ref{prop:pretrain_influence} shows that once $R_{\icl}$ is fixed, the value of $H_{\icl}^{*}$ is fixed. Thus, we only need to analyze $H_{\sft}^*$ and $H_{\sep}^*$. Obviously,  $H_{\sft}^*$ is increasing in $R_{\sft}$, and its minimal value is 
\begin{align*}
    \frac{p+\sqrt{p^2+8\tilde\sigma^2 d\,R_{\icl}}}{2}.
\end{align*}

For $H_{\sep}^{*}$, if 
\begin{align}
    \frac{R_\sft}{R_\icl}\leq \frac{\big((d-r)\tau+r\zeta\big)^2}{4dr\zeta^2},\label{ieq:cond3}
\end{align}
then Proposition~\ref{prop:H_fixed_points} shows that it is a decreasing function of $R_{\sft}$. If \eqref{ieq:cond3} does not hold, then Proposition~\ref{prop:mono4} in Appendix~\ref{app:mono} establishes the same monotonicity result. Thus, $H_{\sep}^{*}$ is a decreasing function of $R_{\sft}$. Under both cases,  $H_{\sep}^{*}$ decreases to $0$ as $R_{\sft}\rightarrow\infty$.

In the following, we will compare the value of $H_{\sep}^{*}$ and $H_{\sft}^{*}$ when $R_{\sft}=R_{\icl}$. If  $H_{\sep}^{*}>H_{\sft}^{*}$, $H^*$ first increases and then decreases with increasing $R_{\sft}$. If $H_{\sep}^{*}\leq H_{\sft}^{*}$, $H^*$ decreases with increasing $R_{\sft}$. To compare their values when $R_{\sft}=R_{\icl}$, we need to study whether \eqref{ieq:cond3} holds if $R_{\sft}=R_{\icl}$.

If $(\big((d-r)\tau+r\zeta\big)^2)/(4dr\zeta^2)\geq 1$, then \eqref{ieq:cond3} holds when $R_{\sft}=R_{\icl}$. Thus, $H_{\sep}^{*}\leq H_{\sft}^{*}$ is equivalent to
\begin{align*}
    \displaystyle \frac{\left((d-r)\tau+r\zeta\right)^2}{8\tilde\sigma^2 d\,R_{\icl}}\leq \frac{p+\sqrt{p^2+8\tilde\sigma^2 d\,R_{\icl}}}{2}.
\end{align*}
The left-hand side of this inequality is decreasing in $R_{\icl}$, while the right-hand side is increasing in $R_{\icl}$. Thus, there exists $R_{\icl}^{*}$ such that this inequality holds if and only if $R_{\icl}\geq R_{\icl}^{*}$.

If $(\big((d-r)\tau+r\zeta\big)^2)/(4dr\zeta^2)< 1$, then \eqref{ieq:cond3} does not hold when $R_{\sft}=R_{\icl}$. Thus, $H_{\sep}^{*}\leq H_{\sft}^{*}$ is equivalent to
\begin{align*}
    \left(\frac{-\sqrt{2}\Big(\sqrt{d}-\sqrt{r}\Big)+\sqrt{2\Big(\sqrt{d}-\sqrt{r}\Big)^2+2 \zeta^{-1}(d-r)\tau}}{2\tilsigma \sqrt{R_{\icl}}\zeta^{-1}}\right)^2\leq \frac{p+\sqrt{p^2+8\tilde\sigma^2 d\,R_{\icl}}}{2}.
\end{align*}
The left-hand side of this inequality is decreasing in $R_{\icl}$, while the right-hand side is increasing in $R_{\icl}$. Thus, there exists $R_{\icl}^{*}$ such that this inequality holds if and only if $R_{\icl}\geq R_{\icl}^{*}$. Thus, we conclude the proof of Proposition~\ref{prop:r_sft}.
\subsection{Proof of Proposition~\ref{prop:only_one_type}}\label{app:only_one_type}
We first state the precise statement of this proposition.
\begin{proposition}
Fix $\tilde{\sigma}_2>0$ and assume $q\in(0,1)$, $\tilde{\sigma}_1<\tilde{\sigma}_2$. Then $\partial_{\tilde{\sigma}_1}R^{\ast}(\tilde{\sigma}_1,\tilde{\sigma}_2,q)=
\partial_q R^{\ast}(\tilde{\sigma}_1,\tilde{\sigma}_2,q)=0$
holds if and only if one of the following two regimes holds:
\begin{itemize}
\item The equilibrium is pinned at the separator of type $2$, i.e., the following interval condition holds
\begin{equation}
f(H^{\ast}_{\sep}(\tilde{\sigma}_2)) < \sqrt{H^{\ast}_{\sep}(\tilde{\sigma}_2)-p} < e(H^{\ast}_{\sep}(\tilde{\sigma}_2)).
\label{eq:case_sep2_interval_strict}
\end{equation}
\item The congestion level is zero $R^{\ast}=0$, i.e., the following condition holds
\begin{equation}
p>\max\{H^{\ast}_{\sep}(\tilde{\sigma}_1), \barH(\tilde{\sigma}_1)\},
\quad
\barH(\tilde{\sigma})=r\zeta^2 (2R_{\icl}\tilde{\sigma}^2)^{-1}.
\label{eq:corner_cond_strict}
\end{equation}
\end{itemize}
Sufficient conditions for \eqref{eq:case_sep2_interval_strict} are
\begin{align*}
     &\big((d-r)\tau+r\zeta\big)^2 (24dR_{\sft}p)^{-1}\leq \tilsigma_2^2 \leq \big((d-r)\tau+r\zeta\big)^2 (12dR_{\sft}p)^{-1}, \quad \tilsigma_1\leq \tilsigma_2/2,\\
     &\frac{1}{\zeta}\geq \frac{1}{(d-r)\tau}\sqrt{\frac{4drR_{\sft}}{R_{\icl}}},\quad (d-r)\tau\geq \max\big\{84 R_{\sft}^{3/2}p^{3/2},9p^{3/2}\big\},\text{ and } q\leq\min\bigg\{\frac{1}{2},\frac{p}{2\tilsigma_1\sqrt{R_\sft d}}\bigg\}.
\end{align*}
\end{proposition}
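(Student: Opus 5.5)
The plan is to work entirely through the regime classification of Theorem~\ref{thm:hetero_full}. With heterogeneity only in $\tilsigma$, the equilibrium $H^*$ is one of five objects: a root of $H=e(H)^2+p$, of $H=f(H)^2+p$, or of $H=g(H)^2+p$; the separator $H_{\sep}^*(\tilsigma_2)$; or the separator $H_{\sep}^*(\tilsigma_1)$. The separator $H^*_{\sep}(\tilsigma)$ is decreasing in $\tilsigma$ (by the proof of Proposition~\ref{prop:task_sigma}), so $\tilsigma_1<\tilsigma_2$ gives $H^*_{\sep}(\tilsigma_2)<H^*_{\sep}(\tilsigma_1)$. This matches the labeling in that theorem, with type $t_2=\tilsigma_2$ switching first.

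\emph{Sufficiency.} Suppose \eqref{eq:case_sep2_interval_strict} holds. Because the inequalities are strict and $e,f$ are continuous in $(\tilsigma_1,q)$, they persist on a neighborhood. Case 2 of Theorem~\ref{thm:hetero_full} then keeps $H^*=H^*_{\sep}(\tilsigma_2)$, which does not depend on $(\tilsigma_1,q)$. Hence $R^*=\sqrt{H^*-p}$ is locally constant and both partials vanish.

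Now suppose \eqref{eq:corner_cond_strict} holds. Then $p>\barH(\tilsigma_1)\ge\barH(\tilsigma_2)$, since $\barH$ is decreasing in $\tilsigma$, so $N_\icl(t_i,H)=0$ for all $H\ge p$ by Proposition~\ref{prop:opt_alg}, and therefore $g\equiv0$ on $[p,\infty)$. Also $p>H^*_{\sep}(\tilsigma_1)\ge H^*_{\sep}(\tilsigma_2)$, so every $H\ge p$ lies in Case 5 and $H^*=p$, giving $R^*=0$. Both conditions are strict, so this persists locally.

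\emph{Necessity.} I will argue by contrapositive over the remaining regimes. In the interior regimes with $\Phi\in\{e,f,g\}$ and $H^*>p$, implicit differentiation of $H=\Phi(H;\tilsigma_1,q)^2+p$ gives
\[
\partial_q H^*=\frac{2\Phi\,\partial_q\Phi}{1-2\Phi\,\partial_H\Phi}.
\]
The denominator is positive because $\partial_H\Phi\le0$.

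\begin{itemize}
\item For $e$: $\partial_q e\propto \tilsigma_1\sqrt{d}-\tilsigma_2\sqrt{d}\neq0$ since $\tilsigma_1<\tilsigma_2$. Also $\partial_{\tilsigma_1}e>0$.
\item For $f$: $\partial_{\tilsigma_1}f=q\sqrt{2dR_\sft/H}>0$.
\item For $g$ with $R^*>0$: the type-1 ICL term is active or the type-2 term is. If the type-1 term is active, $\partial_{\tilsigma_1}g\neq0$ generically. If only the type-2 term is active, $\partial_q g\neq0$.
\end{itemize}

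In the type-1 separator regime, $H^*=H^*_{\sep}(\tilsigma_1)$ moves with $\tilsigma_1$ by strict monotonicity of the separator, so $\partial_{\tilsigma_1}R^*\neq0$. The boundary cases, where \eqref{eq:case_sep2_interval_strict} holds with equality or $R^*=0$ holds non-strictly, are exactly where one-sided derivatives disagree. They are excluded because the derivatives must exist and vanish. This exhausts the regimes.

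\emph{Sufficient conditions.} I will substitute the explicit formulas into \eqref{eq:case_sep2_interval_strict}. The condition $1/\zeta\ge\sqrt{4drR_\sft/R_\icl}/((d-r)\tau)$ forces the first branch of \eqref{eq:H_sep}, so $H^*_{\sep}(\tilsigma_2)=((d-r)\tau+r\zeta)^2/(8\tilsigma_2^2dR_\sft)$. The band on $\tilsigma_2^2$ then places this value in $[1.5p,3p]$, so $\sqrt{H-p}\in[\sqrt{p/2},\sqrt{2p}]$. The upper inequality needs $e(H)>\sqrt{2p}$, which holds since the $(1-q)\tilsigma_2$ term is at least half of $\sqrt{2dR_\sft/H}\,\tilsigma_2$ and the band bounds this from below in terms of $(d-r)\tau$. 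Here the lower bound $(d-r)\tau\ge 84R_\sft^{3/2}p^{3/2}$ is used.

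The lower inequality needs $f(H)<\sqrt{p/2}$. I will bound the $q$-term by $q\tilsigma_1\sqrt{2dR_\sft/p}\le p/\sqrt{2p}\cdot$const using $q\le p/(2\tilsigma_1\sqrt{R_\sft d})$. I will bound the ICL term using the smallness of $\sqrt{2\tilsigma_2^2r_2R_\icl/H}$, which the $\zeta$ condition together with $(d-r)\tau\ge 9p^{3/2}$ controls.

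I expect the main obstacle to be this constant-chasing: the stated numerical constants (84, 9, 24, 12) must actually close the inequalities. The necessity step is also delicate at the regime boundaries, where differentiability has to be handled carefully.
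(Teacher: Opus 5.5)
Your route is the paper's: sufficiency from Case~2 of Theorem~\ref{thm:hetero_full} and from the corner, necessity by ruling out the other regimes, and the sufficient conditions via the first branch of \eqref{eq:H_sep}. The genuine gap is in the $g$-regime of the necessity step. Write $x_i(H)=A\tilsigma_i/\sqrt{H}-B\tilsigma_i^2$ with $A=\sqrt{2rR_{\icl}}$ and $B=2R_{\icl}/\zeta$. You assert that an active type-1 term gives $\partial_{\tilsigma_1}g\neq0$ ``generically''. But $\partial_{\tilsigma_1}x_1(H^*)=A/\sqrt{H^*}-2B\tilsigma_1$ vanishes exactly when $\tilsigma_1=A/(2B\sqrt{H^*})$. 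Nothing in regime~5 rules this out: at that point $x_1=B\tilsigma_1^2>0$, and type~2 is also active as soon as $\tilsigma_2<2\tilsigma_1$. Since the claim is an ``if and only if'', this set cannot be set aside.

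The missing idea is to use the two stationarity conditions jointly, as the paper does. If both types are active, $\partial_q g=x_1-x_2=0$ forces $A/\sqrt{H^*}=B(\tilsigma_1+\tilsigma_2)$, while $\partial_{\tilsigma_1}g=0$ forces $A/\sqrt{H^*}=2B\tilsigma_1$; together these give $\tilsigma_1=\tilsigma_2$, a contradiction. If only type~1 is active, $\partial_q g=x_1>0$. Equivalently: at a zero of $\partial_{\tilsigma_1}x_1$, $\tilsigma_1$ is the unique maximizer of the strictly concave map $\sigma\mapsto A\sigma/\sqrt{H^*}-B\sigma^2$. Hence $x_1(H^*)>x_2(H^*)$, and $\partial_q R^*\neq0$ by your implicit-differentiation formula.

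On the constant-chasing you flagged, the lower inequality is fine. The $\zeta$-condition makes the type-2 ICL term exactly zero at $H^*_{\sep}(\tilsigma_2)$. Bounding the $q\tilsigma_1$ term with $H\ge 1.5p$ gives $\le\sqrt{p/3}<\sqrt{p/2}\le\sqrt{H-p}$; note that $H\ge p$ alone only gives $\le\sqrt{p/2}$, which is not strict.

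The upper inequality is where it breaks. With $K=(d-r)\tau+r\zeta$, your estimate gives $e\ge 2dR_{\sft}\tilsigma_2^2/K\ge K/(12p)$, so you need $K>12\sqrt{2}\,p^{3/2}$. The hypothesis $(d-r)\tau\ge 9p^{3/2}$ does not give this, and $(d-r)\tau\ge 84R_{\sft}^{3/2}p^{3/2}$ gives it only if $R_{\sft}$ is bounded below. The paper's chain has the same hidden requirement: it replaces the correct bound $12\sqrt{2}\,p^{3/2}/K$ by $12\sqrt{2}R_{\sft}^{3/2}p^{3/2}/K$, which is justified only when $R_{\sft}\ge1$.

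For small $R_{\sft}$ the listed conditions genuinely fail to imply \eqref{eq:case_sep2_interval_strict}. Take $q=1/2$, $\tilsigma_1\to0$, and $\tilsigma_2^2$ at the bottom of the band. Then $H^*_{\sep}(\tilsigma_2)=3p$ and $e\to K/(12p)<\sqrt{2p}$ whenever $K<12\sqrt{2}\,p^{3/2}$. So close this step under an explicit extra assumption such as $R_{\sft}\ge1$.
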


Before the formal proof, we note that Proposition~\ref{prop:H_fixed_points} in the proof of Proposition~\ref{prop:pretrain_influence} derives the closed-form expression of $H_\sep^*$, which is decreasing in $\tilsigma$. Thus, we have that $H_\sep^*(\tilsigma_2)<H_\sep^*(\tilsigma_1)$. In the following, we prove the sufficiency and the necessity separately. Since $H^*=h(R^*)+p$, this proposition focuses on the case where
\begin{equation}
\partial_{\tilde{\sigma}_1}H^*(\tilde{\sigma}_1,\tilde{\sigma}_2,q)=0
\quad\text{and}\quad
\partial_qH^*(\tilde{\sigma}_1,\tilde{\sigma}_2,q)=0
\label{eq:stationary_goal}
\end{equation}

We start with the sufficiency of the condition. 
\begin{itemize}
    \item If $f(H^*_{\sep}(\tilde{\sigma}_2)) < \sqrt{H^*_{\sep}(\tilde{\sigma}_2)-p} < e(H^*_{\sep}(\tilde{\sigma}_2))$ holds, then Theorem~\ref{thm:hetero} shows that $H^*=H^*_{\sep}(\tilde{\sigma}_2)$, which does not depend on $\tilde{\sigma}_1$ nor $q$. Hence, both partial derivatives in \eqref{eq:stationary_goal} are zero.
    \item If \eqref{eq:corner_cond_strict} holds, then for each type $\tilde{\sigma}\in\{\tilde{\sigma}_1,\tilde{\sigma}_2\}$: (i) $p>H^*_{\sep}(\tilde{\sigma})$ implies $\psi(\tilde{\sigma},p)>0$, so ICL is the unique best-response algorithm; (ii) $p>\bar H(\tilde{\sigma})$ implies $\tilde N_{\icl}(\tilde{\sigma},p)=0$ and hence $D_{\icl}(\tilde{\sigma},p)=0$. Therefore, the congestion level is $0$, i.e., $R^*=\sqrt{p-p}=0$. Moreover, strictness implies local stability of this selection, $H^*$ is locally constant, and both partial derivatives
are zero.
\end{itemize}

Then we prove the necessity. Assume \eqref{eq:stationary_goal} holds at a point where $H^*$ is differentiable in $(\tilde{\sigma}_1,q)$. By Theorem~\ref{thm:hetero}, the equilibrium must lie in one of the five cases. We will rule out all cases except the second and the last ones.

\textbf {Case 1: $H^*$ is a root of $\sqrt{H-p}=e(H)$.} According to the definition, we have that
\[
q\sqrt{\frac{2\tilsigma_{1}^2 d R_\sft}{H^*}} + (1-q)\sqrt{\frac{2\tilsigma_{2}^2 d R_\sft}{H^*}}=\sqrt{H^*-p}.
\]
Since both terms on the left-hand side of this equation are positive, changing the value of either $\tilsigma_1$ or $q$ will shift the value of $H^*$. Specifically,
\begin{align*}
    \partial_{\tilde{\sigma}_1}H^*(\tilde{\sigma}_1,\tilde{\sigma}_2,q)>0
\quad\text{and}\quad
\partial_qH^*(\tilde{\sigma}_1,\tilde{\sigma}_2,q)<0.
\end{align*}

\textbf{Case 2: $H^*=H^*_{\sep}(\tilde{\sigma}_2)$.} This is exactly the ``separator pinned at type 2'' regime. Differentiability and the equalities
\eqref{eq:stationary_goal} requires that we are not on the boundary where the equilibrium switches to a neighboring case, which is precisely the strict interval condition \eqref{eq:case_sep2_interval_strict}.

\textbf{Case 3: $H^*$ is a root of $\sqrt{H-p}=f(H)$.} According to the definition, we have that
\begin{align*}
    q\sqrt{\frac{2\tilsigma_{1}^2 d R_\sft}{H^*}} + (1-q)\max\bigg\{0,\sqrt{\frac{2\tilsigma_{2}^2 r R_\icl}{H^*}}- \frac{2\tilsigma_2^2 R_\icl}{\zeta}\bigg\} = \sqrt{H^* - p}
\end{align*}
Since the first term in the left-hand side of this equation is positive and is increasing in $\tilsigma_1$, thus
\begin{align*}
    \partial_{\tilde{\sigma}_1}H^*(\tilde{\sigma}_1,\tilde{\sigma}_2,q)>0.
\end{align*}

\textbf{Case 4: $H^*=H^*_{\sep}(\tilde{\sigma}_1)$.} Proposition~\ref{prop:H_fixed_points} shows that that $H^*_{\sep}(\tilde{\sigma})$ is a decreasing function of $\tilde{\sigma}$. Thus, we have that
\begin{align*}
    \partial_{\tilde{\sigma}_1}H^*(\tilde{\sigma}_1,\tilde{\sigma}_2,q)<0.
\end{align*}

\textbf{Case 5: $H^*$ is a root of $\sqrt{H-p}=g(H)$.}

According to the definition, we have that
\begin{align*}
    \sqrt{H^*-p}=q\max\bigg\{0,\sqrt{\frac{2\tilsigma_{1}^2 r R_\icl}{H^*}}- \frac{2\tilsigma_1^2 R_\icl}{\zeta}\bigg\} + (1-q)\max\bigg\{0,\sqrt{\frac{2\tilsigma_{2}^2 r R_\icl}{H^*}}- \frac{2\tilsigma_2^2 R_\icl}{\zeta}\bigg\} 
\end{align*}
Write $x_i(H)= R_\icl N_{\icl}(\tilde{\sigma}_i,H)$ so that $g(H)=q x_1(H)+(1-q)x_2(H)$.

If $x_1(H^*)=0$ and $x_2(H^*)>0$. Then $g(H^*)=(1-q)x_2(H^*)$ and $\partial_q g(H^*)=x_1(H^*)-x_2(H^*)=-x_2(H^*)< 0$, contradicting \eqref{eq:stationary_goal}.

If $x_2(H^*)=0$ and $x_1(H^*)>0$. Then $g(H^*)=q x_1(H^*)$ and $\partial_q g(H^*)=x_1(H^*)\neq 0$, again a contradiction.

If $x_1(H^*)>0$ and $x_2(H^*)>0$. In this interior region, the truncation is inactive and
\[
x_i(H)=\frac{A_{\icl}\tilde{\sigma}_i}{\sqrt{H}}-B\tilde{\sigma}_i^2,
\]
where $A_\icl = \sqrt{2rR_\icl}$, and $B=2 R_\icl /\zeta$. Then $\partial_q g(H)=x_1(H)-x_2(H)$, so $\partial_q g(H^*)=0$ implies $x_1(H^*)=x_2(H^*)$, i.e.
\begin{equation}
\frac{A_{\icl}}{\sqrt{H^*}} = B(\tilde{\sigma}_1+\tilde{\sigma}_2).
\label{eq:eq_x1_x2}
\end{equation}
Also, $\partial_{\tilde{\sigma}_1} g(H)=q\,\partial_{\tilde{\sigma}_1}x_1(H)$, and
\[
\partial_{\tilde{\sigma}_1}x_1(H)=\frac{A_{\icl}}{\sqrt{H}}-2B\tilde{\sigma}_1.
\]
Thus $\partial_{\tilde{\sigma}_1}g(H^*)=0$ implies
\begin{equation}
\frac{A_{\icl}}{\sqrt{H^*}} = 2B\tilde{\sigma}_1.
\label{eq:eq_dx1_zero}
\end{equation}
Combining \eqref{eq:eq_x1_x2} and \eqref{eq:eq_dx1_zero} yields
$2\tilde{\sigma}_1=\tilde{\sigma}_1+\tilde{\sigma}_2$, i.e. $\tilde{\sigma}_1=\tilde{\sigma}_2$,
contradicting $\tilde{\sigma}_1<\tilde{\sigma}_2$. Hence, \eqref{eq:stationary_goal} is impossible.

If $x_1(H^*)=0$ and $x_2(H^*)=0$. Then the fixed-point equation $\sqrt{H^*-p}=g(H^*)$ forces $\sqrt{H^*-p}=0$, so $H^*=p$. Therefore, in this case, \eqref{eq:stationary_goal} implies $H^*=p$, i.e., the zero-congestion corner.

In the following, we prove the sufficient condition for \eqref{eq:case_sep2_interval_strict}. According to the definition, it is equivalent to
\begin{align}
    q\sqrt{\frac{2\tilsigma_{1}^2 d  R_\sft}{H^*_{\sep}(\tilde{\sigma}_2)}} + (1-q)\max\bigg\{0,\sqrt{\frac{2\tilsigma_{2}^2 r  R_\icl}{H^*_{\sep}(\tilde{\sigma}_2)}}- \frac{2\tilsigma_2^2 R_\icl}{\zeta}\bigg\} &<\sqrt{H^*_{\sep}(\tilde{\sigma}_2)-p}\label{ieq:one_type_cond1} \\
    q\sqrt{\frac{2\tilsigma_{1}^2 d  R_\sft}{H^*_{\sep}(\tilde{\sigma}_2)}}+(1-q)\sqrt{\frac{2\tilsigma_{2}^2 d R_\sft}{H^*_{\sep}(\tilde{\sigma}_2)}}&>\sqrt{H^*_{\sep}(\tilde{\sigma}_2)-p}.\label{ieq:one_type_cond2}
\end{align}
Since the sufficient conditions guarantee that 
\begin{align}
    \frac{\tau}{\zeta}\geq \frac{1}{(d-r)}\sqrt{\frac{4drR_{\sft}}{R_{\icl}}},\label{ieq:pi_large}
\end{align}
and Proposition~\ref{prop:H_fixed_points} shows that
\begin{align}
    H^*_{\sep}(\tilde{\sigma}_2) = \frac{\big((d-r)\tau+r\zeta\big)^2}{8dR_{\sft}\tilsigma_2^2}.\label{eq:H_sep_def},
\end{align}
for the second term in the left-hand side of both \eqref{ieq:one_type_cond1}, we have that
\begin{align*}
    \sqrt{\frac{2\tilsigma_{2}^2 r  R_\icl}{H^*_{\sep}(\tilde{\sigma}_2)}}- \frac{2\tilsigma_2^2 R_\icl}{\zeta}= \bigg(\sqrt{\frac{16drR_{\sft}}{\big((d-r)\tau+r\zeta\big)^2}}-\frac{2 \sqrt{R_\icl}}{\zeta}\bigg)\sqrt{R_{\icl}}\tilsigma_2^2\leq 0,
\end{align*}
where the inequality results from \eqref{ieq:pi_large}. Since we require that
\begin{align}
    \frac{\big((d-r)\tau+r\zeta\big)^2}{24dR_{\sft}p}\leq \tilsigma_2^2 \leq \frac{\big((d-r)\tau+r\zeta\big)^2}{12dR_{\sft}p},\label{ieq:sigma_med}
\end{align}
we have that
\begin{align*}
    \sqrt{H^*_{\sep}(\tilde{\sigma}_2)-p}/\sqrt{\frac{2\tilsigma_{1}^2 d  R_\sft}{H^*_{\sep}(\tilde{\sigma}_2)}}>\frac{p}{2\tilsigma_1\sqrt{R_\sft d}}\geq q.
\end{align*}
Thus, \eqref{ieq:one_type_cond1} is satisfied by our sufficient conditions. For \eqref{ieq:one_type_cond2}, by utilizing \eqref{eq:H_sep_def}, we first note that
\begin{align*}
    \sqrt{\frac{2\tilsigma_{2}^2 d R_\sft}{H^*_{\sep}(\tilde{\sigma}_2)}}>\sqrt{H^*_{\sep}(\tilde{\sigma}_2)-p}
\end{align*}
is equivalent to
\begin{align*}
     \frac{\big((d-r)\tau+r\zeta\big)^2}{8dR_{\sft}\tilsigma_2^2}-p<\frac{16d^2R_\sft^2 \tilsigma_2^4}{ \big((d-r)\tau+r\zeta\big)^2}.
\end{align*}
When $((d-r)\tau)^2 >72p^3$, with \eqref{ieq:sigma_med}, the direct calculations can prove this inequality. Thus, to guarantee \eqref{ieq:one_type_cond2}, we only require that
\begin{align*}
    q\leq \frac{\sqrt{\frac{2\tilsigma_{2}^2 d R_\sft}{H^*_{\sep}(\tilde{\sigma}_2)}}-\sqrt{H^*_{\sep}(\tilde{\sigma}_2)-p}}{\sqrt{\frac{2\tilsigma_{2}^2 d R_\sft}{H^*_{\sep}(\tilde{\sigma}_2)}}-\sqrt{\frac{2\tilsigma_{1}^2 d R_\sft}{H^*_{\sep}(\tilde{\sigma}_2)}}}= 1- \frac{\sqrt{H^*_{\sep}(\tilde{\sigma}_2)-p}-\sqrt{\frac{2\tilsigma_{1}^2 d R_\sft}{H^*_{\sep}(\tilde{\sigma}_2)}}}{\sqrt{\frac{2\tilsigma_{2}^2 d R_\sft}{H^*_{\sep}(\tilde{\sigma}_2)}}-\sqrt{\frac{2\tilsigma_{1}^2 d R_\sft}{H^*_{\sep}(\tilde{\sigma}_2)}}}.
\end{align*}
We only need to prove that
\begin{align*}
    \frac{\sqrt{H^*_{\sep}(\tilde{\sigma}_2)-p}-\sqrt{\frac{2\tilsigma_{1}^2 d R_\sft}{H^*_{\sep}(\tilde{\sigma}_2)}}}{\sqrt{\frac{2\tilsigma_{2}^2 d R_\sft}{H^*_{\sep}(\tilde{\sigma}_2)}}-\sqrt{\frac{2\tilsigma_{1}^2 d R_\sft}{H^*_{\sep}(\tilde{\sigma}_2)}}}\leq \frac{1}{2}.
\end{align*}
In fact, we have that
\begin{align*}
    \frac{\sqrt{H^*_{\sep}(\tilde{\sigma}_2)-p}-\sqrt{\frac{2\tilsigma_{1}^2 d R_\sft}{H^*_{\sep}(\tilde{\sigma}_2)}}}{\sqrt{\frac{2\tilsigma_{2}^2 d R_\sft}{H^*_{\sep}(\tilde{\sigma}_2)}}-\sqrt{\frac{2\tilsigma_{1}^2 d R_\sft}{H^*_{\sep}(\tilde{\sigma}_2)}}}\leq \frac{2\sqrt{H^*_{\sep}(\tilde{\sigma}_2)-p}-2\sqrt{\frac{2\tilsigma_{1}^2 d R_\sft}{H^*_{\sep}(\tilde{\sigma}_2)}}}{\sqrt{\frac{2\tilsigma_{2}^2 d R_\sft}{H^*_{\sep}(\tilde{\sigma}_2)}}}\leq \frac{12\sqrt{2}R_{\sft}^{3/2}p^{3/2}}{(d-r)\tau+r\zeta}\leq \frac{1}{2}
\end{align*}
where the first inequality results from $\tilsigma_1\leq \tilsigma_2/2$, the second inequality results from \eqref{ieq:sigma_med}, and the last inequality results from that $(d-r)\tau\geq 84 R_{\sft}^{3/2}p^{3/2} $. Thus, we conclude the proof of Proposition~\ref{prop:only_one_type}.

\subsection{Proof of Proposition~\ref{prop:s_exist}}\label{app:s_exist}
Given a price $p$, the existence of equilibrium $(\pi^*,R^*)$ is already proved in Theorem~\ref{thm:exist_unique}. Thus, in the following, we only need to show that the maximizer of $I(p)$ is finite. We achieve this in three steps.

\textbf{Step 1: Build the continuity of $I(p)$ on $[0,\infty)$.}

To build the continuity of $I(p)$ on $[0,\infty)$, we need to: 1. prove the continuity of $R^*$ on $(0,\infty)$; 2. extend the domain of $I(p)$ to $[0,\infty)$ by $I(0)=\lim_{p\downarrow 0}I(p)$. For both results, we first prove that $R^*(p)$ is bounded for any $p\geq 0$.

\begin{proposition}\label{prop:R_bd}
    Under regularity conditions of $h$, there exists a constant $\barR$ that only depends on $\calT$ and function $h$ such that for $R^*$ induced by any $p\geq 0$, we have that $R^*\leq \barR$.
\end{proposition}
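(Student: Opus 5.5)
The bound will come from Lemma~\ref{lem:integrable}, extended uniformly in $p$. Fix $p\ge0$ and any equilibrium $(\pi^*,R^*)$ at price $p$. By consistency,
\[
R^*=\int_{\calT}\mathbb E_{\pi_t^*}[R_a\tilN]\,T(\rmd t),
\]
and $\pi_t^*$ is supported on $\BR(t,R^*)$ for $T$-a.e.\ $t$. It therefore suffices to produce a constant $\barR$, depending only on $\calT$ and $h$, such that
\[
R_a\tilN\le\barR\quad\text{for all } t\in\calT,\ R\ge0,\ p\ge0,\ (a,\tilN)\in\BR(t,R).
\]
Integrating this pointwise bound against $T$ then gives $R^*\le\barR$.

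\textbf{Reducing to one scalar.} As in the proof of Lemma~\ref{lem:strong_mono}, I will rewrite the user problem in terms of resource $r=R_a\tilN$ as $\min_{r\ge0}\{g_t(r)+\bar h(R,p)\,r\}$. The argument of Lemma~\ref{lem:strong_mono} and Step~1 of Theorem~\ref{thm:r_decreasing_p} shows that the minimizer set is monotone in the scalar $H=\bar h(R,p)$: if $H_1<H_2$, then $\sup D\le\inf D$ for the corresponding sets. Since $H=p+h(R)\ge h(0)$ for every $p\ge0$ and $R\ge0$, every best-response demand is at most $\sup D_t(h(0))$, the demand at the smallest attainable per-unit cost. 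This removes the dependence on $p$ entirely.

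\textbf{Bounding the demand at $H_0=h(0)$.} If $h(0)>0$, then for each $a$ the minimizer $r$ satisfies $g_t(r)+H_0 r\le g_t(0^+)$ in the ICL branch, where $g_t(0^+)=E_{\icl}(t,0)=(d-r)\tau+r\zeta$. Since errors are nonnegative, this gives
\[
r\le \frac{(d-r)\tau+r\zeta}{h(0)}.
\]
This quantity is continuous in $t$, so it is bounded on the compact $\calT$. Alternatively, I can use the explicit constant $K_a(t,R)$ from Step~1 of Theorem~\ref{thm:exist_unique}, which has the same form.

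\textbf{Main obstacle: $h(0)=0$.} The regularity conditions allow $h(0)=0$, and then taking $p\to0$ drives $H\to0$, so the demand at the bottom cost is unbounded (for example, $N_{\sft}(H)\propto H^{-1/2}$). Here I will use the fixed-point structure instead of the per-user bound. Suppose $R^*$ is large. Then $H\ge h(R^*)$, and every best response satisfies $r\le B(h(R^*))$, where $B(H)=\sup_t\sup D_t(H)$. This function is finite for $H>0$ by the argument above and nonincreasing in $H$. Consistency gives $R^*\le B(h(R^*))$. Because $h$ is strictly increasing and unbounded, $B(h(R))\to$ a bounded value as $R\to\infty$ while $R\to\infty$. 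So the set $\{R: R\le B(h(R))\}$ is bounded above by some $\barR$ with $\barR>B(h(\barR))$. Concretely, I can take any $\barR$ with $h(\barR)\ge1$ and $\barR>B(1)$, and $B(1)$ depends only on $\calT$.

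This fixed-point argument covers both cases at once, so it is the route I will write up. Monotonicity of $D$ in $H$ is already available from Lemma~\ref{lem:strong_mono}. The only remaining check is the finiteness of $B(1)$, which I will get from compactness of $\calT$ and continuity of the errors in $t$, following the last step of Lemma~\ref{lem:integrable}.
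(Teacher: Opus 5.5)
Your final fixed-point route is correct and is essentially the paper's proof: comparing with the zero-sample ICL option bounds every best-response demand by $\sup_t[(d-r)\tau+r\zeta]/h(R^*)$, and consistency together with strict monotonicity and unboundedness of $h$ then caps $R^*$. The paper takes $\bar{R}=\inf\{R>0: Rh(R)\ge B\}$, whereas you take $h(\bar{R})\ge 1$ and $\bar{R}>B(1)$; this is the same idea with a different constant. Drop the opening reduction to a uniform per-user bound over all $p\ge 0$, which, as you noticed, is false when $h(0)=0$, and bound $B(1)$ directly by $\sup_t[(d-r)\tau+r\zeta]$ rather than through continuity of minimizers in $t$.
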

The proof is in Appendix~\ref{app:R_bd}. This proposition helps us to retrict the range of $R^*$ and will serve as a basis to restrict the domain of $R$. Then we prove the contiuity of $R^*$ in the following proposition.
\begin{proposition}\label{prop:conti}
    Under regularity conditions of $h$, $R^{*}(p)$ is continuous in $p$.
\end{proposition}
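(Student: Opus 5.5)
We prove continuity of $p\mapsto R^*(p)$ on $(0,\infty)$ by combining the monotonicity from Theorem~\ref{thm:r_decreasing_p} with a closed-graph argument. Theorem~\ref{thm:r_decreasing_p} says $R^*(p)$ is non-increasing in $p$, so the only way continuity can fail is a jump. It therefore suffices to show that whenever $p_n\to p$, every limit point of $R^*(p_n)$ equals $R^*(p)$. Proposition~\ref{prop:R_bd} keeps all $R^*(p_n)$ in $[0,\barR]$, so limit points exist, and the monotone one-sided limits $R^*(p^-)$ and $R^*(p^+)$ are well defined.

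The core step is to show that the joint correspondence $(R,p)\mapsto \barD(R,p)=\int_{\calT}\hatD(t,R,p)\,T(\rmd t)$ has a closed graph on $[0,\barR]\times(0,\infty)$. This copies Step~3 of the existence proof of Theorem~\ref{thm:exist_unique}, with $R$ replaced by the pair $(R,p)$.

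\begin{enumerate}
\item The cost $C(t,(a,\tilN),R,p)=E_a(t,\tilN)+R_a\tilN(p+h(R))$ is jointly continuous in $(\tilN,R,p)$. The coercivity bounds $K_a(t,R)$ and $\varepsilon(t,R)$ built in Step~1 can be chosen uniformly for $(R,p)$ in a compact neighbourhood, since they depend only on $p+h(R)$, which lies between $p_{\min}+h(0)$ and $p_{\max}+h(\barR)$. Berge's maximum theorem then makes $(R,p)\mapsto \BR(t,R,p)$ upper hemicontinuous. Hence $D(t,\cdot,\cdot)$ and its convex hull $\hatD(t,\cdot,\cdot)$ have closed graphs.
\item Take $x_n\in\barD(R_n,p_n)$ with $(R_n,p_n,x_n)\to(R,p,x)$ and representing selections $d_n$. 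Lemma~\ref{lem:integrable} bounds these uniformly by $\barR$. Koml\'os's theorem gives Ces\`aro averages converging a.e.\ and in $L^1$ to some $d$ with $d(t)\in\hatD(t,R,p)$ a.e., so $x=\int d\,\rmd T$.
\end{enumerate}

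To conclude, write $R_n=R^*(p_n)$, which satisfies $R_n\in\barD(R_n,p_n)$. Along any convergent subsequence $R_n\to\hat R$, the closed graph gives $\hat R\in\barD(\hat R,p)$. So $\hat R$ is an equilibrium congestion level at price $p$, and the uniqueness part of Theorem~\ref{thm:exist_unique} forces $\hat R=R^*(p)$. Since every subsequence has a further subsequence converging to $R^*(p)$, continuity follows.

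The main obstacle is the upper hemicontinuity of $\hatD(t,\cdot,\cdot)$ near $\tilN=0$ on the \ac{sft} branch, where $E_{\sft}$ blows up. I would handle this using the explicit lower cutoff $\varepsilon(t,R)$, made uniform over a compact $(R,p)$-neighbourhood. With that cutoff, minimisation takes place over a fixed compact action set, which justifies applying Berge's theorem. Measurability of the selections is already covered by Step~1 of the existence proof.
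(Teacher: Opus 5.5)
Your proposal is correct and follows essentially the same route as the paper. Both arguments lift Step~3 of the existence proof of Theorem~\ref{thm:exist_unique} to get a closed graph (upper hemicontinuity) of $\barD(R,p)$ jointly in $(R,p)$, then combine the closed graph of the fixed-point correspondence on the compact range $[0,\barR]$ with uniqueness of the equilibrium congestion level to conclude continuity of $R^*(p)$. Your appeal to Theorem~\ref{thm:r_decreasing_p} is not needed by the subsequence argument. Your uniform choice of the cutoffs $\varepsilon$ and $K_a$ over compact $(R,p)$-neighbourhoods just makes explicit what the paper covers with ``following the exact procedures.'' The same goes for the uniform demand bound: the constant in Lemma~\ref{lem:integrable} is for a fixed $p$, but comparing with the action $(\icl,0)$ gives $R_a\tilN\le \sup_{t}E_{\icl}(t,0)/p_n$, which is uniform for $p_n$ near $p>0$.
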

The proof is in Appendix~\ref{app:conti}. Thus, the function $I(p)$ is continuous in $p$. In addition, since $R^*(p)\leq \barR$, we have that $\lim_{p\downarrow 0}pR^*(p)=0$. As a result, the limit $I(0)=\lim_{p\downarrow 0}I(p)$ is well-defined. With this extension, $I(p)$ is continuous in $p$ for $p\in[0,\infty)$.

\textbf{Step 2: Restrict the domain of $p$.}

To prove that the maximizer of $I(p)$ is not $\infty$, we need to prove that the over-large $p$ will lead to small value of $I(p)$. We prove this in two steps: 1. there exists $p_1<\infty$ such that for any $p\in[p_1,\infty)$, the type-$t$ user will choose \ac{icl} instead of \ac{sft} at equilibrium for $T$-a.e.; 2. there exists $p_2<\infty$ such that for any $p\in[p_2,\infty)$, the optimal \ac{icl} sample number is $0$ for type-$t$ users $T$-a.e. 

We start with the first claim. Define 
\begin{align*}
    \Phi_a(t,H)=\inf_{N\geq 0}\big\{E_a(t,N)+R_a H N\big\}, \text{ and }N_a(t,H)=\argmin_{N\geq 0}\big\{E_a(t,N)+R_a H N\big\}.
\end{align*}
Then we have that
\begin{align*}
    \Phi_\icl(t,H)&\leq E_\icl(t,0)= (d-r)\tau + r\zeta,\\
    \Phi_\sft(t,H)&\geq \inf_{N\geq 0}\bigg\{\frac{2(d-r)\tilsigma^2}{N}+R_\sft H N\bigg\} = 2\sqrt{2(d-r)\tilsigma^2R_\sft H}.
\end{align*}
Thus, when $H>\barH$, $\Phi_\sft(t,H)>\Phi_\icl(t,H)$ for all $t$, where
\begin{align*}
    \barH = \sup_{t\in\calT} \frac{((d-r)\tau + r\zeta)^{2}}{8(d-r)\tilsigma^2R_\sft}.
\end{align*}
Since the congestion level $R^*$ at equilibrium is non-negative, we can set $p_1 = \barH$. Thus, we prove the first claim.

For the second claim, we note that $E_{\icl}$ is convex in $\tilN$. Thus, when $R_{\icl}H$ is larger than the  partial derivative of $E_{\icl}$ with respect to $\tilN$ at $\tilN=0$, $N_\icl(t,H)=0$. The direct calculation shows that this is equivalent to 
\begin{align*}
    H\geq \sup_{t\in\calT}\frac{2\bar{\sigma}^2r(\pi\zeta-\bar{\sigma}^2)}{\pi^2\tilsigma^2 R_\icl}.
\end{align*}
Thus, we can set $p_2$ as the right-hand side of this equation.

Summarizing these two claims, once $p>\max\{p_1,p_2\}$, the $R^*$ at equilibrium is equal to $0$, and leads to $0$ profit.  

\textbf{Step 3: Prove the finitness of $p^*$.}

According to Steps 1 and 2, the optimization of $p$ is search the maximial value of a continuous function $I(p)$ on a compact set $[0,\max\{p_1,p_2\}]$, which has the finite maximizer and profit. Thus, we conclude the proof of Proposition~\ref{prop:s_exist}.

\section{Supporting Propositions}\label{app:supp_prop}
\subsection{Error Analysis of Affine Estimators}
\begin{proposition}\label{prop:uni_err}
    Let $\theta^{*}\in\mathbb{R}^d$ be the parameter sampled from $\calN(0,\Sigma^{*})$, $m\in\mathbb{R}^d$ a reference mean, which is a random vector with expectation $\mu_m$ and covariance $\Sigma_m$,
    $X\in\mathbb{R}^{n\times d}$ a fixed design matrix, and $Y\in\mathbb{R}^n$ obey the linear model
    \begin{align*}
        Y = X\theta^{*} + \varepsilon, \qquad \varepsilon \sim \mathcal{N}(0,\sigma^2 I_n), 
    \end{align*}
    where $\varepsilon$, $\theta^{*}$ and $m$ are independent. Consider the affine estimator
    $
    \hat\theta = m + K(Y - Xm), \quad K\in\mathbb{R}^{d\times n},
    $
    where $K$ depends on $X$ and is a fixed matrix. The error of it is
    \begin{align*}
        \bbE\big[\|\hat{\theta}-\theta^{*}\|^2\big]= \tr\Big((I-KX)\big(\Sigma^*+\Sigma_m+\mu_m\mu_m^\top\big)(I-KX)^\top\Big)
+\sigma^2\tr(K^\top K).
    \end{align*}
\end{proposition}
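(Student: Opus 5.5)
The plan is a direct bias--variance computation, with no linear-algebra structure of $K$ beyond linearity. Write the error as
\begin{align*}
\hat\theta-\theta^{*} = m + K(X\theta^{*}+\varepsilon - Xm)-\theta^{*} = (I-KX)(m-\theta^{*}) + K\varepsilon.
\end{align*}
Both $K$ and $X$ are fixed. The random ingredients $m$, $\theta^{*}$, $\varepsilon$ are independent, and $\varepsilon$ has mean zero. Hence the cross term vanishes:
\begin{align*}
\bbE\big[(m-\theta^{*})^\top (I-KX)^\top K\varepsilon\big] = \bbE\big[(m-\theta^*)^\top\big](I-KX)^\top K\,\bbE[\varepsilon] = 0 .
\end{align*}
Therefore $\bbE\|\hat\theta-\theta^*\|^2 = \bbE\|(I-KX)(m-\theta^*)\|^2 + \bbE\|K\varepsilon\|^2$, and I treat the two pieces separately.

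\emph{Noise term.} Write $\bbE\|K\varepsilon\|^2=\tr(K\,\bbE[\varepsilon\varepsilon^\top]K^\top)$. Since $\bbE[\varepsilon\varepsilon^\top]=\sigma^2 I_n$, this equals $\sigma^2\tr(KK^\top)=\sigma^2\tr(K^\top K)$.

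\emph{Prior/reference term.} Set $u=m-\theta^*$. For a fixed matrix $M=I-KX$ we have $\bbE\|Mu\|^2=\tr\big(M\,\bbE[uu^\top]M^\top\big)$, so it remains to compute the second moment of $u$. Independence of $m$ and $\theta^*\sim\calN(0,\Sigma^*)$ gives $\bbE[u]=\mu_m$ and $\Cov(u)=\Sigma_m+\Sigma^*$, since the covariance cross terms vanish. Consequently $\bbE[uu^\top]=\Sigma^*+\Sigma_m+\mu_m\mu_m^\top$, and substituting yields the stated formula.

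No step is a genuine obstacle. The only care needed is bookkeeping: the fact that $\theta^*$ has mean zero ensures that no $\mu_m\bbE[\theta^*]^\top$ cross term survives in $\bbE[uu^\top]$, and the expectation must be over the joint law of $(m,\theta^*,\varepsilon)$ with $K$ deterministic. If $K$ were allowed to depend on the data, the factorization would fail, which is why it matters that $K$ depends only on the fixed design $X$.
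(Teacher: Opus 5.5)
Your proposal is correct and follows essentially the same route as the paper. Both write $\hat\theta-\theta^*=(I-KX)(m-\theta^*)+K\varepsilon$, kill the cross term using the zero mean of $\varepsilon$ (independent of $(m,\theta^*)$), and evaluate the two quadratic terms via traces, with $\bbE[(m-\theta^*)(m-\theta^*)^\top]=\Sigma^*+\Sigma_m+\mu_m\mu_m^\top$.
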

\begin{proof}[Proof of Proposition~\ref{prop:uni_err}]
Let $A=I_d-KX$ and $\Delta=m-\theta^*$. Then we have that
\begin{align*}
& \bbE\big[\|\hat\theta-\theta^*\|^2\big]
= \bbE\big[\|A\Delta+K\varepsilon\|^2\big]
= \bbE\big[\Delta^\top A^\top A\Delta\big] + \bbE\big[\varepsilon^\top K^\top K\varepsilon\big] + \bbE\big[2\Delta^\top A^\top K\varepsilon\big].
\end{align*}
It is easy to see the last term is $0$, since $\varepsilon$ is zero-mean. Write  $S=\mathbb{E}[\Delta\Delta^\top]$, then we have that
\begin{align*}
&\mathbb{E}\left[\Delta^\top A^\top A\Delta\right]
= \tr\left(A^\top A\mathbb{E}[\Delta\Delta^\top]\right)
= \tr(A S A^\top),\text{ and } \mathbb{E}\left[\varepsilon^\top K^\top K\varepsilon\right]
= \tr\left(K^\top K\mathbb{E}[\varepsilon\varepsilon^\top]\right)
= \sigma^2\tr(K^\top K),
\end{align*}
where $S=\mathbb{E}[(m-\theta^*)(m-\theta^*)^\top]$. In fact, we have that
\begin{align*}
S=\bbE[mm^\top]-\bbE[m\theta^{*\top}]-\bbE[\theta^*m^\top]+\bbE[\theta^*\theta^{*\top}]
=\Sigma_m+\mu_m\mu_m^\top + \Sigma^*.
\end{align*}
Thus, we have that
\begin{align*}
\bbE\left[\|\hat\theta-\theta^*\|^2\right]
= \tr\Big(A\big(\Sigma^*+\Sigma_m+\mu_m\mu_m^\top\big)A^\top\Big)
+\sigma^2\tr(K^\top K).
\end{align*}
We conclude the proof of Proposition~\ref{prop:uni_err}.

\end{proof}

\subsection{Proof of Proposition~\ref{prop:opt_alg}}\label{app:opt_alg}
For \ac{sft}, since the error $E_\sft$ is strictly convex and $\lim_{N\downarrow 0 }E_{\sft}(N)=\infty$. Thus, the minimizer of the cost can be found by the first-order condition. By direct calculations, we have
\begin{align*}
    N_{\sft}(H) = \sqrt{\frac{2\tilsigma^2d}{R_{\sft}H}}, \text{ and } \Phi_{\sft}(H) = 2\sqrt{2\tilsigma^{2}dR_{\sft}H}.
\end{align*}
For \ac{icl}, the error $E_\icl$ is strictly convex. Thus, the minimizer is achieved at the point satisfying the first-order condition or the boundary point. By the direct calculation, we have that
\begin{align*}
    N_{\icl}(H) = \max\bigg\{0,\sqrt{\frac{2\tilsigma^2 r}{R_{\icl}H}}-\frac{2\tilsigma^2}{\zeta}\bigg\}.
\end{align*}
When $R_{\icl}H\geq r\zeta^2/(2\tilsigma^2)$, the value of $\Phi_{\icl}(H)$ is
\begin{align*}
    \Phi_{\icl}(H) = (d-r)\tau+ r\zeta.
\end{align*}
When $R_{\icl}H<r\zeta^2/(2\tilsigma^2)$, the value of $\Phi_{\icl}(R)$ is
\begin{align*}
    \Phi_{\icl}(H) = (d-r)\tau+2\sqrt{2r\tilsigma^2R_{\icl}H}-R_{\icl}\cdot\frac{2\tilsigma^2}{\zeta}\cdot H.
\end{align*}
Thus, we conclude the proof of Proposition~\ref{prop:opt_alg}.

\subsection{Proof of Proposition~\ref{prop:sft_icl_mono}}\label{app:sft_icl_mono}

We prove this by deriving and comparing the bounds of $N_{\icl}$ and $N_{\sft}$. To derive the bounds, we note that for all $a\in\{\icl,\sft\}$, the following holds.
\[
-\,\frac{\partial}{\partial N}E_a(t,N)=\alpha N^{\alpha-1} J_a(t,N^\alpha),
\]
where
\begin{align*}
J_{\sft}(t,x)=\frac{2r\tilde\sigma^2\zeta^2}{(\zeta x+2\tilde\sigma^2)^2}
  +\frac{2(d-r)\tilde\sigma^2}{x^2},\quad 
J_{\icl}(t,x)=
\frac{2r\,\bar\sigma^2\,\tilde\sigma^2\big(\bar\sigma^4 x+\tilde\sigma^2\pi(\pi\zeta-\bar\sigma^2)\big)}
     {(\tilde\sigma^2\pi+\bar\sigma^2 x)^3}.
\end{align*}
To derive the bounds of $N_\sft$ and $N_\icl$, we lower and upper bound $J_{\sft}$ and $J_{\icl}$. For \ac{sft}, we have that 
\begin{align*}
    J_{\sft}(t,x)> K_{\sft}^{\rmL}(t)/x^2 \text{, where } K_{\sft}^{\rmL}(t)=2(d-r)\tilde\sigma^2.
\end{align*}
Since any interior optimizer $N_a(t,H)$ satisfies the first-order condition
\[
R_aH=\alpha N_a(t,H)^{\alpha-1}J_a\big(t,N_a(t,H)^\alpha\big),
\]
$N_\sft(t,H)$ can be lower bounded as
\begin{align*}
    N_{\sft}(t,H)>
\Big(\frac{\alpha K_{\sft}^{\rmL}(t)}{R_{\sft}H}\Big)^{\frac{1}{\alpha+1}}.
\end{align*}
For \ac{icl}, we have that 
\begin{align*}
    J_{\icl}(x)\le K_{\icl}^{\rmU}(t)/x^2 \text{, where } K_{\icl}^{\rmU}(t)=\sup_{x\geq 0}x^2 J_{\icl}(t,x).
\end{align*}
Thus, we can upper bound $N_\icl(t,H)$ as
\begin{align}
    N_{\icl}(t,H)\leq\Big(\frac{\alpha K_{\icl}^{\rmU}(t)}{R_{\icl}H}\Big)^{\frac{1}{\alpha+1}}.\label{ieq:compare}
\end{align}

To ensure that $R_{\sft}N_{\sft}(H)> R_{\icl}N_{\icl}(H)$ for all $H>0$, we only need 
\begin{align*}
R_{\sft}^{\alpha}K_{\sft}^{\rmL}(t)> R_{\icl}^{\alpha}K_{\icl}^{\rmU}(t).
\end{align*}
In the following, we will upper bound $K_{\icl}^{\rmU}(t)$. Let $\beta(t)=(\pi\zeta-\bar\sigma^2)/\bar\sigma^2=1+\pi(\tau+m^2)/\bar\sigma^2$. Then a change of variables shows
\[
K_{\icl}^{\rmU}
=
2r\tilde\sigma^2\sup_{x\geq 0}\frac{x^2(x+\beta(t))}{(1+x)^3}.
\]
For the last term on the right-hand side of this equality, we have
\begin{align*}
    \sup_{x\geq 0}\frac{x^2(x+\beta(t))}{(1+x)^3}< \sup_{x\geq 0}\frac{x+\beta(t)}{x+1} = \beta(t).
\end{align*}
Thus, we only need the following condition for \eqref{ieq:compare}, which is exactly Assumption~\ref{assump:large_large}.
\[
\Big(\frac{R_{\sft}}{R_{\icl}}\Big)^{\alpha}
\ge
\frac{r}{d-r}\beta(t).
\]
Thus, we conclude the proof of Proposition~\ref{prop:sft_icl_mono}.

\subsection{Proof of Proposition~\ref{prop:H_fixed_points}}\label{app:H_fixed_points}
We start with \ac{sft}. We note that within-algorithm equilibrium is defined via 
\begin{align}
    H_a^* = \barh\big(R_a\cdot N_a(H_a^*),p\big) =\big(R_a\cdot N_a(H_a^*)\big)^2+p .\label{eq:H_fixed}
\end{align}
Pluging $N_{\sft}(H)$ in Proposition~\ref{prop:opt_alg} into the fixed point equation \eqref{eq:H_fixed} derives that
\begin{align*}
H = p + \left(\sqrt{\frac{2\tilde\sigma^2 d\,R_{\sft}}{H}}\right)^2
= p + \frac{2\tilde\sigma^2 d\,R_{\sft}}{H}.
\end{align*}
Solving this equation gives us the result that
\begin{align*}
    H_{\sft}^*(p)=\frac{p+\sqrt{p^2+8\tilde\sigma^2 d\,R_{\sft}}}{2}.
\end{align*}

For \ac{icl}, Proposition~\ref{prop:opt_alg} shows that
\begin{align*}
    N_{\icl}(H)
&=\max\left\{0,\sqrt{\frac{2\tilde\sigma^2 r}{R_{\icl}H}}-\frac{2\tilde\sigma^2}{\zeta}\right\}.
\end{align*}
Thus, the fixed point equation \eqref{eq:H_fixed} becomes
\begin{align*}
    \max\left\{0,\sqrt{\frac{2\tilde\sigma^2 rR_{\icl}}{H}}-\frac{2\tilde\sigma^2 R_{\icl}}{\zeta}\right\} = \sqrt{H-p}.
\end{align*}
Since the left-hand side of this equation is monotonically decreasing in $H$, while the right-hand side is increasing in $H$, the equation admits at most one root. When $p\geq \barH$, $H=p$ makes both sides as zero. Thus, the root under this setting is $p$.

For $H_{\sep}^{*}$, we consider two cases: $H\geq\barH$, and $H<\barH$, where $\barH=r\zeta^2/(2R_{\icl}\tilsigma^2)$. When $H\geq\barH$, Proposition~\ref{prop:opt_alg} shows that
\begin{align*}
\Phi_{\icl}(H)=(d-r)\tau+r\zeta,
\qquad
\Phi_{\sft}(H)=2\sqrt{2\tilde\sigma^2 dR_{\sft}H}.
\end{align*}
Thus $\psi(H)=0$ becomes
\begin{align*}
8\tilde\sigma^2 dR_{\sft}H=\left((d-r)\tau+r\zeta\right)^2,
\end{align*}
which is the desired result. The condition $H\geq\barH$ requires that
\begin{align*}
    \frac{R_\sft}{R_\icl}\leq \frac{\big((d-r)\tau+r\zeta\big)^2}{4dr\zeta^2}.
\end{align*}

When $H<\barH$, Proposition~\ref{prop:opt_alg} shows that
\begin{align*}
\Phi_{\icl}(H)=(d-r)\tau + 2\sqrt{2r\tilde\sigma^2R_{\icl}H}-\frac{2\tilde\sigma^2 R_{\icl}}{\zeta}H.
\end{align*}
Therefore $\psi(H)=0$ is equivalent to
\begin{equation}
\label{eq:psi_low}
2\sqrt{2\tilde\sigma^2 dR_{\sft}H}
-2\sqrt{2r\tilde\sigma^2R_{\icl}H}
+\frac{2\tilde\sigma^2 R_{\icl}}{\zeta}H
-(d-r)\tau
=0.
\end{equation}
Let $u=\sqrt{H}>0$. Then $\sqrt{H}=u$ and $H=u^2$, and $\psi(H)=0$ becomes
\begin{align*}
\Bigg(2\sqrt{2}\tilde\sigma(\sqrt{dR_{\sft}}-\sqrt{rR_{\icl}})\Bigg)u
+\Bigg(\frac{2\tilde\sigma^2 R_{\icl}}{\zeta}\Bigg)u^2
-(d-r)\tau=0.
\end{align*}
That is,
\begin{align*}
Cu^2+Bu-(d-r)\tau=0,
\end{align*}
with $B,C$ as
\begin{align*}
B =2\sqrt{2}\tilde\sigma\Big(\sqrt{dR_{\sft}}-\sqrt{rR_{\icl}}\Big),
\qquad
C =\frac{2\tilde\sigma^2 R_{\icl}}{\zeta}.
\end{align*}
Solving for the positive root gives
\begin{align*}
u=\frac{-B+\sqrt{B^2+4C(d-r)\tau}}{2C},
\end{align*}
and hence $H_{\sep}^*=u^2$, which is exactly the results. Thus, we conclude the proof of Proposition~\ref{prop:H_fixed_points}.

\subsection{Proof of Proposition~\ref{prop:R_bd}}\label{app:R_bd}
Before the formal proof, we first define several quantities. For each type $t$, we define the resource-normalized error envelope
\begin{equation*}
g_t(r)=\min_{a\in\{\sft,\icl\}} E_a\!\left(t,\frac{r}{R_a}\right),\qquad r\ge 0.
\end{equation*}
Then we have that
\begin{align*}
    g_t(0)=(d-r)\tau+r\zeta,
\end{align*}
which is a continuous function of the type $t$. Since $\calT$ is compact, we define the maximal value of $g_t(0)$ as
\begin{align*}
    B=\sup_{t\in\calT} (d-r)\tau+r\zeta.
\end{align*}
For each congestion level $R\ge 0$, we define the (type-$t$) optimal resource demand correspondence
\begin{equation}\label{eq:def_D_t}
D(t,R)=\arg\min_{r\ge 0}\Big\{g_t(r)+\barh(R,p)\,r\Big\}.
\end{equation}
Let the aggregate best-response correspondence be
\begin{equation}\label{eq:def_D_bar}
\bar D(R)=\left\{\int_{\calT} d(t)\,T(dt): d(\cdot)\ \text{measurable and }d(t)\in D(t,R)\ \forall t\right\}.
\end{equation}
An equilibrium congestion level is a fixed point $R^*\in\bar D(R^*)$. In the following, we show that $R^{*}$ is bounded..

First, we derive a pointwise upper bound of $\bar D(R)$. Fix any $R>0$ and any $t\in\calT$. Let $r_t\in D(t,R)$ be an optimizer of~\eqref{eq:def_D_t}.
Comparing the optimal value at $r_t$ with the feasible point $r=0$ yields
\[
g_t(r_t)+h(R)\,r_t\ \le g_t(r_t)+\barh(R,p)\,r_t \le\ g_t(0).
\]
Since $g_t(r)\geq 0$, we obtain
\[
h(R)\,r_t\ \le B.
\]
Because $h(R)>0$ for $R>0$ (strictly increasing with $h(0)\geq 0$), it follows that every optimizer satisfies
\begin{equation*}
0\le r_t\ \le\ \frac{B}{h(R)}.
\end{equation*}
Consequently, we have that
\begin{equation}\label{eq:Dbar_upper}
\bar D(R)\ \subseteq\ \Big[0,\ \frac{B}{h(R)}\Big],\qquad \forall R>0.
\end{equation}

Then we show that the fixed point $R^{*}$ is bounded. Define the threshold
\begin{equation}\label{eq:def_Rbar}
\bar R=\inf\Big\{R>0:\ R\,h(R)\ \ge\ B\Big\}.
\end{equation}
Because $h$ is strictly increasing and $h(R)\to\infty$ as $R\to\infty$, we have $R\,h(R)\to\infty$ as $R\to\infty$ and therefore the set in~\eqref{eq:def_Rbar} is nonempty; hence $\bar R<\infty$. In fact, no fixed point can lie above $\bar R$: if $R^*\in\bar D(R^*)$ and $R^*>\bar R$, then~\eqref{eq:Dbar_upper} implies
\[
R^*\ \le\ \frac{B}{h(R^*)},
\]
which rearranges to $R^*\,h(R^*)\le B$, contradicting $R^*>\bar R$ and the definition~\eqref{eq:def_Rbar}. Thus, we conclude the proof of this proposition.

\subsection{Proof of Proposition~\ref{prop:conti}}\label{app:conti}
The main idea of the proof is similar to Step 3 in the proof of Theorem~\ref{thm:exist_unique}. Here, we first extend the notations there to include the influence of $p$.
For each $(R,p)$ and type $t$, define the (resource) best-response correspondence
\[
D(t,R,p)=\arg\min_{r\ge 0}\ \bigl\{ g_t(r)+\bar h(R,p)\,r\bigr\}\text{, where }g_t(r)=\min_{a\in\{\sft,\icl\}} E_a\!\left(t,\frac{r}{R_a}\right).
\]
Let the aggregate best-response correspondence be the Aumann integral
\[
\barD(R,p)=\int_T \hatD(t,R,p)\,T(dt)\subset\mathbb R_+, \text{ where }\hatD(t,R,p) = \mathrm{co}D(t,R,p).
\]
Proposition~\ref{prop:R_bd} shows that we can restrict the range of $R$ to $[0,\barR]$. The equilibrium congestion level $R^*(p)$ is fixed point
\[
R^*(p)\in \bar D(R^*(p),p).
\]

In the Step 3 of Theorem~\ref{thm:exist_unique} proof, we show that $\barD(R)$ there is upper hemicontinuious. Following the exact procedures, we can also prove that $\barD(R,p)$ is upper hemicontinuous in $(R,p)$. Define the fixed-point correspondence
\[
\Gamma(p)=\{R\in[0,\barR]\given R\in \barD(R,p)\},
\]
where $B$ is the bound of $\barD(R,p)$. Then the upper hemicontinuity of $\barD(R,p)$ implies that
\[
\mathrm{Gr}(\Gamma)=\{(p,R)\given p\geq 0,\ R\in[0,B],\ R\in\bar D(R,p)\}
\]
is closed in $\bbR_{+}\times[0,B]$. In addition, Theorem~\ref{thm:exist_unique} shows that 
\begin{align*}
    \mathrm{Gr}(\Gamma)=\{(p,R^*(p))\given p\geq 0\}.
\end{align*}
Thus, $R^*(p)$ is continuous in $p$. We conclude the proof of Proposition~\ref{prop:conti}.

\subsection{Monotonicity and Convexity of Several Functions}\label{app:mono}
\begin{proposition}\label{prop:cvx_func}
Fix $a,b,c,d>0$ and $0<\alpha\le 1$. Define
\begin{align*}
f(x)=\frac{a x^{\alpha}+b}{\big(c x^{\alpha}+d\big)^2},\qquad x\ge 0,
\end{align*}
and write $t=x^\alpha\in[0,\infty)$. Then, for every $x>0$,
\begin{equation}
\label{eq:fpp_factor}
f''(x)=\frac{\alpha\,x^{\alpha-2}}{\big(c t+d\big)^4}\,Q(t),
\end{equation}
where $Q$ is the quadratic polynomial
\begin{equation}
\label{eq:Qt_def}
Q(t)=a c^2(\alpha+1)t^2+2c\Big((2\alpha+1)bc-2a\alpha d\Big)t+(\alpha-1)d(ad-2bc).
\end{equation}
A sufficient condition for the strict convexity of $f$ is 
\begin{align*}
    \frac{ad}{bc}\leq \min\bigg\{2,\frac{2\alpha+1}{2\alpha}\bigg\}.
\end{align*}
\end{proposition}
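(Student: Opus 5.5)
The plan is to compute $f''$ by the chain rule through the substitution $t=x^\alpha$, and then read off convexity from the signs of the coefficients of $Q$.

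\textbf{Step 1 (derivative formula).} Write $f(x)=F(x^\alpha)$ with $F(t)=(at+b)/(ct+d)^2$. The chain rule gives $f'(x)=\alpha x^{\alpha-1}F'(t)$, and
\begin{align*}
f''(x)=\alpha(\alpha-1)x^{\alpha-2}F'(t)+\alpha^2x^{2\alpha-2}F''(t)=\alpha x^{\alpha-2}\big[(\alpha-1)F'(t)+\alpha tF''(t)\big],
\end{align*}
using $x^{2\alpha-2}=x^{\alpha-2}t$.

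\textbf{Step 2 (derivatives of $F$).} A direct quotient-rule computation gives
\begin{align*}
F'(t)=\frac{ad-2bc-act}{(ct+d)^3},\qquad F''(t)=\frac{2ac^2t-4acd+6bc^2}{(ct+d)^4}.
\end{align*}
Put these over the common denominator $(ct+d)^4$. The bracket's numerator is then
\begin{align*}
(\alpha-1)(ad-2bc-act)(ct+d)+\alpha t(2ac^2t-4acd+6bc^2).
\end{align*}
Expanding and collecting powers of $t$ should reproduce $Q(t)$ exactly as in \eqref{eq:Qt_def}:
\begin{itemize}
\item the $t^2$ coefficient is $-(\alpha-1)ac^2+2\alpha ac^2=(\alpha+1)ac^2$;
\item the $t^1$ coefficient collects to $2c\big((2\alpha+1)bc-2a\alpha d\big)$;
\item the constant term is $(\alpha-1)d(ad-2bc)$.
\end{itemize}
This establishes \eqref{eq:fpp_factor}. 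The only risk in the whole argument is bookkeeping in this expansion, and I regard it as the main (if modest) obstacle.

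\textbf{Step 3 (sufficient condition).} Since $\alpha x^{\alpha-2}/(ct+d)^4>0$ for $x>0$, it suffices to show $Q(t)>0$ for all $t>0$. I will show that every coefficient of $Q$ is nonnegative and the leading one is strictly positive.
\begin{itemize}
\item The leading coefficient $(\alpha+1)ac^2$ is positive because $a,c>0$.
\item The linear coefficient is nonnegative exactly when $ad/(bc)\le(2\alpha+1)/(2\alpha)$.
\item The constant term is nonnegative when $ad/(bc)\le 2$, because $\alpha-1\le 0$ and $ad-2bc\le0$. In the case $\alpha=1$ it vanishes, which is harmless.
\end{itemize}
Under the stated condition, therefore, $Q(t)\ge(\alpha+1)ac^2t^2>0$ for $t>0$, so $f''>0$ on $(0,\infty)$. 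Since $f$ is continuous on $[0,\infty)$, a function with positive second derivative on the interior is strictly convex on the closed half-line. This concludes the plan.
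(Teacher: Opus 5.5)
Your proposal is correct and follows the paper's proof. The paper uses the same chain-rule computation through $t=x^\alpha$ to obtain $Q$, and your requirement that the linear and constant coefficients of $Q$ be nonnegative is equivalent to the paper's condition, namely that the vertex $t_*=-B/(2A)$ satisfies $t_*\le 0$ and $Q(0)\ge 0$. Your remark extending strict convexity from $(0,\infty)$ to $[0,\infty)$ by continuity fills in a small detail the paper leaves implicit.
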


\begin{proof}
Let $t=x^\alpha$. Define $g(t)=(a t+b)/(c t+d)^2$ so that $f(x)=g(t)$.
We first compute $g'(t)$ and $g''(t)$ as follows.
\begin{align*}
g'(t)=\frac{ad-ac\,t-2bc}{(c t+d)^3}, \text{ and }
g''(t)=\frac{2ac^2 t+2c(3bc-2ad)}{(c t+d)^4}.
\end{align*}

Now use the chain rule for $f(x)=g(t)$ with $t=x^\alpha$ to derive
\begin{equation}
\label{eq:fpp_chain}
f''(x)=g''(t)\,(t')^2+g'(t)\,t'',
\end{equation}
where $t'=\alpha x^{\alpha-1}$, and $t''=\alpha(\alpha-1)x^{\alpha-2}$. Plugging the expressions for $g'(t),g''(t),t',t''$ into \eqref{eq:fpp_chain}, we get
\begin{align*}
f''(x)&=\frac{\alpha x^{\alpha-2}}{(c t+d)^4}
\Big( 2\alpha x^\alpha\big( ac^2 t+c(3bc-2ad)\big)
+(\alpha-1)(ad-ac\,t-2bc)(c t+d)\Big).
\end{align*}
Since $x^\alpha=t$, expand the bracket:
\begin{align*}
&2\alpha t\big( ac^2 t+c(3bc-2ad)\big)
+(\alpha-1)(ad-ac\,t-2bc)(c t+d)\\
&\quad =a c^2(\alpha+1)t^2+2c\big((2\alpha+1)bc-2a\alpha d\big)t+(\alpha-1)d(ad-2bc).
\end{align*}
This yields \eqref{eq:fpp_factor}--\eqref{eq:Qt_def}. For the convexity of the function, we only need to study the sign of $Q(t)$. Write
\begin{align*}
Q(t)=A t^2+B t+C,
\quad
A=ac^2(\alpha+1)>0,
\quad
B=2c\big((2\alpha+1)bc-2a\alpha d\big),
\quad
C=(\alpha-1)d(ad-2bc).
\end{align*}
Since $A>0$, $Q$ is convex in $t$ and attains its minimum over $[0,\infty)$ either at
\begin{align*}
t_*=-\frac{B}{2A}\quad\text{if }t_*\ge 0,
\qquad\text{or at }t=0\quad\text{if }t_*<0.
\end{align*}
Introduce the ratio $u=ad/bc$. Substituting $ad=u\,bc$ into $(A,B,C)$ gives
\begin{align*}
A=\frac{bc^3}{d}u(\alpha+1),\qquad
B=2bc^2\big((2\alpha+1)-2\alpha u\big),\qquad
C=bcd(\alpha-1)(u-2).
\end{align*}
Hence, we have that 
\begin{align*}
t_*=-\frac{B}{2A}
=\frac{d\,(2\alpha u-2\alpha-1)}{c\,u(\alpha+1)}.
\end{align*}
Therefore, $t_*\leq 0$ is equivalent to 
\begin{align*}
    u\leq \frac{2\alpha+1}{2\alpha}.
\end{align*}
Thus, a sufficient condition for the strict convexity is that $u\leq (2\alpha+1)/(2\alpha)$ and $Q(0)\geq 0$, i.e.,
\begin{align*}
    \frac{ad}{bc}\leq \frac{2\alpha+1}{2\alpha} \text{, and }ad-2bc\leq 0.
\end{align*}
Thus, we conclude the proof of this proposition.
\end{proof}

\begin{proposition}\label{prop:mono1}
For $A>0$ and $x>0$, the function
\begin{align*}
f(x)=\frac{\sqrt{A+x}-\sqrt{A}}{x}
\end{align*}
is strictly decreasing on $(0,\infty)$. Moreover,
\begin{align*}
\lim_{x\downarrow 0} f(x)=\frac{1}{2\sqrt{A}},\qquad \lim_{x\to\infty} f(x)=0,
\end{align*}
so $f$ maps $(0,\infty)$ onto $\bigl(0,\frac{1}{2\sqrt{A}}\bigr)$.
\end{proposition}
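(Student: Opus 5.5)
Rationalizing the numerator turns $f$ into a form whose monotonicity is immediate. For $x>0$, multiplying numerator and denominator by $\sqrt{A+x}+\sqrt{A}$ gives
\begin{align*}
f(x)=\frac{(A+x)-A}{x\big(\sqrt{A+x}+\sqrt{A}\big)}=\frac{1}{\sqrt{A+x}+\sqrt{A}}.
\end{align*}
The cancellation of $x$ is legitimate because $x>0$, and the denominator is positive because $A>0$.

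With this representation, strict monotonicity follows directly. The map $x\mapsto\sqrt{A+x}$ is strictly increasing on $(0,\infty)$, so the denominator $\sqrt{A+x}+\sqrt{A}$ is strictly increasing and positive. Its reciprocal $f$ is therefore strictly decreasing on $(0,\infty)$.

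The limits are read off the same formula. By continuity of the square root at $A>0$, as $x\downarrow 0$ the denominator tends to $2\sqrt{A}$, so $f(x)\to 1/(2\sqrt{A})$. As $x\to\infty$ the denominator diverges, so $f(x)\to 0$.

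For the range, $f$ is continuous on $(0,\infty)$ and strictly decreasing. Hence its image is the open interval between its two limits, $(0,1/(2\sqrt{A}))$. Neither endpoint is attained: the value $1/(2\sqrt{A})$ is excluded because $f$ is strictly decreasing and $0$ is not in the domain, and the value $0$ is excluded because $f$ is strictly decreasing and $f>0$ everywhere. The intermediate value theorem then gives surjectivity onto that interval.

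The only step needing care is the initial algebraic simplification, and it is routine. An alternative check is to differentiate and show $f'(x)<0$ via the inequality $\sqrt{A}<\sqrt{A+x}-\frac{x}{2\sqrt{A+x}}$, but the rationalized form makes this unnecessary.
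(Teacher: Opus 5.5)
Your proof is correct and follows the paper's route: rationalize to $f(x)=1/(\sqrt{A+x}+\sqrt{A})$, then read off monotonicity and both limits from that form. The differences are minor. The paper confirms monotonicity by computing $f'(x)<0$ instead of using the reciprocal of an increasing positive function, and your continuity and intermediate-value argument justifies the ``onto'' claim, which the paper asserts without proof.
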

\begin{proof}
We first prove the first result. Rewrite $f$ by rationalizing:
\begin{align*}
f(x)=\frac{\sqrt{A+x}-\sqrt{A}}{x}\cdot \frac{\sqrt{A+x}+\sqrt{A}}{\sqrt{A+x}+\sqrt{A}}
=\frac{1}{\sqrt{A+x}+\sqrt{A}}.
\end{align*}
Differentiate for $x>0$:
\begin{align*}
f'(x)= -\frac{1}{\left(\sqrt{A+x}+\sqrt{A}\right)^2}\cdot \frac{1}{2\sqrt{A+x}}
= -\frac{1}{2\sqrt{A+x}\left(\sqrt{A+x}+\sqrt{A}\right)^2}.
\end{align*}
Since the denominator is positive for all $x>0$, we have $f'(x)<0$ for all $x>0$.
Hence $f$ is strictly decreasing on $(0,\infty)$. Using the simplified form $f(x)=1/(\sqrt{A+x}+\sqrt{A})$,
\begin{align*}
\lim_{x\downarrow 0} f(x)=\frac{1}{\sqrt{A}+\sqrt{A}}=\frac{1}{2\sqrt{A}},
\qquad
\lim_{x\to\infty} f(x)=\lim_{x\to\infty}\frac{1}{\sqrt{A+x}+\sqrt{A}}=0.
\end{align*}
\end{proof}
\begin{proposition}\label{prop:mono2}
    Consider the function
    \begin{align*}
    f(x)=\frac{(A-x+Bx)^2}{AB^2x},\qquad 0<x<A,\ \ A>0,\ \ B> 0.
    \end{align*}
    If $0<B<2$, then $f$ is strictly decreasing and achieves its minimal value $f(A)=1$ at $x=A$.
    
\end{proposition}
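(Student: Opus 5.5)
The plan is a direct derivative computation. First rewrite the numerator as $N(x)=A+(B-1)x$, so that
\begin{align*}
f(x)=\frac{N(x)^2}{AB^2x}.
\end{align*}
The factor $1/(AB^2)$ is a positive constant, so it suffices to study $x\mapsto N(x)^2/x$.

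By the quotient rule,
\begin{align*}
\frac{\mathrm{d}}{\mathrm{d}x}\frac{N(x)^2}{x}=\frac{2(B-1)N(x)\,x-N(x)^2}{x^2}=\frac{N(x)\big((B-1)x-A\big)}{x^2}.
\end{align*}
The sign of $f'$ is therefore the sign of $N(x)\big((B-1)x-A\big)$, and I will check the two factors separately on $(0,A)$.

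For $N$: it is affine in $x$, so on $[0,A]$ it is bounded below by $\min\{N(0),N(A)\}=\min\{A,BA\}$. This is positive because $A>0$ and $B>0$, so $N>0$ on $(0,A)$.

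For $(B-1)x-A$: if $B\le 1$ it is at most $-A<0$. If $1<B<2$ then $0<B-1<1$, so $(B-1)x<x<A$. In both cases it is strictly negative on $(0,A)$.

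Hence $f'(x)<0$ on $(0,A)$, so $f$ is strictly decreasing there. This is the only place where the hypothesis $B<2$ enters, and it is the only mildly delicate step.

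For the minimal value, $f$ is continuous on $(0,A]$, so its infimum over $(0,A)$ is the endpoint value
\begin{align*}
f(A)=\frac{(BA)^2}{AB^2A}=1.
\end{align*}
Thus $f$ decreases strictly to $1$, attained at $x=A$ when $f$ is extended by continuity to the right endpoint. This extension is the sense in which the proof of Proposition~\ref{prop:pretrain_influence} uses the statement, where $r$ ranges up to $d$.

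I expect no real obstacle here. The one point to handle carefully is that "achieves its minimum" refers to the continuous extension at $x=A$, since $A$ itself lies outside the open interval.
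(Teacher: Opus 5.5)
Your proposal is correct and follows the paper's proof essentially line by line. Both use the rewriting $A+(B-1)x$, the factorization $g'(x)=(A+cx)(cx-A)/x^2$ with $c=B-1$, the sign analysis on $(0,A)$, and the endpoint value $f(A)=1$; your bound $N\ge\min\{A,BA\}>0$ is in fact slightly more careful than the paper's step $A+cx\ge A+cA$, which holds only when $c\le 0$ (the conclusion $A+cx>0$ is true in either case).
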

\begin{proof}
Rewrite
\begin{align*}
A-x+Bx = A+(B-1)x,
\end{align*}
so
\begin{align*}
f(x)=\frac{(A+(B-1)x)^2}{AB^2x}
=\frac{1}{AB^2}\,g(x),
\qquad 
g(x)=\frac{(A+cx)^2}{x},\ \ c=B-1.
\end{align*}
Differentiating it, we get:
\begin{align*}
g'(x)=\frac{2c(A+cx)\,x-(A+cx)^2}{x^2}
=\frac{(A+cx)\bigl(2cx-(A+cx)\bigr)}{x^2}
=\frac{(A+cx)(cx-A)}{x^2}.
\end{align*}
Since $x^2>0$ on $(0,A)$, the sign of $g'(x)$ (and hence $f'(x)$) is the sign of
$(A+cx)(cx-A)$.

If $0<B\le 2$ (i.e., $-1<c\le 1$), we have $cx-A<0$ on $(0,A)$ (because $cx\le cA\le A$), and also
\begin{align*}
A+cx\ge A+cA=A(1+c)=AB>0.
\end{align*}
Therefore $g'(x)<0$ and $f'(x)<0$ on $(0,A)$:$f$ is strictly decreasing on $(0,A)$.

Since, $f$ is strictly decreasing on $(0,A)$, so the infimum is
attained at the right endpoint:
\begin{align*}
\inf_{0\leq x\leq A} f(x)=f(A)=\frac{(A+(B-1)A)^2}{AB^2A}=1.
\end{align*}
    
\end{proof}

\begin{proposition}[Unimodality and monotonicity of $g$]\label{prop:monotone_g}
Fix parameters $A,C>0$, $0<B<1$, and assume $A>C$. Define, for $x\in(0,C)$,
\begin{align*}
g(x)=\sqrt{\bigl(\sqrt A-\sqrt x\bigr)^2+B(C-x)}-\bigl(\sqrt A-\sqrt x\bigr).
\end{align*}
Then $g$ is strictly increasing on $(0,x_\ast)$ and strictly decreasing on $(x_\ast,C)$, where the unique critical point
\begin{align*}
x_\ast =\left(\frac{\sqrt A-\sqrt{A-(1-B)C}}{1-B}\right)^2 \in (0,C)
\end{align*}
is the unique maximizer of $g$ on $(0,C)$. Moreover,
\begin{align*}
g'(x)>0 \ \text{for }x\in(0,x_\ast),\qquad g'(x_\ast)=0,\qquad g'(x)<0 \ \text{for }x\in(x_\ast,C),
\end{align*}
and the maximal value satisfies
\begin{align*}
g(x_\ast)=B\sqrt{x_\ast}
=\frac{B\bigl(\sqrt A-\sqrt{A-(1-B)C}\bigr)}{1-B}.
\end{align*}
\end{proposition}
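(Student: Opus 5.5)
We reduce the statement to a one-variable calculus problem by the substitution $u=\sqrt{x}\in(0,\sqrt C)$. Write $w(u)=\sqrt A-u>0$ (positive since $u<\sqrt C<\sqrt A$) and
\[
G(u)=\sqrt{w(u)^2+B(C-u^2)}-w(u).
\]
The map $x\mapsto\sqrt x$ is strictly increasing, so $g$ and $G$ have the same monotonicity pattern. Rationalizing gives
\[
G(u)=\frac{B(C-u^2)}{\sqrt{w^2+B(C-u^2)}+w},
\]
which is positive on $(0,\sqrt C)$ and vanishes at $u=\sqrt C$; at $u=0$ it equals $\sqrt{A+BC}-\sqrt A>0$.

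Next we differentiate. Set $S(u)=\sqrt{w^2+B(C-u^2)}$. Since $w'=-1$,
\[
G'(u)=\frac{-w-Bu}{S}+1=\frac{S-(w+Bu)}{S}.
\]
The sign of $G'$ is therefore the sign of $S-(w+Bu)$. Because $w+Bu>0$, this sign matches that of
\[
S^2-(w+Bu)^2=w^2+BC-Bu^2-w^2-2Bwu-B^2u^2 .
\]
Dividing by $B>0$, it is the sign of
\[
q(u)=C-u^2-2wu-Bu^2=C-2\sqrt A\,u+(1-B)u^2,
\]
using $-u^2-2(\sqrt A-u)u=u^2-2\sqrt A u$. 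This is a convex quadratic in $u$, since $1-B>0$. It satisfies $q(0)=C>0$ and $q(\sqrt C)=(2-B)C-2\sqrt{AC}<2C-2\sqrt{AC}<0$, the last step using $A>C$.

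Hence $q$ has exactly one root in $(0,\sqrt C)$, namely its smaller root
\[
u_\ast=\frac{\sqrt A-\sqrt{A-(1-B)C}}{1-B}.
\]
The discriminant is positive because $A>(1-B)C$. Moreover $q>0$ on $(0,u_\ast)$ and $q<0$ on $(u_\ast,\sqrt C)$; the larger root exceeds $\sqrt C$, which follows from $q(\sqrt C)<0$ together with convexity. Setting $x_\ast=u_\ast^2\in(0,C)$ and using $g'(x)=G'(\sqrt x)/(2\sqrt x)$ yields the claimed sign pattern $g'>0$, $g'(x_\ast)=0$, $g'<0$, and hence unique maximality.

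For the maximal value, at $u_\ast$ the critical-point condition gives $S=w+Bu_\ast$. Therefore
\[
g(x_\ast)=G(u_\ast)=S-w=Bu_\ast=B\sqrt{x_\ast},
\]
which matches the formula in the statement.

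The only delicate step is locating the root inside $(0,\sqrt C)$. It rests on the endpoint sign of $q$, and that is exactly where the hypotheses $A>C$ and $B<1$ enter. Everything else is routine.
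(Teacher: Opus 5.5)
Your proof is correct and follows essentially the same route as the paper. Both reduce the sign of $g'$ to the sign of $g(x)-B\sqrt x$ (your $S-(w+Bu)$), square to obtain the quadratic $(1-B)y^2-2\sqrt A\,y+C$ in $y=\sqrt x$, and locate its unique root in $(0,\sqrt C)$ from $q(0)=C>0$ and $q(\sqrt C)<0$. The only difference is that you square the sign comparison everywhere (valid since $w+Bu>0$) and so read off the full sign pattern of $g'$ directly, whereas the paper squares only at critical points and then infers the signs by continuity from $F(0)>0$; your endpoint estimate $\sqrt{AC}>C$ is also the correct justification where the paper's text writes $\sqrt A\sqrt C>A$.
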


\begin{proof}
Throughout, fix $A>C>0$ and $0<B<1$. For $x\in(0,C)$ define
\begin{align*}
u(x)=\sqrt A-\sqrt x,\qquad s(x)=\sqrt{u(x)^2+B(C-x)}.
\end{align*}
Since $x\in(0,C)$ and $A>C$, we have $\sqrt x<\sqrt C<\sqrt A$, hence $u(x)>0$. Note that
\begin{align*}
g(x)=s(x)-u(x).
\end{align*}

We first calculate the derivative with respect to $x$.
Note that $u'(x)=-\frac{1}{2\sqrt x}$. Next, we have that
\begin{align*}
s'(x)
=\frac{1}{2s(x)}\frac{d}{dx}\bigl(u(x)^2+B(C-x)\bigr)
=\frac{1}{2s(x)}\bigl(2u(x)u'(x)-B\bigr).
\end{align*}
Therefore
\begin{align*}
g'(x)
&=s'(x)-u'(x)
=\frac{2u u'-B}{2s}-u'\\
&=\frac{1}{2s}\bigl(2u u'-B-2su'\bigr).
\end{align*}
Substituting $u'=-\frac{1}{2\sqrt x}$ yields
\begin{align}
g'(x)
&=\frac{1}{2s(x)}\left(\frac{s(x)-u(x)}{\sqrt x}-B\right)
=\frac{1}{2s(x)}\left(\frac{g(x)}{\sqrt x}-B\right).\label{eq:gprime_sign}
\end{align}
Since $s(x)>0$ for all $x\in(0,C)$, the sign of $g'(x)$ is exactly the sign of
\begin{align*}
\frac{g(x)}{\sqrt x}-B,
\quad\text{i.e.,}\quad
\sign(g'(x))=\sign\bigl(g(x)-B\sqrt x\bigr).
\end{align*}
In particular, critical points satisfy
\begin{equation}\label{eq:crit_equation}
g(x)=B\sqrt x.
\end{equation}

To find the critial point, we note that \eqref{eq:crit_equation} is equivalent to
\begin{align*}
s(x)=u(x)+B\sqrt x.
\end{align*}
Both sides are nonnegative, so we may square:
\begin{align*}
u(x)^2+B(C-x)=(u(x)+B\sqrt x)^2=u(x)^2+2u(x)B\sqrt x+B^2x.
\end{align*}
Cancel $u(x)^2$ and divide by $B>0$ to obtain
\begin{align*}
C-x=2u(x)\sqrt x+Bx.
\end{align*}
Using $u(x)=\sqrt A-\sqrt x$ gives
\begin{align*}
C-x=2(\sqrt A-\sqrt x)\sqrt x+Bx
=2\sqrt A\,\sqrt x-2x+Bx,
\end{align*}
hence
\begin{equation}\label{eq:quadratic_in_sqrtx}
C+(1-B)x=2\sqrt A\,\sqrt x.
\end{equation}
Let $y=\sqrt x\in(0,\sqrt C)$, so that $x=y^2$. Then \eqref{eq:quadratic_in_sqrtx} becomes
\begin{equation}\label{eq:q_def}
q(y)=(1-B)y^2-2\sqrt A\,y+C=0.
\end{equation}
The discriminant is
\begin{align*}
\Delta = 4\bigl(A-(1-B)C\bigr)>0,
\end{align*}
because $(1-B)C<C<A$. Thus \eqref{eq:q_def} has two real roots
\begin{align*}
\frac{\sqrt A\pm\sqrt{A-(1-B)C}}{1-B}.
\end{align*}
Both are positive, but the larger one is not in the interval $[0,\sqrt{C}]$ due to $A>C$ and $0<B<1$. Thus, we  define
\begin{align*}
y_\ast=\frac{\sqrt A-\sqrt{A-(1-B)C}}{1-B},
\qquad
x_\ast=y_\ast^2.
\end{align*}

In the following, we show  that $q(y)$ has exactly one zero in $(0,\sqrt C)$, hence \eqref{eq:crit_equation} has exactly one solution
$x_\ast\in(0,C)$. First, $q(0)=C>0$. Next,
\begin{align*}
q' (y)=2(1-B)y-2\sqrt A.
\end{align*}
For $y\in(0,\sqrt C)$,
\begin{align*}
q'(y)\le 2(1-B)\sqrt C-2\sqrt A<0,
\end{align*}
since $(1-B)\sqrt C<\sqrt C<\sqrt A$ (because $A>C$). Hence $q$ is strictly decreasing on $(0,\sqrt C)$.

Moreover,
\begin{align*}
q(\sqrt C)=(1-B)C-2\sqrt A\,\sqrt C+C=C(2-B)-2\sqrt A\,\sqrt C
<2C-2C=0,
\end{align*}
where the strict inequality uses $\sqrt A\,\sqrt C>A$. By continuity and strict monotonicity of $q$ on $(0,\sqrt C)$, there exists a unique
$y_\ast\in(0,\sqrt C)$ with $q(y_\ast)=0$. Therefore $x_\ast=y_\ast^2\in(0,C)$ is the unique solution of
\eqref{eq:crit_equation}, i.e., the unique critical point of $g$ on $(0,C)$.

Then we build the monotonicity analysis of $g$. From \eqref{eq:gprime_sign}, we have that
\begin{align*}
\sign(g'(x))=\sign\bigl(g(x)-B\sqrt x\bigr).
\end{align*}
Define $F(x)=g(x)-B\sqrt x$. The argument above shows $F$ has a unique zero at $x_\ast$.
To determine the sign change, note that $F(0)=g(0)>0$. Hence $F$ is strictly positive to the left
of $x_\ast$ and strictly negative to the right. Consequently,
\begin{align*}
g'(x)>0\ \text{for }x\in(0,x_\ast),\qquad g'(x_\ast)=0,\qquad g'(x)<0\ \text{for }x\in(x_\ast,C).
\end{align*}
Thus $g$ is strictly increasing on $(0,x_\ast)$ and strictly decreasing on $(x_\ast,C)$, so $x_\ast$
is the unique maximizer of $g$ on $(0,C)$.

Finally, plugging the critical-point condition $g(x_\ast)=B\sqrt{x_\ast}$ and the explicit expression
for $\sqrt{x_\ast}=y_\ast$ yields
\begin{align*}
g(x_\ast)=B y_\ast
=\frac{B\bigl(\sqrt A-\sqrt{A-(1-B)C}\bigr)}{1-B},
\end{align*}
as claimed. Thus, we conclude the proof of Proposition~\ref{prop:monotone_g}.
\end{proof}

\begin{proposition}\label{prop:icl_root_property}
    Let $e,f,g>0$ be fixed constants. Consider the equation
\begin{equation}\label{eq:icl_fixed}
\frac{e x}{\sqrt{H}} - f x^2 = \sqrt{H-g},
\end{equation}
where $H \ge g$.
To ensure feasibility, we restrict $x$ to satisfy
$0 < x \le x_{\max} =e/(f\sqrt{g})$. For each such $x$, the solution $H_*(x)$ of \eqref{eq:icl_fixed} exists and is unique. Moreover, there exists $x_* < x_{\max}$ such that $H_*(x)$ is increasing on $(0,x_*]$ and decreasing on $(x_*, x_{\max}]$.
\end{proposition}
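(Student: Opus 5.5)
The plan is to handle existence and uniqueness by a monotonicity argument in $H$, and then to read off the monotonicity of $H_*(x)$ from the implicit function theorem. Set
\[
\Phi(x,H)=\frac{e x}{\sqrt H}-f x^2-\sqrt{H-g},\qquad H\ge g .
\]

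\textbf{Existence and uniqueness.} For fixed $x$, the map $H\mapsto \Phi(x,H)$ is continuous and strictly decreasing on $[g,\infty)$. The first term decreases in $H$, and $-\sqrt{H-g}$ strictly decreases.
\begin{itemize}
\item At $H=g$ we get $\Phi(x,g)=x\bigl(e/\sqrt g-f x\bigr)\ge 0$, because $x\le x_{\max}=e/(f\sqrt g)$. The inequality is strict when $x<x_{\max}$.
\item As $H\to\infty$, $\Phi(x,H)\to-\infty$.
\end{itemize}
By the intermediate value theorem there is a unique root $H_*(x)\ge g$. Moreover, $H_*(x)>g$ for $x<x_{\max}$, and $H_*(x_{\max})=g$. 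Since $\Phi(x,H)\to 0$ as $x\downarrow0$ for each fixed $H>g$, we also get $H_*(x)\to g$ as $x\downarrow 0$.

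\textbf{Differentiability and the sign of $H_*'$.} On $\{H>g\}$ the function $\Phi$ is $C^1$ and
\[
\partial_H\Phi=-\frac{e x}{2H^{3/2}}-\frac{1}{2\sqrt{H-g}}<0 .
\]
For $x\in(0,x_{\max})$ the root lies in this open region. The implicit function theorem then makes $H_*$ a $C^1$ function on $(0,x_{\max})$, and continuity extends to $x_{\max}$ by the monotone-root argument above. Because $H_*'=-\partial_x\Phi/\partial_H\Phi$, the sign of $H_*'(x)$ equals the sign of
\[
\partial_x\Phi\big(x,H_*(x)\big)=\frac{e}{\sqrt{H_*(x)}}-2fx .
\]
So it suffices to study $k(x)=2fx\sqrt{H_*(x)}-e$. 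The function $H_*$ increases exactly where $k<0$ and decreases where $k>0$.

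\textbf{Locating the sign change.} At a zero of $k$ we have $\sqrt{H_*}=e/(2fx)$. Substituting this into $\Phi=0$ turns the first two terms into $2fx^2-fx^2=fx^2$. This gives the scalar equation
\[
f^2x^4+g-\frac{e^2}{4f^2x^2}=0 .
\]
Its left-hand side is strictly increasing in $x>0$, tends to $-\infty$ as $x\downarrow0$, and tends to $+\infty$ as $x\to\infty$. Hence it has exactly one root $x_*$. Every zero of $k$ on $(0,x_{\max}]$ must equal $x_*$, so $k$ changes sign at most once.

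\textbf{Endpoint signs.}
\begin{itemize}
\item As $x\downarrow 0$, $H_*(x)\to g$, so $k(x)\to -e<0$.
\item At $x_{\max}$, $H_*=g$, so $k(x_{\max})=2f x_{\max}\sqrt g-e=2e-e=e>0$.
\end{itemize}
By continuity $k$ does vanish in $(0,x_{\max})$. Therefore $x_*<x_{\max}$, with $k<0$ on $(0,x_*)$ and $k>0$ on $(x_*,x_{\max}]$. This yields that $H_*$ is increasing on $(0,x_*]$ and decreasing on $(x_*,x_{\max}]$.

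\textbf{Main obstacle.} The delicate point is regularity at the endpoint $x_{\max}$, where $H_*=g$ and $\sqrt{H-g}$ is not differentiable. I would avoid this by applying the implicit function theorem only on the open interval $(0,x_{\max})$. Continuity at $x_{\max}$ then comes from the uniqueness of the root together with the strict monotonicity in $H$, and that continuity is all the final sign argument needs.
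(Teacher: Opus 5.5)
Your proposal is correct and follows essentially the same route as the paper: strict monotonicity in $H$ for existence and uniqueness, the implicit function theorem to reduce the sign of $H_*'$ to that of $e/\sqrt{H_*}-2fx$, and elimination of $H_*$ at a critical point to obtain a strictly monotone scalar equation (the paper's cubic $f^2z^3+gz-e^2/(4f^2)=0$ in $z=x^2$). Your explicit endpoint signs $k(0^+)=-e<0$ and $k(x_{\max})=e>0$ make the paper's appeal to $H_*(0^+)=H_*(x_{\max})=g$ slightly more rigorous. There is one small slip: for fixed $H>g$ you have $\Phi(x,H)\to-\sqrt{H-g}<0$ as $x\downarrow 0$, not $\Phi(x,H)\to 0$. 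It is this negative limit that gives $H_*(x)<H$ for small $x$ and hence $H_*(x)\to g$; alternatively, use $\sqrt{H_*-g}\le ex/\sqrt{g}$ directly.
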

\begin{proof}
    We first prove the existence and uniqueness of the root of \eqref{eq:icl_fixed}. Define
\begin{align*}
F(H,x)=\frac{ex}{\sqrt{H}}-f x^2-\sqrt{H-g}.
\end{align*}
For fixed $x>0$, $F(\cdot,x)$ is strictly decreasing on $[g,\infty)$, with
\begin{align*}
F(g,x)=\frac{ex}{\sqrt g}-f x^2\ge 0,
\qquad
\lim_{H\to\infty}F(H,x)=-\infty.
\end{align*}
Hence, for each $x\in(0,x_{\max}]$, there exists a unique solution
\begin{align*}
H_*(x)\ge g.
\end{align*}

Then we analyze the derivative of $H_*$ with respect to $x$. Since $F_H(H_*(x),x)<0$, the implicit function theorem yields
\begin{align*}
H_{*}^{\prime}(x)=-\frac{\partial F}{\partial x}/\frac{\partial F}{\partial H}.
\end{align*}
A direct calculation shows that
\begin{equation}\label{eq:xdprime}
H_{*}^{\prime}(x)=
\frac{\frac{e}{\sqrt{H_*(x)}}-2fx}
{\frac{ex}{2H_*(x)^{3/2}}+\frac{1}{2\sqrt{H_*(x)-g}}}.
\end{equation}
Since the denominator in \eqref{eq:xdprime} is strictly positive, the sign of $H_{*}^{\prime}(x)$ is determined by the numerator. Substituting \eqref{eq:icl_fixed} into the numerator of \eqref{eq:xdprime} yields
\begin{align*}
\frac{e}{\sqrt{H_*(x)}}-2fx
=\frac{1}{x}\sqrt{H_*(x)-g}-fx.
\end{align*}
Hence, the sign of $H_{*}^{\prime}$ is
\begin{equation}\label{eq:sign}
\operatorname{sign}\,H_{*}^{\prime}(x)
=\operatorname{sign}\!\big(\sqrt{H_*(x)-g}-f x^2\big).
\end{equation}

As $x\downarrow 0$, \eqref{eq:icl_fixed} reduces to $\sqrt{H-g}=0$, so
\begin{align*}
\lim_{x\downarrow 0}H_*(x)=g.
\end{align*}
At the upper boundary $x=x_{\max}=e/(f\sqrt g)$,
\begin{align*}
\frac{ex}{\sqrt g}-f x^2=0,
\end{align*}
so $H_*(x_{\max})=g$ by uniqueness. Thus, if there is only one stationary point of $H_{*}$, then $H_{*}$ first increases and then decreases, which proves our claim. In the following, we will show that $H_{*}$ has a unique stationary point.

A stationary point $H_{*}^{\prime}(x_*)=0$ occurs if and only if
\begin{equation}\label{eq:stationary}
\sqrt{H_*(x_*)-g}=f x_*^2.
\end{equation}
Combining \eqref{eq:stationary} with \eqref{eq:icl_fixed} gives
\begin{align*}
H_*(x_*)=\frac{e^2}{4x_*^2 f^2},
\end{align*}
while \eqref{eq:stationary} also implies $H_*(_x*)=g+x_*^4 f^2$.
Equating the two expressions yields
\begin{align*}
\frac{e^2}{4x_*^2 f^2}=g+x_*^4 f^2.
\end{align*}
Letting $z_*=x_*^2$, this becomes the cubic
\begin{equation}\label{eq:cubic}
f^2 z_*^3+g z_*-\frac{e^2}{4f^2}=0.
\end{equation}
Since the derivative $3f^2 z^2+g>0$ for all $z>0$, equation
\eqref{eq:cubic} admits a unique positive root $z_\star$. Thus, we conclude the proof of Proposition~\ref{prop:icl_root_property}

\end{proof}

\begin{proposition}\label{prop:mono4}
Let $A\in\mathbb{R}$ and $B>0$. Define
\begin{align*}
f(x)=\sqrt{(x-A)^2+B}-(x-A), \qquad x>A.
\end{align*}
Then $f$ is strictly decreasing on $(A,\infty)$.
\end{proposition}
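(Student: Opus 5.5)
Set $u=x-A>0$, so that $f(x)=\sqrt{u^2+B}-u$. The plan is to rationalize, exactly as in the proof of Proposition~\ref{prop:mono1}. Multiplying and dividing by the conjugate $\sqrt{u^2+B}+u$, which is strictly positive because $B>0$, gives
\begin{align*}
f(x)=\frac{(u^2+B)-u^2}{\sqrt{u^2+B}+u}=\frac{B}{\sqrt{(x-A)^2+B}+(x-A)}.
\end{align*}

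On $(A,\infty)$ both $\sqrt{(x-A)^2+B}$ and $x-A$ are strictly increasing in $x$. The denominator is therefore strictly increasing and positive, and since the numerator $B>0$ is constant, $f$ is strictly decreasing.

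As a cross-check, one could differentiate directly:
\begin{align*}
f'(x)=\frac{x-A}{\sqrt{(x-A)^2+B}}-1 .
\end{align*}
This is strictly negative because $\sqrt{(x-A)^2+B}>|x-A|$ when $B>0$. In fact this argument works on all of $\mathbb{R}$, not only on $(A,\infty)$.

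There is no real obstacle here. The only point to note is that $B>0$ is exactly what makes the conjugate strictly positive and the inequality $\sqrt{(x-A)^2+B}>|x-A|$ strict.

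It is also worth recording how the result is used. In the proof of Proposition~\ref{prop:r_sft}, the quantity $H_{\sep}^*=\big((-B'+\sqrt{B'^2+4C(d-r)\tau})/(2C)\big)^2$ from Proposition~\ref{prop:H_fixed_points} has the form $\big(f(x)/(2C)\big)^2$, with $x-A$ proportional to $B'$. Since $B'$ grows with $R_{\sft}$, this gives the claimed monotonicity of $H_{\sep}^*$ in $R_{\sft}$.
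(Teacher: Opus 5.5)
Your proof is correct, and the paper's proof is exactly your cross-check: it substitutes $y=x-A$ and shows $g'(y)=y/\sqrt{y^2+B}-1=(y-\sqrt{y^2+B})/\sqrt{y^2+B}<0$. Your primary conjugate argument $f=B/(\sqrt{u^2+B}+u)$ is an equally valid calculus-free variant that rests on the same fact $\sqrt{u^2+B}>u$, and it is the device the paper itself uses for Proposition~\ref{prop:mono1}.
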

\begin{proof}
Set $y=x-A$, so that $y>0$ and
\begin{align*}
f(x)=g(y)=\sqrt{y^2+B}-y.
\end{align*}
We compute the derivative of $g$:
\begin{align*}
g'(y)=\frac{d}{dy}\bigl(\sqrt{y^2+B}-y\bigr)
=\frac{y}{\sqrt{y^2+B}}-1
=\frac{y-\sqrt{y^2+B}}{\sqrt{y^2+B}}<0.
\end{align*}
Therefore, $f$ is strictly decreasing on $(A,\infty)$.
\end{proof}

\section{Experimental Details}\label{app:exp_details}
In this section, we provide the experimental details of our work.
\subsection{Experiments in Sections~\ref{sec:homo_mfcg} and \ref{sec:hetero_mfcg}}

For the experiments in Section~\ref{sec:homo_mfcg} (Figures~\ref{fig:homo_1}–\ref{fig:homo_2}), we compute the \ac{icl} and \ac{sft} errors using \eqref{eq:simplified} and set the congestion function to $h(R)=R^2$. Unless otherwise noted, the parameters are
\begin{align*}
d &= 300,\, r=100,\, \tau=3,\, \tilsigma=1,\, \pi=1,\, \zeta=8,\, \bar{\sigma}=1,\, R_{\sft}=27,\, R_{\icl}=4.5,\\
p&=10,\, m=\sqrt{\zeta-\tau-2\bar{\sigma}^2/\pi}.
\end{align*}
We then sweep the following parameters to produce Figures~\ref{fig:homo_1}–\ref{fig:homo_2}:
\begin{itemize}
    \item Figure~\ref{fig:homo_pi}: $\pi\in [0, 20]$, $p\in[0,35]$.
    \item Figure~\ref{fig:homo_r}: $r\in[0,100]$, $p\in[0,14]$.
    \item Figure~\ref{fig:tilsigma}:$\tilsigma\in[0,12]$, $\pi\in[0.1,1.5]$.
    \item Figure~\ref{fig:r_sft}: $R_{\icl}\in[1,8]$, $R_{\sft}\in [R_{\icl},R_{\icl}+20]$ for each $R_\icl$.
\end{itemize}

For the experiments in Section~\ref{sec:hetero_mfcg}, i.e., Figure~\ref{fig:hetero}, the \ac{icl} and \ac{sft} errors are computed according to \eqref{eq:simplified}. The hyperparameters are set as follows.
\begin{align*}
d &= 300,\, r=100,\, \tau=3,\, \tilsigma_1=0.5,\, \tilsigma_2 = 0.6,\, \pi=1,\, m=1.73,\, \bar{\sigma}=1,\, R_{\sft}=27,\, R_{\icl}=4.5,\\
p&=10,\, \zeta=\tau+2\bar{\sigma}^2/\pi+m^2.
\end{align*}
When deriving Figure~\ref{fig:p1_grad}, we sweep $\tilsigma_1\in[0.15,0.6]$ and $q\in[0,1]$.

\subsection{Experiments in Section~\ref{sec:exp}}

\paragraph{Model Architecture.}
In our experiments, we employ decoder-only Transformer models from the GPT-2 family~\citep{radford2019language}, as implemented in the HuggingFace library~\citep{wolf2020transformers}. Specifically, we experiment with embedding dimension $256$, $12$ layers, $8$ attention heads, covariate dimension $30$, and maximal length $2048$ (about $22$M parameters).

\paragraph{Prompt and Data Representation.}
Tasks are cast as sequence prediction: each prompt contains $k$ in-context input--output pairs and a query. Specifically, for inputs $x_i\in\mathbb{R}^d$ and outputs $f(x_i)=x_i^\top\theta^*+\varepsilon_i$, the prompt is
\[
(x_1,f(x_1),\,x_2,f(x_2),\,\ldots,\,x_k,f(x_k),\,x_{\text{q}}),
\]
where $x_i$ and $x_{\text{q}}$ are sampled i.i.d.\ from the input distribution and $\varepsilon_i\stackrel{\text{i.i.d.}}{\sim}\mathcal{N}(0,0.5)$. Scalar inputs and outputs are embedded by zero-padding to the covariate dimension and applying a learned linear projection; a linear head maps predicted embeddings back to scalars for evaluation. We train a decoder-only Transformer to autoregressively predict each $f(x_i)$ from preceding tokens and finally predict $f(x_{\text{q}})$.

\paragraph{Training Objective and Optimization.}
Pretraining starts from scratch with squared error loss. At each step, we sample a batch of random prompts where each $x_i$ is $r$-sparse ($|\supp(x_i)|=r<d$), and update the parameters to minimize the expected squared prediction error over all prefix positions. We set dropout to $0$ (inputs are always fresh and we observe no overfitting), use batch size $480$, and train for $100$k steps with Adam at learning rate $10^{-4}$. For some tasks, we use a curriculum that gradually increases the effective input dimensionality early in training, which substantially accelerates convergence---especially at larger $d$, where training without curriculum can exhibit a long initial plateau.

\ac{sft} initializes from the pretrained parameters and fine-tunes using full-dimensional inputs ($r=d$) with no curriculum. We first generate an \ac{sft} dataset of the prescribed size, then fine-tune for $100$ epochs using Adam (learning rate $10^{-4}$, batch size $480$). We save the end-of-epoch checkpoint and report the best-performing epoch (on the validation set), corresponding to the optimal regularization $\lambda^*$ in Proposition~\ref{prop:comp}. Finally, we enforce a minimum of $100$ gradient updates to ensure the fine-tuned model differs from the pretrained initialization.

\paragraph{Evaluation Settings.} To evaluate the pretrained model, we prompt it with the sequence $(x_i,f(x_i))$ for $i\in[\tilN]$ followed by the query input $x_{\mathrm{q}}$. Note that at evaluation time, the covariates may have larger support $|\supp(x_i)|$ than in pretraining. We report the squared error $(x_{\mathrm{q}}^\top\theta^*-\hat y)^2$, where $\hat y$ is the model's prediction. We evaluate the \ac{sft} model in the same manner. For a fair comparison between \ac{sft} and \ac{icl}, in Figure~\ref{fig:sft} we prompt the \ac{sft} model with $500$ in-context examples; Figure~\ref{fig:icl} shows that the \ac{icl} error has essentially plateaued by $500$ examples.

\section{Additional Experimental Results}\label{app:add_exp}
\subsection{More Results of Two-type LLM-serving Congestion Games}

In this section, we provide more experimental results about Proposition~\ref{prop:only_one_type}. First, in addition to the gradient norm in Figure~\ref{fig:p1_grad}, we provide the value of $R^*$ and the regime that the equilibrium belongs to.

\begin{figure}[H]
\centering
\subfigure[Congestion level $R^*$ with various target task noise $\tilsigma_1$ and type-$1$ ratio $q$.]{\includegraphics[width=0.32\textwidth]{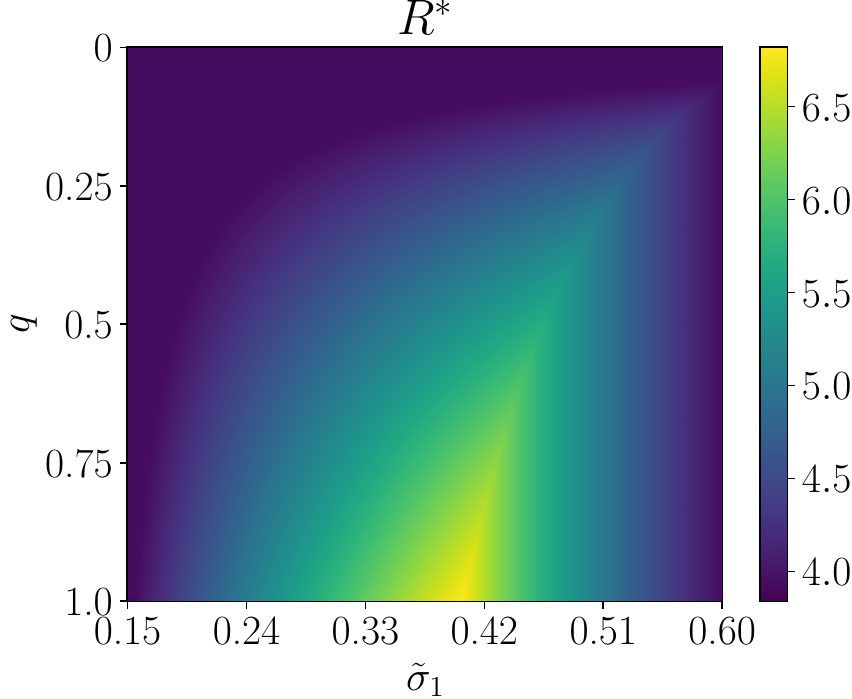}}
\subfigure[Congestion level gradient norms with various target task noise $\tilsigma_1$ and type-$1$ ratio $q$.]{\includegraphics[width=0.32\textwidth]{figures/hetero_R_grad_p10.pdf}}
\subfigure[Regime of the equilibrium with various target task noise $\tilsigma_1$ and type-$1$ ratio $q$.]{\includegraphics[width=0.32\textwidth]{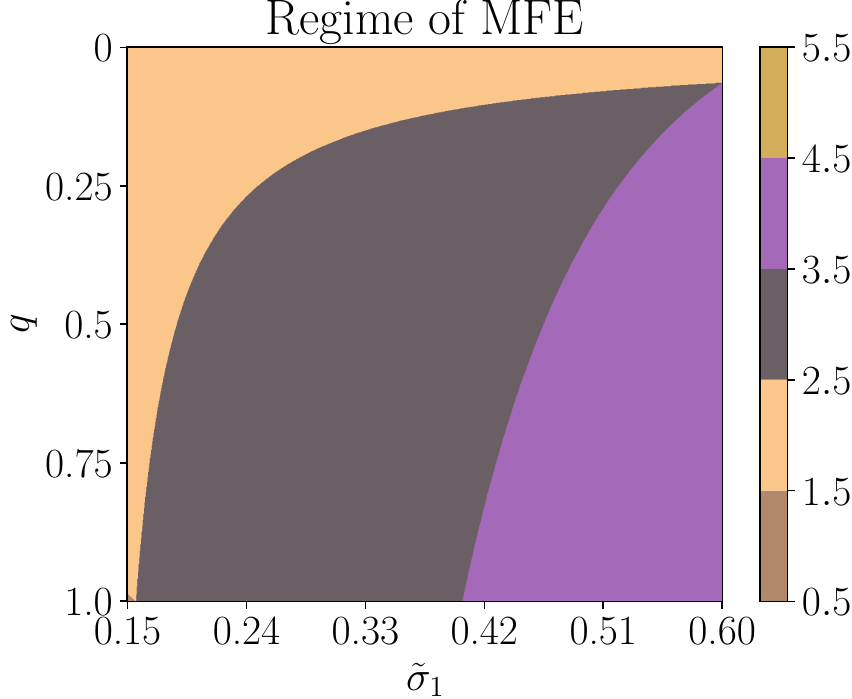}}
\caption{Figures (a)-(c) show the congestion level $R^*$, the gradient of $R^*$ (i.e., $|\partial_{q}R^{*}|+|\partial_{\tilsigma_1}R^{*}|$), and the regime of the equilibrium with various target task noise $\tilsigma_1$ and type-$1$ ratio $q$.}
\label{fig:pin_2}
\end{figure}
The regime of the equilibrium is labeled according to Theorem~\ref{thm:hetero_full}. For example, if $H^{\ast}=H_{\sep}^{\ast}(t_2)$, then the equilibrium belongs to regime $2$. Figure~\ref{fig:pin_2} shows the case where the equilibrium is pinned at the separator of type $2$ in Proposition~\ref{prop:only_one_type}. The following figures show the case where $R^*=0$.

\begin{figure}[H]
\centering
\subfigure[Congestion level $R^*$ with various target task noise $\tilsigma_1$ and type-$1$ ratio $q$.]{\includegraphics[width=0.32\textwidth]{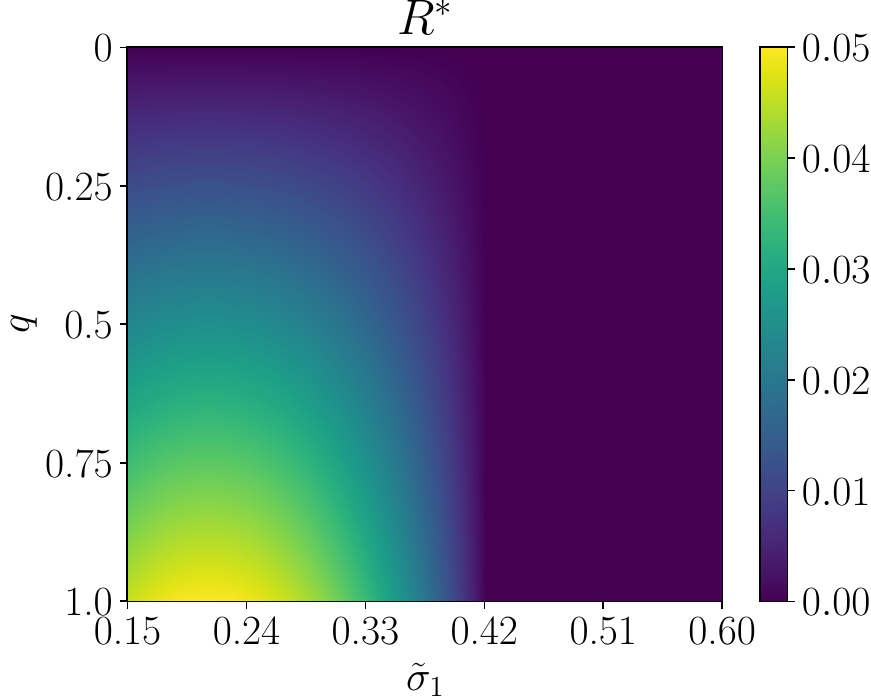}}
\subfigure[Congestion level gradient norms with various target task noise $\tilsigma_1$ and type-$1$ ratio $q$.]{\includegraphics[width=0.32\textwidth]{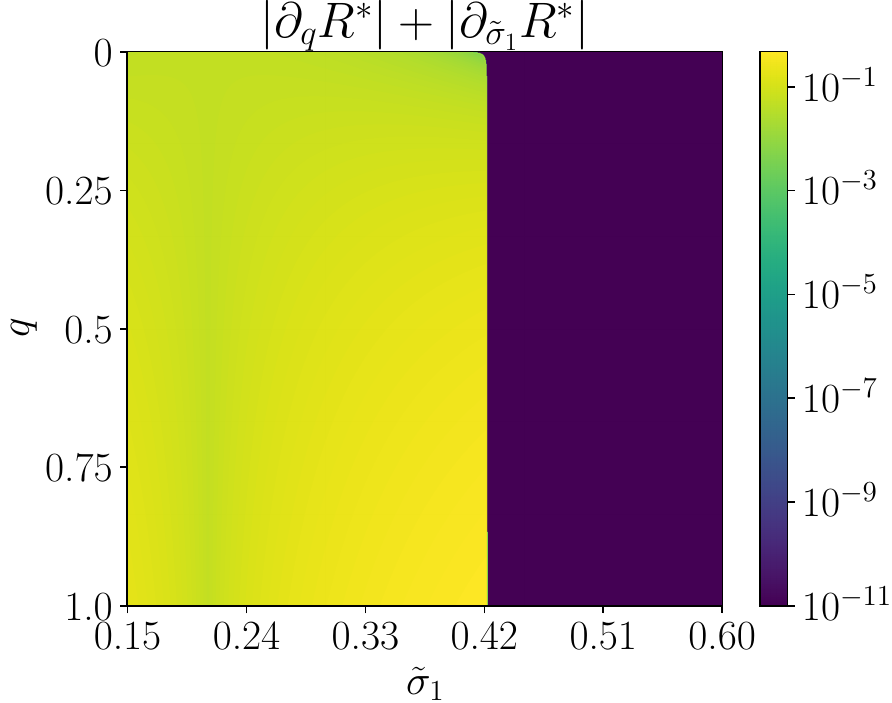}}
\subfigure[Regime of the equilibrium with various target task noise $\tilsigma_1$ and type-$1$ ratio $q$.]{\includegraphics[width=0.32\textwidth]{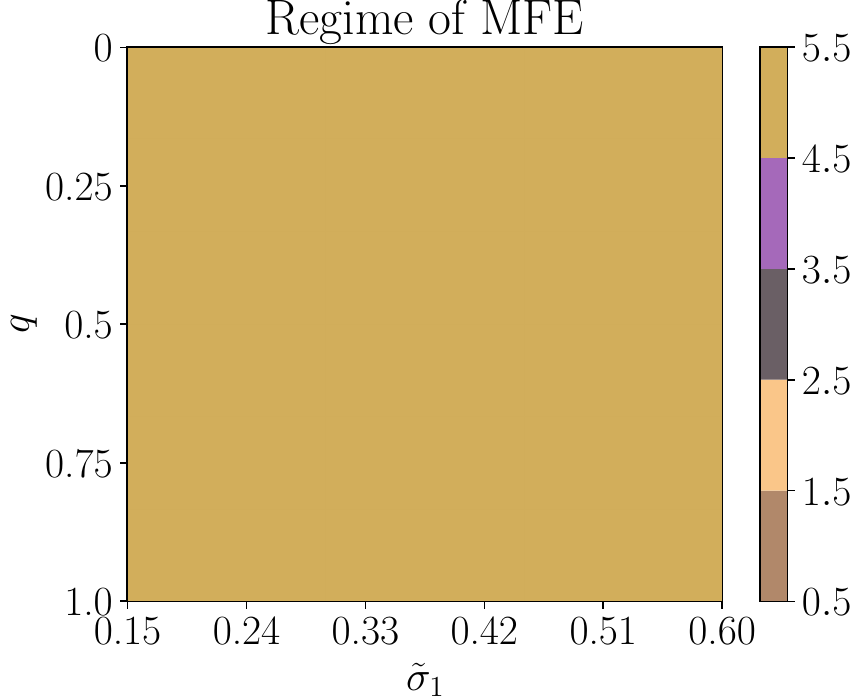}}
\caption{Figures (a)-(c) show the congestion level $R^*$, the gradient of $R^*$ (i.e., $|\partial_{q}R^{*}|+|\partial_{\tilsigma_1}R^{*}|$), and the regime of the equilibrium with various target task noise $\tilsigma_1$ and type-$1$ ratio $q$.}
\end{figure}
Here we set $p=11$ in these figures.

\subsection{Simulation Results for $h(x)=\max\{0,x\}$}\label{app:linear}
\begin{figure}[H]
\centering
\subfigure[The values of $R^{*}$ with various $\pi$.]{\includegraphics[width=\myfigwidth]{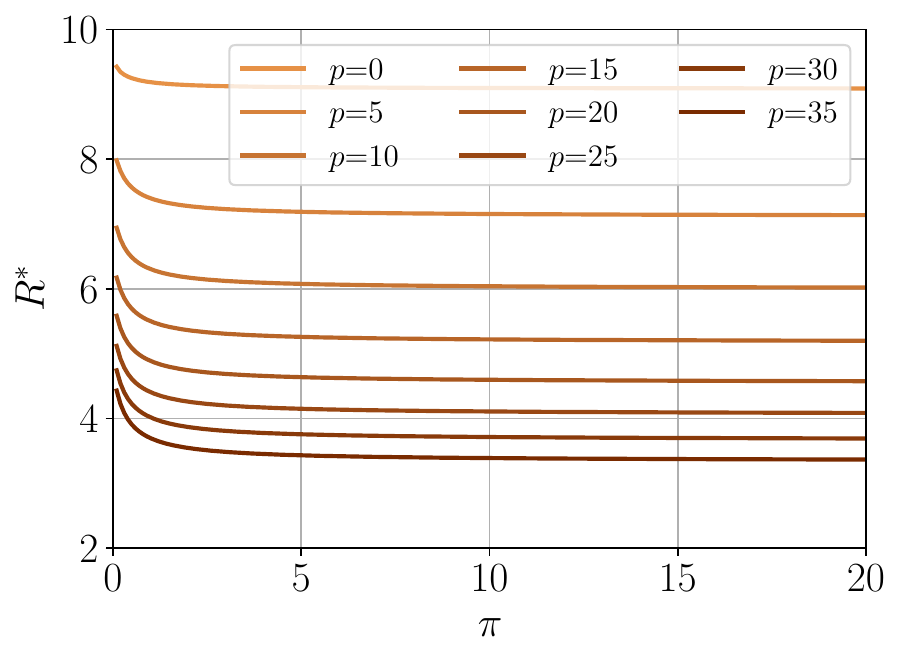}}
\hspace{0.3em}
\subfigure[The values of $R^{*}$ with various $r$.]{\includegraphics[width=\myfigwidth]{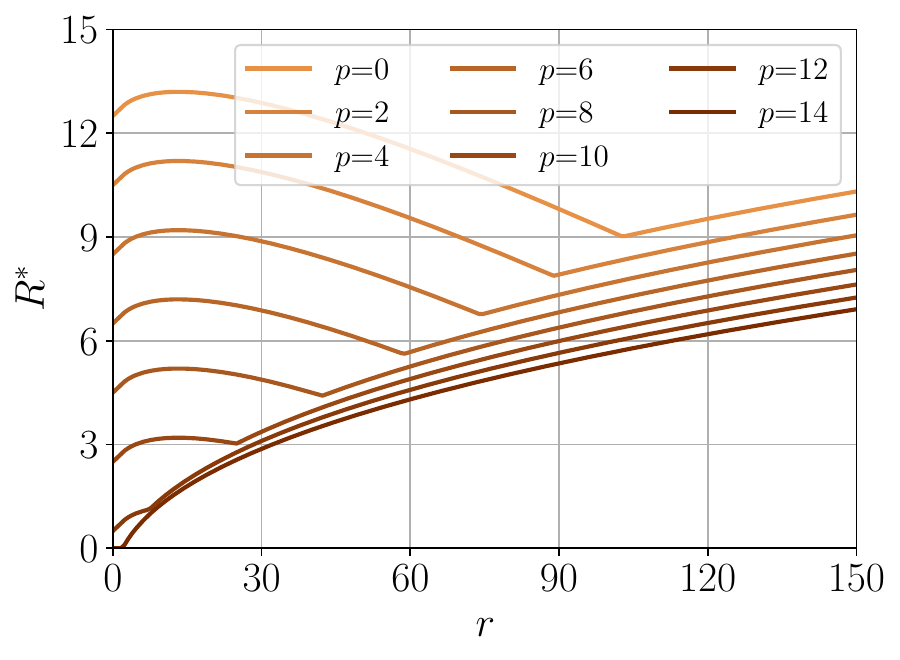}}
\vspace{-0.5em}
\caption{These figures illustrate how the equilibrium congestion level $R^{*}$ varies with prior precision $\pi$ and pretraining coverage $r$ of the pretrained LLM under $h(x)=\max\{0,x\}$.}
\vspace{-0.5em}
\end{figure}

\begin{figure}[H]
\centering
\subfigure[The values of $R^{*}$ with various $\tilde{\sigma}$.]{\includegraphics[width=\myfigwidth]{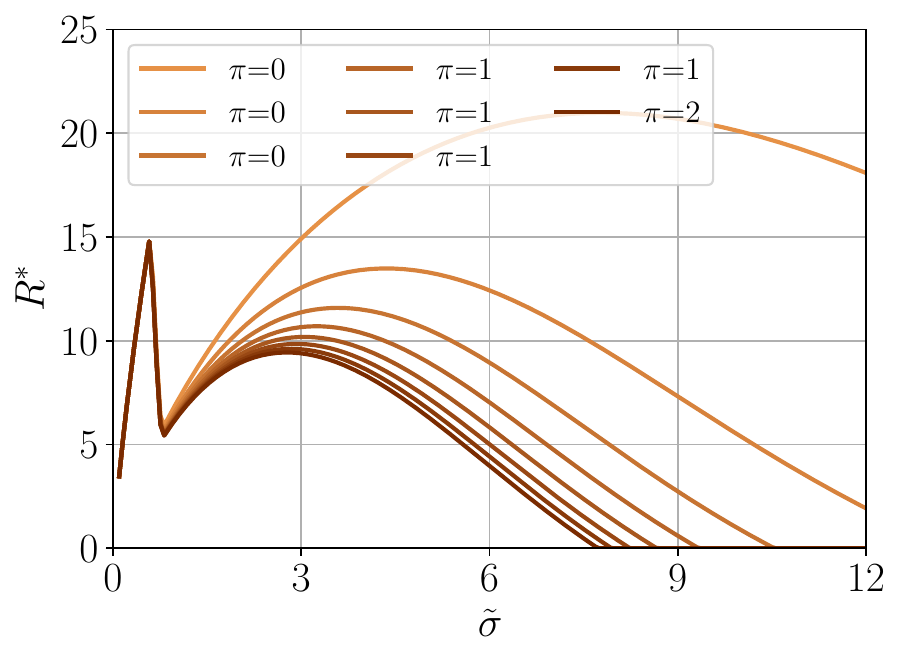}}
\hspace{0.3em}
\subfigure[The values of $R^{*}$ with various $R_{\text{SFT}}$.]{\includegraphics[width=\myfigwidth]{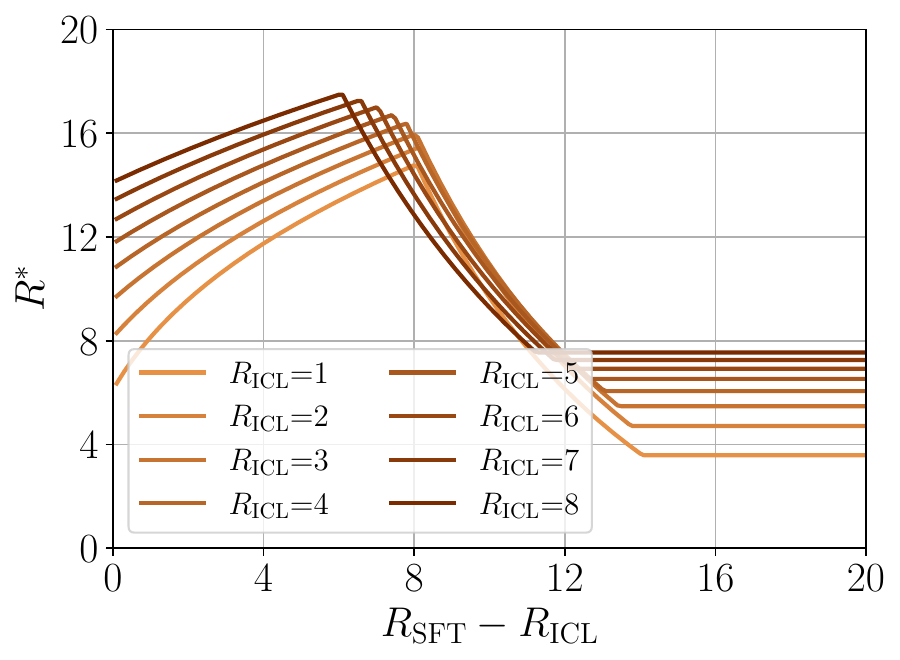}}
\vspace{-0.5em}
\caption{These figures illustrate how the equilibrium congestion level $R^{*}$ varies with the personalization data noise $\tilde{\sigma}$ and the per-sample resource consumption of SFT $R_{\text{SFT}}$ and ICL $R_{\text{ICL}}$ under $h(x)=\max\{0,x\}$.}
\vspace{-1.0em}
\end{figure}

\subsection{Simulation Results for $h(x)=\exp(x)$}\label{app:exponential}
\begin{figure}[H]
\centering
\subfigure[The values of $R^{*}$ with various $\pi$.]{\includegraphics[width=\myfigwidth]{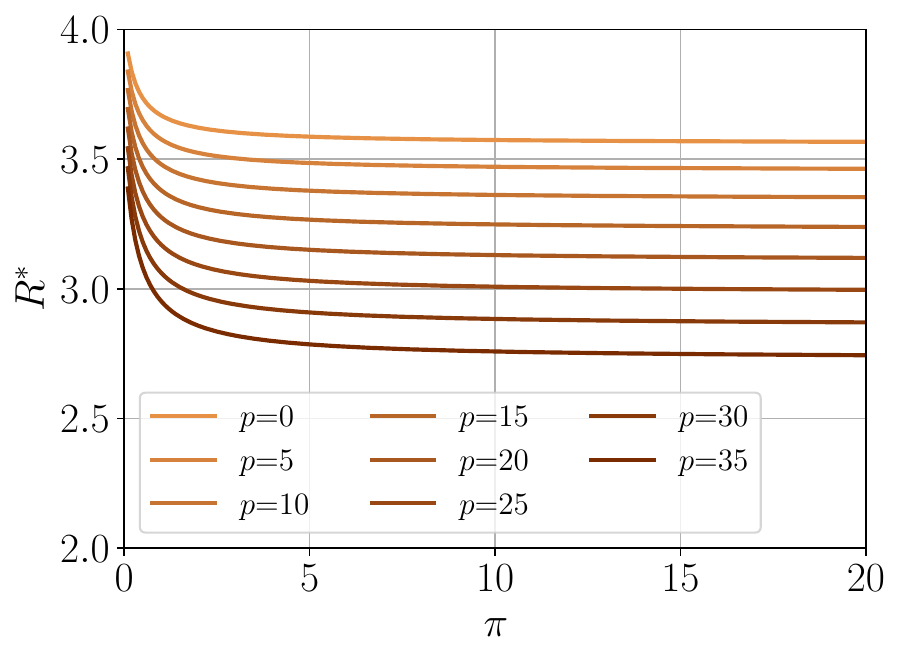}}
\hspace{0.3em}
\subfigure[The values of $R^{*}$ with various $r$.]{\includegraphics[width=\myfigwidth]{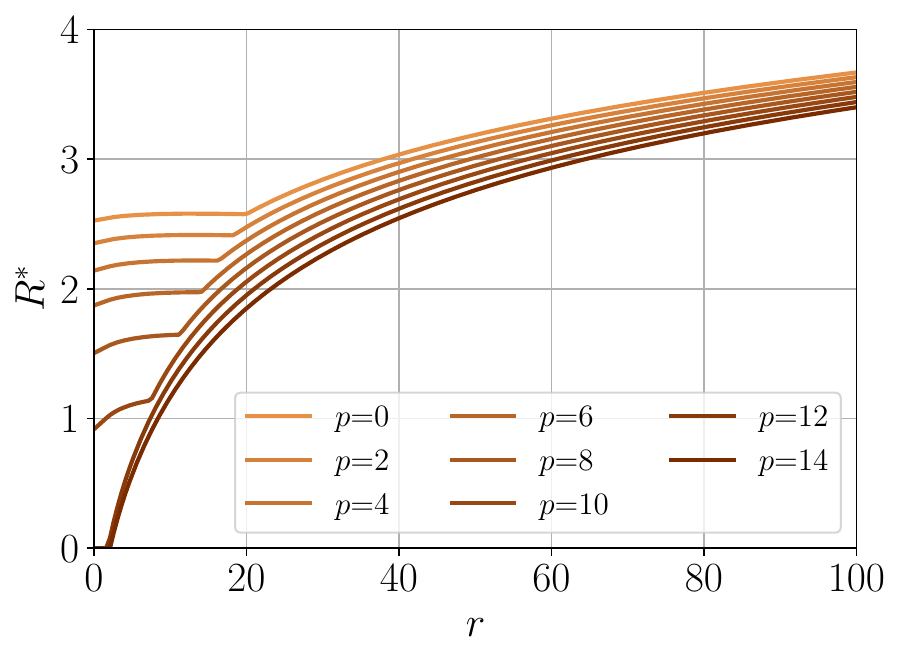}}
\vspace{-0.5em}
\caption{These figures illustrate how the equilibrium congestion level $R^{*}$ varies with prior precision $\pi$ and pretraining coverage $r$ of the pretrained LLM under $h(x)=\exp(x)$.}
\vspace{-0.5em}
\end{figure}

\begin{figure}[H]
\centering
\subfigure[The values of $R^{*}$ with various $\tilde{\sigma}$.]{\includegraphics[width=\myfigwidth]{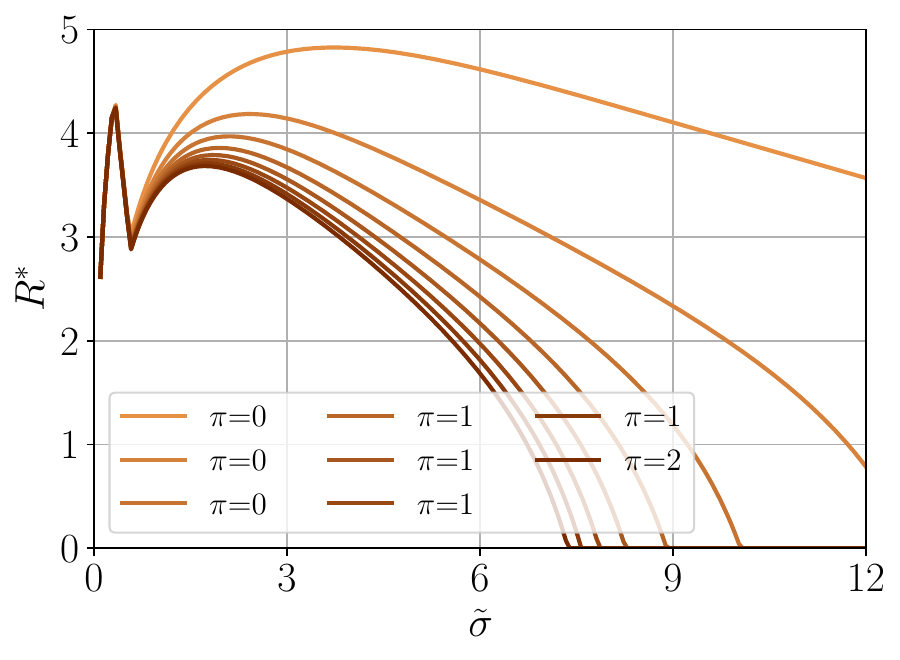}}
\hspace{0.3em}
\subfigure[The values of $R^{*}$ with various $R_{\text{SFT}}$.]{\includegraphics[width=\myfigwidth]{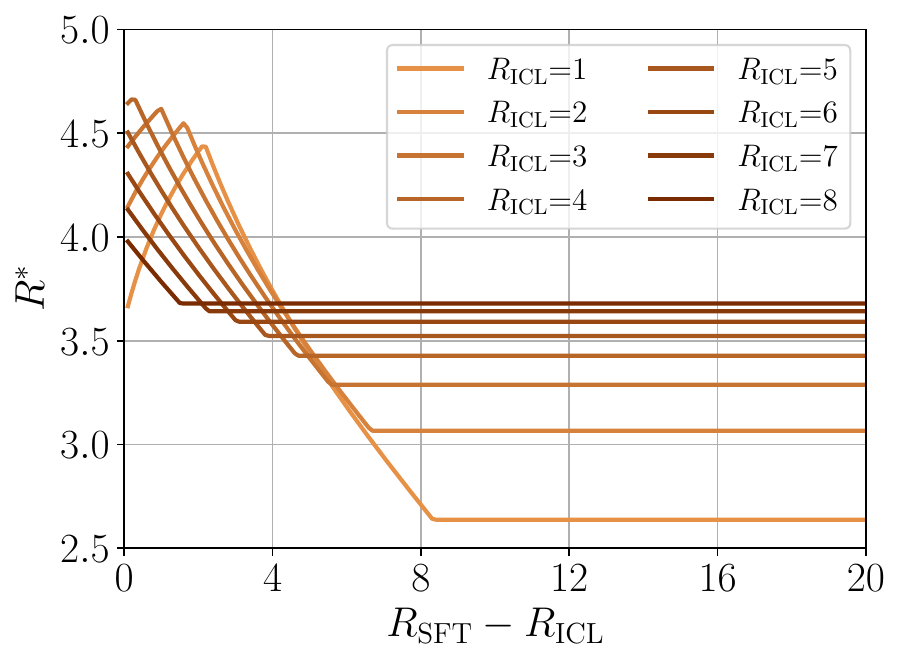}}
\vspace{-0.5em}
\caption{These figures illustrate how the equilibrium congestion level $R^{*}$ varies with the personalization data noise $\tilde{\sigma}$ and the per-sample resource consumption of SFT $R_{\text{SFT}}$ and ICL $R_{\text{ICL}}$ under $h(x)=\exp(x)$.}
\vspace{-1.0em}
\end{figure}

\subsection{\ac{sft} Services of AI Companies}

We document whether and when major AI companies provide \ac{sft} API services. Our analysis focuses on companies that released significant foundation models covered in the Artificial Intelligence Index Report 2024~\citep{maslej2024artificial} and 2025~\citep{maslej2025artificial}. Specifically, the companies in our sample are: AI21 Labs, Alibaba, Amazon, Anthropic, Apple, Baidu, ByteDance, Cohere, Databricks, DeepSeek, Google, Inflection AI, Meta, Midjourney, Mistral AI, Nvidia, OpenAI, Runway, Salesforce, Stability AI, and xAI.

In our analysis, we define “hosting a \ac{sft} API” as offering an interface that allows users to fine-tune the model through the company’s API. We do not count releasing model weights as hosting an SFT API, because making an open-weight model available does not, by itself, constitute an API-based fine-tuning service or directly generate revenue for the provider. The results are listed in Table~\ref{tab:sft_api}.
\begin{longtable}{p{2cm}p{3.5cm}p{2cm}p{7cm}}
\caption{Category and release date of SFT API}\\
\toprule
Company & Offering Category & Release date & Website Link \\
\midrule
\endfirsthead

\toprule
Company & Offering Category & Release date & Website Link \\
\midrule
\endhead

AI21 Labs & Self-hosted & Aug, 2021 &
\url{https://www.ai21.com/blog/announcing-ai21-studio-and-jurassic-1/} \\

Alibaba & Self-hosted & Dec, 2023 &
\url{https://www.alibabacloud.com/blog/analyticdb-a-vector-database-recommended-by-openai---more-than-just-a-vector-index_600692} \\

Amazon & Self-hosted & Nov, 2023 &
\url{https://aws.amazon.com/cn/blogs/aws/customize-models-in-amazon-bedrock-with-your-own-data-using-fine-tuning-and-continued-pre-training/} \\

Anthropic & Partner-hosted & Nov, 2024 &
\url{https://aws.amazon.com/cn/blogs/aws/fine-tuning-for-anthropics-claude-3-haiku-model-in-amazon-bedrock-is-now-generally-available/} \\

Apple & N.A. & N.A. &
N.A. \\

Baidu & Self-hosted & Sep, 2024 &
\url{https://x.com/Baidu_Inc/status/1831693797459112114} \\

ByteDance & Self-hosted & Sep, 2023 &
\url{https://www.volcengine.com/docs/82379/?lang=zh}, \url{https://technode.com/2023/06/29/bytedances-volcengine-unveils-ai-model-service-platform-volcano-ark/} \\

Cohere & Self-hosted & Nov, 2023 &
\url{https://cohere.com/blog/fine-tuning-suite} \\

Databricks & Self-hosted & Jul, 2024 &
\url{https://www.databricks.com/company/newsroom/press-releases/databricks-unveils-new-mosaic-ai-capabilities-help-customers-build} \\

DeepSeek & N.A. & N.A. & N.A. \\

Google & Partner-hosted & Aug, 2023 &
\url{https://docs.cloud.google.com/vertex-ai/generative-ai/docs/models/gemini-use-supervised-tuning} \\

Inflection AI & Self$+$Partner-hosted & Oct, 2024 &
\url{https://newsroom.intel.com/artificial-intelligence/inflection-ai-intel-launch-enterprise-ai-system} \\

Meta & N.A. & N.A. & N.A. \\

Midjourney & N.A. & N.A. &
N.A. \\

Mistral AI & Self-hosted & Jun, 2024 &
\url{https://mistral.ai/news/customization} \\

NVIDIA & Self-hosted & Sep, 2022&
\url{https://nvidianews.nvidia.com/news/nvidia-launches-large-language-model-cloud-services-to-advance-ai-and-digital-biology} \\

OpenAI & Self-hosted & Dec, 2021 &
\url{https://openai.com/index/customizing-gpt-3} \\

Runway & Self-hosted & Oct, 2025 &
\url{https://runwayml.com/product/model-fine-tuning} \\

Salesforce & Self-hosted & Sep, 2023 &
\url{https://www.salesforce.com/news/press-releases/2023/09/12/ai-einstein-news-dreamforce/} \\

Stability AI & N.A. & N.A. & N.A. \\

xAI & N.A. & N.A. & N.A. \\

\bottomrule
\label{tab:sft_api}
\end{longtable}

\end{document}